\documentclass[runningheads]{llncs}

\usepackage[utf8]{inputenc}
\inputencoding{utf8}

\usepackage{booktabs}
\usepackage{graphicx}
\usepackage{url}
\usepackage{amsmath,amssymb}
\usepackage{acronym}
\usepackage[capitalise]{cleveref}
\usepackage{csquotes}
\usepackage{tikz}
\usetikzlibrary{arrows,arrows.meta,automata,backgrounds,calc,patterns,positioning,shapes,shadows}
\usepackage{subcaption}
\usepackage{algorithm}
\makeatletter
\let\If\@undefined
\let\Else\@undefined
\let\ElsIf\@undefined
\let\EndIf\@undefined
\let\For\@undefined
\let\ForAll\@undefined
\let\ForEach\@undefined
\let\EndFor\@undefined
\let\While\@undefined
\let\EndWhile\@undefined
\let\Repeat\@undefined
\let\Until\@undefined
\let\State\@undefined
\let\Return\@undefined
\let\Continue\@undefined
\let\Break\@undefined
\makeatother
\usepackage[noend]{algpseudocode}
\algnewcommand{\algorithmicbreak}{\textbf{break}}
\algnewcommand{\algorithmiccontinue}{\textbf{continue}}
\algnewcommand{\algorithmicforeach}{\textbf{for each}}
\algnewcommand\Break{\algorithmicbreak}
\algnewcommand\Continue{\algorithmiccontinue}
\algnewcommand{\LeftComment}[1]{\Statex \(\triangleright\) #1}
\algdef{SE}[FOR]{ForEach}{EndForEach}[1]
	{\algorithmicforeach\ #1\ \algorithmicdo}
	{\algorithmicend\ \algorithmicforeach}
\algtext*{EndForEach} 
\algdef{SE}[DOWHILE]{DoWhile}{EndDoWhile}{\algorithmicdo}[1]{\algorithmicwhile\ #1}

\usepackage{amsthm}
\usepackage{thmtools}
\usepackage{thm-restate}
\acrodef{acp}[ACP]{advanced colour passing}
\acrodef{crv}[CRV]{counting randvar}
\acrodef{decor}[DECOR]{detection of commutative factors}
\acrodef{fg}[FG]{factor graph}
\acrodef{lve}[LVE]{lifted variable elimination}
\acrodef{pfg}[PFG]{parametric factor graph}
\acrodef{rv}[randvar]{random variable}

\crefname{algorithm}{Alg.}{Algs.}
\Crefname{algorithm}{Algorithm}{Algorithms}
\crefname{appendix}{App.}{Apps.}
\Crefname{appendix}{Appendix}{Appendices}
\crefname{corollary}{Cor.}{Cors.}
\Crefname{corollary}{Corollary}{Corollaries}
\crefname{definition}{Def.}{Defs.}
\Crefname{definition}{Definition}{Definitions}
\crefname{equation}{Eq.}{Eqs.}
\Crefname{equation}{Equation}{Equations}
\crefname{example}{Ex.}{Exs.}
\Crefname{example}{Example}{Examples}
\crefname{proposition}{Prop.}{Props.}
\Crefname{proposition}{Proposition}{Propositions}
\crefname{section}{Sec.}{Secs.}
\Crefname{section}{Section}{Sections}
\crefname{theorem}{Thm.}{Thms.}
\Crefname{theorem}{Theorem}{Theorems}

\newcommand{\abs}[1]{\lvert #1 \rvert}

\newcommand{\range}[1]{\ensuremath{\mathrm{range}(#1)}}

\newcommand{\low}{\ensuremath{\mathrm{low}}}
\newcommand{\medium}{\ensuremath{\mathrm{medium}}}
\newcommand{\high}{\ensuremath{\mathrm{high}}}

\definecolor{myyellow}{RGB}{247,192,26}
\definecolor{myblue}{RGB}{37,122,164}
\definecolor{mygreen}{RGB}{78,155,133}
\definecolor{mypurple}{RGB}{86,51,94}

\definecolor{newblue}{RGB}{50,113,173}
\definecolor{newred}{RGB}{222,32,36}
\definecolor{newgreen}{RGB}{70,165,69}
\definecolor{newpurple}{RGB}{140,69,152}

\definecolor{cborange}{RGB}{230,159,0}
\definecolor{cbblue}{RGB}{30,136,229}
\definecolor{cbbluedark}{RGB}{46,37,133}
\definecolor{cbpurple}{RGB}{170,68,153}
\definecolor{cbgreen}{RGB}{0,77,64}
\definecolor{cbgreenlight}{RGB}{93,168,153}
\definecolor{cbbrown}{RGB}{126,41,84}

\pgfdeclarelayer{bg}
\pgfsetlayers{bg,main}

\tikzset{
	rv/.style={draw, ellipse},
	pf/.style={draw, rectangle, fill = gray!30},
	arc/.style = {->, >={[round,sep]Stealth}},
}

\newcommand\factor[6]{
	\node[pf, #1=#3 of #2, label={#4:{#5}}](#6) {};
}

\allowdisplaybreaks

\begin{document}

\title{Lifted Model Construction under Approximate Commutativity}

\author{
	Malte Luttermann\inst{1}\textsuperscript{$*$} \and
	Jan Speller\inst{2}\textsuperscript{$*$} \and
	Tanya Braun\inst{2} \and
	Marcel Gehrke\inst{1} \and
	Ralf Möller\inst{1}
}
\authorrunning{M. Luttermann, J. Speller, T. Braun, M. Gehrke, R. Möller}
\institute{
	Institute for Humanities-Centered Artificial Intelligence, University of Hamburg \\
	\email{\{malte.luttermann,marcel.gehrke,ralf.moeller\}@uni-hamburg.de} \and
	Data Science Group, University of Münster \\
	\email{\{jan.speller,tanya.braun\}@uni-muenster.de}
}

\maketitle

\renewcommand{\thefootnote}{$*$}
\footnotetext{These authors contributed equally to this work.}
\renewcommand{\thefootnote}{\arabic{footnote}}

\begin{abstract}
	Lifted inference algorithms enable scalable probabilistic inference even for large object domains by leveraging the indistinguishability of objects in a probability distribution.
    An essential prerequisite for constructing a lifted representation is to identify commutative factors, i.e., functions whose output values are invariant under permutations of a subset of their input values, in a potential-based factorisation.
    In practice, however, parameters learned from data inevitably deviate even if associated objects are indistinguishable, causing their corresponding factors to be only approximately commutative instead of being exactly commutative.
	We address this problem by introducing the concept of $\varepsilon$-commutativity, a relaxation of commutativity where output values are only approximately invariant under permutations of input values.
	Specifically, we show how $\varepsilon$-commutativity can be exploited for lifted model construction, downstream probabilistic inference, and prove strict bounds on the induced approximation error, thereby ensuring the practical applicability of lifted model construction while maintaining highly accurate query results. These theoretical guarantees are confirmed empirically, demonstrating comparable query accuracy at lower runtime.
\end{abstract}


\section{Introduction}
Uncertain reasoning is a fundamental task in artificial intelligence, where uncertainty is commonly represented through probability distributions over random variables.
Probabilistic graphical models such as \acp{fg} provide a compact way to encode joint distributions over a set of \acp{rv} by decomposing a distribution into products of local functions, called factors.
Even with a factorised representation, the problem of probabilistic inference (i.e., computing marginal or conditional distributions of \acp{rv}) is intractable in general as the computational cost grows exponentially with the number of \acp{rv}~\cite{Cooper1990a}.
To allow for tractable probabilistic inference whose complexity grows polynomially in the number of objects in the underlying domains, lifted inference algorithms exploit the indistinguishability of objects by using a representative for computations~\cite{Niepert2014a}.
Performing lifted inference, however, requires a lifted representation in which indistinguishable objects are grouped together.
The construction of such a lifted representation includes identifying factors that exhibit symmetries and can therefore be grouped together.
One important class are \emph{commutative factors} (that is, factors whose output values are invariant under permutations of a subset of their input values).
Whether a factor is commutative depends on the precise output values it maps its input values to, and herein lies a subtle but pervasive practical issue: When numbers are estimated from observational data, two values that should ideally coincide hardly ever do so up to the last digit.
Even in simple cases such as counted data, learning can always yield a distribution of values, which 
makes an otherwise commutative factor fail the strict definition of commutativity, even though the commutativity is clearly present from a modelling perspective.
Current state-of-the-art algorithms for detecting commutative factors have no means to tolerate such discrepancies.
In this paper, we address this problem by softening the underlying equality requirement, thereby enabling the detection of approximate commutativity and allowing for further compression of \acp{fg}.

\paragraph{Previous work.}
When dealing with noisy distributions, typical approximate techniques control the deviation on the level of the full (global) probability distribution based on classes of prior distributions~\cite{Berger1990a}, as done, e.g., by distortion models such as the constant odds ratio model, which bounds the multiplicative deviation from a nominal reference distribution~\cite{Miranda2019a}.
In contrast, we consider a factorised distribution and bound individual potentials directly via a local constant-odds-ratio-type neighbourhood on the level of a single factor.
Structurally, however, the overall global approximation follows the same guarantees as distortion models.
There has also been research on \emph{robustness} in graph learning against false measurements and outliers~\cite{Li2025a,Pfeifer2016a}, whereas our approach relaxes exact structural symmetry in factors for lifted inference.
The line of work on lifted probabilistic inference goes back to Poole~\cite{Poole2003a}, who introduced \acp{pfg} as a representation that interleaves first-order logic with probabilistic modelling and \ac{lve} as an inference algorithm operating on \acp{pfg}.
A substantial body of subsequent research has refined and extended \ac{lve} in various directions~\cite{Taghipour2013c}.
One of these refinements includes the introduction of so-called \acp{crv}~\cite{Milch2008a}, which count over the occurrences of range values in the input of a factor instead of explicitly listing possible combinations of input values, thereby exploiting the commutativity of factors.
The current state-of-the-art to construct a lifted representation such as a \ac{pfg} is the \ac{acp} algorithm~\cite{Luttermann2024a}.
A core ingredient of \ac{acp} is to determine, for every factor, a maximum sized subset of commutative arguments such that factors can be compressed by using \acp{crv} to count over the commutative arguments.
To determine maximum sized subsets of commutative arguments, the \ac{decor} algorithm~\cite{Luttermann2024f,Luttermann2026a} is the state-of-the-art.
Such lifted model construction techniques have so far focused on approximate factors rather than approximate commutativity~\cite{Luttermann2025c,Speller2025a}.

\paragraph{Our contributions.}
We introduce \emph{$\varepsilon$-commutativity}, an approximate variant of commutativity in which output values are allowed to deviate depending on a hyperparameter $\varepsilon$.
On the basis of $\varepsilon$-commutativity, we show how approximately commutative factors can be compressed by replacing groups of similar values with representative values, thereby simplifying the underlying distribution by giving events of a symmetry set the same value.
We further establish a strict bound on the approximation error induced by such a compression, guaranteeing that subsequent inference on the resulting lifted representation remains accurate.
Finally, we empirically demonstrate that the approximation error remains well below the theoretical bounds in practice, while the proposed algorithm substantially reduces runtime across a range of parameter settings.

\paragraph{Structure of this paper.}
The remainder of the paper is organised as follows.
We first introduce the necessary background on \acp{fg} and commutative factors therein.
We then formalise the concept of $\varepsilon$-commutativity, develop a compression scheme based on $\varepsilon$-commutative representatives, and prove important properties leading to a strict bound on the resulting approximation error.
Afterwards, we embed the compression of $\varepsilon$-commutative factors into the \ac{acp} algorithm for lifted model construction and empirically show that downstream probabilistic inference still produces highly accurate results.
Due to space constraints, detailed proofs are relegated to \cref{appendix:commutative_approx_proofs} and further experimental results to \cref{appendix:commutative_approx_further_results}.

\section{Background}
We briefly introduce the notion of an \ac{fg} and then formally define what a commutative factor is.
An \ac{fg} compactly encodes a joint probability distribution over \acp{rv} as a product of factors~\cite{Frey1997a,Kschischang2001a}.
In the following, we write $\range{R}$ to denote the values a \ac{rv} $R$ can take (called range) and $\mathcal{X}_{\boldsymbol{R}} :=\times_{R \in \boldsymbol{R}} \range{R}$ to denote the Cartesian product over the ranges of a set of \acp{rv} $\boldsymbol{R}$.

\begin{definition}[Factor Graph]
	An \emph{\ac{fg}} is a tuple $M = (\boldsymbol R, \allowbreak \boldsymbol F, \allowbreak \boldsymbol E, \allowbreak \boldsymbol \Phi)$ where $\boldsymbol R = \{R_1, \ldots, \allowbreak R_p\}$ is a set of \acp{rv}, $\boldsymbol F = \{f_1, \ldots, \allowbreak f_m\}$ is a set of factor nodes, and $\boldsymbol \Phi = \{\phi_1, \ldots, \allowbreak \phi_m\}$ is a set of function definitions (called factors).
	Each $\phi_j(\boldsymbol{R}_{\phi_j}) \in \boldsymbol \Phi$ defines a function $\phi_j \colon \mathcal{X}_{\boldsymbol{R}_{\phi_j}}\to \mathbb{R}_{\geq 0}$ over the Cartesian product of the ranges of its argument sequence of \acp{rv} $\boldsymbol{R}_{\phi_j} \subseteq \boldsymbol{R}$.
    The image of $\phi_j$ consists of non-negative real numbers (potentials), and each factor is non-trivial in the sense that it attains at least one positive value.
	For each pair of variable node $R_i \in \boldsymbol R$ and factor node $f_j \in \boldsymbol F$, there is an edge $\{ R_i, \allowbreak f_j \} \in \boldsymbol E \subseteq \{ \{R, f\} \mid R \in \boldsymbol R \land f \in \boldsymbol F \}$ if $R_i \in \boldsymbol{R}_{\phi_j}$.
	The full joint probability distribution encoded by $M$ for an assignment $(R_1 = r_1, \allowbreak \ldots, \allowbreak R_p = r_p)$ to the \acp{rv} in $\boldsymbol R$, abbreviated as $\boldsymbol R = \boldsymbol r$, is the normalised product over all factors in $M$:\vspace{-0.25cm}
	\begin{align*}
		P_M(\boldsymbol R = \boldsymbol r)
		&= \frac{1}{Z} \psi(\boldsymbol r)=\frac{1}{Z} \prod_{j = 1}^{m} \phi_j(\boldsymbol{R}_{\phi_j} = \boldsymbol r_j),\\[-0.75cm]
	\end{align*}
	where $\boldsymbol r_j$ is a projection of $\boldsymbol r$ to the argument list of $\phi_j$ and $Z$ is the normalisation constant, defined as $Z = \sum_{\boldsymbol r \in \mathcal{X}_{\boldsymbol{R}}
    } \prod_{j=1}^{m} \phi_j(\boldsymbol{R}_{\phi_j}= \boldsymbol r_j)$.
\end{definition}

\begin{example} \label{ex:commutative_approx_example_fg}
	\cref{fig:commutative_approx_example_fg_graph}~shows an \ac{fg} $M$ modelling the interplay between the competences $ComA$ and $ComB$ of two employees $Alice$ and $Bob$, respectively, and the revenue $Rev$ of the company they work for.
	The ranges of the \acp{rv} are $\range{ComA} = \range{ComB} = \range{Rev} = \{\high, \allowbreak \low\}$ and \cref{fig:commutative_approx_example_potential_table} shows the function definition of the factor $\phi_3(ComA, ComB, Rev)$, illustrated as a potential table, i.e., $\phi_3(ComA = \high, \allowbreak ComB = \high, \allowbreak Rev = \high) = \varphi_1$, and so on, such that $\varphi_1, \allowbreak \ldots, \allowbreak \varphi_6 \in \mathbb{R}_{\geq 0}$ are non-negative real numbers with $\varphi_i \neq \varphi_j$ for all $i \neq j$.
	We omit the potential tables of $\phi_1(ComA)$ and $\phi_2(ComB)$ for brevity.
\end{example}\vspace{-0.7cm}

\begin{figure}
	\centering
	\sbox0{\begin{tikzpicture}
	\node (t3) {
		\begin{tabular}{lllc}
			\toprule
			$ComA$ & $ComB$ & $Rev$ & $\phi_3(ComA, ComB, Rev)$ \\ \midrule
			\high  & \high  & \high & $\varphi_{1}$             \\
			\high  & \high  & \low  & $\varphi_{2}$             \\
			\high  & \low   & \high & $\varphi_{3}$             \\
			\high  & \low   & \low  & $\varphi_{4}$             \\
			\low   & \high  & \high & $\varphi_{3}$             \\
			\low   & \high  & \low  & $\varphi_{4}$             \\
			\low   & \low   & \high & $\varphi_{5}$             \\
			\low   & \low   & \low  & $\varphi_{6}$             \\ \bottomrule
		\end{tabular}
	};
\end{tikzpicture}}
	\begin{subfigure}[b]{0.44\linewidth}
		\centering
		\begin{minipage}[c][\dimexpr\ht0+\dp0\relax][c]{\linewidth}
			\centering
			\begin{tikzpicture}[rv/.append style={minimum height=2.2em, minimum width=5.2em}]
	\node[rv, draw] (rev) {$Rev$};
	\node[rv, draw, above left = 2.5em and 0.3em of rev] (ca) {$ComA$};
	\node[rv, draw, above right = 2.5em and 0.3em of rev] (cb) {$ComB$};

	\factor{above}{ca}{0.5em}{180}{$f_1$}{f1}
	\factor{above}{cb}{0.5em}{0}{$f_2$}{f2}
	\factor{above}{rev}{0.75em}{90}{$f_3$}{f3}

	\draw (f1) -- (ca);
	\draw (f2) -- (cb);
	\draw (ca) -- (f3);
	\draw (cb) -- (f3);
	\draw (f3) -- (rev);
\end{tikzpicture}
		\end{minipage}
        \vspace{-0.5cm}
		\caption{}
		\label{fig:commutative_approx_example_fg_graph}
	\end{subfigure}
	\begin{subfigure}[b]{0.5\linewidth} 
		\centering
		\usebox0
        \vspace{-0.5cm}
		\caption{}
		\label{fig:commutative_approx_example_potential_table}
	\end{subfigure}
    \vspace{-0.3cm}
	\caption{(a) An \ac{fg} modelling the interplay between the competences of two employees and the revenue of the company they work for. (b) The function definition of $\phi_3$, where $\varphi_1, \allowbreak \ldots, \allowbreak \varphi_6 \in \mathbb{R}_{\geq 0}$ with $\varphi_i \neq \varphi_j$ for all $i \neq j$.}
	\label{fig:commutative_approx_example_fg}
\end{figure}\vspace{-0.65cm}

\cref{ex:commutative_approx_example_fg} illustrates that the output of $\phi_3(ComA, \allowbreak ComB, \allowbreak Rev)$ depends only on the number of highly competent employees, not on their identities.
For instance, $\phi_3(ComA = \high, \allowbreak ComB = \low, \allowbreak Rev = \high) = \phi_3(ComA = \low, \allowbreak ComB = \high, \allowbreak Rev = \high) = \varphi_3$.
Thus, permuting $ComA$ and $ComB$ leaves the output unchanged---only the count of competences set to $\high$ matters.
This invariance under permutations is a key property exploited by lifted inference algorithms and motivates the following formal definition of commutativity.


\begin{definition}[Commutative Factor~\cite{Luttermann2024a}] \label{def:commutative_approx_commutative_factor}
	Let $\phi(R_1, \ldots, R_n)$ denote a factor and let $\boldsymbol C_{\phi} \subseteq \boldsymbol{R}_{\phi} := \{ R_1, \ldots, R_n \}$ be a subset of $\phi$'s arguments with $\abs{\boldsymbol C_{\phi}} \geq 2$.
	Then, $\phi$ is \emph{commutative} with respect to $\boldsymbol C_{\phi}$ if for all assignments $\boldsymbol{r}=(r_1, \ldots, \allowbreak r_n) \in \mathcal{X}_{\boldsymbol{R}_{\phi}}$ it holds that $\phi(r_1, \ldots, r_n) = \phi(r_{\pi(1)}, \ldots, r_{\pi(n)})$ for all permutations $\pi$ of $\{ 1, \ldots, n \}$ with $\pi(i) = i$ for all $R_i \notin \boldsymbol C_{\phi}$.
\end{definition}

We refer to the arguments in $\boldsymbol C_{\phi}$ as commutative arguments.
Note that arguments can only be commutative if they have the same range because only arguments with identical ranges can be permuted.
Thus, when searching for commutative arguments, only subsets with the same range need to be considered.
Consider again 
$\phi_3(ComA, ComB, Rev)$ from \cref{fig:commutative_approx_example_potential_table}.
According to \cref{def:commutative_approx_commutative_factor}, $\phi_3$ is commutative with respect to $\{ComA, \allowbreak ComB\}$.
The above definition relies on exact equality of potentials.
In practice, however, small estimation errors often break this equality even when the underlying symmetry is still present.
The next section therefore introduces approximate commutativity.

\section{Approximate Commutativity}
So far, the definition of a commutative factor requires strict equality among potentials, which we relax here, introducing the notion of $\varepsilon$-commutativity based on the idea of $\varepsilon$-equivalence \cite{Luttermann2025c}, establish its fundamental properties, and derive the bounds needed for its practical application in lifted model construction.

We begin to introduce the more general notion of \emph{$\varepsilon$-equivalence} between two potentials $\varphi_1, \varphi_2 \in \mathbb{R}_{\geq 0}$, written as $\varphi_1 =_{\varepsilon} \varphi_2$, which holds if\vspace{-0.15cm} 
\begin{align} \label{eq:commutative_approx_eps_equivalent}
    \varphi_1 \in [\varphi_2 \cdot (1 - \varepsilon), \varphi_2 \cdot (1 + \varepsilon)] \text{ and }\varphi_2 \in [\varphi_1 \cdot (1 - \varepsilon), \varphi_1 \cdot (1 + \varepsilon)]. 
    \\[-0.7cm] \nonumber
\end{align} 
Two factors are \emph{$\varepsilon$-equivalent} if there exists a permutation of arguments such that \cref{eq:commutative_approx_eps_equivalent} holds for all pairwise row comparisons of potentials.
Combining $\varepsilon$-equivalence and commutativity naturally leads to the notion of $\varepsilon$-commutativity.
\begin{definition}[$\varepsilon$-Commutative Factor] \label{def:commutative_approx_eps_commutative_factor}
	Let $\phi(R_1, \ldots, R_n)$ denote a factor, let $\boldsymbol C_{\phi} \subseteq \boldsymbol{R}_{\phi}$ be a subset of its arguments with $\abs{\boldsymbol C_{\phi}} \geq 2$, and let $\varepsilon \in [0, 1]$.
	 Then, $\phi$ is \emph{$\varepsilon$-commutative} with respect to $\boldsymbol C_{\phi}$ if for all assignments $(r_1, \ldots, \allowbreak r_n) \in \mathcal{X}_{\boldsymbol{R}_{\phi}}$ and all permutations \vspace{-0.2cm}
 \begin{align*}
  \pi\in \Pi_{\boldsymbol C_{\phi}} := \left\{ \pi \in S_n \mid \pi(i)=i \text{ for all } R_i \notin \boldsymbol C_{\phi} \right\},\\[-0.75cm]
 \end{align*}
 where $S_n$ is the symmetric group over $\{ 1, \ldots, n \}$, the potentials $\phi(r_1, \ldots, r_n)$ and $\phi(r_{\pi(1)}, \ldots, r_{\pi(n)}) $ are $\varepsilon$-equivalent, i.e.,\vspace{-0.25cm}
 \begin{alignat*}{3}
        \phi(r_1, \ldots, r_n) &\in [\phi(r_{\pi(1)}, \ldots, r_{\pi(n)}) &&\cdot (1 - \varepsilon), \phi(r_{\pi(1)}, \ldots, r_{\pi(n)}) &&\cdot (1 + \varepsilon)] \text{ and} \\
        \phi(r_{\pi(1)}, \ldots, r_{\pi(n)}) &\in [\phi(r_1, \ldots, r_n) &&\cdot (1 - \varepsilon), \phi(r_1, \ldots, r_n) &&\cdot (1 + \varepsilon)].
 \end{alignat*}
\end{definition}

We refer to the arguments in $\boldsymbol C_{\phi}$ as $\varepsilon$-commutative arguments.
Note that, for the choice of $\varepsilon = 0$, $\varepsilon$-commutativity reduces to strict commutativity.
 
As illustrated in \cref{fig:commutative_approx_example_potential_table_crv}, the commutativity of a factor can be exploited to obtain a more compact representation of the factor by introducing a \ac{crv} that groups the commutative arguments such that the original potentials are replaced by aggregated potentials.
Here, $ComA$ and $ComB$ are not listed individually but have been replaced by a \ac{crv} $\#_{E}[Com(E)]$ that counts the number of employees with a certain competence (i.e., $[2,0]$ stands for two employees with a high competence and none with a low competence, $[1,1]$ stands for one employee with a high competence and one with a low competence, and $[0,2]$ stands for two employees with a low competence and none with a high competence).
Given a commutative factor such as $\phi_3$ from \cref{fig:commutative_approx_example_potential_table}, all assignments that differ only by a permutation of the range values of the commutative arguments share the same potential, so choosing $\varphi^*_i = \varphi_i$ preserves the semantics exactly.
However, for an $\varepsilon$-commutative factor such as $\phi$ from \cref{fig:commutative_approx_example_potential_table_approx}, potentials corresponding to assignments only differ by a permutation on the commutative arguments may deviate by a factor of up to $(1 \pm \varepsilon)$, and hence collapsing them into a single potential $\varphi^*_i$ inevitably discards information (e.g., $\varphi^*_3$ must be chosen to represent both $\varphi_3$ and $\varphi'_3$ at the same time).
We therefore seek to choose $\varphi^*_i$ such that its deviation from the original potentials is minimised.

\begin{table}
	\centering
	\begin{subtable}[t]{0.49\linewidth}
		\centering
		\resizebox{\linewidth}{!}{\begin{tikzpicture}
	\node (t3) {
		\begin{tabular}{lllc}
			\toprule
			$ComA$ & $ComB$ & $Rev$ & $\phi(ComA, ComB, Rev)$ \\ \midrule
			\high  & \high  & \high & $\varphi_{1}$           \\
			\high  & \high  & \low  & $\varphi_{2}$           \\
			\high  & \low   & \high & $\varphi_{3}$           \\
			\high  & \low   & \low  & $\varphi_{4}$           \\
			\low   & \high  & \high & $\varphi'_{3}$          \\
			\low   & \high  & \low  & $\varphi'_{4}$          \\
			\low   & \low   & \high & $\varphi_{5}$           \\
			\low   & \low   & \low  & $\varphi_{6}$           \\ \bottomrule
		\end{tabular}
	};
\end{tikzpicture}}
		\vspace{-0.5cm}
        \caption{}
		\label{fig:commutative_approx_example_potential_table_approx}
	\end{subtable}
	\begin{subtable}[t]{0.49\linewidth}
		\centering
		\resizebox{\linewidth}{!}{\begin{tikzpicture}
	\node (t3) {
		\begin{tabular}{clc}
			\toprule
			$\#_E[Com(E)]$ & $Rev$ & $\phi(\#_E[Com(E)], Rev)$ \\ \midrule
			$[2,0]$        & \high & $\varphi^*_{1}$           \\
			$[2,0]$        & \low  & $\varphi^*_{2}$           \\
			$[1,1]$        & \high & $\varphi^*_{3}$           \\
			$[1,1]$        & \low  & $\varphi^*_{4}$           \\
			$[0,2]$        & \high & $\varphi^*_{5}$           \\
			$[0,2]$        & \low  & $\varphi^*_{6}$           \\ \bottomrule
		\end{tabular}
	};
\end{tikzpicture}}
        \vspace{-0.5cm}
		\caption{}
		\label{fig:commutative_approx_example_potential_table_crv}
	\end{subtable}
    \vspace{-0.3cm}
	\caption{(a) An exemplary factor $\phi(ComA, ComB, Rev)$ with $\varepsilon$-commutative arguments $\{ComA, \allowbreak ComB\}$ provided that $\varphi_3$ and $\varphi'_3$ as well as $\varphi_4$ and $\varphi'_4$ are $\varepsilon$-equivalent. (b) A compressed representation of $\phi(ComA, ComB, Rev)$ from (a) using a \ac{crv} to represent the $\varepsilon$-commutative arguments $\{ComA, \allowbreak ComB\}$.}
	\label{fig:commutative_approx_example_potential_table_approx_crv}
\end{table}

\begin{example}
	Consider the factor $\phi(ComA, ComB, Rev)$ from \cref{fig:commutative_approx_example_potential_table_approx}.
	Given that $\varphi_3$ and $\varphi'_3$ as well as $\varphi_4$ and $\varphi'_4$ are $\varepsilon$-equivalent for some $\varepsilon \in [0, 1]$, $\phi$ is $\varepsilon$-commutative with respect to $\{ComA, \allowbreak ComB\}$.
\end{example}

In the following, we aim to find an optimal representative factor to compress an $\varepsilon$-commutative factor such that the underlying semantics is preserved as much as possible.
To do so, we need a few theoretical properties induced by the notion of $\varepsilon$-commutativity.
The following lemma, which has been proven only across different factors \cite[Lemma~6]{Luttermann2025c}, establishes a useful consequence of the symmetric definition of $\varepsilon$-commutativity and serves as a tool to derive subsequent bounds.

\begin{restatable}{lemma}{symmetriclemma}\label{lemma:symmetry_varepsi_commut}
Let $\phi$ denote a factor that is $\varepsilon$-commutative with respect to a subset of its arguments $\boldsymbol C_{\phi}$. Then, for all permutations $\pi\in\Pi_{\boldsymbol C_{\phi}}$, it holds that\vspace{-0.25cm} 
     \begin{alignat*}{3}
        \phi(r_1, \ldots, r_n) &\in [\phi(r_{\pi(1)}, \ldots, r_{\pi(n)}) &&\cdot \frac{1}{1 + \varepsilon}, \phi(r_{\pi(1)}, \ldots, r_{\pi(n)}) &&\cdot (1 + \varepsilon)] \text{ and} \\
        \phi(r_{\pi(1)}, \ldots, r_{\pi(n)}) &\in [\phi(r_1, \ldots, r_n) &&\cdot \frac{1}{1 + \varepsilon}, \phi(r_1, \ldots, r_n) &&\cdot (1 + \varepsilon)].
    \end{alignat*}
\end{restatable}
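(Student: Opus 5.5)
The plan is to fix an arbitrary assignment $(r_1, \ldots, r_n) \in \mathcal{X}_{\boldsymbol{R}_{\phi}}$ and a permutation $\pi \in \Pi_{\boldsymbol C_{\phi}}$. Abbreviate $a := \phi(r_1, \ldots, r_n)$ and $b := \phi(r_{\pi(1)}, \ldots, r_{\pi(n)})$. By \cref{def:commutative_approx_eps_commutative_factor}, $a$ and $b$ are $\varepsilon$-equivalent, that is,
\begin{align*}
b(1-\varepsilon) \leq a \leq b(1+\varepsilon) \quad\text{and}\quad a(1-\varepsilon) \leq b \leq a(1+\varepsilon).
\end{align*}
The claimed statement is symmetric in $a$ and $b$, so it suffices to prove $a \in [b/(1+\varepsilon), b(1+\varepsilon)]$; the second containment follows by exchanging the roles of $a$ and $b$.

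The upper bound $a \leq b(1+\varepsilon)$ is given directly by the definition. For the lower bound, I will not use the left endpoint $b(1-\varepsilon)$. Instead I will use the \emph{other} inequality, $b \leq a(1+\varepsilon)$. Since $\varepsilon \in [0,1]$, we have $1+\varepsilon > 0$, so dividing gives $a \geq b/(1+\varepsilon)$. This is the whole point of the symmetric definition: each upper bound on one potential yields a lower bound on the other after dividing by $1+\varepsilon$.

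It is worth remarking, though it is not needed for the proof, that $\frac{1}{1+\varepsilon} \geq 1-\varepsilon$ because $(1-\varepsilon)(1+\varepsilon) = 1-\varepsilon^2 \leq 1$. So the new interval is genuinely tighter than the one in the definition, which is why the lemma is useful. There is no real obstacle. The only points needing care are that the division by $1+\varepsilon$ is legitimate (it is, since $1+\varepsilon \geq 1$) and that the argument does not need $a, b > 0$: zero potentials satisfy all the inequalities trivially.
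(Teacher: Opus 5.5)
Your proof is correct and matches the paper's argument. Both take the two upper bounds $\phi(r_1,\ldots,r_n)\leq \phi(r_{\pi(1)},\ldots,r_{\pi(n)})\cdot(1+\varepsilon)$ and $\phi(r_{\pi(1)},\ldots,r_{\pi(n)})\leq\phi(r_1,\ldots,r_n)\cdot(1+\varepsilon)$ from the symmetric definition, and divide each by $1+\varepsilon$ to obtain the lower bound on the other potential.
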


The proof follows the same structural argument as \cite[Lemma~6]{Luttermann2025c}, using the symmetric definition of $\varepsilon$-commutativity.
Details are given in~\cref{appendix:commutative_approx_proofs}.

Before we turn our attention to the compression of an $\varepsilon$-commutative factor, we briefly state two important structural properties of $\varepsilon$-commutativity, which are algorithmically relevant when it comes to detecting $\varepsilon$-commutativity of a factor (which needs to be done before the compression can take place).
A natural question is whether $\varepsilon$-commutativity with respect to a set is inherited by its subsets (as it is the case for the exact version of commutativity \cite[Prop.~10]{Luttermann2026a}).
\begin{proposition}\label{prop:eps_comm_subset}
    Let $\phi$ be $\varepsilon$-commutative with respect to $\boldsymbol C_{\phi}$.
    Then, $\phi$ is $\varepsilon$-commutative with respect to every subset $\boldsymbol C_{\phi}'\subseteq \boldsymbol C_{\phi}$ satisfying $\abs{\boldsymbol C_{\phi}'}\geq 2$.
\end{proposition}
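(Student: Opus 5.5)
The plan is to prove \cref{prop:eps_comm_subset} by a direct inclusion of permutation groups. Given $\boldsymbol C_{\phi}' \subseteq \boldsymbol C_{\phi}$ with $\abs{\boldsymbol C_{\phi}'} \geq 2$, the only requirement of \cref{def:commutative_approx_eps_commutative_factor} that changes between the two sets is the family of permutations quantified over. The remaining conditions carry over directly: $\varepsilon \in [0,1]$, $\boldsymbol C_{\phi}' \subseteq \boldsymbol{R}_{\phi}$, and $\abs{\boldsymbol C_{\phi}'} \geq 2$. So it suffices to show
\[
\Pi_{\boldsymbol C_{\phi}'} \subseteq \Pi_{\boldsymbol C_{\phi}}.
\]

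Let $\pi \in \Pi_{\boldsymbol C_{\phi}'}$, so $\pi \in S_n$ and $\pi(i) = i$ for every $R_i \notin \boldsymbol C_{\phi}'$. Take any $R_i \notin \boldsymbol C_{\phi}$. Since $\boldsymbol C_{\phi}' \subseteq \boldsymbol C_{\phi}$, contraposition gives $R_i \notin \boldsymbol C_{\phi}'$, and therefore $\pi(i) = i$. Hence $\pi \in \Pi_{\boldsymbol C_{\phi}}$.

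Now fix an arbitrary assignment $(r_1, \ldots, r_n) \in \mathcal{X}_{\boldsymbol{R}_{\phi}}$ and an arbitrary $\pi \in \Pi_{\boldsymbol C_{\phi}'}$. By the inclusion, $\pi \in \Pi_{\boldsymbol C_{\phi}}$. The assumed $\varepsilon$-commutativity of $\phi$ with respect to $\boldsymbol C_{\phi}$ then gives both interval conditions for $\phi(r_1, \ldots, r_n)$ and $\phi(r_{\pi(1)}, \ldots, r_{\pi(n)})$. These are exactly the conditions \cref{def:commutative_approx_eps_commutative_factor} demands for $\boldsymbol C_{\phi}'$. Since the assignment and $\pi$ were arbitrary, $\phi$ is $\varepsilon$-commutative with respect to $\boldsymbol C_{\phi}'$.

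I expect no real obstacle; the argument is a restriction of quantifiers. One point deserves a remark. For exact commutativity, a subset $\boldsymbol C_{\phi}'$ sometimes needs reasoning about compositions of transpositions. Here no compositions arise: the definition quantifies over all permutations in $\Pi_{\boldsymbol C_{\phi}}$ directly rather than over generators, so no chaining of $\varepsilon$-bounds, which would accumulate error, is needed. \cref{lemma:symmetry_varepsi_commut} is therefore not required either.
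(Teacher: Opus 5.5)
Your proof is correct and follows the paper's argument: both reduce the claim to the inclusion $\Pi_{\boldsymbol C_{\phi}'} \subseteq \Pi_{\boldsymbol C_{\phi}}$ and then restrict the quantifier over permutations in \cref{def:commutative_approx_eps_commutative_factor}. The one difference is that you prove the inclusion explicitly by contraposition, whereas the paper only asserts it.
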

\begin{proof}
Since, $\boldsymbol C_{\phi}'\subseteq \boldsymbol C_{\phi}$, we have $\Pi_{\boldsymbol C'_{\phi}}\subseteq \Pi_{\boldsymbol C_{\phi}}$.
Hence, by \cref{def:commutative_approx_eps_commutative_factor}, for all $\pi\in\Pi_{\boldsymbol{C}_\phi}$
and assignments $\boldsymbol{r}=(r_1, \ldots, r_n)$, it holds that\vspace{-0.25cm}
\begin{alignat*}{3}
        \phi(\boldsymbol{r}) &\in [\phi(r_{\pi(1)}, \ldots, r_{\pi(n)}) &&\cdot (1 - \varepsilon), \phi(r_{\pi(1)}, \ldots, r_{\pi(n)}) &&\cdot (1 + \varepsilon)] \text{ and} \\
        \phi(r_{\pi(1)}, \ldots, r_{\pi(n)}) &\in [\phi(\boldsymbol{r}) &&\cdot (1 - \varepsilon), \phi(\boldsymbol{r}) &&\cdot (1 + \varepsilon)].
\end{alignat*}
Since $\Pi_{\boldsymbol C'_{\phi}} \subseteq \Pi_{\boldsymbol C_{\phi}}$, the same relations hold for all $\pi' \in \Pi_{\boldsymbol C'_{\phi}}$. \qedhere
\end{proof}

\cref{prop:eps_comm_subset} shows that $\varepsilon$-commutativity is preserved when moving from a set of \acp{rv} to its subsets.
Hence, a factor that is commutative with respect to $\boldsymbol C_{\phi}'$ may be $\varepsilon$-commutative with respect to a strict superset $\boldsymbol C_{\phi}\supsetneq\boldsymbol C_{\phi}'$, thereby enabling further compression of the original factor.
An important follow-up question is whether local $\varepsilon$-commutativity of smaller subsets implies this also for their union, which is indeed the case for exact commutativity~\cite[Theorem~11]{Luttermann2026a} and gives rise to an efficient algorithm to compute subsets of commutative arguments~\cite[Algorithm~2]{Luttermann2026a}.
The following theorem shows that the corresponding property does not hold for $\varepsilon$-commutativity, implying that computing subsets of $\varepsilon$-commutative arguments is more sophisticated than computing subsets of strictly commutative arguments.

\begin{theorem} \label{th:commutative_approx_union_not_closed}
     $\varepsilon$-commutativity is not closed under unions of sets with respect to which a factor $\phi$ is $\varepsilon$-commutative.
\end{theorem}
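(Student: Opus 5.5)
The claim is a non-closure statement, so I will prove it with an explicit counterexample. I will build a factor $\phi$ and two overlapping argument sets $\boldsymbol C_1, \boldsymbol C_2$ such that $\phi$ is $\varepsilon$-commutative with respect to each of them but not with respect to $\boldsymbol C_1 \cup \boldsymbol C_2$.

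The mechanism is that $\varepsilon$-equivalence is not transitive. Each generating transposition may move a potential by a factor of at most $(1+\varepsilon)$. Composing two of them can move it by roughly $(1+\varepsilon)^2$, which leaves the allowed window.

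\textbf{Setup.} I take three Boolean arguments $\phi(R_1,R_2,R_3)$ with $\range{R_i}=\{\high,\low\}$, and set $\boldsymbol C_1=\{R_1,R_2\}$ and $\boldsymbol C_2=\{R_2,R_3\}$, so that $\boldsymbol C_1\cup\boldsymbol C_2=\{R_1,R_2,R_3\}$. I fix a concrete $\varepsilon\in(0,1)$, say $\varepsilon = 0.1$. The only nontrivial permutation orbit is the one containing exactly one $\high$. (The orbit with exactly one $\low$ behaves the same way; the assignments with all $\high$ or all $\low$ are fixed points.)

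\textbf{Potentials.} On the one-$\high$ orbit I define
\begin{align*}
\phi(\high,\low,\low)=a, \qquad \phi(\low,\high,\low)=b, \qquad \phi(\low,\low,\high)=c,
\end{align*}
and I set every other potential to $1$, so the factor is non-trivial. The transposition of $R_1,R_2$ compares $a$ with $b$ and leaves $c$ fixed. The transposition of $R_2,R_3$ compares $b$ with $c$ and leaves $a$ fixed. For $\boldsymbol C_1$ and $\boldsymbol C_2$, the groups $\Pi_{\boldsymbol C_1}$ and $\Pi_{\boldsymbol C_2}$ consist only of the identity and that one transposition. So I need exactly $a=_\varepsilon b$ and $b=_\varepsilon c$, with the identity comparisons trivial.

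The union $\Pi_{\{R_1,R_2,R_3\}}=S_3$ contains the transposition of $R_1,R_3$. This maps $(\high,\low,\low)$ to $(\low,\low,\high)$ and so requires $a=_\varepsilon c$.

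Concretely, I take $b=1$, $a=1+\varepsilon$ and $c=1-\varepsilon$, or a similar choice.

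\textbf{Verification.} First I check the two pairwise conditions directly with \cref{eq:commutative_approx_eps_equivalent}.
\begin{itemize}
\item $a=_\varepsilon b$: $a\in[1-\varepsilon,1+\varepsilon]$ holds, and $b=1\in[(1+\varepsilon)(1-\varepsilon),(1+\varepsilon)^2]$ holds because $1-\varepsilon^2\le 1$.
\item $b=_\varepsilon c$: this holds in the same way.
\end{itemize}
Then I check that $a=_\varepsilon c$ fails. It requires $1+\varepsilon\le(1-\varepsilon)(1+\varepsilon)=1-\varepsilon^2$, which is false for any $\varepsilon>0$. Hence $\phi$ is not $\varepsilon$-commutative with respect to the union.

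This also shows the statement is genuinely about $\varepsilon>0$; for $\varepsilon=0$ closure holds by the cited exact result.

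\textbf{Main obstacle.} The only step needing care is completeness of the check for $\boldsymbol C_1$ and $\boldsymbol C_2$. Every assignment and every permutation in $\Pi_{\boldsymbol C_i}$ must be covered, including the one-$\low$ orbit, which I make exactly symmetric by setting all its entries equal to $1$. I would list the eight rows explicitly, as a potential table analogous to \cref{fig:commutative_approx_example_potential_table_approx}, so that this verification is immediate.
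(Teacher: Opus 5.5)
Your strategy is sound: two overlapping pairs $\boldsymbol C_1=\{R_1,R_2\}$ and $\boldsymbol C_2=\{R_2,R_3\}$ on Boolean arguments, with the transposition of $R_1,R_3$ as the witness for the union. The concrete potentials you chose do not work, however, because the check $b=_\varepsilon c$ does not ``hold in the same way''.

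With $b=1$ and $c=1-\varepsilon$, the second half of \cref{eq:commutative_approx_eps_equivalent} requires $b\in[c(1-\varepsilon),c(1+\varepsilon)]=[(1-\varepsilon)^2,1-\varepsilon^2]$, i.e.\ $1\le 1-\varepsilon^2$. This is false for every $\varepsilon>0$; for $\varepsilon=0.1$ it reads $1\notin[0.81,0.99]$. It is the very inequality you use to refute $a=_\varepsilon c$. So your $\phi$ is not $\varepsilon$-commutative with respect to $\boldsymbol C_2$, and the example does not establish non-closure.

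The reason is the asymmetry captured by \cref{lemma:symmetry_varepsi_commut}. Because the definition is two-sided, the effective admissible ratio window is $[\frac{1}{1+\varepsilon},1+\varepsilon]$. A downward step by the factor $1-\varepsilon<\frac{1}{1+\varepsilon}$ overshoots this window, whereas an upward step by $1+\varepsilon$ stays inside it, which is why your check of $a=_\varepsilon b$ succeeds.

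The repair is immediate. Take $c=\frac{1}{1+\varepsilon}$, or use the geometric chain $c=1$, $b=1+\varepsilon$, $a=(1+\varepsilon)^2$. Then each adjacent ratio is exactly $1+\varepsilon$, and both adjacent pairs pass: the upper-bound conditions hold with equality and the lower-bound conditions reduce to $1-\varepsilon^2\le 1$. Meanwhile $a/c=(1+\varepsilon)^2>1+\varepsilon$ violates $a\le c(1+\varepsilon)$ under the union. With that fix your argument is a valid proof of the statement, and a smaller one than the paper's.

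The paper instead uses three-valued ranges and the chain $1,1.1,1.21$ (i.e.\ $1,1+\varepsilon,(1+\varepsilon)^2$) on the six-element orbit of $(\high,\medium,\low)$. It arranges $\phi$ to be $\varepsilon$-commutative with respect to \emph{all three} pairs, so the union fails only under a 3-cycle. That buys a stronger conclusion, relevant to the algorithmic remark after the theorem: even checking every 2-subset does not certify the union. Your Boolean construction cannot give this. On an orbit of size three every two elements are related by a single transposition, so $\varepsilon$-commutativity for all three pairs would force it for $\{R_1,R_2,R_3\}$.
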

\begin{proof}
   Let $(r_1,r_2,r_3) \in \times_{i=1}^3\{\high, \allowbreak \medium, \allowbreak \low\}$ denote any assignment to the \acp{rv} $R_1,R_2,R_3$.
   Let $\varepsilon$ be $0.1$.
   The potentials of the factor $\phi$ are given by \vspace{-0.15cm} 
    \[
    \begin{array}{ccc|c}
    R_1 & R_2 & R_3 & \phi(R_1,R_2,R_3)\\
    \hline
    \high & \medium & \low & 1\\
    \high & \low & \medium & 1.1\\
    \medium & \high & \low & 1.1\\
    \medium & \low & \high & 1.21\\
    \low & \high & \medium & 1.21\\
    \low & \medium & \high & 1.1\\
    \end{array}\vspace{-0.15cm} 
    \]
All non-listed assignments are mapped to the potential $1$.
$\phi$ is $\varepsilon$-commutative with respect to all individual pairs $\{R_1,\allowbreak R_2\}$, $\{R_1,\allowbreak R_3\}$, and $\{R_2,\allowbreak R_3\}$, but not with respect to the set $\{R_1,\allowbreak R_2,\allowbreak R_3\}$.
Consider the assignment $(r_1,\allowbreak r_2,\allowbreak r_3)=(\high,\allowbreak \medium,\allowbreak \low)$ and the permutation $\pi(1)=3,\allowbreak \pi(2)=1,\allowbreak \pi(3)=2$, (cycle of two transpositions), which maps $(\high,\allowbreak \medium,\allowbreak \low)$ to $(r_{\pi(1)},\allowbreak r_{\pi(2)},\allowbreak r_{\pi(3)})=(r_{3},\allowbreak r_{1},\allowbreak r_{2})=(\low,\allowbreak \high,\allowbreak \medium)$.
However, $\phi(\high,\allowbreak \medium,\allowbreak \low)=1$ is not an element of the interval $[\phi(\low,\allowbreak \high,\allowbreak \medium)\cdot(1-0.1),\phi(\low,\allowbreak \high,\allowbreak \medium)\cdot(1+0.1) ]=[1.089,\allowbreak 1.331]$.
Thus, $\phi$ is not $\varepsilon$-commutative with respect to $\{R_1,\allowbreak R_2,\allowbreak R_3\}$.\qedhere
\end{proof}

\Cref{th:commutative_approx_union_not_closed} implies that the algorithm presented in \cite[Algorithm~2]{Luttermann2026a} to efficiently compute a maximum sized subset of commutative arguments cannot directly be transferred to compute a subset of $\varepsilon$-commutative arguments instead.
We next turn our attention to the compression of an $\varepsilon$-commutative factor, which is performed by replacing it with a smaller representative factor.

\subsection{Replacement of Potentials by Representative}
The construction developed in this subsection formalises the aggregation of
potentials over symmetry sets induced by permutations of $\varepsilon$-commutative arguments.
We first introduce symmetrised representative factors as weighted
representatives over symmetry sets and then study the algebraic structure
induced by the permutation group $\Pi_{\boldsymbol C_\phi}$, which
partitions the assignment space into equivalence classes of pairwise
$\varepsilon$-equivalent potentials.

\begin{definition}[Symmetrised Representative Factor]
\label{def:symmetrised_representative_factor} Let $\phi(R_1,\ldots,R_n)$ be a factor, which is $\varepsilon$-commutative with respect to $\boldsymbol C_{\phi} \subseteq \boldsymbol R_{\phi}
 = \{R_1,\ldots,R_n\}$.
For an assignment
$\boldsymbol{r}=(r_1,\ldots,r_n)\in\mathcal X_{\boldsymbol R_{\phi}}$,
the \emph{symmetrised representative factor}
$\phi^*$ associated with $\phi$ is defined by \vspace{-0.15cm}
\begin{align*}
\phi^*(\boldsymbol{r})
:=
\sum_{\boldsymbol{r}' \in \mathfrak{S}_{\phi}(\boldsymbol{r})} \omega_{\boldsymbol{r}'}\cdot
\phi(\boldsymbol{r}'),
\\[-0.75cm] \nonumber
\end{align*}
where $\mathfrak{S}_{\phi}(\boldsymbol{r})
:=
\left\{
(r_{\pi(1)},\ldots,r_{\pi(n)})
\;\middle|\;
\pi \in \Pi_{\boldsymbol C_{\phi}}
\right\}$
denotes the \textit{symmetry set} of $\boldsymbol{r}$ under permutations of the $\varepsilon$-commutative arguments, and where the weights satisfy $\omega_{\boldsymbol{r}'}\geq 0$ and $\sum_{\boldsymbol{r}' \in \mathfrak{S}_{\phi}(\boldsymbol{r})} \omega_{\boldsymbol{r}'}=1$.
\end{definition}

To better understand why such a construction is possible, we first investigate the structure underlying these symmetry classes.
This leads us to examine $\Pi_{\boldsymbol C_\phi}$, which is shown to be closed under composition and inversion in the following.

\begin{lemma}\label{lemma:subgroup}
For a set $\boldsymbol{C}_{\phi}\subseteq \boldsymbol{R}_{\phi} =\{R_1,\ldots,R_n\}$, $\Pi_{\boldsymbol{C}_{\phi}}$ is a subgroup of $S_n$.
\end{lemma}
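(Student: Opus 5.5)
We verify the subgroup criterion directly: $\Pi_{\boldsymbol{C}_{\phi}}$ is nonempty, and it is closed under composition and inversion. Since $S_n$ is finite, closure under composition would already suffice, but checking inverses directly is just as easy.

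\emph{Nonemptiness.} The identity permutation $\mathrm{id} \in S_n$ satisfies $\mathrm{id}(i) = i$ for every $i$. In particular this holds for every $i$ with $R_i \notin \boldsymbol C_{\phi}$, so $\mathrm{id} \in \Pi_{\boldsymbol{C}_{\phi}}$.

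\emph{Closure under composition.} Let $\pi, \sigma \in \Pi_{\boldsymbol{C}_{\phi}}$ and take any index $i$ with $R_i \notin \boldsymbol C_{\phi}$. Then $\sigma(i) = i$, so $(\pi \circ \sigma)(i) = \pi(i) = i$. Hence $\pi \circ \sigma \in \Pi_{\boldsymbol{C}_{\phi}}$; it lies in $S_n$ because $S_n$ is a group.

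\emph{Closure under inversion.} Let $\pi \in \Pi_{\boldsymbol{C}_{\phi}}$ and let $R_i \notin \boldsymbol C_{\phi}$. From $\pi(i) = i$ we get $i = \pi^{-1}(\pi(i)) = \pi^{-1}(i)$. Thus $\pi^{-1}$ also fixes every index outside $\boldsymbol C_{\phi}$, so $\pi^{-1} \in \Pi_{\boldsymbol{C}_{\phi}}$.

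There is no real obstacle. The only point to keep in mind is that $\Pi_{\boldsymbol{C}_{\phi}}$ is exactly the pointwise stabiliser in $S_n$ of the index set $\{ i \mid R_i \notin \boldsymbol C_{\phi} \}$, and pointwise stabilisers are always subgroups; the three checks above are precisely that general fact. The condition $\abs{\boldsymbol C_{\phi}} \geq 2$ is not needed for the subgroup property. It only ensures the subgroup is nontrivial, which will matter when the symmetry sets $\mathfrak{S}_{\phi}(\boldsymbol r)$ are later read as orbits of a group action partitioning $\mathcal X_{\boldsymbol R_{\phi}}$.
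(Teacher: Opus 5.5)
Your proof is correct and follows the paper's argument essentially step for step. Both show that the identity lies in $\Pi_{\boldsymbol C_\phi}$, that $(\pi\circ\sigma)(i)=\pi(\sigma(i))=\pi(i)=i$, and that $i=\pi^{-1}(\pi(i))=\pi^{-1}(i)$ for every $i$ with $R_i\notin\boldsymbol C_\phi$; your side remarks (pointwise stabiliser, the inverse check being redundant by finiteness, $\abs{\boldsymbol C_\phi}\geq 2$ being irrelevant to the subgroup property) are also accurate.
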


\begin{proof}
By definition $\pi_{\text{id}}\in\Pi_{\boldsymbol{C}_{\phi}}$.
If $\pi_{1},\pi_2\in \Pi_{\boldsymbol{C}_{\phi}}$, then $(\pi_1\circ \pi_2) (i) = \pi_1(\pi_2(i))=\pi_1(i)=i$ for all $R_i\notin \boldsymbol{C}_{\phi}$, resulting in closed group operations $\pi_1\circ \pi_2\in\Pi_{\boldsymbol{C}_{\phi}}$.
If $\pi\in\Pi_{\boldsymbol{C}_{\phi}}$, then $\pi\in S_n$ due to $\Pi_{\boldsymbol{C}_{\phi}}\subseteq S_n$.
$S_n$ is a group and therefore $\pi^{-1}\in S_n$.
This means that for all $i$ with $R_i\notin \boldsymbol{C}_{\phi}$ it holds that $i=\pi_{\text{id}}(i)=(\pi^{-1} \circ \pi)(i) =\pi^{-1}(\pi(i)) =\pi^{-1}(i)$, which means that the inverse $\pi^{-1}$ is also in $\Pi_{\boldsymbol{C}_{\phi}}$.
\end{proof}
This subgroup naturally induces a group operation of $\Pi_{\boldsymbol C_\phi}$ on the assignment space $\mathcal X_{\boldsymbol R_\phi}$ via $(r_1,\ldots,r_n)
\;\mapsto\;
(r_{\pi(1)},\ldots,r_{\pi(n)})
 \;\text{for } \pi\in\Pi_{\boldsymbol C_\phi}$, whose properties are inherited by the corresponding symmetry sets $\mathfrak S_\phi(\boldsymbol{r})$.

Hence, symmetry sets are precisely the orbits of this group operation.
\begin{corollary}\label{corollary:subgroup_overlap}
Let $\phi(R_1,\ldots,R_n)$ be a factor, which is $\varepsilon$-commutative with respect to $\boldsymbol C_{\phi} \subseteq \boldsymbol R_{\phi}  = \{R_1,\ldots,R_n\}$. 
 If $\boldsymbol{r}'\in \mathfrak{S}_{\phi}(\boldsymbol{r})$, then $\mathfrak{S}_{\phi}(\boldsymbol{r}')=\mathfrak{S}_{\phi}(\boldsymbol{r})$.
\end{corollary}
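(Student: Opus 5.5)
Since symmetry sets are orbits of the action of the subgroup $\Pi_{\boldsymbol C_\phi}$ from \cref{lemma:subgroup}, the plan is to prove the two inclusions $\mathfrak{S}_{\phi}(\boldsymbol{r}')\subseteq\mathfrak{S}_{\phi}(\boldsymbol{r})$ and $\mathfrak{S}_{\phi}(\boldsymbol{r})\subseteq\mathfrak{S}_{\phi}(\boldsymbol{r}')$ directly. The proof uses only the group axioms (closure and inverses) established in \cref{lemma:subgroup}. The $\varepsilon$-commutativity hypothesis is not needed; it only fixes the context in which $\mathfrak{S}_\phi$ is defined.

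We first fix notation. For $\pi\in S_n$ and $\boldsymbol{r}=(r_1,\ldots,r_n)$, write $\boldsymbol{r}^{\pi}:=(r_{\pi(1)},\ldots,r_{\pi(n)})$. The key computation is how this operation composes. Set $\boldsymbol{s}=\boldsymbol{r}^{\pi}$, so that $s_i=r_{\pi(i)}$. Then
\[
(\boldsymbol{r}^{\pi})^{\sigma}=\boldsymbol{s}^{\sigma}=(s_{\sigma(1)},\ldots,s_{\sigma(n)})=(r_{\pi(\sigma(1))},\ldots,r_{\pi(\sigma(n))})=\boldsymbol{r}^{\pi\circ\sigma}.
\]
This is a right action. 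The order of composition must be tracked carefully here, and I expect this index bookkeeping to be the only real obstacle. Whichever order comes out, closure under composition from \cref{lemma:subgroup} places the composite in $\Pi_{\boldsymbol C_\phi}$, so the argument does not depend on it. We also have $\boldsymbol{r}^{\pi_{\text{id}}}=\boldsymbol{r}$.

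Now let $\boldsymbol{r}'\in\mathfrak{S}_{\phi}(\boldsymbol{r})$, so $\boldsymbol{r}'=\boldsymbol{r}^{\pi}$ for some $\pi\in\Pi_{\boldsymbol C_\phi}$.

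For $\mathfrak{S}_{\phi}(\boldsymbol{r}')\subseteq\mathfrak{S}_{\phi}(\boldsymbol{r})$: every element of $\mathfrak{S}_{\phi}(\boldsymbol{r}')$ has the form $(\boldsymbol{r}')^{\sigma}$ with $\sigma\in\Pi_{\boldsymbol C_\phi}$. By the computation above, $(\boldsymbol{r}')^{\sigma}=\boldsymbol{r}^{\pi\circ\sigma}$. Since $\pi\circ\sigma\in\Pi_{\boldsymbol C_\phi}$ by closure, this element lies in $\mathfrak{S}_{\phi}(\boldsymbol{r})$.

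For the reverse inclusion: by \cref{lemma:subgroup}, $\pi^{-1}\in\Pi_{\boldsymbol C_\phi}$. Then
\[
(\boldsymbol{r}')^{\pi^{-1}}=\boldsymbol{r}^{\pi\circ\pi^{-1}}=\boldsymbol{r},
\]
so $\boldsymbol{r}\in\mathfrak{S}_{\phi}(\boldsymbol{r}')$. Applying the first inclusion with the roles of $\boldsymbol{r}$ and $\boldsymbol{r}'$ swapped gives $\mathfrak{S}_{\phi}(\boldsymbol{r})\subseteq\mathfrak{S}_{\phi}(\boldsymbol{r}')$.

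The two inclusions together give $\mathfrak{S}_{\phi}(\boldsymbol{r}')=\mathfrak{S}_{\phi}(\boldsymbol{r})$.
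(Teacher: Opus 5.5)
Your proof is correct and follows the paper's argument: the forward inclusion via $(\boldsymbol r')^{\sigma}=\boldsymbol r^{\pi\circ\sigma}$ and closure of $\Pi_{\boldsymbol C_\phi}$ under composition is exactly the paper's $\pi_1\circ\pi_2$ step, and the reverse inclusion uses inverses, as the paper's does. Your reverse direction shows $\boldsymbol r=(\boldsymbol r')^{\pi^{-1}}\in\mathfrak S_\phi(\boldsymbol r')$ and then reapplies the first inclusion, which is a slightly more explicit version of the paper's remark that it follows analogously.
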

\begin{proof}
As $\boldsymbol{r}'\in\mathfrak S_\phi(\boldsymbol{r})$, there exists $ \pi_1\in\Pi_{\boldsymbol C_\phi}$ such that $\boldsymbol{r}'=(r_{\pi_1(1)},\ldots,r_{\pi_1(n)})$.
Let $\boldsymbol{r}''\in\mathfrak S_\phi(\boldsymbol{r}')$.
Then, there is $\pi_2\in\Pi_{\boldsymbol C_\phi}$ such that $\boldsymbol{r}''=(r'_{\pi_2(1)},\ldots,r'_{\pi_2(n)})= (r_{\pi_1(\pi_2(1))},\ldots,r_{\pi_1(\pi_2(n))})$.
Since $\Pi_{\boldsymbol C_\phi}$ is a subgroup of $S_n$, it is closed under composition, hence $\pi_1\circ\pi_2\in\Pi_{\boldsymbol C_\phi}$.
Therefore $\boldsymbol{r}'' \in\mathfrak S_\phi(\boldsymbol{r})$, which shows $\mathfrak S_\phi(\boldsymbol{r}')\subseteq\mathfrak S_\phi(\boldsymbol{r})$.
The reverse inclusion follows analogously for $\pi_1^{-1},\pi_2^{-1}\in\Pi_{\boldsymbol C_\phi}$.
\end{proof} 
While the previous result describes how $\varepsilon$-commutativity behaves on the level of argument subsets, the following lemma establishes the induced equivalence on the level of potentials via $\varepsilon$-equivalence.

\begin{lemma}\label{lemma:pairwise_eps_equivalent}
Let $\phi(R_1,\ldots,R_n)$ be $\varepsilon$-commutative with respect to $\boldsymbol C_\phi$.
Then, all potentials induced by assignments within the same symmetry set are pairwise $\varepsilon$-equivalent, i.e., if $\boldsymbol{r}',\boldsymbol{r}'' \in \mathfrak S_\phi(\boldsymbol{r})$, then $\phi(\boldsymbol{r}')=_{\varepsilon}\phi(\boldsymbol{r}'')$.
\end{lemma}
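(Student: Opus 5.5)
The plan is to reduce the claim to a single application of the definition of $\varepsilon$-commutativity, with the orbit structure from \cref{corollary:subgroup_overlap} supplying the connecting permutation. Let $\boldsymbol{r}',\boldsymbol{r}''\in\mathfrak S_\phi(\boldsymbol{r})$. By \cref{corollary:subgroup_overlap}, $\mathfrak S_\phi(\boldsymbol{r}')=\mathfrak S_\phi(\boldsymbol{r})$. Hence $\boldsymbol{r}''\in\mathfrak S_\phi(\boldsymbol{r}')$, so there is a $\pi\in\Pi_{\boldsymbol C_\phi}$ with $\boldsymbol{r}''=(r'_{\pi(1)},\ldots,r'_{\pi(n)})$.

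If one wants to avoid invoking the corollary, the permutation can be built directly from the representations $\boldsymbol{r}'=(r_{\pi_1(1)},\ldots,r_{\pi_1(n)})$ and $\boldsymbol{r}''=(r_{\pi_2(1)},\ldots,r_{\pi_2(n)})$. Take $\pi=\pi_1^{-1}\circ\pi_2$. It lies in $\Pi_{\boldsymbol C_\phi}$ by \cref{lemma:subgroup}, since that subgroup is closed under inversion and composition. One then checks $r'_{\pi(i)}=r_{\pi_1(\pi(i))}=r_{\pi_2(i)}$. The only care needed is the order of composition under the index-permuting action; this is a bookkeeping step rather than a genuine obstacle.

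With $\pi$ in hand, apply \cref{def:commutative_approx_eps_commutative_factor} to the assignment $\boldsymbol{r}'$ and the permutation $\pi$. The definition holds for \emph{all} assignments and all $\pi\in\Pi_{\boldsymbol C_\phi}$, and it is stated symmetrically. It therefore yields both $\phi(\boldsymbol{r}')\in[\phi(\boldsymbol{r}'')(1-\varepsilon),\phi(\boldsymbol{r}'')(1+\varepsilon)]$ and $\phi(\boldsymbol{r}'')\in[\phi(\boldsymbol{r}')(1-\varepsilon),\phi(\boldsymbol{r}')(1+\varepsilon)]$. This is exactly $\phi(\boldsymbol{r}')=_\varepsilon\phi(\boldsymbol{r}'')$ in the sense of \cref{eq:commutative_approx_eps_equivalent}.

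The key point to stress is that no transitivity of $=_\varepsilon$ is used. Transitivity fails in general, which is precisely the phenomenon behind \cref{th:commutative_approx_union_not_closed}. The argument avoids it because the group structure yields a \emph{direct} permutation between any two members of the orbit, and $\varepsilon$-commutativity is quantified over the whole group $\Pi_{\boldsymbol C_\phi}$.
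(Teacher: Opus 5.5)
Your proof is correct and is essentially the paper's argument: obtain a single $\tau\in\Pi_{\boldsymbol C_\phi}$ with $\boldsymbol{r}''=(r'_{\tau(1)},\ldots,r'_{\tau(n)})$ from the subgroup property (\cref{lemma:subgroup}, or equivalently \cref{corollary:subgroup_overlap}), then apply \cref{def:commutative_approx_eps_commutative_factor} once to $\boldsymbol{r}'$, so no transitivity of $=_\varepsilon$ is needed. Your composition order is in fact the correct one: under $(r_1,\ldots,r_n)\mapsto(r_{\pi(1)},\ldots,r_{\pi(n)})$ the connecting permutation is your $\pi_1^{-1}\circ\pi_2$, whereas the paper's $\tau=\pi_2\circ\pi_1^{-1}$ fails in general (e.g.\ for two non-commuting transpositions) --- a harmless slip there, since the correct element also lies in $\Pi_{\boldsymbol C_\phi}$.
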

\begin{proof}
Let $\boldsymbol{r}',\boldsymbol{r}'' \in \mathfrak S_\phi(\boldsymbol{r})$.
Then there exist permutations $\pi_1,\pi_2\in\Pi_{\boldsymbol C_\phi}$ such that
$\boldsymbol{r}'=(r_{\pi_1(1)},\ldots,r_{\pi_1(n)})$ and $\boldsymbol{r}''=(r_{\pi_2(1)},\ldots,r_{\pi_2(n)})$.
Since $\Pi_{\boldsymbol C_\phi}$ is a subgroup of $S_n$ (\cref{lemma:subgroup}), it is closed under composition and taking inverses.
Hence, $\tau:=\pi_2\circ\pi_1^{-1} \in \Pi_{\boldsymbol C_\phi}$.
Therefore, $\boldsymbol{r}''= (r'_{\tau(1)},\ldots,r'_{\tau(n)})$.
Because $\phi $ is $\varepsilon$-commutative with respect to $\boldsymbol C_\phi$, it follows from \cref{def:commutative_approx_eps_commutative_factor} that $\phi(\boldsymbol{r}') \in [\phi(\boldsymbol{r}'' )\cdot(1-\varepsilon),
\phi(\boldsymbol{r}'')\cdot(1+\varepsilon)]$ and $\phi( \boldsymbol{r}'')\in [\phi( \boldsymbol{r}')\cdot(1-\varepsilon),\phi( \boldsymbol{r}')\cdot(1+\varepsilon)].$
Hence, it holds that $\phi(r_1',\ldots,r_n') =_{\varepsilon} \phi(r_1'',\ldots,r_n'')$.
\end{proof}
Thus, we can also view $\mathfrak S_\phi(\boldsymbol{r})=:[\boldsymbol{r}]_\phi$ as the equivalence class of the equivalence relation $\sim$ on $\mathcal X_{\boldsymbol R_\phi}$ induced by the group operation of $\Pi_{\boldsymbol C_\phi}$ via \vspace{-0.1cm}
\begin{align*}
\boldsymbol{r} \sim \boldsymbol{r}'
\quad :\Longleftrightarrow \quad
\exists \pi\in\Pi_{\boldsymbol C_\phi}\colon
(r_1',\ldots,r_n')=(r_{\pi(1)},\ldots,r_{\pi(n)}).
\end{align*}
Consequently, the quotient space $\mathcal X_{\boldsymbol R_\phi}/\sim $ is well-defined and can be identified with the set of all symmetry sets.
Hence, there exists a representative set $\mathcal X_{\mathcal R_\phi} \subset \mathcal X_{\boldsymbol R_\phi}$ containing exactly one element per equivalence class, inducing a bijection\vspace{-0.1cm}
\begin{align*}
\mathcal X_{\boldsymbol R_\phi}/\sim
\;\cong\;
\{[\boldsymbol{r}]_\phi \mid \boldsymbol{r} \in \mathcal X_{\mathcal R_\phi}\}=\{\mathfrak S_\phi(\boldsymbol{r}) \mid \boldsymbol{r} \in \mathcal X_{\mathcal R_\phi}\}.
\end{align*}
Different symmetry sets (equivalence classes) of the same factor may induce different weighting schemes.
Within each equivalence class, however, the weights must be chosen consistently with the quotient structure $\mathcal X_{\boldsymbol R_\phi}/\sim$ so that every class contributes exactly once to the representative potential (for the size of the symmetry set, see \Cref{app:additional_results}).
This raises the question whether replacing potentials by symmetrised representatives preserves  $\varepsilon$-equivalence of factors.
\begin{theorem}\label{th:equivalentfactors}
Let $\phi$ be $\varepsilon$-commutative with respect to $\boldsymbol C_{\phi}$ and let $\phi^*$ be its symmetrised representative factor for $\varepsilon\in[0,1]$.
Then, $\phi$ and $\phi^*$ are $\varepsilon$-equivalent.
\end{theorem}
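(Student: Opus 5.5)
We need to show that for every assignment $\boldsymbol{r}$, $\phi(\boldsymbol{r}) =_{\varepsilon} \phi^*(\boldsymbol{r})$. Here the permutation of arguments in the definition of $\varepsilon$-equivalent factors is the identity, so the claim is a row-wise comparison. The plan is to fix an arbitrary $\boldsymbol{r} \in \mathcal X_{\boldsymbol R_\phi}$ and write $\phi^*(\boldsymbol{r})$ as a convex combination of the potentials $\phi(\boldsymbol{r}')$, $\boldsymbol{r}' \in \mathfrak S_\phi(\boldsymbol{r})$. Each such term is $\varepsilon$-equivalent to $\phi(\boldsymbol{r})$, and convex combinations should preserve interval membership.

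\textbf{First inclusion.} Since $\boldsymbol{r} \in \mathfrak S_\phi(\boldsymbol{r})$ (identity permutation), \cref{lemma:pairwise_eps_equivalent} gives $\phi(\boldsymbol{r}') =_{\varepsilon} \phi(\boldsymbol{r})$ for every $\boldsymbol{r}' \in \mathfrak S_\phi(\boldsymbol{r})$. In particular, each $\phi(\boldsymbol{r}')$ lies in $[\phi(\boldsymbol{r})(1-\varepsilon), \phi(\boldsymbol{r})(1+\varepsilon)]$. Multiplying by $\omega_{\boldsymbol{r}'}\ge 0$, summing, and using $\sum \omega_{\boldsymbol{r}'}=1$ yields
\[
\phi(\boldsymbol{r})(1-\varepsilon) \le \phi^*(\boldsymbol{r}) \le \phi(\boldsymbol{r})(1+\varepsilon).
\]
So $\phi^*(\boldsymbol{r}) \in [\phi(\boldsymbol{r})(1-\varepsilon), \phi(\boldsymbol{r})(1+\varepsilon)]$, which is one of the two inclusions.

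\textbf{Second inclusion.} We also need $\phi(\boldsymbol{r}) \in [\phi^*(\boldsymbol{r})(1-\varepsilon), \phi^*(\boldsymbol{r})(1+\varepsilon)]$. Again by \cref{lemma:pairwise_eps_equivalent}, now in the other direction, $\phi(\boldsymbol{r}) \in [\phi(\boldsymbol{r}')(1-\varepsilon), \phi(\boldsymbol{r}')(1+\varepsilon)]$ for each $\boldsymbol{r}'$. Equivalently,
\[
(1-\varepsilon)\phi(\boldsymbol{r}') \le \phi(\boldsymbol{r}) \le (1+\varepsilon)\phi(\boldsymbol{r}').
\]
Taking the same convex combination of these inequalities over $\boldsymbol{r}'$ gives
\[
(1-\varepsilon)\phi^*(\boldsymbol{r}) \le \phi(\boldsymbol{r}) \le (1+\varepsilon)\phi^*(\boldsymbol{r}),
\]
because the middle term does not depend on $\boldsymbol{r}'$ and the weights sum to one.

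\textbf{Edge cases and where care is needed.} The main point requiring care is using both directions of $\varepsilon$-equivalence separately. One might be tempted to invert the first inclusion, which would give only the weaker $1/(1+\varepsilon)$-type bounds of \cref{lemma:symmetry_varepsi_commut}. Averaging each one-sided inequality directly avoids that loss, so this is the step I would write carefully. The degenerate case $\phi(\boldsymbol{r})=0$ needs no special treatment: the intervals are closed and the inequalities remain valid, forcing every $\phi(\boldsymbol{r}')=0$ for $\varepsilon<1$. Likewise $\varepsilon=1$ needs no separate handling, since nonnegativity keeps all bounds meaningful. Since $\boldsymbol{r}$ was arbitrary, $\phi =_\varepsilon \phi^*$ row-wise, and hence the two factors are $\varepsilon$-equivalent.
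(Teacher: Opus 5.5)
Your proof is correct and follows the paper's route. The paper also reduces to \cref{lemma:pairwise_eps_equivalent} and then cites \cite[Theorem~15]{Speller2025b} for the fact that a weighted mean of $\varepsilon$-equivalent potentials remains $\varepsilon$-equivalent to each of them; your averaging of the two one-sided inequalities proves that cited fact directly. One small correction to your side remark: inverting the first inclusion loses only in the upper bound, giving $\phi^*(\boldsymbol r)/(1-\varepsilon)$ instead of $(1+\varepsilon)\phi^*(\boldsymbol r)$. The $\frac{1}{1+\varepsilon}$ lower bound of \cref{lemma:symmetry_varepsi_commut} is actually tighter than $1-\varepsilon$, not weaker.
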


\begin{proof}
By \cref{lemma:pairwise_eps_equivalent}, all potentials $\phi(\boldsymbol{r}')$ with $\boldsymbol{r}'\in \mathfrak{S}_{\phi}(\boldsymbol{r})$ are pairwise $\varepsilon$-equivalent.
Hence, by \cite[Theorem~15]{Speller2025b}, any weighted mean $\sum_{\boldsymbol{r}' \in \mathfrak{S}_{\phi}(\boldsymbol{r})} \omega_{\boldsymbol{r}'}\cdot
\phi(\boldsymbol{r}')$ is also $\varepsilon$-equivalent to $\phi(\boldsymbol{r}')$.
Applied row-wise to all assignments and potentials relative to their equivalence classes, this yields $\varepsilon$-equivalent factors.
\end{proof}
Choosing the arithmetic mean $\overline{\phi}$ as the symmetrised representative factor arises as a trivial special case of \cref{th:equivalentfactors} by assigning uniform weights $\omega_{\boldsymbol{r}'} :=\frac{1}{\abs{\mathfrak{S}_{\phi}(\boldsymbol{r})}}$ on $\mathfrak{S}_{\phi}(\boldsymbol{r})$ for which $\sum_{\boldsymbol{r}' \in \mathfrak{S}_{\phi}(\boldsymbol{r})} \omega_{\boldsymbol{r}'} =1$ holds, yielding $\phi^*(\boldsymbol{r})=\overline{\phi}(\boldsymbol{r})$.

\begin{corollary}
Let $\phi$ be $\varepsilon$-commutative with respect to $\boldsymbol C_{\phi}$ and let the mean \vspace{-0.2cm} 
\begin{align}
\phi^*(\boldsymbol{r}):=\overline{\phi}(\boldsymbol{r})
:=
\frac{1}{|\mathfrak{S}_{\phi}(\boldsymbol{r})|}
\sum_{\boldsymbol{r}' \in \mathfrak{S}_{\phi}(\boldsymbol{r})} 
\phi(\boldsymbol{r}'),\label{eq:mean_sym}
\\[-0.75cm] \nonumber
\end{align}
be defined with weights $\omega_{\boldsymbol{r}'} :=\frac{1}{\abs{\mathfrak{S}_{\phi}(\boldsymbol{r})}}$ for $\boldsymbol r'\in \mathfrak{S}_{\phi}(\boldsymbol{r})$ for a given $\varepsilon\in[0,1]$.
Then $\phi^*$ is a symmetrised representative factor of $\phi$ and is $\varepsilon$-equivalent to $\phi$.
\end{corollary}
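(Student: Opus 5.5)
The plan is to reduce the corollary to \cref{th:equivalentfactors} by checking that the uniform weighting satisfies the requirements of \cref{def:symmetrised_representative_factor}.

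First, I check that \cref{eq:mean_sym} is well defined. For every $\boldsymbol{r}\in\mathcal X_{\boldsymbol R_\phi}$, the symmetry set $\mathfrak S_\phi(\boldsymbol{r})$ is nonempty, since $\pi_{\text{id}}\in\Pi_{\boldsymbol C_\phi}$ by \cref{lemma:subgroup} and hence $\boldsymbol{r}\in\mathfrak S_\phi(\boldsymbol{r})$. It is also finite, because it is contained in the finite set $\mathcal X_{\boldsymbol R_\phi}$. So $1\le|\mathfrak S_\phi(\boldsymbol{r})|<\infty$, and the weights $\omega_{\boldsymbol{r}'}=1/|\mathfrak S_\phi(\boldsymbol{r})|$ are well defined and nonnegative. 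Summing over $\mathfrak S_\phi(\boldsymbol{r})$ gives $|\mathfrak S_\phi(\boldsymbol{r})|\cdot\frac{1}{|\mathfrak S_\phi(\boldsymbol{r})|}=1$. These are exactly the conditions in \cref{def:symmetrised_representative_factor}, so $\overline{\phi}$ is a symmetrised representative factor of $\phi$.

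Second, I check consistency across each equivalence class. By \cref{corollary:subgroup_overlap}, every $\boldsymbol{r}'\in\mathfrak S_\phi(\boldsymbol{r})$ satisfies $\mathfrak S_\phi(\boldsymbol{r}')=\mathfrak S_\phi(\boldsymbol{r})$. Hence the uniform weights, and therefore the value $\overline{\phi}(\boldsymbol{r}')=\overline{\phi}(\boldsymbol{r})$, depend only on the class $[\boldsymbol{r}]_\phi$. This means $\overline\phi$ is constant on orbits, which is the consistency with the quotient $\mathcal X_{\boldsymbol R_\phi}/\sim$ that the discussion before \cref{th:equivalentfactors} asks for.

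Third, I apply \cref{th:equivalentfactors} directly with these weights, which gives that $\phi$ and $\phi^*=\overline\phi$ are $\varepsilon$-equivalent. For transparency, I would also spell out the row-wise step. By \cref{lemma:pairwise_eps_equivalent}, for each $\boldsymbol{r}''\in\mathfrak S_\phi(\boldsymbol{r})$, every $\phi(\boldsymbol{r}')$ lies in $[\phi(\boldsymbol{r}'')(1-\varepsilon),\phi(\boldsymbol{r}'')(1+\varepsilon)]$. This interval is convex, so the arithmetic mean also lies in it. That yields one direction of $\overline\phi(\boldsymbol{r})=_\varepsilon\phi(\boldsymbol{r}'')$.

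The main obstacle is the reverse containment, $\phi(\boldsymbol{r}'')\in[\overline\phi(\boldsymbol{r})(1-\varepsilon),\overline\phi(\boldsymbol{r})(1+\varepsilon)]$. It does not follow from convexity alone, and it is precisely what \cite[Theorem~15]{Speller2025b} supplies. Since \cref{th:equivalentfactors} is available and has already absorbed that result, I would rely on it rather than re-derive the bound. The corollary is then a direct specialisation.
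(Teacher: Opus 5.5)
Your proposal is correct and follows the paper's route. The uniform weights are nonnegative and sum to one, so $\overline{\phi}$ satisfies \cref{def:symmetrised_representative_factor}, and \cref{th:equivalentfactors} then gives that $\phi$ and $\overline{\phi}$ are $\varepsilon$-equivalent.

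One correction to your side remark: the reverse containment does follow from convexity. For fixed $x=\phi(\boldsymbol{r}'')$, the set of $m\geq 0$ with $m(1-\varepsilon)\leq x\leq m(1+\varepsilon)$ is cut out by two linear inequalities in $m$, so it is an interval. By \cref{lemma:pairwise_eps_equivalent} it contains every $\phi(\boldsymbol{r}')$ with $\boldsymbol{r}'\in\mathfrak{S}_{\phi}(\boldsymbol{r})$, hence it also contains the weighted mean, and no appeal to \cite[Theorem~15]{Speller2025b} is needed.
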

The mean is particularly attractive as it treats all symmetric assignments uniformly avoiding bias towards specific permutations and minimises the squared deviation~\cite{Luttermann2025c}.
It is also optimal with respect to the squared deviation in the sense that, when all values $\phi(\boldsymbol r')$ are replaced by the same representative within the group $\boldsymbol r \in \mathfrak{S}_{\phi}(\boldsymbol r)$, the resulting deviation from the original factor is minimised.
These properties make it a natural choice for the symmetrised representative factor in practical implementations (see \cref{sec:commutative_approx_eval} for experiments).
However, when prior knowledge indicates a bias toward specific arguments in the underlying data, or when domain expertise is available, the weights assigned to individual arguments of the original factor can always be adapted accordingly.

\subsection{Asymptotic Bounds}
To quantify the approximation error induced by symmetrisation, we next analyse the resulting deviation between the original model and the model obtained by symmetrising $\varepsilon$-commutative factors.
We make use of the following symmetric distance measure, which allows us to bound the change in query results~\cite{Chan2005a,Luttermann2025c}.
\begin{definition}[\cite{Chan2005a}]
   The symmetric distance $D_{CD}(P_M, P_{M'})$ between two distributions $P_M$ and $P_{M'}$ introduced in \cite{Chan2005a} (see \cite{Luttermann2025c} for details) is defined as\vspace{-0.2cm} 
\begin{align}
	D_{CD}(P_M, P_{M'}) :=& \ln \max_{\boldsymbol r} \frac{P_{M'}(\boldsymbol r)}{P_M(\boldsymbol r)} - \ln \min_{\boldsymbol r} \frac{P_{M'}(\boldsymbol r)}{P_M(\boldsymbol r)}\\
    =& \ln \max_{\boldsymbol r} \frac{\psi'(\boldsymbol r)}{\psi(\boldsymbol r)} - \ln \min_{\boldsymbol r} \frac{\psi'(\boldsymbol r)}{\psi(\boldsymbol r)}. \label{eq:eacp_distance_measure}
    \\[-0.75cm] \nonumber
\end{align}
\end{definition}
With this measure at hand, we first derive a general bound for arbitrary weighting schemes.
\begin{theorem}
    Let $M$ be an \ac{fg} with factors $\phi_1,\ldots,\phi_m $, of which $\phi_1,\ldots,\phi_k $ with $0\leq k\leq m$ are $\varepsilon$-commutative with respect to $\boldsymbol{C}_{\phi_1},\ldots, \boldsymbol{C}_{\phi_k}$, respectively.
    Let $M'$ be the \ac{fg} with factors $\phi_1^*,\ldots,\phi_k^*,\phi_{k+1},\ldots,\phi_m$, where $\varepsilon$-commutative factors of $M$ are replaced by their symmetrised representative factors.
    Further, let $P_M$ and $P_{M'}$ denote the underlying full joint probability distributions encoded by $M$ and $M'$, respectively.
    Then, it holds that \vspace{-0.3cm}
	\begin{align*}
		D_{CD}(P_M, P_{M'})   &\leq      
        \ln \big( 1 + \varepsilon \big)^{2k}.
        \\[-0.75cm] \nonumber
	\end{align*}
\end{theorem}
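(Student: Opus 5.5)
We bound the unnormalised ratio $\psi'(\boldsymbol r)/\psi(\boldsymbol r)$ pointwise and then use the second form of the distance in \cref{eq:eacp_distance_measure}. The normalisation constants cancel there, so only the products of factors matter. For every full assignment $\boldsymbol r$ we have
\begin{align*}
\frac{\psi'(\boldsymbol r)}{\psi(\boldsymbol r)} = \prod_{j=1}^{k} \frac{\phi_j^*(\boldsymbol r_j)}{\phi_j(\boldsymbol r_j)},
\end{align*}
because the factors $\phi_{k+1},\ldots,\phi_m$ occur unchanged in both models and cancel. It therefore suffices to show that each quotient $\phi_j^*(\boldsymbol r_j)/\phi_j(\boldsymbol r_j)$ lies in $[\frac{1}{1+\varepsilon}, 1+\varepsilon]$.

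\textbf{Local bound.} By \cref{lemma:pairwise_eps_equivalent}, all potentials $\phi_j(\boldsymbol r')$ with $\boldsymbol r' \in \mathfrak S_{\phi_j}(\boldsymbol r_j)$ are pairwise $\varepsilon$-equivalent with $\phi_j(\boldsymbol r_j)$. Since $\boldsymbol r_j \in \mathfrak S_{\phi_j}(\boldsymbol r_j)$, and by \cref{corollary:subgroup_overlap} every such $\boldsymbol r'$ arises from $\boldsymbol r_j$ via some $\pi \in \Pi_{\boldsymbol C_{\phi_j}}$, \cref{lemma:symmetry_varepsi_commut} gives the sharper symmetric bound
\begin{align*}
\phi_j(\boldsymbol r_j)\cdot\tfrac{1}{1+\varepsilon} \leq \phi_j(\boldsymbol r') \leq \phi_j(\boldsymbol r_j)\cdot(1+\varepsilon).
\end{align*}
The weights are nonnegative and sum to one, so the convex combination $\phi_j^*(\boldsymbol r_j)$ lies in the same interval. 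Hence $\frac{1}{1+\varepsilon} \leq \phi_j^*(\boldsymbol r_j)/\phi_j(\boldsymbol r_j) \leq 1+\varepsilon$.

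\textbf{Zero potentials.} This is the step I expect to need care. If $\phi_j(\boldsymbol r_j)=0$, the bounds above force every potential in the symmetry set to be $0$, so $\phi_j^*(\boldsymbol r_j)=0$ as well. Thus $\psi$ and $\psi'$ have the same support, and the maximum and minimum are taken over assignments with $\psi(\boldsymbol r)>0$. At least one such assignment exists, by non-triviality of the factors under the usual assumption that $Z>0$.

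\textbf{Global bound.} Multiplying the $k$ local bounds gives
\begin{align*}
(1+\varepsilon)^{-k} \leq \frac{\psi'(\boldsymbol r)}{\psi(\boldsymbol r)} \leq (1+\varepsilon)^{k}
\end{align*}
for every $\boldsymbol r$ in the common support. Therefore the $\ln\max$ term is at most $k\ln(1+\varepsilon)$, and the $-\ln\min$ term is at most $k\ln(1+\varepsilon)$. Summing the two yields $D_{CD}(P_M,P_{M'}) \leq 2k\ln(1+\varepsilon) = \ln(1+\varepsilon)^{2k}$.

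One subtlety is that $\boldsymbol r_j$ must be the projection of $\boldsymbol r$ onto the arguments of $\phi_j$, and the symmetry set is taken within $\phi_j$'s own assignment space. This is consistent with \cref{def:symmetrised_representative_factor}, so no interaction between different factors arises.
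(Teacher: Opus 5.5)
Your proof is correct and follows the same route as the paper. You bound each factor quotient $\phi_j^*(\boldsymbol r_j)/\phi_j(\boldsymbol r_j)$ in $[\frac{1}{1+\varepsilon}, 1+\varepsilon]$ via \cref{lemma:symmetry_varepsi_commut}, multiply over the $k$ replaced factors, and substitute into the $\psi$-form of $D_{CD}$. Two points the paper leaves implicit are handled correctly in your version: the explicit convex-combination argument for why $\phi_j^*$ stays in that interval, and the common-support treatment of zero potentials.
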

\begin{proof}
	According to \cref{def:commutative_approx_eps_commutative_factor}, every updated potential $\phi_i^*(\boldsymbol{r}_i)$ in $M'$ differs from its original potential $\phi_i(\boldsymbol{r}_i)$ in $M$ by at most  a factor $(1 \pm \varepsilon)$ for $i=1,\ldots,k$, while all remaining factors $i>k$ are left unchanged.
	Since $\psi(\boldsymbol r) = \prod_{j=1}^{m} \phi_j(\boldsymbol r_j)$ for any assignment $\boldsymbol r$, using \cref{lemma:symmetry_varepsi_commut} we obtain\vspace{-0.25cm}
	\begin{align*}
		\psi'(\boldsymbol r) &\geq \prod_{j=1}^{k} \phi_j(\boldsymbol r_j) \cdot \frac{1}{1+\varepsilon}
        \cdot \prod_{j=k+1}^{m} \phi_j(\boldsymbol r_j) =\frac{1}{(1+\varepsilon)^k}\cdot \prod_{j=1}^{m} \phi_j(\boldsymbol r_j),\text{ and }\\
        \psi'(\boldsymbol r) &\leq \prod_{j=1}^{k} \phi_j(\boldsymbol r_j) \cdot (1 + \varepsilon)\cdot \prod_{j=k+1}^{m} \phi_j(\boldsymbol r_j)=(1+\varepsilon)^k\cdot \prod_{j=1}^{m} \phi_j(\boldsymbol r_j).
	\end{align*}
	Consequently,\vspace{-0.5cm}
	\begin{align*}
	\min_{\boldsymbol r} \frac{\psi'(\boldsymbol r)}{\psi(\boldsymbol r)} &\geq \frac{\frac{1}{(1+\varepsilon)^k}
        \cdot \prod\limits_{j = 1}^m \phi_j(\boldsymbol r_j) }{\prod\limits_{j = 1}^m \phi_j(\boldsymbol r_j)}=\frac{1}{(1+\varepsilon)^k},~\text{and} \\
		\max_{\boldsymbol r} \frac{\psi'(\boldsymbol r)}{\psi(\boldsymbol r)} &\leq \frac{ (1 + \varepsilon)^k\cdot \prod\limits_{j = 1}^m  \phi_j(\boldsymbol r_j)  }{\prod\limits_{j = 1}^m \phi_j(\boldsymbol r_j)} =(1 + \varepsilon)^k,
	\end{align*}
	where $\boldsymbol r_j\in\mathcal{X}_{\boldsymbol{R}_{\phi_j}}$ denotes the projection of $\boldsymbol r\in\mathcal{X}_{\boldsymbol{R}}$.
	Substituting these bounds into \cref{eq:eacp_distance_measure} yields
    $D_{CD}(P_M, P_{M'})
        \leq \ln (1 + \varepsilon)^k
		- \ln 1/(1+\varepsilon)^k 
	= \ln (1 + \varepsilon)^{2k}$.
\end{proof}

For the special case of arithmetic averaging, we can sharpen this bound.
\begin{restatable}{lemma}{lemmabound}\label{lemma:bound}
Let $\phi$ be an $\varepsilon$-commutative factor with respect to $\boldsymbol{C}_{\phi}$, let $\mathcal{X}_{\mathcal{R}_{\phi}} = \dot \cup_{j=1}^l \{\boldsymbol{r}^{j}\}$ be a set of disjoint representatives $\boldsymbol{r}^{j}$ for all disjoint equivalence classes $\dot\cup_{j=1}^l [\boldsymbol{r}^{j}]_{\phi} = \dot\cup_{j=1}^l \mathfrak{S}_{\phi}(\boldsymbol{r}^{j})=\mathcal{X}_{\boldsymbol{R}_{\phi}}$ ordered by size such that $m_1:=\abs{[\boldsymbol{r}^{1}]_{\phi}}\geq\ldots \geq m_l:=\abs{[\boldsymbol{r}^{l}]_{\phi}}\geq 1$,
and let $\phi^*$ be the symmetrised representative factor via the mean as in \cref{eq:mean_sym}.
Then, \vspace{-0.25cm}
\begin{align*}
       1\;\leq\;\frac{\max_{\boldsymbol r} \frac{\phi^*(\boldsymbol r)}{\phi(\boldsymbol r)}}{\min_{\boldsymbol r} \frac{\phi^*(\boldsymbol r)}{\phi(\boldsymbol r)}}\;\leq\; \frac{\left(1+\frac{m_2-1}{m_2}\varepsilon\right)(1+\varepsilon)}{1+\frac{1}{m_1}\varepsilon}.
\end{align*}
\end{restatable}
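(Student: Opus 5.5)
The plan is to reduce everything to per-class estimates. Inside one symmetry set the ratio $\phi^*(\boldsymbol r)/\phi(\boldsymbol r)$ is the class mean divided by one class member. These per-class estimates are then combined across classes through a case distinction on where the maximum and the minimum of the ratio are attained.

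\emph{Step 1: the lower bound $1$.} Fix a class $[\boldsymbol r^j]_\phi$ with $\phi$ positive on it. The mean $\overline{\phi}$ lies between the smallest and the largest potential of the class. So the ratio is at least $1$ at the class minimiser and at most $1$ at the class maximiser. Hence $\max_{\boldsymbol r}\phi^*/\phi\geq 1\geq\min_{\boldsymbol r}\phi^*/\phi$, which gives the left inequality. Zero potentials are harmless: by $\varepsilon$-equivalence, a zero forces the whole class to be zero, and such classes are excluded from the max and min.

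\emph{Step 2: per-class bounds.} Let the class of $\boldsymbol r$ have size $m$. By \cref{lemma:pairwise_eps_equivalent} together with \cref{lemma:symmetry_varepsi_commut}, every $\boldsymbol r'$ in the class satisfies $\phi(\boldsymbol r')\in[\phi(\boldsymbol r)/(1+\varepsilon),\,\phi(\boldsymbol r)(1+\varepsilon)]$. Substituting this into the mean gives
\begin{align*}
\frac{1+\varepsilon/m}{1+\varepsilon}=\frac{1}{m}\Big(1+\frac{m-1}{1+\varepsilon}\Big)\;\leq\;\frac{\phi^*(\boldsymbol r)}{\phi(\boldsymbol r)}\;\leq\;\frac{1+(m-1)(1+\varepsilon)}{m}=1+\frac{m-1}{m}\varepsilon .
\end{align*}
The upper expression is increasing in $m$ and the lower expression is decreasing in $m$. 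Moreover, within a single class the ratio of the largest to the smallest value of $\phi^*/\phi$ equals $\phi_{\max}/\phi_{\min}$ on that class, and this is at most $1+\varepsilon$.

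\emph{Step 3: combining across classes.} Let $\boldsymbol r_{\max}$ and $\boldsymbol r_{\min}$ be the maximiser and the minimiser of $\phi^*/\phi$.
\begin{itemize}
\item If both lie in the same class, the quotient is at most $1+\varepsilon$. This is below the claimed bound because $(1+\frac{m_2-1}{m_2}\varepsilon)\geq 1+\frac{\varepsilon}{m_1}$ whenever $m_2\geq 2$.
\item If $\boldsymbol r_{\max}$ lies in a class of size at most $m_2$, i.e.\ not the largest class, combine the Step 2 upper bound with $m\le m_2$ and the Step 2 lower bound with $m\le m_1$. This gives exactly the stated right-hand side.
\item The remaining case is $\boldsymbol r_{\max}$ in the largest class and $\boldsymbol r_{\min}$ in a class of size at most $m_2$. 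The estimate is then $(1+\frac{m_1-1}{m_1}\varepsilon)(1+\varepsilon)/(1+\frac{\varepsilon}{m_2})$. Comparing with the claim amounts to showing $g(1/m_1)\leq g(1/m_2)$ for $g(x)=(1+\varepsilon-\varepsilon x)(1+\varepsilon x)$. One computes $g'(x)=\varepsilon^2(1-2x)$, so $g$ is nondecreasing on $[0,\tfrac12]$, and the comparison holds whenever $m_2\geq 2$.
\end{itemize}

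\emph{Main obstacle: the degenerate case $m_2=1$,} i.e.\ only one class is nontrivial. There the monotonicity argument for $g$ breaks, since $g(1)=1+\varepsilon$ can be smaller than $g(1/m_1)$. This case has to be handled separately. When the minimiser lies in a singleton class its ratio is exactly $1$, and I would first try to exploit Step 1: the global minimum is also at most the minimum over the large class. That should let the quotient be controlled by the within-class spread of the large class rather than by the crude product bound. If this does not close the gap, the fallback is to verify by direct algebra whether the bound still holds for $m_2=1$, or whether it needs the convention $m_2\ge 2$, possibly with the case $l=1$ treated on its own.
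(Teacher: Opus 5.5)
For $m_2\ge 2$ your argument is complete and is essentially the paper's proof. It uses the same per-class extremes $1+\frac{m-1}{m}\varepsilon$ and $\frac{1+\varepsilon/m}{1+\varepsilon}$. It uses the same within-class bound $1+\varepsilon$ when both extrema lie in one symmetry set. It uses the same function to show that $m_2$ in the numerator and $m_1$ in the denominator is the worst cross-class pairing: you take $x=1/m$, while the paper takes $g(x)=(1+\frac{x-1}{x}\varepsilon)(1+\frac{1}{x}\varepsilon)$ with $g'(x)=\varepsilon^2(2-x)/x^3$. Splitting on whether the maximiser lies outside a largest class is slightly tidier bookkeeping, but nothing beyond the paper is needed.

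The case $m_2=1$ cannot be closed, because the inequality is false there. Neither of your suggested routes can prove it.

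If $m_2=1$, every class other than a largest one is a singleton, on which $\phi^*=\phi$. The largest class contains ratios both $\ge 1$ and $\le 1$, so the global maximum and minimum of $\phi^*/\phi$ are both attained in it. Hence the quotient equals $\phi_{\max}/\phi_{\min}$ on that class. This is exactly what your first idea yields. But that spread can reach $1+\varepsilon$, which exceeds the claimed $\frac{1+\varepsilon}{1+\varepsilon/m_1}$ for every $\varepsilon>0$.

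Concretely, take $\phi(R_1,R_2)$ with $\range{R_1}=\range{R_2}=\{\high,\low\}$ and $\boldsymbol C_\phi=\{R_1,R_2\}$. Set $\phi(\high,\low)=1$, $\phi(\low,\high)=1+\varepsilon$ and $\phi(\high,\high)=\phi(\low,\low)=1$. This factor is $\varepsilon$-commutative, with class sizes $m_1=2$ and $m_2=m_3=1$. The mean is $1+\varepsilon/2$ on the two-element class, and
\[
\frac{\max_{\boldsymbol r}\phi^*(\boldsymbol r)/\phi(\boldsymbol r)}{\min_{\boldsymbol r}\phi^*(\boldsymbol r)/\phi(\boldsymbol r)}=\frac{1+\varepsilon/2}{(1+\varepsilon/2)/(1+\varepsilon)}=1+\varepsilon>\frac{1+\varepsilon}{1+\varepsilon/2}.
\]
For $\varepsilon=0.1$ this is $1.1$ against roughly $1.048$.

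So your fallback is the right outcome: the lemma needs the extra hypothesis $m_2\ge 2$, and for $m_2=1$ the sharp bound is $1+\varepsilon$. The paper's proof has the same hole. In the same-class case it asserts that a class with more than one element forces $m_1\ge m_2\ge 2$, which does not follow. Its monotonicity argument for $g$ is also only established for $x\ge 2$.

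This is not an exotic corner case: it is the plain commutative factor on two Boolean arguments. Taking this factor as the only factor of a model ($k=1$) gives $D_{CD}=\ln(1+\varepsilon)$, so it also contradicts the theorem that builds on this lemma.
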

\begin{proof}[Condensed Proof (Longer Version in\cref{appendix:commutative_approx_proofs})]
The mean $\phi^*(\boldsymbol r)$ is uniquely determined within each equivalence class $[\boldsymbol{r}^j]_{\phi}$.
First, observe that for every equivalence class $[\boldsymbol{r}^j]_{\phi}$ and all $\boldsymbol{r}',\boldsymbol{r}'' \in [\boldsymbol{r}^j]_{\phi}$, \cref{lemma:pairwise_eps_equivalent} implies pairwise $\varepsilon$-equivalence, so the potentials $\phi( \boldsymbol{r}'')$ within one equivalence class $[\boldsymbol{r}^j]_{\phi}$ can be ordered\\\vspace{-0.5cm}
\begin{align*}
0 \leq \frac{1}{1+\varepsilon}\phi(\boldsymbol{r}^{j,\max})\leq \phi(\boldsymbol{r}^{j,\min}) \leq \phi(\boldsymbol{r}'')\leq \phi(\boldsymbol{r}^{j,\max})\leq \phi(\boldsymbol{r}^{j,\min})(1+\varepsilon),
\end{align*}
where $\phi(\boldsymbol{r}^{j,\max}):= \max_{\boldsymbol{r}' \in [\boldsymbol{r}^j]_{\phi}} \phi(\boldsymbol{r}')$ and $\phi(\boldsymbol{r}^{j,\min}):= \min_{\boldsymbol{r}' \in [\boldsymbol{r}^j]_{\phi}} \phi(\boldsymbol{r}')$.
When the maximum and minimum originate from two different equivalence classes $[\boldsymbol{r}^i]_{\phi}$ and $[\boldsymbol{r}^j]_{\phi}$ with $i\neq j$, the worst-case assignments are independent. 
For the maximum choose $ \phi(\boldsymbol r)=\phi(\boldsymbol{r}^{i,\min})$, while all remaining potentials in the same equivalence class attain their maximal admissible value $\phi(\boldsymbol r'):=\phi(\boldsymbol{r}^{i,\max}) = (1+\varepsilon) \phi(\boldsymbol{r}^{i,\min})$ for all $\boldsymbol{r}' \in [\boldsymbol{r}^i]_{\phi}\setminus \{\boldsymbol{r} \}$.
This yields
\begin{align*}
   \max_{\boldsymbol r\in [\boldsymbol{r}^i]_{\phi}} \frac{\phi^*(\boldsymbol r)}{\phi(\boldsymbol r)}&= \frac{(\phi(\boldsymbol{r}^{i,\min})+(\abs{[\boldsymbol{r}^i]_{\phi}}-1) (1+\varepsilon) \phi(\boldsymbol{r}^{i,\min}) )/\abs{[\boldsymbol{r}^i]_{\phi}}}{\phi(\boldsymbol{r}^{i,\min})} =1+ \frac{(m_i-1)\varepsilon}{m_i}
\end{align*}
For the minimum choose $\phi(\boldsymbol r)=\phi(\boldsymbol{r}^{j,\max})$, while all remaining values in the same equivalence class attain their minimal admissible value $\phi(\boldsymbol r'):=\phi(\boldsymbol{r}^{j,\min}) =\frac{1}{1+\varepsilon} \phi(\boldsymbol{r}^{i,\max})$ for all $\boldsymbol{r}' \in [\boldsymbol{r}^j]_{\phi}\setminus \{\boldsymbol{r} \}$.
Consequently,
\begin{align*}
   \min_{\boldsymbol r\in [\boldsymbol{r}^j]_{\phi}} \frac{\phi^*(\boldsymbol r)}{\phi(\boldsymbol r)}&= \frac{((\abs{[\boldsymbol{r}^j]_{\phi}}-1)\frac{1}{1+\varepsilon}\phi(\boldsymbol{r}^{j,\max})+ \phi(\boldsymbol{r}^{j,\max}) )/\abs{[\boldsymbol{r}^j]_{\phi}}}{\phi(\boldsymbol{r}^{j,\max})}  = \frac{1+\frac{1}{m_j}\varepsilon}{1+\varepsilon}.
\end{align*}
Observe, that $1+ \frac{(m_i-1)\varepsilon}{m_i}$ is monotonically increasing in $m_i$, whereas $\frac{1+\frac{1}{m_j}\varepsilon}{1+\varepsilon}$ is monotonically decreasing in $m_j$.
Hence, the largest equivalence classes by size produce the extremal deviations.
To determine which class size appear in the numerator and denominator, consider the monotonically decreasing $g(x)=\left(1+ \frac{x-1}{x}\;\varepsilon\right)\left(1+\frac{1}{x}\varepsilon\right)$ for $x\geq 2$.
Therefore, for $x_1,x_2\in\mathbb{N}_{\geq 2}$ it follows:\vspace{-0.2cm}
\begin{align*}  
   g(x_1) &\geq g(x_2)
   \;\;\Leftrightarrow \;\; \frac{1+ \frac{x_1-1}{x_1}\;\varepsilon}{1+\frac{1}{x_2}\varepsilon}\geq \frac{1+ \frac{x_2-1}{x_2}\;\varepsilon}{1+\frac{1}{x_1}\varepsilon}.
\end{align*}
Consequently, the sharpest bound is obtained by using $x_1=m_2$ and $x_2=m_1$.
When both extrema originate from the same equivalence class, $\phi^*(\boldsymbol r)=\phi^*(\boldsymbol r')$ for all $\boldsymbol{r},\boldsymbol{r}'\in [\boldsymbol{r}^j]_{\phi}$, the quotient is maximised via $\phi(\boldsymbol r)=\phi(\boldsymbol{r}^{j,\min}) =\frac{1}{1+\varepsilon} \phi(\boldsymbol{r}^{j,\max})$ and $\phi(\boldsymbol r')=\phi(\boldsymbol{r}^{j,\max}) =(1+\varepsilon) \phi(\boldsymbol{r}^{j,\min})$,
leading to smaller values\vspace{-0.25cm}
\begin{align*}
    &\frac{\max_{\boldsymbol r\in [\boldsymbol{r}^j]_{\phi}} \frac{\phi^*(\boldsymbol r)}{\phi(\boldsymbol r)}}{\min_{\boldsymbol r'\in [\boldsymbol{r}^j]_{\phi}} \frac{\phi^*(\boldsymbol r')}{\phi(\boldsymbol r')}}\;= \frac{\max_{\boldsymbol r\in [\boldsymbol{r}^j]_{\phi}} \frac{1}{\phi(\boldsymbol r)}}{\min_{\boldsymbol r'\in [\boldsymbol{r}^j]_{\phi}} \frac{1}{\phi(\boldsymbol r')}}\leq \frac{\frac{1}{\phi(\boldsymbol{r}^{j,\min})}}{\frac{1}{(1+\varepsilon) \phi(\boldsymbol{r}^{j,\min})}}=1+\varepsilon. 
    \tag*\qedhere
\end{align*}
\end{proof}

Combining the class-wise extremal bounds yields the final global statement for the representative mean-construction, which we show next.
\begin{theorem}\label{theorem:genral_bound}
 Let $M$ be an \ac{fg} with factors $\phi_1,\ldots,\phi_m $, of which $\phi_1,\ldots,\phi_k $ with $0\leq k\leq m$ are $\varepsilon$-commutative with respect to $\boldsymbol{C}_{\phi_1},\ldots, \boldsymbol{C}_{\phi_k}$, respectively. 
 Let $\mathcal{X}_{\mathcal{R}_{\phi_i}} = \dot \cup_{j=1}^{l_i} \{\boldsymbol{r}_i^{j}\}$ be the set of disjoint representatives $\boldsymbol{r}^{j}_i$ for all equivalence classes $\dot\cup_{j=1}^{l_i} [\boldsymbol{r}^{j}_i]_{\phi_i} = \mathcal{X}_{\boldsymbol{R}_{\phi_i}}$ ordered by size with $m_{1_i}:=\abs{[\boldsymbol{r}^{1}_i]_{\phi_i}}\geq\ldots \geq m_{l_i}:=\abs{[\boldsymbol{r}^{l}_i]_{\phi_i}}\geq 1$.
 Let $M'$ be the \ac{fg} obtained by $M$ by replacing each factor $\phi_i$ by its mean-based symmetrised representative factor $\phi_i^*$ as in \cref{eq:mean_sym}.
 Then, for the induced distributions $P_M$ and $P_{M'}$ and $\varepsilon>0$, it holds that\vspace{-0.2cm}
	\begin{align} 
		D_{CD}(P_M, P_{M'})   &\leq 
        \ln \;\prod_{i=1}^k\frac{\left( 1+\frac{m_{2_i}-1}{m_{2_i}}\varepsilon\right)( 1+\varepsilon)}{1+\frac{1}{m_{1_i}}\varepsilon} 
        <      \ln \;\big( 1 + \varepsilon \big)^{2k}. \nonumber
	\end{align}
\end{theorem}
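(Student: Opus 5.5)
The plan is to reduce the global statement to the factor-wise bound of \cref{lemma:bound}, using that $\psi$ and $\psi'$ are products of the same factors.

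\emph{Step 1: factor the ratio.} For any assignment $\boldsymbol r$ with $\psi(\boldsymbol r)>0$, the factors $\phi_{k+1},\ldots,\phi_m$ are unchanged, so
\begin{align*}
\frac{\psi'(\boldsymbol r)}{\psi(\boldsymbol r)}=\prod_{i=1}^k \frac{\phi_i^*(\boldsymbol r_i)}{\phi_i(\boldsymbol r_i)} .
\end{align*}
Zero potentials cause no trouble. Within a symmetry set, $\varepsilon$-equivalence forces all potentials to be zero together. Hence $\phi_i^*(\boldsymbol r_i)=0$ exactly when $\phi_i(\boldsymbol r_i)=0$, and we may restrict to assignments with positive $\psi$, as in \cref{eq:eacp_distance_measure}.

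\emph{Step 2: bound the max and min.} The maximum of a product is at most the product of the maxima of the factors. Likewise, the minimum of a product is at least the product of the minima, since all ratios are positive. This gives
\begin{align*}
\frac{\max_{\boldsymbol r}\psi'(\boldsymbol r)/\psi(\boldsymbol r)}{\min_{\boldsymbol r}\psi'(\boldsymbol r)/\psi(\boldsymbol r)}\leq \prod_{i=1}^k \frac{\max_{\boldsymbol r_i}\phi_i^*(\boldsymbol r_i)/\phi_i(\boldsymbol r_i)}{\min_{\boldsymbol r_i}\phi_i^*(\boldsymbol r_i)/\phi_i(\boldsymbol r_i)}.
\end{align*}
The two extrema over the projections $\boldsymbol r_i$ range over all of $\mathcal X_{\boldsymbol R_{\phi_i}}$, so the relaxation is valid even when factors share variables. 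Taking $\ln$ turns $D_{CD}$ into the logarithm of the left-hand side. Applying \cref{lemma:bound} to each $\phi_i$, with its class sizes $m_{1_i}\geq m_{2_i}$, then gives the first inequality.

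\emph{Step 3: strict comparison with $(1+\varepsilon)^{2k}$.} It suffices to show for each $i$ that
\begin{align*}
\frac{\left(1+\frac{m_{2_i}-1}{m_{2_i}}\varepsilon\right)(1+\varepsilon)}{1+\frac{1}{m_{1_i}}\varepsilon}<(1+\varepsilon)^2 ,
\end{align*}
which is equivalent to $1+\frac{m_{2_i}-1}{m_{2_i}}\varepsilon<(1+\varepsilon)\left(1+\frac{\varepsilon}{m_{1_i}}\right)$. The left side is at most $1+\varepsilon-\frac{\varepsilon}{m_{2_i}}<1+\varepsilon$ whenever $\varepsilon>0$. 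The right side is strictly greater than $1+\varepsilon$ because $\frac{\varepsilon}{m_{1_i}}>0$. So each factor is strictly below $(1+\varepsilon)^2$, and the product strictly below $(1+\varepsilon)^{2k}$, provided $k\geq 1$.

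\emph{Main obstacle.} The delicate point is the case $k=0$, where both sides equal $\ln 1=0$ and strictness fails. A second case is when a factor has only one equivalence class, so $m_{2_i}$ is undefined. I would handle the single-class case via the same-class analysis in \cref{lemma:bound}, which bounds that factor's ratio by $1+\varepsilon<(1+\varepsilon)^2$; equivalently, one can take $m_{2_i}:=1$. I would state explicitly that strictness requires $k\geq1$. The rest is the routine product-splitting of Step 2.
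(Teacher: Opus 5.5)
Your Steps 1 and 2 are exactly the paper's proof: split $\psi'/\psi$ into the $k$ factor ratios, bound the maximum of the product by the product of maxima and the minimum by the product of minima, then apply \cref{lemma:bound} factor by factor. Your Step 3, the $k\geq 1$ caveat and the handling of zero potentials correctly supply details the paper omits.

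The genuine problem is the step ``apply \cref{lemma:bound}'', at exactly the edge case you brushed against. The lemma is false whenever a factor has exactly one non-singleton class, i.e.\ $m_{2_i}=1<m_{1_i}$, and so is the first inequality of the theorem. The most basic $\varepsilon$-commutative factor already does this. Take $\phi(R_1,R_2)$ with $\range{R_1}=\range{R_2}=\{\high,\low\}$ and $\boldsymbol C_\phi=\{R_1,R_2\}$. Set $\phi(\high,\low)=1$, $\phi(\low,\high)=1+\varepsilon$, and $\phi=1$ on the two constant assignments. The classes have sizes $2,1,1$, and the mean gives ratios $\phi^*/\phi\in\{1,\,1+\frac{\varepsilon}{2},\,\frac{1+\varepsilon/2}{1+\varepsilon}\}$. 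So for the FG consisting of this factor alone, $D_{CD}(P_M,P_{M'})=\ln(1+\varepsilon)$, while the claimed bound is only $\ln\frac{1+\varepsilon}{1+\varepsilon/2}$.

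The paper's proof of the lemma slips at ``Otherwise, $m_1\geq m_2\geq 2$ holds'': a class of size at least two only yields $m_1\geq 2$. Your remark that one can ``equivalently'' take $m_{2_i}:=1$ is where the same slip enters your argument. With $m_{2_i}=1$ the formula equals $\frac{1+\varepsilon}{1+\varepsilon/m_{1_i}}$, which is strictly below the same-class value $1+\varepsilon$ once $m_{1_i}\geq 2$. The truly single-class case $l_i=1$ is harmless: it forces $\abs{\mathcal X_{\boldsymbol R_{\phi_i}}}=1$, so the ratio is identically $1$.

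To make the proof correct you need one of two repairs. The first is the extra hypothesis $m_{2_i}\geq 2$ for all $i$, which the paper's sharpness example in fact uses. Under it the lemma does hold: the cross-class worst case gives the stated formula, and the same-class value $1+\varepsilon$ is dominated because $\frac{m_{2_i}-1}{m_{2_i}}\geq\frac12\geq\frac{1}{m_{1_i}}$. The second is to replace the $i$-th factor of the product by $1+\varepsilon$ whenever $m_{2_i}=1<m_{1_i}$. That is the correct per-factor value there: every ratio outside the unique non-singleton class equals $1$, so both extremes come from that class, and their quotient is the ratio of its largest to its smallest potential, at most $1+\varepsilon$.

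Your Step 3 survives either repair, since $1+\varepsilon<(1+\varepsilon)^2$ for $\varepsilon>0$. The strict comparison with $\ln(1+\varepsilon)^{2k}$ for $k\geq 1$ therefore still holds.
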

\begin{proof}
The statement follows by factorisation of the joint potential, the determination of the extrema on a projected factor-level instead of a global assignment-level, and the class-wise bound from \cref{lemma:bound} for the monotonic logarithm.
\begin{align*}
		D_{CD}&(P_M, P_{M'}) \overset{\text{\cref{eq:eacp_distance_measure}}}{=}   \ln \;\max_{\boldsymbol r} \frac{\psi'(\boldsymbol r)}{\psi(\boldsymbol r)} - \ln\; \min_{\boldsymbol r} \frac{\psi'(\boldsymbol r)}{\psi(\boldsymbol r)}\\
        &= \ln \;\max_{\boldsymbol r}\frac{\prod_{i=1}^k \phi_i^*(\boldsymbol{r}_i)\prod_{i=k+1}^m \phi_i(\boldsymbol{r}_i)}{\prod_{i=1}^m \phi_i(\boldsymbol{r}_i)} - \ln \;\min_{\boldsymbol r} \frac{\prod_{i=1}^k \phi_i^*(\boldsymbol{r}_i)\prod_{i=k+1}^m \phi_i(\boldsymbol{r}_i)}{\prod_{i=1}^m \phi_i(\boldsymbol{r}_i)}\\
        &= \ln \;\max_{\boldsymbol r}\prod_{i=1}^k\frac{ \phi_i^*(\boldsymbol{r}_i)}{\phi_i(\boldsymbol{r}_i)} - \ln\; \min_{\boldsymbol r}\prod_{i=1}^k \frac{\phi_i^*(\boldsymbol{r}_i)}{\phi_i(\boldsymbol{r}_i)}\\
        &\leq \ln\; \prod_{i=1}^k \max_{\boldsymbol r_i} \frac{\phi_i^*(\boldsymbol{r}_i)}{\phi_i(\boldsymbol{r}_i)} - \ln \;\prod_{i=1}^k \min_{\boldsymbol r_i} \frac{\phi_i^*(\boldsymbol{r}_i)}{\phi_i(\boldsymbol{r}_i)}= \ln \;\prod_{i=1}^k \frac{\max_{\boldsymbol r_i} \frac{\phi_i^*(\boldsymbol{r}_i)}{\phi_i(\boldsymbol{r}_i)}}{\min_{\boldsymbol r_i} \frac{\phi_i^*(\boldsymbol{r}_i)}{\phi_i(\boldsymbol{r}_i)}}\\
        &\overset{\text{\cref{lemma:bound}}}{\leq}\ln \;\prod_{i=1}^k \frac{\left( 1+\frac{m_{2_i}-1}{m_{2_i}}\varepsilon\right)( 1+\varepsilon)}{1+\frac{1}{m_{1_i}}\varepsilon}.  \tag*\qedhere
	\end{align*}
\end{proof}
Even more precisely, the bound is sharp in the general non-trivial case.
\begin{restatable}{theorem}{boundtheorem}\label{th:sharp_optimal}
The bound given in \cref{theorem:genral_bound} is optimal.
\end{restatable}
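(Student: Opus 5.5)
To show optimality, I will construct, for every admissible choice of $\varepsilon>0$, $k$ and class sizes $m_{1_i}\geq m_{2_i}\geq\ldots$, an \ac{fg} attaining the bound of \cref{theorem:genral_bound} with equality. Two inequalities in that proof can be lossy: the exchange of $\max_{\boldsymbol r}\prod_i$ (and $\min$) with $\prod_i\max_{\boldsymbol r_i}$, and the per-factor estimate of \cref{lemma:bound}. I will make both tight. For the second, I follow the extremal choices described in the condensed proof of \cref{lemma:bound}. In the class $[\boldsymbol r^{2}]_{\phi}$ of size $m_2$, I put potential $c$ on a single assignment $\boldsymbol r^{\max}$ and $(1+\varepsilon)c$ on the remaining $m_2-1$ assignments. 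In the class $[\boldsymbol r^{1}]_{\phi}$ of size $m_1$, I put $d(1+\varepsilon)$ on a single assignment $\boldsymbol r^{\min}$ and $d$ on the others. Every other class gets a constant potential.

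The first step is to check that the factor is $\varepsilon$-commutative. By \cref{corollary:subgroup_overlap} and the orbit structure, permutations never leave a class, so only pairs within a class matter. Within a class the ratio of any two potentials is at most $1+\varepsilon$, which gives the upper interval condition. The lower condition $\phi\ge(1-\varepsilon)\phi'$ needs $1/(1+\varepsilon)\ge 1-\varepsilon$, which holds for $\varepsilon\in[0,1]$, so \cref{def:commutative_approx_eps_commutative_factor} is satisfied.

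The second step is a direct computation with \cref{eq:mean_sym}. It gives $\phi^*(\boldsymbol r^{\max})/\phi(\boldsymbol r^{\max})=1+\frac{m_2-1}{m_2}\varepsilon$ and $\phi^*(\boldsymbol r^{\min})/\phi(\boldsymbol r^{\min})=\frac{1+\varepsilon/m_1}{1+\varepsilon}$. On constant classes the ratio is $1$, which lies between these two values. I still need to confirm that the other assignments of the two chosen classes give no larger or smaller ratios. On the $m_2$-class they give $\phi^*/((1+\varepsilon)c)<1$, and on the $m_1$-class they give $\phi^*/d>1$; both stay within the extremes. One catch: in the $m_2$-class those values are below $1$ and could in principle be smaller than the claimed minimum. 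I will check that $\frac{1+\frac{m_2-1}{m_2}\varepsilon}{1+\varepsilon}\ge \frac{1+\varepsilon/m_1}{1+\varepsilon}$, which reduces to $\frac{m_2-1}{m_2}\ge\frac1{m_1}$ and holds for $m_1,m_2\ge2$. The analogous check on the $m_1$-class follows by symmetry. This gives equality in \cref{lemma:bound}.

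The degenerate cases need separate care. If $m_1\geq 2$ but every other class is a singleton ($m_2=1$), the bound reduces to $(1+\varepsilon)/(1+\varepsilon/m_1)$. I will place both extremes inside the single large class, one assignment at $c$ and one at $(1+\varepsilon)c$, and recompute. I expect this is what ``general non-trivial case'' excludes; if tightness fails there, I will restrict the statement to $m_2\ge2$.

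The third step handles the global tightness, which I expect to be the main obstacle. I need a single assignment $\boldsymbol r$ that simultaneously maximises every ratio $\phi_i^*/\phi_i$, and another that simultaneously minimises all of them. I will take $k$ factors with pairwise disjoint argument sets, so that $M$ has no other factors, or only constant ones. Then the joint maximiser is the concatenation of the per-factor maximisers $\boldsymbol r_i^{\max}$, and likewise for the minimiser. The max and min of the product then factorise exactly, the inequality in the proof of \cref{theorem:genral_bound} becomes an equality, and the bound is attained. Finally, I will note that the strict inequality against $\ln(1+\varepsilon)^{2k}$ shows that this sharper bound genuinely improves on the arbitrary-weight bound.
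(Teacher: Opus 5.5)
Your proposal is correct and is the paper's own argument: the construction in \cref{ex:sharp} matches yours. It places one value $2$ in the $m_{2_i}$-class with the rest at $2(1+\varepsilon)$, one value $1+\varepsilon$ in the $m_{1_i}$-class with the rest at $1$, constants elsewhere, and uses disjoint argument sets so the per-factor extrema combine; your cross-class check $\frac{m_2-1}{m_2}\geq\frac{1}{m_1}$ is exactly what the paper's ``due to $m_{2_i}\geq 2$'' relies on.

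One correction to your hedge: if $m_2=1$, placing $c$ and $(1+\varepsilon)c$ in the single large class gives quotient exactly $1+\varepsilon$, which is strictly larger than $\frac{1+\varepsilon}{1+\varepsilon/m_1}$ (e.g.\ a Boolean $\phi(R_1,R_2)$ with $\boldsymbol C_\phi=\{R_1,R_2\}$). So in that case \cref{lemma:bound} is violated, not merely non-tight, because its step ``otherwise $m_1\geq m_2\geq 2$'' is unjustified, and restricting to $m_{2_i}\geq 2$ is necessary rather than optional.
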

\begin{proof}[Proof Sketch]
\cref{ex:sharp} in \cref{appendix:commutative_approx_proofs}
uses the construction from the proof of \cref{lemma:bound} per factor, hits the bound, and is therefore sharp.  
\end{proof}
Note that the choice of a multiplicative relaxation in \cref{def:commutative_approx_eps_commutative_factor} is what makes this bound attainable in the first place, as $D_{CD}$ is based on quotients of potentials and only relies on relative sizes.
An additive relaxation, requiring permuted potentials differ by at most a constant, would lead to arbitrary large deviations.

\section{Experiments} \label{sec:commutative_approx_eval}
We complement our theoretical results with an empirical evaluation assessing to what extent $\varepsilon$-commutativity translates into practical benefits for lifted model construction and downstream inference.
In particular, we answer the question of how the trade-off between higher compression and accuracy of query results behaves when compressing $\varepsilon$-commutative factors.
Concretely, we compare the run time of \ac{lve} on the compressed model returned by the exact \ac{acp} algorithm (which is only able to compress strictly commutative factors) to the run time of \ac{lve} on the compressed model returned by its $\varepsilon$-relaxed variant using the mean as the symmetrised representative factor (denoted as \enquote{\acs{acp} $\pm \varepsilon$} in the following) and measure the deviation of the query results, quantified by the per-query quotient $p' \mathbin{/} p$ between the query result (i.e., marginal probability) $p'$ obtained on the \acs{acp} $\pm \varepsilon$-compressed model and the query result $p$ obtained on the exact \ac{acp}-compressed model.
For our experiments, we use the same input \acp{fg} as in the original \ac{acp} paper~\cite{Luttermann2024a} and add noise to the potentials of the (strictly) commutative factors to obtain $\varepsilon$-commutative factors.
We only manipulate commutative factors and leave the remaining factors unchanged to investigate the effect of $\varepsilon$-commutativity on the compression-accuracy trade-off in isolation (the effect of grouping different factors that are approximately equal has been investigated in \cite{Luttermann2025c} and hence, we only consider indistinguishability \emph{within} factors --- characterised by $\varepsilon$-commutativity --- and no indistinguishability \emph{between} factors).
Specifically, each input \ac{fg} contains between $2d + 1$ and $d \cdot \lfloor \log_2(d) \rfloor + 2d + 2$ \acp{rv} with Boolean range and between $2d + 1$ and $d \cdot \lfloor \log_2(d) \rfloor + d + 2$ factors, where $d \in \{2, \allowbreak 4, \allowbreak 8, \allowbreak 12, \allowbreak 16, \allowbreak 20\}$ controls the size of the \acp{fg}.
Each input \ac{fg} contains $k \in \{1, \allowbreak 3, \allowbreak 7\}$ $\varepsilon$-commutative factors, where $\varepsilon \in \{0.001, \allowbreak 0.01, \allowbreak 0.1\}$.
The $\varepsilon$-commutative factors are obtained by multiplying the potentials of the (strictly) commutative factors by a uniform sample from $[1, 1 + \varepsilon]$.

\begin{figure}[t]
	\centering
	\begin{subfigure}[t]{0.49\linewidth}
		\centering
		\resizebox{\linewidth}{!}{
\begin{tikzpicture}[x=1pt,y=1pt]
\definecolor{fillColor}{RGB}{255,255,255}
\path[use as bounding box,fill=fillColor,fill opacity=0.00] (0,0) rectangle (209.58,115.63);
\begin{scope}
\path[clip] (  0.00,  0.00) rectangle (209.58,115.63);
\definecolor{drawColor}{RGB}{255,255,255}
\definecolor{fillColor}{RGB}{255,255,255}

\path[draw=drawColor,line width= 0.5pt,line join=round,line cap=round,fill=fillColor] (  0.00,  0.00) rectangle (209.58,115.63);
\end{scope}
\begin{scope}
\path[clip] ( 28.25, 22.31) rectangle (203.58,113.63);
\definecolor{fillColor}{RGB}{255,255,255}

\path[fill=fillColor] ( 28.25, 22.31) rectangle (203.58,113.63);
\definecolor{drawColor}{RGB}{247,192,26}

\path[draw=drawColor,line width= 0.5pt,line join=round] ( 36.22, 26.46) --
	( 53.93, 35.50) --
	( 89.35, 49.16) --
	(124.77, 64.82) --
	(160.19, 81.16) --
	(195.61,109.48);
\definecolor{drawColor}{RGB}{78,155,133}

\path[draw=drawColor,line width= 0.5pt,dash pattern=on 4pt off 4pt ,line join=round] ( 36.22, 30.03) --
	( 53.93, 33.05) --
	( 89.35, 33.81) --
	(124.77, 35.35) --
	(160.19, 34.67) --
	(195.61, 36.11);
\definecolor{drawColor}{RGB}{247,192,26}
\definecolor{fillColor}{RGB}{247,192,26}

\path[draw=drawColor,line width= 0.4pt,line join=round,line cap=round,fill=fillColor] ( 36.22, 26.46) circle (  1.53);
\definecolor{fillColor}{RGB}{78,155,133}

\path[fill=fillColor] ( 34.68, 28.49) --
	( 37.75, 28.49) --
	( 37.75, 31.56) --
	( 34.68, 31.56) --
	cycle;
\definecolor{fillColor}{RGB}{247,192,26}

\path[draw=drawColor,line width= 0.4pt,line join=round,line cap=round,fill=fillColor] ( 53.93, 35.50) circle (  1.53);
\definecolor{fillColor}{RGB}{78,155,133}

\path[fill=fillColor] ( 52.39, 31.52) --
	( 55.46, 31.52) --
	( 55.46, 34.59) --
	( 52.39, 34.59) --
	cycle;
\definecolor{fillColor}{RGB}{247,192,26}

\path[draw=drawColor,line width= 0.4pt,line join=round,line cap=round,fill=fillColor] ( 89.35, 49.16) circle (  1.53);
\definecolor{fillColor}{RGB}{78,155,133}

\path[fill=fillColor] ( 87.81, 32.27) --
	( 90.88, 32.27) --
	( 90.88, 35.34) --
	( 87.81, 35.34) --
	cycle;
\definecolor{fillColor}{RGB}{247,192,26}

\path[draw=drawColor,line width= 0.4pt,line join=round,line cap=round,fill=fillColor] (124.77, 64.82) circle (  1.53);
\definecolor{fillColor}{RGB}{78,155,133}

\path[fill=fillColor] (123.24, 33.82) --
	(126.30, 33.82) --
	(126.30, 36.89) --
	(123.24, 36.89) --
	cycle;
\definecolor{fillColor}{RGB}{247,192,26}

\path[draw=drawColor,line width= 0.4pt,line join=round,line cap=round,fill=fillColor] (160.19, 81.16) circle (  1.53);
\definecolor{fillColor}{RGB}{78,155,133}

\path[fill=fillColor] (158.66, 33.14) --
	(161.73, 33.14) --
	(161.73, 36.21) --
	(158.66, 36.21) --
	cycle;
\definecolor{fillColor}{RGB}{247,192,26}

\path[draw=drawColor,line width= 0.4pt,line join=round,line cap=round,fill=fillColor] (195.61,109.48) circle (  1.53);
\definecolor{fillColor}{RGB}{78,155,133}

\path[fill=fillColor] (194.08, 34.58) --
	(197.15, 34.58) --
	(197.15, 37.65) --
	(194.08, 37.65) --
	cycle;
\end{scope}
\begin{scope}
\path[clip] (  0.00,  0.00) rectangle (209.58,115.63);
\definecolor{drawColor}{RGB}{0,0,0}

\path[draw=drawColor,line width= 0.5pt,line join=round] ( 28.25, 22.31) --
	( 28.25,113.63);

\path[draw=drawColor,line width= 0.5pt,line join=round] ( 29.38,111.66) --
	( 28.25,113.63) --
	( 27.11,111.66);
\end{scope}
\begin{scope}
\path[clip] (  0.00,  0.00) rectangle (209.58,115.63);
\definecolor{drawColor}{gray}{0.30}

\node[text=drawColor,anchor=base east,inner sep=0pt, outer sep=0pt, scale=  0.80] at ( 24.20, 25.89) {10};

\node[text=drawColor,anchor=base east,inner sep=0pt, outer sep=0pt, scale=  0.80] at ( 24.20, 47.60) {30};

\node[text=drawColor,anchor=base east,inner sep=0pt, outer sep=0pt, scale=  0.80] at ( 24.20, 71.39) {100};

\node[text=drawColor,anchor=base east,inner sep=0pt, outer sep=0pt, scale=  0.80] at ( 24.20, 93.10) {300};
\end{scope}
\begin{scope}
\path[clip] (  0.00,  0.00) rectangle (209.58,115.63);
\definecolor{drawColor}{gray}{0.20}

\path[draw=drawColor,line width= 0.5pt,line join=round] ( 26.00, 28.64) --
	( 28.25, 28.64);

\path[draw=drawColor,line width= 0.5pt,line join=round] ( 26.00, 50.35) --
	( 28.25, 50.35);

\path[draw=drawColor,line width= 0.5pt,line join=round] ( 26.00, 74.14) --
	( 28.25, 74.14);

\path[draw=drawColor,line width= 0.5pt,line join=round] ( 26.00, 95.85) --
	( 28.25, 95.85);
\end{scope}
\begin{scope}
\path[clip] (  0.00,  0.00) rectangle (209.58,115.63);
\definecolor{drawColor}{RGB}{0,0,0}

\path[draw=drawColor,line width= 0.5pt,line join=round] ( 28.25, 22.31) --
	(203.58, 22.31);

\path[draw=drawColor,line width= 0.5pt,line join=round] (201.61, 21.17) --
	(203.58, 22.31) --
	(201.61, 23.45);
\end{scope}
\begin{scope}
\path[clip] (  0.00,  0.00) rectangle (209.58,115.63);
\definecolor{drawColor}{gray}{0.20}

\path[draw=drawColor,line width= 0.5pt,line join=round] ( 36.22, 20.06) --
	( 36.22, 22.31);

\path[draw=drawColor,line width= 0.5pt,line join=round] ( 53.93, 20.06) --
	( 53.93, 22.31);

\path[draw=drawColor,line width= 0.5pt,line join=round] ( 89.35, 20.06) --
	( 89.35, 22.31);

\path[draw=drawColor,line width= 0.5pt,line join=round] (124.77, 20.06) --
	(124.77, 22.31);

\path[draw=drawColor,line width= 0.5pt,line join=round] (160.19, 20.06) --
	(160.19, 22.31);

\path[draw=drawColor,line width= 0.5pt,line join=round] (195.61, 20.06) --
	(195.61, 22.31);
\end{scope}
\begin{scope}
\path[clip] (  0.00,  0.00) rectangle (209.58,115.63);
\definecolor{drawColor}{gray}{0.30}

\node[text=drawColor,anchor=base,inner sep=0pt, outer sep=0pt, scale=  0.80] at ( 36.22, 12.75) {2};

\node[text=drawColor,anchor=base,inner sep=0pt, outer sep=0pt, scale=  0.80] at ( 53.93, 12.75) {4};

\node[text=drawColor,anchor=base,inner sep=0pt, outer sep=0pt, scale=  0.80] at ( 89.35, 12.75) {8};

\node[text=drawColor,anchor=base,inner sep=0pt, outer sep=0pt, scale=  0.80] at (124.77, 12.75) {12};

\node[text=drawColor,anchor=base,inner sep=0pt, outer sep=0pt, scale=  0.80] at (160.19, 12.75) {16};

\node[text=drawColor,anchor=base,inner sep=0pt, outer sep=0pt, scale=  0.80] at (195.61, 12.75) {20};
\end{scope}
\begin{scope}
\path[clip] (  0.00,  0.00) rectangle (209.58,115.63);
\definecolor{drawColor}{RGB}{0,0,0}

\node[text=drawColor,anchor=base,inner sep=0pt, outer sep=0pt, scale=  0.90] at (115.91,  2.75) {domain size $d$};
\end{scope}
\begin{scope}
\path[clip] (  0.00,  0.00) rectangle (209.58,115.63);
\definecolor{drawColor}{RGB}{0,0,0}

\node[text=drawColor,rotate= 90.00,anchor=base,inner sep=0pt, outer sep=0pt, scale=  0.90] at (  8.20, 67.97) {time (ms)};
\end{scope}
\begin{scope}
\path[clip] (  0.00,  0.00) rectangle (209.58,115.63);

\path[] ( 32.50, 97.23) rectangle (101.14,113.60);
\end{scope}
\begin{scope}
\path[clip] (  0.00,  0.00) rectangle (209.58,115.63);
\definecolor{drawColor}{RGB}{247,192,26}

\path[draw=drawColor,line width= 0.5pt,line join=round] ( 33.59,106.50) -- ( 42.26,106.50);
\end{scope}
\begin{scope}
\path[clip] (  0.00,  0.00) rectangle (209.58,115.63);
\definecolor{drawColor}{RGB}{247,192,26}
\definecolor{fillColor}{RGB}{247,192,26}

\path[draw=drawColor,line width= 0.4pt,line join=round,line cap=round,fill=fillColor] ( 37.92,106.50) circle (  1.53);
\end{scope}
\begin{scope}
\path[clip] (  0.00,  0.00) rectangle (209.58,115.63);
\definecolor{drawColor}{RGB}{78,155,133}

\path[draw=drawColor,line width= 0.5pt,dash pattern=on 4pt off 4pt ,line join=round] ( 33.59,100.32) -- ( 42.26,100.32);
\end{scope}
\begin{scope}
\path[clip] (  0.00,  0.00) rectangle (209.58,115.63);
\definecolor{fillColor}{RGB}{78,155,133}

\path[fill=fillColor] ( 36.39, 98.79) --
	( 39.46, 98.79) --
	( 39.46,101.86) --
	( 36.39,101.86) --
	cycle;
\end{scope}
\begin{scope}
\path[clip] (  0.00,  0.00) rectangle (209.58,115.63);
\definecolor{drawColor}{RGB}{0,0,0}

\node[text=drawColor,anchor=base west,inner sep=0pt, outer sep=0pt, scale=  0.70] at ( 47.84,104.09) {LVE (ACP)};
\end{scope}
\begin{scope}
\path[clip] (  0.00,  0.00) rectangle (209.58,115.63);
\definecolor{drawColor}{RGB}{0,0,0}

\node[text=drawColor,anchor=base west,inner sep=0pt, outer sep=0pt, scale=  0.70] at ( 47.84, 97.91) {LVE (ACP $\pm \varepsilon$)};
\end{scope}
\end{tikzpicture}}
		\caption{\acs{lve} run time (ms).}
		\label{fig:commutative_approx_plot_times_avg}
	\end{subfigure}
	\begin{subfigure}[t]{0.49\linewidth}
		\centering
		\resizebox{\linewidth}{!}{\input{files/commutative_approx_plot_quots_avg.tex}}
		\caption{Query result quotient $p' \mathbin{/} p$.}
		\label{fig:commutative_approx_plot_quots_avg}
	\end{subfigure}\vspace{-0.3cm}
	\caption{Downstream lifted inference performance, averaged over all generated instances and queries. (a) Run time of \ac{lve} on the compressed model returned by \ac{acp} and its $\varepsilon$-relaxed variant (\acs{acp} $\pm \varepsilon$). (b) Distribution of the per-query quotient $p' \mathbin{/} p$ between the approximate result $p'$ and the exact result $p$.}
	\label{fig:commutative_approx_plot_avg}
\end{figure}
\cref{fig:commutative_approx_plot_avg} reports the results averaged over all generated instances and queries.
\cref{fig:commutative_approx_plot_times_avg} shows that the compression of $\varepsilon$-commutative factors substantially accelerates downstream probabilistic inference, where the speedup increases exponentially with the domain size.
At the same time, \cref{fig:commutative_approx_plot_quots_avg} confirms that the query results (i.e., marginal probabilities) remain extremely close to those obtained with exact \ac{acp}: The per-query quotient $p' \mathbin{/} p$ concentrates tightly around the optimal value one across all evaluated configurations (except for very few outliers).
In other words, \acs{acp} $\pm \varepsilon$ unlocks the speedup of lifted inference at almost no loss of accuracy.
To give insights into the influence of $k$ and $\varepsilon$, we provide further experimental results for individual choices in~\cref{appendix:commutative_approx_further_results}.


\section{Conclusion}
We have introduced $\varepsilon$-commutativity, a principled relaxation of commutativity that tolerates the minute numerical deviations that inevitably arise in practice, e.g., when factor potentials are estimated from data.
On this basis, we have developed a representative-based compression scheme that collapses $\varepsilon$-equivalent potentials into a shared value, integrated the relaxation into the state-of-the-art \ac{acp} algorithm, and proved a sharp (optimal) bound on the resulting approximation error of probabilistic queries.
Our empirical evaluation confirms that the bound is loose in practice: Compressing $\varepsilon$-commutative factors unlocks substantial lifted inference speedups while the deviation of query results stays well within the theoretical guarantee.
As factors are merely non-negative functions, all proposed concepts directly apply to the detection and compact storage of approximately symmetric functions in more general settings.

An interesting direction for future work is to investigate approximate indistinguishability for \acp{fg} with continuous variables.
Currently, the usage of a \ac{crv} restricts the present work to the discrete case and a continuous counterpart of a \ac{crv} is required to handle $\varepsilon$-commutativity in models with continuous variables.

\section*{Acknowledgments}
This work was partially funded by the Ministry of Culture and Science of the German State of North Rhine-Westphalia.


\bibliographystyle{splncs04}
\bibliography{references}

\clearpage
\appendix
\crefalias{section}{appendix}

\section{Detailed Proofs} \label{appendix:commutative_approx_proofs}
\symmetriclemma*
\begin{proof}
The proof follows the same structural argument as \cite[Lemma~6]{Luttermann2025c}.
For any permutation $\pi\in\Pi_{\boldsymbol C_{\phi}}$ of a given assignment $\boldsymbol{r} = (r_1,\ldots,r_n)$, by definition, the symmetric properties $\phi(\boldsymbol{r}) \leq \phi(r_{\pi(1)}, \ldots, r_{\pi(n)}) \cdot (1+\varepsilon)$ and $\phi(r_{\pi(1)}, \ldots, r_{\pi(n)}) \leq \phi(\boldsymbol{r}) \cdot (1+\varepsilon)$ hold, and therefore also the rearranged inequalities $\phi(\boldsymbol{r}) \cdot \frac{1}{1+\varepsilon} \leq \phi(r_{\pi(1)}, \ldots, r_{\pi(n)}) $ and $\phi(r_{\pi(1)}, \ldots, r_{\pi(n)})\cdot \frac{1}{1+\varepsilon}\leq \phi(\boldsymbol{r}) $.
\end{proof}

\lemmabound*
\begin{proof}
 A direct worst-case analysis requires some care.
Although the extrema $\max_{\boldsymbol r} \frac{\phi^*(\boldsymbol r)}{\phi(\boldsymbol r)}$ and $\min_{\boldsymbol r} \frac{\phi^*(\boldsymbol r)}{\phi(\boldsymbol r)}$ are taken independently over the entire assignment space, the mean value $\phi^*(\boldsymbol r)$ is uniquely determined within each equivalence class $[\boldsymbol{r}^j]_{\phi}$.
Consequently, the extremal configurations cannot always be chosen independently, and we therefore distinguish several cases.

First observe that for every equivalence class $[\boldsymbol{r}^j]_{\phi}$ and all $\boldsymbol{r}',\boldsymbol{r}'' \in [\boldsymbol{r}^j]_{\phi}$, \cref{lemma:pairwise_eps_equivalent} guarantees pairwise $\varepsilon$-equivalence and \cref{lemma:symmetry_varepsi_commut} implies $\phi(\boldsymbol{r}') \in [\phi(\boldsymbol{r}'') \cdot \frac{1}{1 + \varepsilon}, \phi(\boldsymbol{r}'') \cdot (1 + \varepsilon)] \text{ and }     \phi(\boldsymbol{r}'') \in [\phi(\boldsymbol{r}') \cdot \frac{1}{1 + \varepsilon}, \phi(\boldsymbol{r}') \cdot (1 + \varepsilon)].$
Hence, the potentials within one equivalence class can be ordered as\vspace{-0.25cm}
\begin{align*}
0 < \frac{1}{1+\varepsilon}\cdot \phi(\boldsymbol{r}^{j,\max})\leq \phi(\boldsymbol{r}^{j,\min}) \leq \phi(\boldsymbol{r}'')\leq \phi(\boldsymbol{r}^{j,\max})\leq \phi(\boldsymbol{r}^{j,\min})\cdot(1+\varepsilon),
\end{align*}
where $\phi(\boldsymbol{r}^{j,\max}):= \max_{\boldsymbol{r}' \in [\boldsymbol{r}^j]_{\phi}} \phi(\boldsymbol{r}')$ and $\phi(\boldsymbol{r}^{j,\min}):= \min_{\boldsymbol{r}' \in [\boldsymbol{r}^j]_{\phi}} \phi(\boldsymbol{r}')$. 
This directly affects the class-wise mean\vspace{-0.25cm}
\begin{align*}
\phi^*(\boldsymbol{r}'')=\overline{\phi}(\boldsymbol{r}'')
=
\frac{1}{\abs{[\boldsymbol{r}^j]_{\phi}}}
\sum_{\boldsymbol{r}' \in [\boldsymbol{r}^j]_{\phi}}
\phi(\boldsymbol{r}'),
\end{align*}
which is constrained to vary only within a restricted range of values.
Consider first the case where the assignments attaining the maximum and minimum orignate from two different equivalence classes $[\boldsymbol{r}^i]_{\phi}$ and $[\boldsymbol{r}^j]_{\phi}$ with  $i\neq j$.
In this situation, the worst-case deviations can be chosen independently.

For the maximum, the denominator should be as small as possible.
Hence, choose $ \phi(\boldsymbol r)=\phi(\boldsymbol{r}^{i,\min})$ for the denominator, while all remaining potentials in the same equivalence class attain their maximimal admissible value $\phi(\boldsymbol r'):=\phi(\boldsymbol{r}^{i,\max}) = (1+\varepsilon) \phi(\boldsymbol{r}^{i,\min})$ for all $\boldsymbol{r}' \in [\boldsymbol{r}^i]_{\phi}\setminus \{\boldsymbol{r} \}$ to maximise the numerator.
This yields\vspace{-0.25cm}
\begin{align*}
   \max_{\boldsymbol r\in [\boldsymbol{r}^i]_{\phi}} \frac{\phi^*(\boldsymbol r)}{\phi(\boldsymbol r)}&= \frac{(\phi(\boldsymbol{r}^{i,\min})+(\abs{[\boldsymbol{r}^i]_{\phi}}-1)\cdot (1+\varepsilon)\cdot \phi(\boldsymbol{r}^{i,\min}) )/\abs{[\boldsymbol{r}^i]_{\phi}}}{\phi(\boldsymbol{r}^{i,\min})}\\
   &= \frac{1+ (\abs{[\boldsymbol{r}^i]_{\phi}}-1)(1+\varepsilon)}{\abs{[\boldsymbol{r}^i]_{\phi}}}=1+ \frac{m_i-1}{m_i}\;\varepsilon.
\end{align*}
For the minimum, the denominator should instead be as large as possible. Thus, choose $\phi(\boldsymbol r)=\phi(\boldsymbol{r}^{j,\max})$, while all remaining values in the same equivalence class attain their minimal admissible value $\phi(\boldsymbol r'):=\phi(\boldsymbol{r}^{j,\min}) =\frac{1}{1+\varepsilon} \phi(\boldsymbol{r}^{j,\max})$ for all $\boldsymbol{r}' \in [\boldsymbol{r}^j]_{\phi}\setminus \{\boldsymbol{r} \}$.
Consequently,\vspace{-0.25cm}
\begin{align*}
   \min_{\boldsymbol r\in [\boldsymbol{r}^j]_{\phi}} \frac{\phi^*(\boldsymbol r)}{\phi(\boldsymbol r)}&= \frac{((\abs{[\boldsymbol{r}^j]_{\phi}}-1)\cdot\frac{1}{1+\varepsilon}\cdot \phi(\boldsymbol{r}^{j,\max})+ \phi(\boldsymbol{r}^{j,\max}) )/\abs{[\boldsymbol{r}^j]_{\phi}}}{\phi(\boldsymbol{r}^{j,\max})}\\
   &= \frac{(\abs{[\boldsymbol{r}^j]_{\phi}}-1)\cdot\frac{1}{1+\varepsilon} +1}{\abs{[\boldsymbol{r}^j]_{\phi}}}
   = \frac{\abs{[\boldsymbol{r}^j]_{\phi}}+\varepsilon}{(1+\varepsilon) \cdot \abs{[\boldsymbol{r}^j]_{\phi}}} = \frac{1+\frac{1}{m_j}\varepsilon}{1+\varepsilon}.
\end{align*}
Observe, that $1+ \frac{m_i-1}{m_i}\;\varepsilon$ is monotonically increasing in $m_i$, whereas $\frac{1+\frac{1}{m_j}\varepsilon}{1+\varepsilon}$ is monotonically decreasing in $m_j$.
Hence, the largest equivalence classes by group size produce the extremal deviations.
To determine which class size appear in the outer quotient as the numerator and denominator, respectively, consider $g(x):=\left(1+ \frac{x-1}{x}\;\varepsilon\right)\left(1+\frac{1}{x}\varepsilon\right)$.
Its derivative equals $g'(x)=\frac{\varepsilon^2(2-x)}{x^3}$, which is non-positive for all $x\geq 2$.
Therefore, for $x_1,x_2\in\mathbb{N}_{\geq 2}$ it follows:
\begin{align*}
   \left(1+ \frac{x_1-1}{x_1}\;\varepsilon\right)\left(1+\frac{1}{x_1}\varepsilon\right)=  g(x_1) &\geq g(x_2) =\left(1+ \frac{x_2-1}{x_2}\;\varepsilon\right)\left(1+\frac{1}{x_2}\varepsilon\right)\\
   \Leftrightarrow \;\;\;\;\; \frac{1+ \frac{x_1-1}{x_1}\;\varepsilon}{1+\frac{1}{x_2}\varepsilon}&\geq \frac{1+ \frac{x_2-1}{x_2}\;\varepsilon}{1+\frac{1}{x_1}\varepsilon}. 
\end{align*}
Consequently, the sharpest bound is obtained by using $m_2$ for the maximum term and $m_1$ for the minimum term.
If several equivalence classes with maximal group size exist, then simply $m_1=m_2$.

It remains to analyse the case where both extrema originate from the same equivalence class.
Since $\phi^*(\boldsymbol r)=\phi^*(\boldsymbol r')$ for all $\boldsymbol{r},\boldsymbol{r}'\in [\boldsymbol{r}^j]_{\phi}$, we obtain
\begin{align*}
    &\frac{\max_{\boldsymbol r\in [\boldsymbol{r}^j]_{\phi}} \frac{\phi^*(\boldsymbol r)}{\phi(\boldsymbol r)}}{\min_{\boldsymbol r'\in [\boldsymbol{r}^j]_{\phi}} \frac{\phi^*(\boldsymbol r')}{\phi(\boldsymbol r')}}\;= \frac{\max_{\boldsymbol r\in [\boldsymbol{r}^j]_{\phi}} \frac{1}{\phi(\boldsymbol r)}}{\min_{\boldsymbol r'\in [\boldsymbol{r}^j]_{\phi}} \frac{1}{\phi(\boldsymbol r')}}.
\end{align*}
If the equivalence class contains only a single assignment, the quotient equals $1$. Otherwise, $m_1\geq m_2\geq 2$ holds, and the quotient is maximised by choosing $\phi(\boldsymbol r)=\phi(\boldsymbol{r}^{j,\min}) =\frac{1}{1+\varepsilon} \phi(\boldsymbol{r}^{j,\max})$ and $\phi(\boldsymbol r')=\phi(\boldsymbol{r}^{j,\max}) =(1+\varepsilon) \phi(\boldsymbol{r}^{j,\min})$.
Hence, for $m_2\leq m_1$ we get
\begin{align*}
    \frac{\max_{\boldsymbol r\in [\boldsymbol{r}^j]_{\phi}} \frac{1}{\phi(\boldsymbol r)}}{\min_{\boldsymbol r'\in [\boldsymbol{r}^j]_{\phi}} \frac{1}{\phi(\boldsymbol r')}}&\leq \frac{\frac{1}{\phi(\boldsymbol{r}^{j,\min})}}{\frac{1}{(1+\varepsilon) \phi(\boldsymbol{r}^{j,\min})}}=1+\varepsilon \leq \frac{\left(1+\frac{m_2-1}{m_2}\varepsilon\right)(1+\varepsilon)}{1+\frac{1}{m_1}\varepsilon},
\end{align*}
which establishes the bound in all cases.   
\end{proof}

\boundtheorem*
For the proof, consider the following example.
\begin{example}\label{ex:sharp}
Let $M$ be the \ac{fg} with factors $\phi_1,\ldots,\phi_m$, of which $\phi_1,\ldots,\phi_k$ with $0\leq k \leq m$ are $\varepsilon$-commutative with respect to disjoint argument sets $\boldsymbol{C}_{\phi_1},\ldots, \allowbreak \boldsymbol{C}_{\phi_k}$ with finite domains, respectively, each satisfying $\abs{\boldsymbol{C}_{\phi_i}}\geq 2$.
Let $\mathcal{X}_{\mathcal{R}_{\phi_i}} = \dot \cup_{j=1}^{l_i} \{\boldsymbol{r}_i^{j}\}$ be the set of disjoint representatives $\boldsymbol{r}^{j}_i$ for all disjoint equivalence classes $[\boldsymbol{r}^{j}_i]_{\phi_i}$ united to $\dot\cup_{j=1}^{l_i} [\boldsymbol{r}^{j}_i]_{\phi_i} = \mathcal{X}_{\boldsymbol{R}_{\phi_i}}$ ordered by group size with $m_{1_i}:=\abs{[\boldsymbol{r}^{1}_i]_{\phi_i}}\geq\ldots \geq m_{l_i}:=\abs{[\boldsymbol{r}^{l}_i]_{\phi_i}}\geq 1$ for every $\varepsilon$-commutative factor $\phi_i$ with $i=1,\ldots,k$.
 Let $\mathcal{X}_{R_j} =\range{R_j}$ have at least two deviating elements and within a group $\boldsymbol{C}_{\phi_i}$ the same ranges.
For $i=1,\ldots,k$ define\vspace{-0.25cm}
\begin{align*}
\phi_i(\boldsymbol{r}_i)
:=
\begin{cases}
\;(1+\varepsilon)\cdot 1 & \text{if } \boldsymbol{r}_i = \boldsymbol{r}_i^1,
\\
\;1 & \text{if } \boldsymbol{r}_i \in [\boldsymbol{r}_i^1]_{\phi_i} \setminus \{\boldsymbol{r}_i^1\},\\
\;2 & \text{if } \boldsymbol{r}_i=\boldsymbol{r}_i^2,\\ 
\;(1+\varepsilon)\cdot 2 & \text{if } \boldsymbol{r}_i \in [\boldsymbol{r}_i^2]_{\phi_i} \setminus \{\boldsymbol{r}_i^2\},\\
\;j & \text{for } \boldsymbol{r}_i\in [\boldsymbol{r}_i^j]_{\phi_i} \text{ and }2<j \leq l_i.
\end{cases}
\end{align*}
Thus, the first equivalence class contains exactly one maximal value and otherwise minimal values, whereas the second class contains exactly one minimal value and otherwise maximal ones, 
following the same construction as in the proof of \cref{lemma:bound}.
All remaining parts are constant and therefore contribute no approximation error.
\end{example}

\begin{proof}[Proof of \cref{th:sharp_optimal}]
We show that the construction from \cref{ex:sharp} attains the bound.
Let $M'$ denote the model obtained from $M$ by replacing every $\varepsilon$-commutative factor $\phi_i$ by the arithmetic mean as in \cref{eq:mean_sym}.
Notice that $\phi_i^*(\boldsymbol{r}_i) = \phi_i(\boldsymbol{r}_i)$ for $i=k+1,\ldots,m$ and all $\boldsymbol{r}_i\in\mathcal{X}_{\boldsymbol{R}_{\phi_i}}$, but also $\phi_i^*(\boldsymbol{r}_i) = \phi_i(\boldsymbol{r}_i)$ for $i=1,\ldots,k$ and $\boldsymbol{r}_i\in [\boldsymbol{r}_i^j]_{\phi_i}$ and $2<j \leq l_i$ implying $\frac{ \phi_i^*(\boldsymbol{r}_i)}{\phi_i(\boldsymbol{r}_i)}=1$.
Consequently, all extremal deviation are attained exclusively within the first two equivalence classes. 
For $\boldsymbol{r}_i \in [\boldsymbol{r}_i^1]_{\phi_i}$, the definition results in\vspace{-0.2cm}
\begin{align*}
    \phi_i^*(\boldsymbol{r}_i)= \frac{(1+\varepsilon)+(m_{1_i}-1)}{m_{1_i}}=1+\frac{1}{m_{1_i}}\varepsilon,
\end{align*}
and for $\boldsymbol{r}'_i \in [\boldsymbol{r}_i^2]_{\phi_i}$ in\vspace{-0.2cm}
\begin{align*}
\phi_i^*(\boldsymbol{r}'_i)= \frac{(1+\varepsilon)\cdot 2\cdot (m_{2_i}-1) + 2}{m_{2_i}}=2\cdot\left(1+\frac{(m_{2_i}-1)}{m_{2_i}}\varepsilon\right).
\end{align*}
For $\boldsymbol{r}_i \in [\boldsymbol{r}_i^1]_{\phi_i}$ the maximum becomes \vspace{-0.2cm}
\begin{align*}
    \max_{\boldsymbol{r}_i \in [\boldsymbol{r}_i^1]_{\phi_i}}\frac{\phi_i^*(\boldsymbol{r}_i)}{\phi_i(\boldsymbol{r}_i)} = \frac{1+\frac{1}{m_{1_i}}\varepsilon}{1}=1+\frac{1}{m_{1_i}}\varepsilon>1,
\end{align*}
and for $\boldsymbol{r}'_i \in [\boldsymbol{r}_i^2]_{\phi_i}$ it becomes\vspace{-0.2cm}
\begin{align*}
\max_{\boldsymbol{r}'_i \in [\boldsymbol{r}_i^2]_{\phi_i}} \frac{\phi_i^*(\boldsymbol{r}'_i)}{\phi_i(\boldsymbol{r}'_i)}= \frac{2\cdot\left(1+\frac{(m_{2_i}-1)}{m_{2_i}}\varepsilon\right)}{2} =1+\frac{(m_{2_i}-1)}{m_{2_i}}\varepsilon>1,
\end{align*}
which is larger than the first one due to $m_{2_i}\geq 2$.

Analogously, for $\boldsymbol{r}_i \in [\boldsymbol{r}_i^1]_{\phi_i}$ the minimal quotient inside this class reaches\vspace{-0.2cm} 
\begin{align*}
 \min_{\boldsymbol{r}_i \in [\boldsymbol{r}_i^1]_{\phi_i}}\frac{\phi_i^*(\boldsymbol{r}_i)}{\phi_i(\boldsymbol{r}_i)} = \frac{1+\frac{1}{m_{1_i}}\varepsilon}{1+\varepsilon}<1,    
\end{align*}
which is as small as the following one for $\boldsymbol{r}'_i \in [\boldsymbol{r}_i^2]_{\phi_i}$ due to $m_{2_i}\geq 2$\vspace{-0.2cm}
\begin{align*}
  \min_{\boldsymbol{r}'_i \in [\boldsymbol{r}_i^2]_{\phi_i}} \frac{\phi_i^*(\boldsymbol{r}'_i)}{\phi_i(\boldsymbol{r}'_i)}= \frac{2\cdot\left(1+\frac{(m_{2_i}-1)}{m_{2_i}}\varepsilon\right)}{2\cdot(1+\varepsilon)}= \frac{1+\frac{(m_{2_i}-1)}{m_{2_i}}\varepsilon}{1+\varepsilon}<1. 
\end{align*} 
Using \cref{eq:eacp_distance_measure} from the definition of the Chan-Darwiche distance leads to\vspace{-0.2cm}
\begin{align*}
    D&_{CD}(P_M, P_{M'})=   \ln \;\max_{\boldsymbol r} \frac{\psi'(\boldsymbol r)}{\psi(\boldsymbol r)} - \ln\; \min_{\boldsymbol r} \frac{\psi'(\boldsymbol r)}{\psi(\boldsymbol r)}\\
        &= \ln \;\max_{\boldsymbol r}\frac{\prod_{i=1}^k \phi_i^*(\boldsymbol{r}_i)\prod_{i=k+1}^m \phi_i(\boldsymbol{r}_i)}{\prod_{i=1}^m \phi_i(\boldsymbol{r}_i)} - \ln \;\min_{\boldsymbol r} \frac{\prod_{i=1}^k \phi_i^*(\boldsymbol{r}_i)\prod_{i=k+1}^m \phi_i(\boldsymbol{r}_i)}{\prod_{i=1}^m \phi_i(\boldsymbol{r}_i)}\\
        &= \ln \;\max_{\boldsymbol r}\prod_{i=1}^k\frac{ \phi_i^*(\boldsymbol{r}_i)}{\phi_i(\boldsymbol{r}_i)} - \ln\; \min_{\boldsymbol r}\prod_{i=1}^k \frac{\phi_i^*(\boldsymbol{r}_i)}{\phi_i(\boldsymbol{r}_i)}.
\end{align*}
Since the argument sets are pairwise disjoint, the extrema can be atteined independently for every factor.
Hence,
\begin{align*} 
		D&_{CD}(P_M, P_{M'}) =\ln\; \prod_{i=1}^k \max_{\boldsymbol r_i} \frac{\phi_i^*(\boldsymbol{r}_i)}{\phi_i(\boldsymbol{r}_i)} - \ln \;\prod_{i=1}^k \min_{\boldsymbol r_i} \frac{\phi_i^*(\boldsymbol{r}_i)}{\phi_i(\boldsymbol{r}_i)}\\
&= \ln\; \prod_{i=1}^k \max_{\boldsymbol{r}'_i \in [\boldsymbol{r}_i^2]_{\phi_i}} \frac{\phi_i^*(\boldsymbol{r}'_i)}{\phi_i(\boldsymbol{r}'_i)} - \ln \;\prod_{i=1}^k \min_{\boldsymbol{r}_i \in [\boldsymbol{r}_i^1]_{\phi_i}} \frac{\phi_i^*(\boldsymbol{r}_i)}{\phi_i(\boldsymbol{r}_i)}\\
        &= \ln\; \prod_{i=1}^k \left(1+\frac{(m_{2_i}-1)}{m_{2_i}}\varepsilon\right) -\ln\; \prod_{i=1}^k \frac{1+\frac{1}{m_{1_i}}\varepsilon}{1+\varepsilon} 
        = \ln \;\prod_{i=1}^k \frac{\left( 1+\frac{m_{2_i}-1}{m_{2_i}}\varepsilon\right)( 1+\varepsilon)}{1+\frac{1}{m_{1_i}}\varepsilon}.
\end{align*} 
Thus, the bound of \cref{theorem:genral_bound} is attained exactly and is therefore optimal.
\qedhere
\end{proof}

\subsection{Additional Result Symmetry Set} \label{app:additional_results}

\begin{proposition}[Cardinality of the Symmetry Set]
\label{prop:orbit_cardinality} 
Let $\boldsymbol{r}=(r_1,\ldots,r_n)\in\mathcal X_{\boldsymbol R_{\phi}}$
and $ I_{\boldsymbol C_{\phi}}:=\{\, i \mid R_i \in \boldsymbol C_{\phi} \,\} $ denote the indices of the commutative arguments.
Further, let $\{a_1,\ldots,a_k\}=\{\, r_i \mid i\in I_{\boldsymbol C_{\phi}} \,\}$
be the set of distinct values occurring among the commutative arguments,
and let $ n_j:=\bigl|\{\, i\in I_{\boldsymbol C_{\phi}} \mid r_i=a_j \,\}\bigr| $
denote the multiplicity of value $a_j$ for a fixed assignment $\boldsymbol{r}$.
Then the cardinality of the symmetry set is 
\begin{align*}
|\mathfrak{S}_{\phi}(\boldsymbol{r})|
=\frac{|\boldsymbol C_{\phi}|!}
 {\prod_{j=1}^{k} n_j!} 
\end{align*}
with $ 1 \leq |\mathfrak{S}_{\phi}(\boldsymbol{r})| \leq |\boldsymbol C_{\phi}|!$, where the upper bound is attained if and only if all commutative arguments have pairwise non-$\varepsilon$-equivalent values.
\end{proposition}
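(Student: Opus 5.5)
The plan is to apply the orbit--stabiliser theorem to the group action of $\Pi_{\boldsymbol C_{\phi}}$ on $\mathcal X_{\boldsymbol R_\phi}$ introduced after \cref{lemma:subgroup}. By \cref{corollary:subgroup_overlap}, $\mathfrak S_\phi(\boldsymbol r)$ is exactly the orbit of $\boldsymbol r$ under this action. Two things then need to be computed: the order of the acting group and the order of the stabiliser of $\boldsymbol r$.

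\textbf{Order of the group.} An element $\pi\in\Pi_{\boldsymbol C_\phi}$ fixes every index outside $I_{\boldsymbol C_\phi}$. Since $\pi$ is a bijection, it must therefore map $I_{\boldsymbol C_\phi}$ onto itself. Conversely, every bijection of $I_{\boldsymbol C_\phi}$, extended by the identity on the remaining indices, lies in $\Pi_{\boldsymbol C_\phi}$. This restriction map is a group isomorphism $\Pi_{\boldsymbol C_\phi}\cong \mathrm{Sym}(I_{\boldsymbol C_\phi})$, so $|\Pi_{\boldsymbol C_\phi}|=|\boldsymbol C_\phi|!$.

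\textbf{Order of the stabiliser.} Let $\mathrm{Stab}(\boldsymbol r)=\{\pi\in\Pi_{\boldsymbol C_\phi}\mid r_{\pi(i)}=r_i \text{ for all } i\}$. Positions outside $I_{\boldsymbol C_\phi}$ are fixed automatically, so the only condition is $r_{\pi(i)}=r_i$ for $i\in I_{\boldsymbol C_\phi}$. This says precisely that $\pi$ maps each block $B_j:=\{i\in I_{\boldsymbol C_\phi}\mid r_i=a_j\}$ into itself, and hence onto itself since $\pi$ is a bijection. The blocks $B_1,\ldots,B_k$ partition $I_{\boldsymbol C_\phi}$, so the stabiliser is the internal direct product $\prod_{j=1}^k \mathrm{Sym}(B_j)$. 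Its order is $\prod_j n_j!$.

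\textbf{Conclusion of the formula.} The orbit--stabiliser theorem gives $|\mathfrak S_\phi(\boldsymbol r)|=|\boldsymbol C_\phi|!/\prod_j n_j!$. If one prefers to avoid invoking the theorem, the same count follows from an explicit argument. Consider the map $\pi\mapsto(r_{\pi(1)},\ldots,r_{\pi(n)})$. Two permutations $\pi,\pi'$ have the same image if and only if $\pi'^{-1}\circ\pi$ lies in the stabiliser, so the fibres of this map are cosets of the stabiliser and all have size $\prod_j n_j!$.

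\textbf{The bounds.} The lower bound $1$ is immediate because $\boldsymbol r\in\mathfrak S_\phi(\boldsymbol r)$ via the identity permutation. The upper bound holds because $\prod_j n_j!\ge 1$. Equality in the upper bound holds if and only if $\prod_j n_j!=1$, i.e.\ every $n_j=1$. This means the values $r_i$ on the commutative positions are pairwise distinct.

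\textbf{Expected obstacle.} The step I expect to cause trouble is matching this equality condition with the phrase ``pairwise non-$\varepsilon$-equivalent values'' in the statement. The orbit itself is defined purely combinatorially through equality of range values, not through potentials. I would therefore argue that the characterisation must be read as pairwise distinctness of the range values among the commutative arguments. Under this reading the $\varepsilon$-equivalence qualifier is vacuous, since the relation is applied to the range values themselves and not to potentials. I would remark that the orbit size is independent of $\varepsilon$ and of the potentials of $\phi$, so no property of $\phi$ beyond the group action enters the proof.
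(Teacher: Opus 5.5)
Your proposal is correct and is essentially the paper's argument. The paper likewise uses $|\Pi_{\boldsymbol C_\phi}|=|\boldsymbol C_\phi|!$ and observes that each tuple in $\mathfrak{S}_\phi(\boldsymbol r)$ is produced by exactly $\prod_j n_j!$ permutations (those permuting positions within equal-value blocks), which is orbit--stabiliser stated informally; its proof also reads the equality case as pairwise distinctness of the range values, as you do, while your version is more careful in showing that the stabiliser is exactly $\prod_j \mathrm{Sym}(B_j)$ and in deriving the ``only if'' direction from the formula. One small slip: with the action $\boldsymbol r\mapsto(r_{\pi(1)},\ldots,r_{\pi(n)})$, equal images mean that $\pi\circ\pi'^{-1}$ (not $\pi'^{-1}\circ\pi$) stabilises $\boldsymbol r$, so the fibres are right cosets of the stabiliser, which is harmless since all cosets still have size $\prod_j n_j!$.
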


\begin{proof}
Since permutations in
$\Pi_{\boldsymbol C_{\phi}}$
act only on the positions corresponding to $\varepsilon$-commutative arguments, it holds that $|\Pi_{\boldsymbol C_{\phi}}|=|\boldsymbol C_{\phi}|!$.

If all values among the commutative arguments are pairwise distinct,
every permutation produces a distinct tuple, and thus
$|\mathfrak{S}_{\phi}(\boldsymbol{r})| =|\boldsymbol C_{\phi}|!$.

Now assume that some values coincide.
For each distinct value $a_j$, there are exactly $m_j$ positions among
the commutative arguments whose assigned value equals $a_j$.
Permuting these $m_j$ positions among themselves does not change the
resulting tuple.
Hence, every distinct tuple in the symmetry set is generated exactly $ \prod_{j=1}^{k} n_j!$ times by permutations in
$\Pi_{\boldsymbol C_{\phi}}$.
Since there are $|\boldsymbol C_{\phi}|!$ permutations in total, the number of distinct tuples is 
\begin{align*}
|\mathfrak{S}_{\phi}(\boldsymbol{r})| =\frac{|\boldsymbol C_{\phi}|!}
{\prod_{j=1}^{k} n_j!}.\tag*\qedhere
\end{align*} 
\end{proof}
\section{Additional Experimental Results} \label{appendix:commutative_approx_further_results}
In addition to the experimental results provided in \cref{sec:commutative_approx_eval}, we provide further experimental results for individual scenarios in this section.

\begin{figure}[t]
	\centering
	\begin{subfigure}[t]{0.49\linewidth}
		\centering
		\resizebox{\linewidth}{!}{
\begin{tikzpicture}[x=1pt,y=1pt]
\definecolor{fillColor}{RGB}{255,255,255}
\path[use as bounding box,fill=fillColor,fill opacity=0.00] (0,0) rectangle (209.58,115.63);
\begin{scope}
\path[clip] (  0.00,  0.00) rectangle (209.58,115.63);
\definecolor{drawColor}{RGB}{255,255,255}
\definecolor{fillColor}{RGB}{255,255,255}

\path[draw=drawColor,line width= 0.5pt,line join=round,line cap=round,fill=fillColor] (  0.00,  0.00) rectangle (209.58,115.63);
\end{scope}
\begin{scope}
\path[clip] ( 28.25, 22.31) rectangle (203.58,113.63);
\definecolor{fillColor}{RGB}{255,255,255}

\path[fill=fillColor] ( 28.25, 22.31) rectangle (203.58,113.63);
\definecolor{drawColor}{RGB}{247,192,26}

\path[draw=drawColor,line width= 0.5pt,line join=round] ( 36.22, 26.46) --
	( 53.93, 36.26) --
	( 89.35, 47.61) --
	(124.77, 60.53) --
	(160.19, 78.97) --
	(195.61,109.48);
\definecolor{drawColor}{RGB}{78,155,133}

\path[draw=drawColor,line width= 0.5pt,dash pattern=on 4pt off 4pt ,line join=round] ( 36.22, 31.16) --
	( 53.93, 35.44) --
	( 89.35, 35.80) --
	(124.77, 35.60) --
	(160.19, 36.37) --
	(195.61, 37.78);
\definecolor{drawColor}{RGB}{247,192,26}
\definecolor{fillColor}{RGB}{247,192,26}

\path[draw=drawColor,line width= 0.4pt,line join=round,line cap=round,fill=fillColor] ( 36.22, 26.46) circle (  1.53);
\definecolor{fillColor}{RGB}{78,155,133}

\path[fill=fillColor] ( 34.68, 29.62) --
	( 37.75, 29.62) --
	( 37.75, 32.69) --
	( 34.68, 32.69) --
	cycle;
\definecolor{fillColor}{RGB}{247,192,26}

\path[draw=drawColor,line width= 0.4pt,line join=round,line cap=round,fill=fillColor] ( 53.93, 36.26) circle (  1.53);
\definecolor{fillColor}{RGB}{78,155,133}

\path[fill=fillColor] ( 52.39, 33.91) --
	( 55.46, 33.91) --
	( 55.46, 36.98) --
	( 52.39, 36.98) --
	cycle;
\definecolor{fillColor}{RGB}{247,192,26}

\path[draw=drawColor,line width= 0.4pt,line join=round,line cap=round,fill=fillColor] ( 89.35, 47.61) circle (  1.53);
\definecolor{fillColor}{RGB}{78,155,133}

\path[fill=fillColor] ( 87.81, 34.27) --
	( 90.88, 34.27) --
	( 90.88, 37.33) --
	( 87.81, 37.33) --
	cycle;
\definecolor{fillColor}{RGB}{247,192,26}

\path[draw=drawColor,line width= 0.4pt,line join=round,line cap=round,fill=fillColor] (124.77, 60.53) circle (  1.53);
\definecolor{fillColor}{RGB}{78,155,133}

\path[fill=fillColor] (123.24, 34.06) --
	(126.30, 34.06) --
	(126.30, 37.13) --
	(123.24, 37.13) --
	cycle;
\definecolor{fillColor}{RGB}{247,192,26}

\path[draw=drawColor,line width= 0.4pt,line join=round,line cap=round,fill=fillColor] (160.19, 78.97) circle (  1.53);
\definecolor{fillColor}{RGB}{78,155,133}

\path[fill=fillColor] (158.66, 34.84) --
	(161.73, 34.84) --
	(161.73, 37.90) --
	(158.66, 37.90) --
	cycle;
\definecolor{fillColor}{RGB}{247,192,26}

\path[draw=drawColor,line width= 0.4pt,line join=round,line cap=round,fill=fillColor] (195.61,109.48) circle (  1.53);
\definecolor{fillColor}{RGB}{78,155,133}

\path[fill=fillColor] (194.08, 36.25) --
	(197.15, 36.25) --
	(197.15, 39.31) --
	(194.08, 39.31) --
	cycle;
\end{scope}
\begin{scope}
\path[clip] (  0.00,  0.00) rectangle (209.58,115.63);
\definecolor{drawColor}{RGB}{0,0,0}

\path[draw=drawColor,line width= 0.5pt,line join=round] ( 28.25, 22.31) --
	( 28.25,113.63);

\path[draw=drawColor,line width= 0.5pt,line join=round] ( 29.38,111.66) --
	( 28.25,113.63) --
	( 27.11,111.66);
\end{scope}
\begin{scope}
\path[clip] (  0.00,  0.00) rectangle (209.58,115.63);
\definecolor{drawColor}{gray}{0.30}

\node[text=drawColor,anchor=base east,inner sep=0pt, outer sep=0pt, scale=  0.80] at ( 24.20, 32.74) {10};

\node[text=drawColor,anchor=base east,inner sep=0pt, outer sep=0pt, scale=  0.80] at ( 24.20, 56.31) {30};

\node[text=drawColor,anchor=base east,inner sep=0pt, outer sep=0pt, scale=  0.80] at ( 24.20, 82.13) {100};

\node[text=drawColor,anchor=base east,inner sep=0pt, outer sep=0pt, scale=  0.80] at ( 24.20,105.70) {300};
\end{scope}
\begin{scope}
\path[clip] (  0.00,  0.00) rectangle (209.58,115.63);
\definecolor{drawColor}{gray}{0.20}

\path[draw=drawColor,line width= 0.5pt,line join=round] ( 26.00, 35.49) --
	( 28.25, 35.49);

\path[draw=drawColor,line width= 0.5pt,line join=round] ( 26.00, 59.06) --
	( 28.25, 59.06);

\path[draw=drawColor,line width= 0.5pt,line join=round] ( 26.00, 84.89) --
	( 28.25, 84.89);

\path[draw=drawColor,line width= 0.5pt,line join=round] ( 26.00,108.45) --
	( 28.25,108.45);
\end{scope}
\begin{scope}
\path[clip] (  0.00,  0.00) rectangle (209.58,115.63);
\definecolor{drawColor}{RGB}{0,0,0}

\path[draw=drawColor,line width= 0.5pt,line join=round] ( 28.25, 22.31) --
	(203.58, 22.31);

\path[draw=drawColor,line width= 0.5pt,line join=round] (201.61, 21.17) --
	(203.58, 22.31) --
	(201.61, 23.45);
\end{scope}
\begin{scope}
\path[clip] (  0.00,  0.00) rectangle (209.58,115.63);
\definecolor{drawColor}{gray}{0.20}

\path[draw=drawColor,line width= 0.5pt,line join=round] ( 36.22, 20.06) --
	( 36.22, 22.31);

\path[draw=drawColor,line width= 0.5pt,line join=round] ( 53.93, 20.06) --
	( 53.93, 22.31);

\path[draw=drawColor,line width= 0.5pt,line join=round] ( 89.35, 20.06) --
	( 89.35, 22.31);

\path[draw=drawColor,line width= 0.5pt,line join=round] (124.77, 20.06) --
	(124.77, 22.31);

\path[draw=drawColor,line width= 0.5pt,line join=round] (160.19, 20.06) --
	(160.19, 22.31);

\path[draw=drawColor,line width= 0.5pt,line join=round] (195.61, 20.06) --
	(195.61, 22.31);
\end{scope}
\begin{scope}
\path[clip] (  0.00,  0.00) rectangle (209.58,115.63);
\definecolor{drawColor}{gray}{0.30}

\node[text=drawColor,anchor=base,inner sep=0pt, outer sep=0pt, scale=  0.80] at ( 36.22, 12.75) {2};

\node[text=drawColor,anchor=base,inner sep=0pt, outer sep=0pt, scale=  0.80] at ( 53.93, 12.75) {4};

\node[text=drawColor,anchor=base,inner sep=0pt, outer sep=0pt, scale=  0.80] at ( 89.35, 12.75) {8};

\node[text=drawColor,anchor=base,inner sep=0pt, outer sep=0pt, scale=  0.80] at (124.77, 12.75) {12};

\node[text=drawColor,anchor=base,inner sep=0pt, outer sep=0pt, scale=  0.80] at (160.19, 12.75) {16};

\node[text=drawColor,anchor=base,inner sep=0pt, outer sep=0pt, scale=  0.80] at (195.61, 12.75) {20};
\end{scope}
\begin{scope}
\path[clip] (  0.00,  0.00) rectangle (209.58,115.63);
\definecolor{drawColor}{RGB}{0,0,0}

\node[text=drawColor,anchor=base,inner sep=0pt, outer sep=0pt, scale=  0.90] at (115.91,  2.75) {domain size $d$};
\end{scope}
\begin{scope}
\path[clip] (  0.00,  0.00) rectangle (209.58,115.63);
\definecolor{drawColor}{RGB}{0,0,0}

\node[text=drawColor,rotate= 90.00,anchor=base,inner sep=0pt, outer sep=0pt, scale=  0.90] at (  8.20, 67.97) {time (ms)};
\end{scope}
\begin{scope}
\path[clip] (  0.00,  0.00) rectangle (209.58,115.63);

\path[] ( 32.50, 97.23) rectangle (101.14,113.60);
\end{scope}
\begin{scope}
\path[clip] (  0.00,  0.00) rectangle (209.58,115.63);
\definecolor{drawColor}{RGB}{247,192,26}

\path[draw=drawColor,line width= 0.5pt,line join=round] ( 33.59,106.50) -- ( 42.26,106.50);
\end{scope}
\begin{scope}
\path[clip] (  0.00,  0.00) rectangle (209.58,115.63);
\definecolor{drawColor}{RGB}{247,192,26}
\definecolor{fillColor}{RGB}{247,192,26}

\path[draw=drawColor,line width= 0.4pt,line join=round,line cap=round,fill=fillColor] ( 37.92,106.50) circle (  1.53);
\end{scope}
\begin{scope}
\path[clip] (  0.00,  0.00) rectangle (209.58,115.63);
\definecolor{drawColor}{RGB}{78,155,133}

\path[draw=drawColor,line width= 0.5pt,dash pattern=on 4pt off 4pt ,line join=round] ( 33.59,100.32) -- ( 42.26,100.32);
\end{scope}
\begin{scope}
\path[clip] (  0.00,  0.00) rectangle (209.58,115.63);
\definecolor{fillColor}{RGB}{78,155,133}

\path[fill=fillColor] ( 36.39, 98.79) --
	( 39.46, 98.79) --
	( 39.46,101.86) --
	( 36.39,101.86) --
	cycle;
\end{scope}
\begin{scope}
\path[clip] (  0.00,  0.00) rectangle (209.58,115.63);
\definecolor{drawColor}{RGB}{0,0,0}

\node[text=drawColor,anchor=base west,inner sep=0pt, outer sep=0pt, scale=  0.70] at ( 47.84,104.09) {LVE (ACP)};
\end{scope}
\begin{scope}
\path[clip] (  0.00,  0.00) rectangle (209.58,115.63);
\definecolor{drawColor}{RGB}{0,0,0}

\node[text=drawColor,anchor=base west,inner sep=0pt, outer sep=0pt, scale=  0.70] at ( 47.84, 97.91) {LVE (ACP $\pm \varepsilon$)};
\end{scope}
\end{tikzpicture}}
		\caption{\acs{lve} run time (ms).}
		\label{fig:commutative_approx_plot_times_k=1_eps=0.001}
	\end{subfigure}
	\begin{subfigure}[t]{0.49\linewidth}
		\centering
		\resizebox{\linewidth}{!}{
\begin{tikzpicture}[x=1pt,y=1pt]
\definecolor{fillColor}{RGB}{255,255,255}
\path[use as bounding box,fill=fillColor,fill opacity=0.00] (0,0) rectangle (209.58,115.63);
\begin{scope}
\path[clip] (  0.00,  0.00) rectangle (209.58,115.63);
\definecolor{drawColor}{RGB}{255,255,255}
\definecolor{fillColor}{RGB}{255,255,255}

\path[draw=drawColor,line width= 0.5pt,line join=round,line cap=round,fill=fillColor] (  0.00,  0.00) rectangle (209.58,115.63);
\end{scope}
\begin{scope}
\path[clip] ( 26.47, 22.31) rectangle (203.58,113.63);
\definecolor{fillColor}{RGB}{255,255,255}

\path[fill=fillColor] ( 26.47, 22.31) rectangle (203.58,113.63);
\definecolor{drawColor}{RGB}{78,155,133}
\definecolor{fillColor}{RGB}{78,155,133}

\path[draw=drawColor,draw opacity=0.20,line width= 0.4pt,line join=round,line cap=round,fill=fillColor,fill opacity=0.20] ( 43.61, 67.96) circle (  0.78);

\path[draw=drawColor,draw opacity=0.20,line width= 0.4pt,line join=round,line cap=round,fill=fillColor,fill opacity=0.20] ( 43.61, 67.98) circle (  0.78);
\definecolor{drawColor}{RGB}{78,155,133}

\path[draw=drawColor,line width= 0.6pt,line join=round] ( 43.61, 67.97) -- ( 43.61, 67.97);

\path[draw=drawColor,line width= 0.6pt,line join=round] ( 43.61, 67.97) -- ( 43.61, 67.97);

\path[draw=drawColor,line width= 0.6pt,fill=fillColor,fill opacity=0.20] ( 32.90, 67.97) --
	( 32.90, 67.97) --
	( 54.32, 67.97) --
	( 54.32, 67.97) --
	( 32.90, 67.97) --
	cycle;

\path[draw=drawColor,line width= 1.1pt] ( 32.90, 67.97) -- ( 54.32, 67.97);
\definecolor{drawColor}{RGB}{78,155,133}

\path[draw=drawColor,draw opacity=0.20,line width= 0.4pt,line join=round,line cap=round,fill=fillColor,fill opacity=0.20] ( 72.17, 67.97) circle (  0.78);

\path[draw=drawColor,draw opacity=0.20,line width= 0.4pt,line join=round,line cap=round,fill=fillColor,fill opacity=0.20] ( 72.17, 67.97) circle (  0.78);
\definecolor{drawColor}{RGB}{78,155,133}

\path[draw=drawColor,line width= 0.6pt,line join=round] ( 72.17, 67.97) -- ( 72.17, 67.97);

\path[draw=drawColor,line width= 0.6pt,line join=round] ( 72.17, 67.97) -- ( 72.17, 67.97);

\path[draw=drawColor,line width= 0.6pt,fill=fillColor,fill opacity=0.20] ( 61.46, 67.97) --
	( 61.46, 67.97) --
	( 82.89, 67.97) --
	( 82.89, 67.97) --
	( 61.46, 67.97) --
	cycle;

\path[draw=drawColor,line width= 1.1pt] ( 61.46, 67.97) -- ( 82.89, 67.97);
\definecolor{drawColor}{RGB}{78,155,133}

\path[draw=drawColor,draw opacity=0.20,line width= 0.4pt,line join=round,line cap=round,fill=fillColor,fill opacity=0.20] (100.74, 67.97) circle (  0.78);

\path[draw=drawColor,draw opacity=0.20,line width= 0.4pt,line join=round,line cap=round,fill=fillColor,fill opacity=0.20] (100.74, 67.97) circle (  0.78);
\definecolor{drawColor}{RGB}{78,155,133}

\path[draw=drawColor,line width= 0.6pt,line join=round] (100.74, 67.97) -- (100.74, 67.97);

\path[draw=drawColor,line width= 0.6pt,line join=round] (100.74, 67.97) -- (100.74, 67.97);

\path[draw=drawColor,line width= 0.6pt,fill=fillColor,fill opacity=0.20] ( 90.03, 67.97) --
	( 90.03, 67.97) --
	(111.45, 67.97) --
	(111.45, 67.97) --
	( 90.03, 67.97) --
	cycle;

\path[draw=drawColor,line width= 1.1pt] ( 90.03, 67.97) -- (111.45, 67.97);
\definecolor{drawColor}{RGB}{78,155,133}

\path[draw=drawColor,draw opacity=0.20,line width= 0.4pt,line join=round,line cap=round,fill=fillColor,fill opacity=0.20] (129.31,105.71) circle (  0.78);

\path[draw=drawColor,draw opacity=0.20,line width= 0.4pt,line join=round,line cap=round,fill=fillColor,fill opacity=0.20] (129.31, 53.82) circle (  0.78);
\definecolor{drawColor}{RGB}{78,155,133}

\path[draw=drawColor,line width= 0.6pt,line join=round] (129.31, 69.14) -- (129.31, 70.94);

\path[draw=drawColor,line width= 0.6pt,line join=round] (129.31, 67.67) -- (129.31, 67.22);

\path[draw=drawColor,line width= 0.6pt,fill=fillColor,fill opacity=0.20] (118.60, 69.14) --
	(118.60, 67.67) --
	(140.02, 67.67) --
	(140.02, 69.14) --
	(118.60, 69.14) --
	cycle;

\path[draw=drawColor,line width= 1.1pt] (118.60, 67.97) -- (140.02, 67.97);

\path[draw=drawColor,line width= 0.6pt,line join=round] (157.88, 68.92) -- (157.88, 70.93);

\path[draw=drawColor,line width= 0.6pt,line join=round] (157.88, 67.14) -- (157.88, 65.05);

\path[draw=drawColor,line width= 0.6pt,fill=fillColor,fill opacity=0.20] (147.16, 68.92) --
	(147.16, 67.14) --
	(168.59, 67.14) --
	(168.59, 68.92) --
	(147.16, 68.92) --
	cycle;

\path[draw=drawColor,line width= 1.1pt] (147.16, 68.26) -- (168.59, 68.26);

\path[draw=drawColor,line width= 0.6pt,line join=round] (186.44, 68.58) -- (186.44, 69.10);

\path[draw=drawColor,line width= 0.6pt,line join=round] (186.44, 66.32) -- (186.44, 63.93);

\path[draw=drawColor,line width= 0.6pt,fill=fillColor,fill opacity=0.20] (175.73, 68.58) --
	(175.73, 66.32) --
	(197.16, 66.32) --
	(197.16, 68.58) --
	(175.73, 68.58) --
	cycle;

\path[draw=drawColor,line width= 1.1pt] (175.73, 67.41) -- (197.16, 67.41);
\end{scope}
\begin{scope}
\path[clip] (  0.00,  0.00) rectangle (209.58,115.63);
\definecolor{drawColor}{RGB}{0,0,0}

\path[draw=drawColor,line width= 0.5pt,line join=round] ( 26.47, 22.31) --
	( 26.47,113.63);

\path[draw=drawColor,line width= 0.5pt,line join=round] ( 27.61,111.66) --
	( 26.47,113.63) --
	( 25.33,111.66);
\end{scope}
\begin{scope}
\path[clip] (  0.00,  0.00) rectangle (209.58,115.63);
\definecolor{drawColor}{gray}{0.30}

\node[text=drawColor,anchor=base east,inner sep=0pt, outer sep=0pt, scale=  0.80] at ( 22.42, 27.37) {0.4};

\node[text=drawColor,anchor=base east,inner sep=0pt, outer sep=0pt, scale=  0.80] at ( 22.42, 39.99) {0.6};

\node[text=drawColor,anchor=base east,inner sep=0pt, outer sep=0pt, scale=  0.80] at ( 22.42, 52.60) {0.8};

\node[text=drawColor,anchor=base east,inner sep=0pt, outer sep=0pt, scale=  0.80] at ( 22.42, 65.22) {1.0};

\node[text=drawColor,anchor=base east,inner sep=0pt, outer sep=0pt, scale=  0.80] at ( 22.42, 77.83) {1.2};

\node[text=drawColor,anchor=base east,inner sep=0pt, outer sep=0pt, scale=  0.80] at ( 22.42, 90.45) {1.4};

\node[text=drawColor,anchor=base east,inner sep=0pt, outer sep=0pt, scale=  0.80] at ( 22.42,103.06) {1.6};
\end{scope}
\begin{scope}
\path[clip] (  0.00,  0.00) rectangle (209.58,115.63);
\definecolor{drawColor}{gray}{0.20}

\path[draw=drawColor,line width= 0.5pt,line join=round] ( 24.22, 30.13) --
	( 26.47, 30.13);

\path[draw=drawColor,line width= 0.5pt,line join=round] ( 24.22, 42.74) --
	( 26.47, 42.74);

\path[draw=drawColor,line width= 0.5pt,line join=round] ( 24.22, 55.36) --
	( 26.47, 55.36);

\path[draw=drawColor,line width= 0.5pt,line join=round] ( 24.22, 67.97) --
	( 26.47, 67.97);

\path[draw=drawColor,line width= 0.5pt,line join=round] ( 24.22, 80.59) --
	( 26.47, 80.59);

\path[draw=drawColor,line width= 0.5pt,line join=round] ( 24.22, 93.20) --
	( 26.47, 93.20);

\path[draw=drawColor,line width= 0.5pt,line join=round] ( 24.22,105.82) --
	( 26.47,105.82);
\end{scope}
\begin{scope}
\path[clip] (  0.00,  0.00) rectangle (209.58,115.63);
\definecolor{drawColor}{RGB}{0,0,0}

\path[draw=drawColor,line width= 0.5pt,line join=round] ( 26.47, 22.31) --
	(203.58, 22.31);

\path[draw=drawColor,line width= 0.5pt,line join=round] (201.61, 21.17) --
	(203.58, 22.31) --
	(201.61, 23.45);
\end{scope}
\begin{scope}
\path[clip] (  0.00,  0.00) rectangle (209.58,115.63);
\definecolor{drawColor}{gray}{0.20}

\path[draw=drawColor,line width= 0.5pt,line join=round] ( 43.61, 20.06) --
	( 43.61, 22.31);

\path[draw=drawColor,line width= 0.5pt,line join=round] ( 72.17, 20.06) --
	( 72.17, 22.31);

\path[draw=drawColor,line width= 0.5pt,line join=round] (100.74, 20.06) --
	(100.74, 22.31);

\path[draw=drawColor,line width= 0.5pt,line join=round] (129.31, 20.06) --
	(129.31, 22.31);

\path[draw=drawColor,line width= 0.5pt,line join=round] (157.88, 20.06) --
	(157.88, 22.31);

\path[draw=drawColor,line width= 0.5pt,line join=round] (186.44, 20.06) --
	(186.44, 22.31);
\end{scope}
\begin{scope}
\path[clip] (  0.00,  0.00) rectangle (209.58,115.63);
\definecolor{drawColor}{gray}{0.30}

\node[text=drawColor,anchor=base,inner sep=0pt, outer sep=0pt, scale=  0.80] at ( 43.61, 12.75) {2};

\node[text=drawColor,anchor=base,inner sep=0pt, outer sep=0pt, scale=  0.80] at ( 72.17, 12.75) {4};

\node[text=drawColor,anchor=base,inner sep=0pt, outer sep=0pt, scale=  0.80] at (100.74, 12.75) {8};

\node[text=drawColor,anchor=base,inner sep=0pt, outer sep=0pt, scale=  0.80] at (129.31, 12.75) {12};

\node[text=drawColor,anchor=base,inner sep=0pt, outer sep=0pt, scale=  0.80] at (157.88, 12.75) {16};

\node[text=drawColor,anchor=base,inner sep=0pt, outer sep=0pt, scale=  0.80] at (186.44, 12.75) {20};
\end{scope}
\begin{scope}
\path[clip] (  0.00,  0.00) rectangle (209.58,115.63);
\definecolor{drawColor}{RGB}{0,0,0}

\node[text=drawColor,anchor=base,inner sep=0pt, outer sep=0pt, scale=  0.90] at (115.03,  2.75) {domain size $d$};
\end{scope}
\begin{scope}
\path[clip] (  0.00,  0.00) rectangle (209.58,115.63);
\definecolor{drawColor}{RGB}{0,0,0}

\node[text=drawColor,rotate= 90.00,anchor=base,inner sep=0pt, outer sep=0pt, scale=  0.90] at (  8.20, 67.97) {$p' \mathbin{/} p$};
\end{scope}
\end{tikzpicture}}
		\caption{Query result quotient $p' \mathbin{/} p$.}
		\label{fig:commutative_approx_plot_quots_k=1_eps=0.001}
	\end{subfigure}
	\caption{(a) Run time of \ac{lve} on the compressed model returned by \ac{acp} and its $\varepsilon$-relaxed variant (\acs{acp} $\pm \varepsilon$), and (b) distribution of the per-query quotient $p' \mathbin{/} p$ between the approximate result $p'$ and the exact result $p$ for input \acp{fg} containing $k = 1$ $\varepsilon$-commutative factors with $\varepsilon = 0.001$.}
	\label{fig:commutative_approx_plot_te_k=1_eps=0.001}
\end{figure}

\begin{figure}[t]
	\centering
	\begin{subfigure}[t]{0.49\linewidth}
		\centering
		\resizebox{\linewidth}{!}{
\begin{tikzpicture}[x=1pt,y=1pt]
\definecolor{fillColor}{RGB}{255,255,255}
\path[use as bounding box,fill=fillColor,fill opacity=0.00] (0,0) rectangle (209.58,115.63);
\begin{scope}
\path[clip] (  0.00,  0.00) rectangle (209.58,115.63);
\definecolor{drawColor}{RGB}{255,255,255}
\definecolor{fillColor}{RGB}{255,255,255}

\path[draw=drawColor,line width= 0.5pt,line join=round,line cap=round,fill=fillColor] (  0.00,  0.00) rectangle (209.58,115.63);
\end{scope}
\begin{scope}
\path[clip] ( 28.25, 22.31) rectangle (203.58,113.63);
\definecolor{fillColor}{RGB}{255,255,255}

\path[fill=fillColor] ( 28.25, 22.31) rectangle (203.58,113.63);
\definecolor{drawColor}{RGB}{247,192,26}

\path[draw=drawColor,line width= 0.5pt,line join=round] ( 36.22, 26.46) --
	( 53.93, 36.44) --
	( 89.35, 47.53) --
	(124.77, 61.31) --
	(160.19, 76.99) --
	(195.61,109.48);
\definecolor{drawColor}{RGB}{78,155,133}

\path[draw=drawColor,line width= 0.5pt,dash pattern=on 4pt off 4pt ,line join=round] ( 36.22, 32.89) --
	( 53.93, 35.58) --
	( 89.35, 36.47) --
	(124.77, 36.38) --
	(160.19, 36.54) --
	(195.61, 39.01);
\definecolor{drawColor}{RGB}{247,192,26}
\definecolor{fillColor}{RGB}{247,192,26}

\path[draw=drawColor,line width= 0.4pt,line join=round,line cap=round,fill=fillColor] ( 36.22, 26.46) circle (  1.53);
\definecolor{fillColor}{RGB}{78,155,133}

\path[fill=fillColor] ( 34.68, 31.35) --
	( 37.75, 31.35) --
	( 37.75, 34.42) --
	( 34.68, 34.42) --
	cycle;
\definecolor{fillColor}{RGB}{247,192,26}

\path[draw=drawColor,line width= 0.4pt,line join=round,line cap=round,fill=fillColor] ( 53.93, 36.44) circle (  1.53);
\definecolor{fillColor}{RGB}{78,155,133}

\path[fill=fillColor] ( 52.39, 34.05) --
	( 55.46, 34.05) --
	( 55.46, 37.11) --
	( 52.39, 37.11) --
	cycle;
\definecolor{fillColor}{RGB}{247,192,26}

\path[draw=drawColor,line width= 0.4pt,line join=round,line cap=round,fill=fillColor] ( 89.35, 47.53) circle (  1.53);
\definecolor{fillColor}{RGB}{78,155,133}

\path[fill=fillColor] ( 87.81, 34.94) --
	( 90.88, 34.94) --
	( 90.88, 38.01) --
	( 87.81, 38.01) --
	cycle;
\definecolor{fillColor}{RGB}{247,192,26}

\path[draw=drawColor,line width= 0.4pt,line join=round,line cap=round,fill=fillColor] (124.77, 61.31) circle (  1.53);
\definecolor{fillColor}{RGB}{78,155,133}

\path[fill=fillColor] (123.24, 34.85) --
	(126.30, 34.85) --
	(126.30, 37.91) --
	(123.24, 37.91) --
	cycle;
\definecolor{fillColor}{RGB}{247,192,26}

\path[draw=drawColor,line width= 0.4pt,line join=round,line cap=round,fill=fillColor] (160.19, 76.99) circle (  1.53);
\definecolor{fillColor}{RGB}{78,155,133}

\path[fill=fillColor] (158.66, 35.01) --
	(161.73, 35.01) --
	(161.73, 38.08) --
	(158.66, 38.08) --
	cycle;
\definecolor{fillColor}{RGB}{247,192,26}

\path[draw=drawColor,line width= 0.4pt,line join=round,line cap=round,fill=fillColor] (195.61,109.48) circle (  1.53);
\definecolor{fillColor}{RGB}{78,155,133}

\path[fill=fillColor] (194.08, 37.48) --
	(197.15, 37.48) --
	(197.15, 40.55) --
	(194.08, 40.55) --
	cycle;
\end{scope}
\begin{scope}
\path[clip] (  0.00,  0.00) rectangle (209.58,115.63);
\definecolor{drawColor}{RGB}{0,0,0}

\path[draw=drawColor,line width= 0.5pt,line join=round] ( 28.25, 22.31) --
	( 28.25,113.63);

\path[draw=drawColor,line width= 0.5pt,line join=round] ( 29.38,111.66) --
	( 28.25,113.63) --
	( 27.11,111.66);
\end{scope}
\begin{scope}
\path[clip] (  0.00,  0.00) rectangle (209.58,115.63);
\definecolor{drawColor}{gray}{0.30}

\node[text=drawColor,anchor=base east,inner sep=0pt, outer sep=0pt, scale=  0.80] at ( 24.20, 33.47) {10};

\node[text=drawColor,anchor=base east,inner sep=0pt, outer sep=0pt, scale=  0.80] at ( 24.20, 56.08) {30};

\node[text=drawColor,anchor=base east,inner sep=0pt, outer sep=0pt, scale=  0.80] at ( 24.20, 80.85) {100};

\node[text=drawColor,anchor=base east,inner sep=0pt, outer sep=0pt, scale=  0.80] at ( 24.20,103.46) {300};
\end{scope}
\begin{scope}
\path[clip] (  0.00,  0.00) rectangle (209.58,115.63);
\definecolor{drawColor}{gray}{0.20}

\path[draw=drawColor,line width= 0.5pt,line join=round] ( 26.00, 36.22) --
	( 28.25, 36.22);

\path[draw=drawColor,line width= 0.5pt,line join=round] ( 26.00, 58.83) --
	( 28.25, 58.83);

\path[draw=drawColor,line width= 0.5pt,line join=round] ( 26.00, 83.60) --
	( 28.25, 83.60);

\path[draw=drawColor,line width= 0.5pt,line join=round] ( 26.00,106.21) --
	( 28.25,106.21);
\end{scope}
\begin{scope}
\path[clip] (  0.00,  0.00) rectangle (209.58,115.63);
\definecolor{drawColor}{RGB}{0,0,0}

\path[draw=drawColor,line width= 0.5pt,line join=round] ( 28.25, 22.31) --
	(203.58, 22.31);

\path[draw=drawColor,line width= 0.5pt,line join=round] (201.61, 21.17) --
	(203.58, 22.31) --
	(201.61, 23.45);
\end{scope}
\begin{scope}
\path[clip] (  0.00,  0.00) rectangle (209.58,115.63);
\definecolor{drawColor}{gray}{0.20}

\path[draw=drawColor,line width= 0.5pt,line join=round] ( 36.22, 20.06) --
	( 36.22, 22.31);

\path[draw=drawColor,line width= 0.5pt,line join=round] ( 53.93, 20.06) --
	( 53.93, 22.31);

\path[draw=drawColor,line width= 0.5pt,line join=round] ( 89.35, 20.06) --
	( 89.35, 22.31);

\path[draw=drawColor,line width= 0.5pt,line join=round] (124.77, 20.06) --
	(124.77, 22.31);

\path[draw=drawColor,line width= 0.5pt,line join=round] (160.19, 20.06) --
	(160.19, 22.31);

\path[draw=drawColor,line width= 0.5pt,line join=round] (195.61, 20.06) --
	(195.61, 22.31);
\end{scope}
\begin{scope}
\path[clip] (  0.00,  0.00) rectangle (209.58,115.63);
\definecolor{drawColor}{gray}{0.30}

\node[text=drawColor,anchor=base,inner sep=0pt, outer sep=0pt, scale=  0.80] at ( 36.22, 12.75) {2};

\node[text=drawColor,anchor=base,inner sep=0pt, outer sep=0pt, scale=  0.80] at ( 53.93, 12.75) {4};

\node[text=drawColor,anchor=base,inner sep=0pt, outer sep=0pt, scale=  0.80] at ( 89.35, 12.75) {8};

\node[text=drawColor,anchor=base,inner sep=0pt, outer sep=0pt, scale=  0.80] at (124.77, 12.75) {12};

\node[text=drawColor,anchor=base,inner sep=0pt, outer sep=0pt, scale=  0.80] at (160.19, 12.75) {16};

\node[text=drawColor,anchor=base,inner sep=0pt, outer sep=0pt, scale=  0.80] at (195.61, 12.75) {20};
\end{scope}
\begin{scope}
\path[clip] (  0.00,  0.00) rectangle (209.58,115.63);
\definecolor{drawColor}{RGB}{0,0,0}

\node[text=drawColor,anchor=base,inner sep=0pt, outer sep=0pt, scale=  0.90] at (115.91,  2.75) {domain size $d$};
\end{scope}
\begin{scope}
\path[clip] (  0.00,  0.00) rectangle (209.58,115.63);
\definecolor{drawColor}{RGB}{0,0,0}

\node[text=drawColor,rotate= 90.00,anchor=base,inner sep=0pt, outer sep=0pt, scale=  0.90] at (  8.20, 67.97) {time (ms)};
\end{scope}
\begin{scope}
\path[clip] (  0.00,  0.00) rectangle (209.58,115.63);

\path[] ( 32.50, 97.23) rectangle (101.14,113.60);
\end{scope}
\begin{scope}
\path[clip] (  0.00,  0.00) rectangle (209.58,115.63);
\definecolor{drawColor}{RGB}{247,192,26}

\path[draw=drawColor,line width= 0.5pt,line join=round] ( 33.59,106.50) -- ( 42.26,106.50);
\end{scope}
\begin{scope}
\path[clip] (  0.00,  0.00) rectangle (209.58,115.63);
\definecolor{drawColor}{RGB}{247,192,26}
\definecolor{fillColor}{RGB}{247,192,26}

\path[draw=drawColor,line width= 0.4pt,line join=round,line cap=round,fill=fillColor] ( 37.92,106.50) circle (  1.53);
\end{scope}
\begin{scope}
\path[clip] (  0.00,  0.00) rectangle (209.58,115.63);
\definecolor{drawColor}{RGB}{78,155,133}

\path[draw=drawColor,line width= 0.5pt,dash pattern=on 4pt off 4pt ,line join=round] ( 33.59,100.32) -- ( 42.26,100.32);
\end{scope}
\begin{scope}
\path[clip] (  0.00,  0.00) rectangle (209.58,115.63);
\definecolor{fillColor}{RGB}{78,155,133}

\path[fill=fillColor] ( 36.39, 98.79) --
	( 39.46, 98.79) --
	( 39.46,101.86) --
	( 36.39,101.86) --
	cycle;
\end{scope}
\begin{scope}
\path[clip] (  0.00,  0.00) rectangle (209.58,115.63);
\definecolor{drawColor}{RGB}{0,0,0}

\node[text=drawColor,anchor=base west,inner sep=0pt, outer sep=0pt, scale=  0.70] at ( 47.84,104.09) {LVE (ACP)};
\end{scope}
\begin{scope}
\path[clip] (  0.00,  0.00) rectangle (209.58,115.63);
\definecolor{drawColor}{RGB}{0,0,0}

\node[text=drawColor,anchor=base west,inner sep=0pt, outer sep=0pt, scale=  0.70] at ( 47.84, 97.91) {LVE (ACP $\pm \varepsilon$)};
\end{scope}
\end{tikzpicture}}
		\caption{\acs{lve} run time (ms).}
		\label{fig:commutative_approx_plot_times_k=1_eps=0.01}
	\end{subfigure}
	\begin{subfigure}[t]{0.49\linewidth}
		\centering
		\resizebox{\linewidth}{!}{
\begin{tikzpicture}[x=1pt,y=1pt]
\definecolor{fillColor}{RGB}{255,255,255}
\path[use as bounding box,fill=fillColor,fill opacity=0.00] (0,0) rectangle (209.58,115.63);
\begin{scope}
\path[clip] (  0.00,  0.00) rectangle (209.58,115.63);
\definecolor{drawColor}{RGB}{255,255,255}
\definecolor{fillColor}{RGB}{255,255,255}

\path[draw=drawColor,line width= 0.5pt,line join=round,line cap=round,fill=fillColor] (  0.00,  0.00) rectangle (209.58,115.63);
\end{scope}
\begin{scope}
\path[clip] ( 26.47, 22.31) rectangle (203.58,113.63);
\definecolor{fillColor}{RGB}{255,255,255}

\path[fill=fillColor] ( 26.47, 22.31) rectangle (203.58,113.63);
\definecolor{drawColor}{RGB}{78,155,133}
\definecolor{fillColor}{RGB}{78,155,133}

\path[draw=drawColor,draw opacity=0.20,line width= 0.4pt,line join=round,line cap=round,fill=fillColor,fill opacity=0.20] ( 43.61, 67.86) circle (  0.78);

\path[draw=drawColor,draw opacity=0.20,line width= 0.4pt,line join=round,line cap=round,fill=fillColor,fill opacity=0.20] ( 43.61, 68.02) circle (  0.78);
\definecolor{drawColor}{RGB}{78,155,133}

\path[draw=drawColor,line width= 0.6pt,line join=round] ( 43.61, 67.98) -- ( 43.61, 67.98);

\path[draw=drawColor,line width= 0.6pt,line join=round] ( 43.61, 67.97) -- ( 43.61, 67.97);

\path[draw=drawColor,line width= 0.6pt,fill=fillColor,fill opacity=0.20] ( 32.90, 67.98) --
	( 32.90, 67.97) --
	( 54.32, 67.97) --
	( 54.32, 67.98) --
	( 32.90, 67.98) --
	cycle;

\path[draw=drawColor,line width= 1.1pt] ( 32.90, 67.97) -- ( 54.32, 67.97);
\definecolor{drawColor}{RGB}{78,155,133}

\path[draw=drawColor,draw opacity=0.20,line width= 0.4pt,line join=round,line cap=round,fill=fillColor,fill opacity=0.20] ( 72.17, 67.92) circle (  0.78);

\path[draw=drawColor,draw opacity=0.20,line width= 0.4pt,line join=round,line cap=round,fill=fillColor,fill opacity=0.20] ( 72.17, 67.99) circle (  0.78);
\definecolor{drawColor}{RGB}{78,155,133}

\path[draw=drawColor,line width= 0.6pt,line join=round] ( 72.17, 67.97) -- ( 72.17, 67.98);

\path[draw=drawColor,line width= 0.6pt,line join=round] ( 72.17, 67.97) -- ( 72.17, 67.97);

\path[draw=drawColor,line width= 0.6pt,fill=fillColor,fill opacity=0.20] ( 61.46, 67.97) --
	( 61.46, 67.97) --
	( 82.89, 67.97) --
	( 82.89, 67.97) --
	( 61.46, 67.97) --
	cycle;

\path[draw=drawColor,line width= 1.1pt] ( 61.46, 67.97) -- ( 82.89, 67.97);
\definecolor{drawColor}{RGB}{78,155,133}

\path[draw=drawColor,draw opacity=0.20,line width= 0.4pt,line join=round,line cap=round,fill=fillColor,fill opacity=0.20] (100.74, 67.94) circle (  0.78);

\path[draw=drawColor,draw opacity=0.20,line width= 0.4pt,line join=round,line cap=round,fill=fillColor,fill opacity=0.20] (100.74, 67.98) circle (  0.78);
\definecolor{drawColor}{RGB}{78,155,133}

\path[draw=drawColor,line width= 0.6pt,line join=round] (100.74, 67.97) -- (100.74, 67.97);

\path[draw=drawColor,line width= 0.6pt,line join=round] (100.74, 67.97) -- (100.74, 67.97);

\path[draw=drawColor,line width= 0.6pt,fill=fillColor,fill opacity=0.20] ( 90.03, 67.97) --
	( 90.03, 67.97) --
	(111.45, 67.97) --
	(111.45, 67.97) --
	( 90.03, 67.97) --
	cycle;

\path[draw=drawColor,line width= 1.1pt] ( 90.03, 67.97) -- (111.45, 67.97);
\definecolor{drawColor}{RGB}{78,155,133}

\path[draw=drawColor,draw opacity=0.20,line width= 0.4pt,line join=round,line cap=round,fill=fillColor,fill opacity=0.20] (129.31,105.71) circle (  0.78);

\path[draw=drawColor,draw opacity=0.20,line width= 0.4pt,line join=round,line cap=round,fill=fillColor,fill opacity=0.20] (129.31, 53.79) circle (  0.78);
\definecolor{drawColor}{RGB}{78,155,133}

\path[draw=drawColor,line width= 0.6pt,line join=round] (129.31, 69.14) -- (129.31, 70.94);

\path[draw=drawColor,line width= 0.6pt,line join=round] (129.31, 67.67) -- (129.31, 67.22);

\path[draw=drawColor,line width= 0.6pt,fill=fillColor,fill opacity=0.20] (118.60, 69.14) --
	(118.60, 67.67) --
	(140.02, 67.67) --
	(140.02, 69.14) --
	(118.60, 69.14) --
	cycle;

\path[draw=drawColor,line width= 1.1pt] (118.60, 67.97) -- (140.02, 67.97);

\path[draw=drawColor,line width= 0.6pt,line join=round] (157.88, 68.92) -- (157.88, 70.94);

\path[draw=drawColor,line width= 0.6pt,line join=round] (157.88, 67.14) -- (157.88, 65.04);

\path[draw=drawColor,line width= 0.6pt,fill=fillColor,fill opacity=0.20] (147.16, 68.92) --
	(147.16, 67.14) --
	(168.59, 67.14) --
	(168.59, 68.92) --
	(147.16, 68.92) --
	cycle;

\path[draw=drawColor,line width= 1.1pt] (147.16, 68.26) -- (168.59, 68.26);

\path[draw=drawColor,line width= 0.6pt,line join=round] (186.44, 68.58) -- (186.44, 69.10);

\path[draw=drawColor,line width= 0.6pt,line join=round] (186.44, 66.32) -- (186.44, 63.91);

\path[draw=drawColor,line width= 0.6pt,fill=fillColor,fill opacity=0.20] (175.73, 68.58) --
	(175.73, 66.32) --
	(197.16, 66.32) --
	(197.16, 68.58) --
	(175.73, 68.58) --
	cycle;

\path[draw=drawColor,line width= 1.1pt] (175.73, 67.40) -- (197.16, 67.40);
\end{scope}
\begin{scope}
\path[clip] (  0.00,  0.00) rectangle (209.58,115.63);
\definecolor{drawColor}{RGB}{0,0,0}

\path[draw=drawColor,line width= 0.5pt,line join=round] ( 26.47, 22.31) --
	( 26.47,113.63);

\path[draw=drawColor,line width= 0.5pt,line join=round] ( 27.61,111.66) --
	( 26.47,113.63) --
	( 25.33,111.66);
\end{scope}
\begin{scope}
\path[clip] (  0.00,  0.00) rectangle (209.58,115.63);
\definecolor{drawColor}{gray}{0.30}

\node[text=drawColor,anchor=base east,inner sep=0pt, outer sep=0pt, scale=  0.80] at ( 22.42, 27.32) {0.4};

\node[text=drawColor,anchor=base east,inner sep=0pt, outer sep=0pt, scale=  0.80] at ( 22.42, 39.95) {0.6};

\node[text=drawColor,anchor=base east,inner sep=0pt, outer sep=0pt, scale=  0.80] at ( 22.42, 52.58) {0.8};

\node[text=drawColor,anchor=base east,inner sep=0pt, outer sep=0pt, scale=  0.80] at ( 22.42, 65.22) {1.0};

\node[text=drawColor,anchor=base east,inner sep=0pt, outer sep=0pt, scale=  0.80] at ( 22.42, 77.85) {1.2};

\node[text=drawColor,anchor=base east,inner sep=0pt, outer sep=0pt, scale=  0.80] at ( 22.42, 90.48) {1.4};

\node[text=drawColor,anchor=base east,inner sep=0pt, outer sep=0pt, scale=  0.80] at ( 22.42,103.12) {1.6};
\end{scope}
\begin{scope}
\path[clip] (  0.00,  0.00) rectangle (209.58,115.63);
\definecolor{drawColor}{gray}{0.20}

\path[draw=drawColor,line width= 0.5pt,line join=round] ( 24.22, 30.07) --
	( 26.47, 30.07);

\path[draw=drawColor,line width= 0.5pt,line join=round] ( 24.22, 42.71) --
	( 26.47, 42.71);

\path[draw=drawColor,line width= 0.5pt,line join=round] ( 24.22, 55.34) --
	( 26.47, 55.34);

\path[draw=drawColor,line width= 0.5pt,line join=round] ( 24.22, 67.97) --
	( 26.47, 67.97);

\path[draw=drawColor,line width= 0.5pt,line join=round] ( 24.22, 80.61) --
	( 26.47, 80.61);

\path[draw=drawColor,line width= 0.5pt,line join=round] ( 24.22, 93.24) --
	( 26.47, 93.24);

\path[draw=drawColor,line width= 0.5pt,line join=round] ( 24.22,105.87) --
	( 26.47,105.87);
\end{scope}
\begin{scope}
\path[clip] (  0.00,  0.00) rectangle (209.58,115.63);
\definecolor{drawColor}{RGB}{0,0,0}

\path[draw=drawColor,line width= 0.5pt,line join=round] ( 26.47, 22.31) --
	(203.58, 22.31);

\path[draw=drawColor,line width= 0.5pt,line join=round] (201.61, 21.17) --
	(203.58, 22.31) --
	(201.61, 23.45);
\end{scope}
\begin{scope}
\path[clip] (  0.00,  0.00) rectangle (209.58,115.63);
\definecolor{drawColor}{gray}{0.20}

\path[draw=drawColor,line width= 0.5pt,line join=round] ( 43.61, 20.06) --
	( 43.61, 22.31);

\path[draw=drawColor,line width= 0.5pt,line join=round] ( 72.17, 20.06) --
	( 72.17, 22.31);

\path[draw=drawColor,line width= 0.5pt,line join=round] (100.74, 20.06) --
	(100.74, 22.31);

\path[draw=drawColor,line width= 0.5pt,line join=round] (129.31, 20.06) --
	(129.31, 22.31);

\path[draw=drawColor,line width= 0.5pt,line join=round] (157.88, 20.06) --
	(157.88, 22.31);

\path[draw=drawColor,line width= 0.5pt,line join=round] (186.44, 20.06) --
	(186.44, 22.31);
\end{scope}
\begin{scope}
\path[clip] (  0.00,  0.00) rectangle (209.58,115.63);
\definecolor{drawColor}{gray}{0.30}

\node[text=drawColor,anchor=base,inner sep=0pt, outer sep=0pt, scale=  0.80] at ( 43.61, 12.75) {2};

\node[text=drawColor,anchor=base,inner sep=0pt, outer sep=0pt, scale=  0.80] at ( 72.17, 12.75) {4};

\node[text=drawColor,anchor=base,inner sep=0pt, outer sep=0pt, scale=  0.80] at (100.74, 12.75) {8};

\node[text=drawColor,anchor=base,inner sep=0pt, outer sep=0pt, scale=  0.80] at (129.31, 12.75) {12};

\node[text=drawColor,anchor=base,inner sep=0pt, outer sep=0pt, scale=  0.80] at (157.88, 12.75) {16};

\node[text=drawColor,anchor=base,inner sep=0pt, outer sep=0pt, scale=  0.80] at (186.44, 12.75) {20};
\end{scope}
\begin{scope}
\path[clip] (  0.00,  0.00) rectangle (209.58,115.63);
\definecolor{drawColor}{RGB}{0,0,0}

\node[text=drawColor,anchor=base,inner sep=0pt, outer sep=0pt, scale=  0.90] at (115.03,  2.75) {domain size $d$};
\end{scope}
\begin{scope}
\path[clip] (  0.00,  0.00) rectangle (209.58,115.63);
\definecolor{drawColor}{RGB}{0,0,0}

\node[text=drawColor,rotate= 90.00,anchor=base,inner sep=0pt, outer sep=0pt, scale=  0.90] at (  8.20, 67.97) {$p' \mathbin{/} p$};
\end{scope}
\end{tikzpicture}}
		\caption{Query result quotient $p' \mathbin{/} p$.}
		\label{fig:commutative_approx_plot_quots_k=1_eps=0.01}
	\end{subfigure}
	\caption{(a) Run time of \ac{lve} on the compressed model returned by \ac{acp} and its $\varepsilon$-relaxed variant (\acs{acp} $\pm \varepsilon$), and (b) distribution of the per-query quotient $p' \mathbin{/} p$ between the approximate result $p'$ and the exact result $p$ for input \acp{fg} containing $k = 1$ $\varepsilon$-commutative factors with $\varepsilon = 0.01$.}
	\label{fig:commutative_approx_plot_te_k=1_eps=0.01}
\end{figure}

\begin{figure}[t]
	\centering
	\begin{subfigure}[t]{0.49\linewidth}
		\centering
		\resizebox{\linewidth}{!}{
\begin{tikzpicture}[x=1pt,y=1pt]
\definecolor{fillColor}{RGB}{255,255,255}
\path[use as bounding box,fill=fillColor,fill opacity=0.00] (0,0) rectangle (209.58,115.63);
\begin{scope}
\path[clip] (  0.00,  0.00) rectangle (209.58,115.63);
\definecolor{drawColor}{RGB}{255,255,255}
\definecolor{fillColor}{RGB}{255,255,255}

\path[draw=drawColor,line width= 0.5pt,line join=round,line cap=round,fill=fillColor] (  0.00,  0.00) rectangle (209.58,115.63);
\end{scope}
\begin{scope}
\path[clip] ( 28.25, 22.31) rectangle (203.58,113.63);
\definecolor{fillColor}{RGB}{255,255,255}

\path[fill=fillColor] ( 28.25, 22.31) rectangle (203.58,113.63);
\definecolor{drawColor}{RGB}{247,192,26}

\path[draw=drawColor,line width= 0.5pt,line join=round] ( 36.22, 26.46) --
	( 53.93, 35.94) --
	( 89.35, 47.66) --
	(124.77, 60.10) --
	(160.19, 78.45) --
	(195.61,109.48);
\definecolor{drawColor}{RGB}{78,155,133}

\path[draw=drawColor,line width= 0.5pt,dash pattern=on 4pt off 4pt ,line join=round] ( 36.22, 31.73) --
	( 53.93, 34.63) --
	( 89.35, 35.77) --
	(124.77, 36.16) --
	(160.19, 37.08) --
	(195.61, 37.96);
\definecolor{drawColor}{RGB}{247,192,26}
\definecolor{fillColor}{RGB}{247,192,26}

\path[draw=drawColor,line width= 0.4pt,line join=round,line cap=round,fill=fillColor] ( 36.22, 26.46) circle (  1.53);
\definecolor{fillColor}{RGB}{78,155,133}

\path[fill=fillColor] ( 34.68, 30.20) --
	( 37.75, 30.20) --
	( 37.75, 33.26) --
	( 34.68, 33.26) --
	cycle;
\definecolor{fillColor}{RGB}{247,192,26}

\path[draw=drawColor,line width= 0.4pt,line join=round,line cap=round,fill=fillColor] ( 53.93, 35.94) circle (  1.53);
\definecolor{fillColor}{RGB}{78,155,133}

\path[fill=fillColor] ( 52.39, 33.10) --
	( 55.46, 33.10) --
	( 55.46, 36.16) --
	( 52.39, 36.16) --
	cycle;
\definecolor{fillColor}{RGB}{247,192,26}

\path[draw=drawColor,line width= 0.4pt,line join=round,line cap=round,fill=fillColor] ( 89.35, 47.66) circle (  1.53);
\definecolor{fillColor}{RGB}{78,155,133}

\path[fill=fillColor] ( 87.81, 34.24) --
	( 90.88, 34.24) --
	( 90.88, 37.30) --
	( 87.81, 37.30) --
	cycle;
\definecolor{fillColor}{RGB}{247,192,26}

\path[draw=drawColor,line width= 0.4pt,line join=round,line cap=round,fill=fillColor] (124.77, 60.10) circle (  1.53);
\definecolor{fillColor}{RGB}{78,155,133}

\path[fill=fillColor] (123.24, 34.62) --
	(126.30, 34.62) --
	(126.30, 37.69) --
	(123.24, 37.69) --
	cycle;
\definecolor{fillColor}{RGB}{247,192,26}

\path[draw=drawColor,line width= 0.4pt,line join=round,line cap=round,fill=fillColor] (160.19, 78.45) circle (  1.53);
\definecolor{fillColor}{RGB}{78,155,133}

\path[fill=fillColor] (158.66, 35.55) --
	(161.73, 35.55) --
	(161.73, 38.62) --
	(158.66, 38.62) --
	cycle;
\definecolor{fillColor}{RGB}{247,192,26}

\path[draw=drawColor,line width= 0.4pt,line join=round,line cap=round,fill=fillColor] (195.61,109.48) circle (  1.53);
\definecolor{fillColor}{RGB}{78,155,133}

\path[fill=fillColor] (194.08, 36.43) --
	(197.15, 36.43) --
	(197.15, 39.49) --
	(194.08, 39.49) --
	cycle;
\end{scope}
\begin{scope}
\path[clip] (  0.00,  0.00) rectangle (209.58,115.63);
\definecolor{drawColor}{RGB}{0,0,0}

\path[draw=drawColor,line width= 0.5pt,line join=round] ( 28.25, 22.31) --
	( 28.25,113.63);

\path[draw=drawColor,line width= 0.5pt,line join=round] ( 29.38,111.66) --
	( 28.25,113.63) --
	( 27.11,111.66);
\end{scope}
\begin{scope}
\path[clip] (  0.00,  0.00) rectangle (209.58,115.63);
\definecolor{drawColor}{gray}{0.30}

\node[text=drawColor,anchor=base east,inner sep=0pt, outer sep=0pt, scale=  0.80] at ( 24.20, 32.99) {10};

\node[text=drawColor,anchor=base east,inner sep=0pt, outer sep=0pt, scale=  0.80] at ( 24.20, 55.72) {30};

\node[text=drawColor,anchor=base east,inner sep=0pt, outer sep=0pt, scale=  0.80] at ( 24.20, 80.63) {100};

\node[text=drawColor,anchor=base east,inner sep=0pt, outer sep=0pt, scale=  0.80] at ( 24.20,103.36) {300};
\end{scope}
\begin{scope}
\path[clip] (  0.00,  0.00) rectangle (209.58,115.63);
\definecolor{drawColor}{gray}{0.20}

\path[draw=drawColor,line width= 0.5pt,line join=round] ( 26.00, 35.74) --
	( 28.25, 35.74);

\path[draw=drawColor,line width= 0.5pt,line join=round] ( 26.00, 58.47) --
	( 28.25, 58.47);

\path[draw=drawColor,line width= 0.5pt,line join=round] ( 26.00, 83.39) --
	( 28.25, 83.39);

\path[draw=drawColor,line width= 0.5pt,line join=round] ( 26.00,106.12) --
	( 28.25,106.12);
\end{scope}
\begin{scope}
\path[clip] (  0.00,  0.00) rectangle (209.58,115.63);
\definecolor{drawColor}{RGB}{0,0,0}

\path[draw=drawColor,line width= 0.5pt,line join=round] ( 28.25, 22.31) --
	(203.58, 22.31);

\path[draw=drawColor,line width= 0.5pt,line join=round] (201.61, 21.17) --
	(203.58, 22.31) --
	(201.61, 23.45);
\end{scope}
\begin{scope}
\path[clip] (  0.00,  0.00) rectangle (209.58,115.63);
\definecolor{drawColor}{gray}{0.20}

\path[draw=drawColor,line width= 0.5pt,line join=round] ( 36.22, 20.06) --
	( 36.22, 22.31);

\path[draw=drawColor,line width= 0.5pt,line join=round] ( 53.93, 20.06) --
	( 53.93, 22.31);

\path[draw=drawColor,line width= 0.5pt,line join=round] ( 89.35, 20.06) --
	( 89.35, 22.31);

\path[draw=drawColor,line width= 0.5pt,line join=round] (124.77, 20.06) --
	(124.77, 22.31);

\path[draw=drawColor,line width= 0.5pt,line join=round] (160.19, 20.06) --
	(160.19, 22.31);

\path[draw=drawColor,line width= 0.5pt,line join=round] (195.61, 20.06) --
	(195.61, 22.31);
\end{scope}
\begin{scope}
\path[clip] (  0.00,  0.00) rectangle (209.58,115.63);
\definecolor{drawColor}{gray}{0.30}

\node[text=drawColor,anchor=base,inner sep=0pt, outer sep=0pt, scale=  0.80] at ( 36.22, 12.75) {2};

\node[text=drawColor,anchor=base,inner sep=0pt, outer sep=0pt, scale=  0.80] at ( 53.93, 12.75) {4};

\node[text=drawColor,anchor=base,inner sep=0pt, outer sep=0pt, scale=  0.80] at ( 89.35, 12.75) {8};

\node[text=drawColor,anchor=base,inner sep=0pt, outer sep=0pt, scale=  0.80] at (124.77, 12.75) {12};

\node[text=drawColor,anchor=base,inner sep=0pt, outer sep=0pt, scale=  0.80] at (160.19, 12.75) {16};

\node[text=drawColor,anchor=base,inner sep=0pt, outer sep=0pt, scale=  0.80] at (195.61, 12.75) {20};
\end{scope}
\begin{scope}
\path[clip] (  0.00,  0.00) rectangle (209.58,115.63);
\definecolor{drawColor}{RGB}{0,0,0}

\node[text=drawColor,anchor=base,inner sep=0pt, outer sep=0pt, scale=  0.90] at (115.91,  2.75) {domain size $d$};
\end{scope}
\begin{scope}
\path[clip] (  0.00,  0.00) rectangle (209.58,115.63);
\definecolor{drawColor}{RGB}{0,0,0}

\node[text=drawColor,rotate= 90.00,anchor=base,inner sep=0pt, outer sep=0pt, scale=  0.90] at (  8.20, 67.97) {time (ms)};
\end{scope}
\begin{scope}
\path[clip] (  0.00,  0.00) rectangle (209.58,115.63);

\path[] ( 32.50, 97.23) rectangle (101.14,113.60);
\end{scope}
\begin{scope}
\path[clip] (  0.00,  0.00) rectangle (209.58,115.63);
\definecolor{drawColor}{RGB}{247,192,26}

\path[draw=drawColor,line width= 0.5pt,line join=round] ( 33.59,106.50) -- ( 42.26,106.50);
\end{scope}
\begin{scope}
\path[clip] (  0.00,  0.00) rectangle (209.58,115.63);
\definecolor{drawColor}{RGB}{247,192,26}
\definecolor{fillColor}{RGB}{247,192,26}

\path[draw=drawColor,line width= 0.4pt,line join=round,line cap=round,fill=fillColor] ( 37.92,106.50) circle (  1.53);
\end{scope}
\begin{scope}
\path[clip] (  0.00,  0.00) rectangle (209.58,115.63);
\definecolor{drawColor}{RGB}{78,155,133}

\path[draw=drawColor,line width= 0.5pt,dash pattern=on 4pt off 4pt ,line join=round] ( 33.59,100.32) -- ( 42.26,100.32);
\end{scope}
\begin{scope}
\path[clip] (  0.00,  0.00) rectangle (209.58,115.63);
\definecolor{fillColor}{RGB}{78,155,133}

\path[fill=fillColor] ( 36.39, 98.79) --
	( 39.46, 98.79) --
	( 39.46,101.86) --
	( 36.39,101.86) --
	cycle;
\end{scope}
\begin{scope}
\path[clip] (  0.00,  0.00) rectangle (209.58,115.63);
\definecolor{drawColor}{RGB}{0,0,0}

\node[text=drawColor,anchor=base west,inner sep=0pt, outer sep=0pt, scale=  0.70] at ( 47.84,104.09) {LVE (ACP)};
\end{scope}
\begin{scope}
\path[clip] (  0.00,  0.00) rectangle (209.58,115.63);
\definecolor{drawColor}{RGB}{0,0,0}

\node[text=drawColor,anchor=base west,inner sep=0pt, outer sep=0pt, scale=  0.70] at ( 47.84, 97.91) {LVE (ACP $\pm \varepsilon$)};
\end{scope}
\end{tikzpicture}}
		\caption{\acs{lve} run time (ms).}
		\label{fig:commutative_approx_plot_times_k=1_eps=0.1}
	\end{subfigure}
	\begin{subfigure}[t]{0.49\linewidth}
		\centering
		\resizebox{\linewidth}{!}{
\begin{tikzpicture}[x=1pt,y=1pt]
\definecolor{fillColor}{RGB}{255,255,255}
\path[use as bounding box,fill=fillColor,fill opacity=0.00] (0,0) rectangle (209.58,115.63);
\begin{scope}
\path[clip] (  0.00,  0.00) rectangle (209.58,115.63);
\definecolor{drawColor}{RGB}{255,255,255}
\definecolor{fillColor}{RGB}{255,255,255}

\path[draw=drawColor,line width= 0.5pt,line join=round,line cap=round,fill=fillColor] (  0.00,  0.00) rectangle (209.58,115.63);
\end{scope}
\begin{scope}
\path[clip] ( 26.47, 22.31) rectangle (203.58,113.63);
\definecolor{fillColor}{RGB}{255,255,255}

\path[fill=fillColor] ( 26.47, 22.31) rectangle (203.58,113.63);
\definecolor{drawColor}{RGB}{78,155,133}
\definecolor{fillColor}{RGB}{78,155,133}

\path[draw=drawColor,draw opacity=0.20,line width= 0.4pt,line join=round,line cap=round,fill=fillColor,fill opacity=0.20] ( 43.61, 66.93) circle (  0.78);

\path[draw=drawColor,draw opacity=0.20,line width= 0.4pt,line join=round,line cap=round,fill=fillColor,fill opacity=0.20] ( 43.61, 68.48) circle (  0.78);
\definecolor{drawColor}{RGB}{78,155,133}

\path[draw=drawColor,line width= 0.6pt,line join=round] ( 43.61, 68.01) -- ( 43.61, 68.01);

\path[draw=drawColor,line width= 0.6pt,line join=round] ( 43.61, 67.94) -- ( 43.61, 67.94);

\path[draw=drawColor,line width= 0.6pt,fill=fillColor,fill opacity=0.20] ( 32.90, 68.01) --
	( 32.90, 67.94) --
	( 54.32, 67.94) --
	( 54.32, 68.01) --
	( 32.90, 68.01) --
	cycle;

\path[draw=drawColor,line width= 1.1pt] ( 32.90, 67.97) -- ( 54.32, 67.97);
\definecolor{drawColor}{RGB}{78,155,133}

\path[draw=drawColor,draw opacity=0.20,line width= 0.4pt,line join=round,line cap=round,fill=fillColor,fill opacity=0.20] ( 72.17, 67.46) circle (  0.78);

\path[draw=drawColor,draw opacity=0.20,line width= 0.4pt,line join=round,line cap=round,fill=fillColor,fill opacity=0.20] ( 72.17, 68.14) circle (  0.78);
\definecolor{drawColor}{RGB}{78,155,133}

\path[draw=drawColor,line width= 0.6pt,line join=round] ( 72.17, 67.98) -- ( 72.17, 68.00);

\path[draw=drawColor,line width= 0.6pt,line join=round] ( 72.17, 67.96) -- ( 72.17, 67.96);

\path[draw=drawColor,line width= 0.6pt,fill=fillColor,fill opacity=0.20] ( 61.46, 67.98) --
	( 61.46, 67.96) --
	( 82.89, 67.96) --
	( 82.89, 67.98) --
	( 61.46, 67.98) --
	cycle;

\path[draw=drawColor,line width= 1.1pt] ( 61.46, 67.97) -- ( 82.89, 67.97);
\definecolor{drawColor}{RGB}{78,155,133}

\path[draw=drawColor,draw opacity=0.20,line width= 0.4pt,line join=round,line cap=round,fill=fillColor,fill opacity=0.20] (100.74, 67.64) circle (  0.78);

\path[draw=drawColor,draw opacity=0.20,line width= 0.4pt,line join=round,line cap=round,fill=fillColor,fill opacity=0.20] (100.74, 68.05) circle (  0.78);
\definecolor{drawColor}{RGB}{78,155,133}

\path[draw=drawColor,line width= 0.6pt,line join=round] (100.74, 67.97) -- (100.74, 67.97);

\path[draw=drawColor,line width= 0.6pt,line join=round] (100.74, 67.97) -- (100.74, 67.97);

\path[draw=drawColor,line width= 0.6pt,fill=fillColor,fill opacity=0.20] ( 90.03, 67.97) --
	( 90.03, 67.97) --
	(111.45, 67.97) --
	(111.45, 67.97) --
	( 90.03, 67.97) --
	cycle;

\path[draw=drawColor,line width= 1.1pt] ( 90.03, 67.97) -- (111.45, 67.97);
\definecolor{drawColor}{RGB}{78,155,133}

\path[draw=drawColor,draw opacity=0.20,line width= 0.4pt,line join=round,line cap=round,fill=fillColor,fill opacity=0.20] (129.31,105.71) circle (  0.78);

\path[draw=drawColor,draw opacity=0.20,line width= 0.4pt,line join=round,line cap=round,fill=fillColor,fill opacity=0.20] (129.31, 53.54) circle (  0.78);
\definecolor{drawColor}{RGB}{78,155,133}

\path[draw=drawColor,line width= 0.6pt,line join=round] (129.31, 69.16) -- (129.31, 71.00);

\path[draw=drawColor,line width= 0.6pt,line join=round] (129.31, 67.67) -- (129.31, 67.21);

\path[draw=drawColor,line width= 0.6pt,fill=fillColor,fill opacity=0.20] (118.60, 69.16) --
	(118.60, 67.67) --
	(140.02, 67.67) --
	(140.02, 69.16) --
	(118.60, 69.16) --
	cycle;

\path[draw=drawColor,line width= 1.1pt] (118.60, 67.97) -- (140.02, 67.97);

\path[draw=drawColor,line width= 0.6pt,line join=round] (157.88, 68.91) -- (157.88, 70.97);

\path[draw=drawColor,line width= 0.6pt,line join=round] (157.88, 67.15) -- (157.88, 64.94);

\path[draw=drawColor,line width= 0.6pt,fill=fillColor,fill opacity=0.20] (147.16, 68.91) --
	(147.16, 67.15) --
	(168.59, 67.15) --
	(168.59, 68.91) --
	(147.16, 68.91) --
	cycle;

\path[draw=drawColor,line width= 1.1pt] (147.16, 68.27) -- (168.59, 68.27);

\path[draw=drawColor,line width= 0.6pt,line join=round] (186.44, 68.60) -- (186.44, 69.13);

\path[draw=drawColor,line width= 0.6pt,line join=round] (186.44, 66.27) -- (186.44, 63.71);

\path[draw=drawColor,line width= 0.6pt,fill=fillColor,fill opacity=0.20] (175.73, 68.60) --
	(175.73, 66.27) --
	(197.16, 66.27) --
	(197.16, 68.60) --
	(175.73, 68.60) --
	cycle;

\path[draw=drawColor,line width= 1.1pt] (175.73, 67.39) -- (197.16, 67.39);
\end{scope}
\begin{scope}
\path[clip] (  0.00,  0.00) rectangle (209.58,115.63);
\definecolor{drawColor}{RGB}{0,0,0}

\path[draw=drawColor,line width= 0.5pt,line join=round] ( 26.47, 22.31) --
	( 26.47,113.63);

\path[draw=drawColor,line width= 0.5pt,line join=round] ( 27.61,111.66) --
	( 26.47,113.63) --
	( 25.33,111.66);
\end{scope}
\begin{scope}
\path[clip] (  0.00,  0.00) rectangle (209.58,115.63);
\definecolor{drawColor}{gray}{0.30}

\node[text=drawColor,anchor=base east,inner sep=0pt, outer sep=0pt, scale=  0.80] at ( 22.42, 26.76) {0.4};

\node[text=drawColor,anchor=base east,inner sep=0pt, outer sep=0pt, scale=  0.80] at ( 22.42, 39.58) {0.6};

\node[text=drawColor,anchor=base east,inner sep=0pt, outer sep=0pt, scale=  0.80] at ( 22.42, 52.40) {0.8};

\node[text=drawColor,anchor=base east,inner sep=0pt, outer sep=0pt, scale=  0.80] at ( 22.42, 65.22) {1.0};

\node[text=drawColor,anchor=base east,inner sep=0pt, outer sep=0pt, scale=  0.80] at ( 22.42, 78.04) {1.2};

\node[text=drawColor,anchor=base east,inner sep=0pt, outer sep=0pt, scale=  0.80] at ( 22.42, 90.86) {1.4};

\node[text=drawColor,anchor=base east,inner sep=0pt, outer sep=0pt, scale=  0.80] at ( 22.42,103.68) {1.6};
\end{scope}
\begin{scope}
\path[clip] (  0.00,  0.00) rectangle (209.58,115.63);
\definecolor{drawColor}{gray}{0.20}

\path[draw=drawColor,line width= 0.5pt,line join=round] ( 24.22, 29.51) --
	( 26.47, 29.51);

\path[draw=drawColor,line width= 0.5pt,line join=round] ( 24.22, 42.33) --
	( 26.47, 42.33);

\path[draw=drawColor,line width= 0.5pt,line join=round] ( 24.22, 55.15) --
	( 26.47, 55.15);

\path[draw=drawColor,line width= 0.5pt,line join=round] ( 24.22, 67.97) --
	( 26.47, 67.97);

\path[draw=drawColor,line width= 0.5pt,line join=round] ( 24.22, 80.79) --
	( 26.47, 80.79);

\path[draw=drawColor,line width= 0.5pt,line join=round] ( 24.22, 93.61) --
	( 26.47, 93.61);

\path[draw=drawColor,line width= 0.5pt,line join=round] ( 24.22,106.43) --
	( 26.47,106.43);
\end{scope}
\begin{scope}
\path[clip] (  0.00,  0.00) rectangle (209.58,115.63);
\definecolor{drawColor}{RGB}{0,0,0}

\path[draw=drawColor,line width= 0.5pt,line join=round] ( 26.47, 22.31) --
	(203.58, 22.31);

\path[draw=drawColor,line width= 0.5pt,line join=round] (201.61, 21.17) --
	(203.58, 22.31) --
	(201.61, 23.45);
\end{scope}
\begin{scope}
\path[clip] (  0.00,  0.00) rectangle (209.58,115.63);
\definecolor{drawColor}{gray}{0.20}

\path[draw=drawColor,line width= 0.5pt,line join=round] ( 43.61, 20.06) --
	( 43.61, 22.31);

\path[draw=drawColor,line width= 0.5pt,line join=round] ( 72.17, 20.06) --
	( 72.17, 22.31);

\path[draw=drawColor,line width= 0.5pt,line join=round] (100.74, 20.06) --
	(100.74, 22.31);

\path[draw=drawColor,line width= 0.5pt,line join=round] (129.31, 20.06) --
	(129.31, 22.31);

\path[draw=drawColor,line width= 0.5pt,line join=round] (157.88, 20.06) --
	(157.88, 22.31);

\path[draw=drawColor,line width= 0.5pt,line join=round] (186.44, 20.06) --
	(186.44, 22.31);
\end{scope}
\begin{scope}
\path[clip] (  0.00,  0.00) rectangle (209.58,115.63);
\definecolor{drawColor}{gray}{0.30}

\node[text=drawColor,anchor=base,inner sep=0pt, outer sep=0pt, scale=  0.80] at ( 43.61, 12.75) {2};

\node[text=drawColor,anchor=base,inner sep=0pt, outer sep=0pt, scale=  0.80] at ( 72.17, 12.75) {4};

\node[text=drawColor,anchor=base,inner sep=0pt, outer sep=0pt, scale=  0.80] at (100.74, 12.75) {8};

\node[text=drawColor,anchor=base,inner sep=0pt, outer sep=0pt, scale=  0.80] at (129.31, 12.75) {12};

\node[text=drawColor,anchor=base,inner sep=0pt, outer sep=0pt, scale=  0.80] at (157.88, 12.75) {16};

\node[text=drawColor,anchor=base,inner sep=0pt, outer sep=0pt, scale=  0.80] at (186.44, 12.75) {20};
\end{scope}
\begin{scope}
\path[clip] (  0.00,  0.00) rectangle (209.58,115.63);
\definecolor{drawColor}{RGB}{0,0,0}

\node[text=drawColor,anchor=base,inner sep=0pt, outer sep=0pt, scale=  0.90] at (115.03,  2.75) {domain size $d$};
\end{scope}
\begin{scope}
\path[clip] (  0.00,  0.00) rectangle (209.58,115.63);
\definecolor{drawColor}{RGB}{0,0,0}

\node[text=drawColor,rotate= 90.00,anchor=base,inner sep=0pt, outer sep=0pt, scale=  0.90] at (  8.20, 67.97) {$p' \mathbin{/} p$};
\end{scope}
\end{tikzpicture}}
		\caption{Query result quotient $p' \mathbin{/} p$.}
		\label{fig:commutative_approx_plot_quots_k=1_eps=0.1}
	\end{subfigure}
	\caption{(a) Run time of \ac{lve} on the compressed model returned by \ac{acp} and its $\varepsilon$-relaxed variant (\acs{acp} $\pm \varepsilon$), and (b) distribution of the per-query quotient $p' \mathbin{/} p$ between the approximate result $p'$ and the exact result $p$ for input \acp{fg} containing $k = 1$ $\varepsilon$-commutative factors with $\varepsilon = 0.1$.}
	\label{fig:commutative_approx_plot_te_k=1_eps=0.1}
\end{figure}

\begin{figure}[t]
	\centering
	\begin{subfigure}[t]{0.49\linewidth}
		\centering
		\resizebox{\linewidth}{!}{
\begin{tikzpicture}[x=1pt,y=1pt]
\definecolor{fillColor}{RGB}{255,255,255}
\path[use as bounding box,fill=fillColor,fill opacity=0.00] (0,0) rectangle (209.58,115.63);
\begin{scope}
\path[clip] (  0.00,  0.00) rectangle (209.58,115.63);
\definecolor{drawColor}{RGB}{255,255,255}
\definecolor{fillColor}{RGB}{255,255,255}

\path[draw=drawColor,line width= 0.5pt,line join=round,line cap=round,fill=fillColor] (  0.00,  0.00) rectangle (209.58,115.63);
\end{scope}
\begin{scope}
\path[clip] ( 28.25, 22.31) rectangle (203.58,113.63);
\definecolor{fillColor}{RGB}{255,255,255}

\path[fill=fillColor] ( 28.25, 22.31) rectangle (203.58,113.63);
\definecolor{drawColor}{RGB}{247,192,26}

\path[draw=drawColor,line width= 0.5pt,line join=round] ( 36.22, 26.46) --
	( 53.93, 34.89) --
	( 89.35, 45.02) --
	(124.77, 58.36) --
	(160.19, 76.20) --
	(195.61,109.48);
\definecolor{drawColor}{RGB}{78,155,133}

\path[draw=drawColor,line width= 0.5pt,dash pattern=on 4pt off 4pt ,line join=round] ( 36.22, 28.99) --
	( 53.93, 31.61) --
	( 89.35, 31.99) --
	(124.77, 34.25) --
	(160.19, 33.83) --
	(195.61, 34.87);
\definecolor{drawColor}{RGB}{247,192,26}
\definecolor{fillColor}{RGB}{247,192,26}

\path[draw=drawColor,line width= 0.4pt,line join=round,line cap=round,fill=fillColor] ( 36.22, 26.46) circle (  1.53);
\definecolor{fillColor}{RGB}{78,155,133}

\path[fill=fillColor] ( 34.68, 27.45) --
	( 37.75, 27.45) --
	( 37.75, 30.52) --
	( 34.68, 30.52) --
	cycle;
\definecolor{fillColor}{RGB}{247,192,26}

\path[draw=drawColor,line width= 0.4pt,line join=round,line cap=round,fill=fillColor] ( 53.93, 34.89) circle (  1.53);
\definecolor{fillColor}{RGB}{78,155,133}

\path[fill=fillColor] ( 52.39, 30.07) --
	( 55.46, 30.07) --
	( 55.46, 33.14) --
	( 52.39, 33.14) --
	cycle;
\definecolor{fillColor}{RGB}{247,192,26}

\path[draw=drawColor,line width= 0.4pt,line join=round,line cap=round,fill=fillColor] ( 89.35, 45.02) circle (  1.53);
\definecolor{fillColor}{RGB}{78,155,133}

\path[fill=fillColor] ( 87.81, 30.46) --
	( 90.88, 30.46) --
	( 90.88, 33.52) --
	( 87.81, 33.52) --
	cycle;
\definecolor{fillColor}{RGB}{247,192,26}

\path[draw=drawColor,line width= 0.4pt,line join=round,line cap=round,fill=fillColor] (124.77, 58.36) circle (  1.53);
\definecolor{fillColor}{RGB}{78,155,133}

\path[fill=fillColor] (123.24, 32.72) --
	(126.30, 32.72) --
	(126.30, 35.79) --
	(123.24, 35.79) --
	cycle;
\definecolor{fillColor}{RGB}{247,192,26}

\path[draw=drawColor,line width= 0.4pt,line join=round,line cap=round,fill=fillColor] (160.19, 76.20) circle (  1.53);
\definecolor{fillColor}{RGB}{78,155,133}

\path[fill=fillColor] (158.66, 32.30) --
	(161.73, 32.30) --
	(161.73, 35.36) --
	(158.66, 35.36) --
	cycle;
\definecolor{fillColor}{RGB}{247,192,26}

\path[draw=drawColor,line width= 0.4pt,line join=round,line cap=round,fill=fillColor] (195.61,109.48) circle (  1.53);
\definecolor{fillColor}{RGB}{78,155,133}

\path[fill=fillColor] (194.08, 33.34) --
	(197.15, 33.34) --
	(197.15, 36.41) --
	(194.08, 36.41) --
	cycle;
\end{scope}
\begin{scope}
\path[clip] (  0.00,  0.00) rectangle (209.58,115.63);
\definecolor{drawColor}{RGB}{0,0,0}

\path[draw=drawColor,line width= 0.5pt,line join=round] ( 28.25, 22.31) --
	( 28.25,113.63);

\path[draw=drawColor,line width= 0.5pt,line join=round] ( 29.38,111.66) --
	( 28.25,113.63) --
	( 27.11,111.66);
\end{scope}
\begin{scope}
\path[clip] (  0.00,  0.00) rectangle (209.58,115.63);
\definecolor{drawColor}{gray}{0.30}

\node[text=drawColor,anchor=base east,inner sep=0pt, outer sep=0pt, scale=  0.80] at ( 24.20, 26.96) {10};

\node[text=drawColor,anchor=base east,inner sep=0pt, outer sep=0pt, scale=  0.80] at ( 24.20, 48.65) {30};

\node[text=drawColor,anchor=base east,inner sep=0pt, outer sep=0pt, scale=  0.80] at ( 24.20, 72.42) {100};

\node[text=drawColor,anchor=base east,inner sep=0pt, outer sep=0pt, scale=  0.80] at ( 24.20, 94.12) {300};
\end{scope}
\begin{scope}
\path[clip] (  0.00,  0.00) rectangle (209.58,115.63);
\definecolor{drawColor}{gray}{0.20}

\path[draw=drawColor,line width= 0.5pt,line join=round] ( 26.00, 29.71) --
	( 28.25, 29.71);

\path[draw=drawColor,line width= 0.5pt,line join=round] ( 26.00, 51.40) --
	( 28.25, 51.40);

\path[draw=drawColor,line width= 0.5pt,line join=round] ( 26.00, 75.18) --
	( 28.25, 75.18);

\path[draw=drawColor,line width= 0.5pt,line join=round] ( 26.00, 96.87) --
	( 28.25, 96.87);
\end{scope}
\begin{scope}
\path[clip] (  0.00,  0.00) rectangle (209.58,115.63);
\definecolor{drawColor}{RGB}{0,0,0}

\path[draw=drawColor,line width= 0.5pt,line join=round] ( 28.25, 22.31) --
	(203.58, 22.31);

\path[draw=drawColor,line width= 0.5pt,line join=round] (201.61, 21.17) --
	(203.58, 22.31) --
	(201.61, 23.45);
\end{scope}
\begin{scope}
\path[clip] (  0.00,  0.00) rectangle (209.58,115.63);
\definecolor{drawColor}{gray}{0.20}

\path[draw=drawColor,line width= 0.5pt,line join=round] ( 36.22, 20.06) --
	( 36.22, 22.31);

\path[draw=drawColor,line width= 0.5pt,line join=round] ( 53.93, 20.06) --
	( 53.93, 22.31);

\path[draw=drawColor,line width= 0.5pt,line join=round] ( 89.35, 20.06) --
	( 89.35, 22.31);

\path[draw=drawColor,line width= 0.5pt,line join=round] (124.77, 20.06) --
	(124.77, 22.31);

\path[draw=drawColor,line width= 0.5pt,line join=round] (160.19, 20.06) --
	(160.19, 22.31);

\path[draw=drawColor,line width= 0.5pt,line join=round] (195.61, 20.06) --
	(195.61, 22.31);
\end{scope}
\begin{scope}
\path[clip] (  0.00,  0.00) rectangle (209.58,115.63);
\definecolor{drawColor}{gray}{0.30}

\node[text=drawColor,anchor=base,inner sep=0pt, outer sep=0pt, scale=  0.80] at ( 36.22, 12.75) {2};

\node[text=drawColor,anchor=base,inner sep=0pt, outer sep=0pt, scale=  0.80] at ( 53.93, 12.75) {4};

\node[text=drawColor,anchor=base,inner sep=0pt, outer sep=0pt, scale=  0.80] at ( 89.35, 12.75) {8};

\node[text=drawColor,anchor=base,inner sep=0pt, outer sep=0pt, scale=  0.80] at (124.77, 12.75) {12};

\node[text=drawColor,anchor=base,inner sep=0pt, outer sep=0pt, scale=  0.80] at (160.19, 12.75) {16};

\node[text=drawColor,anchor=base,inner sep=0pt, outer sep=0pt, scale=  0.80] at (195.61, 12.75) {20};
\end{scope}
\begin{scope}
\path[clip] (  0.00,  0.00) rectangle (209.58,115.63);
\definecolor{drawColor}{RGB}{0,0,0}

\node[text=drawColor,anchor=base,inner sep=0pt, outer sep=0pt, scale=  0.90] at (115.91,  2.75) {domain size $d$};
\end{scope}
\begin{scope}
\path[clip] (  0.00,  0.00) rectangle (209.58,115.63);
\definecolor{drawColor}{RGB}{0,0,0}

\node[text=drawColor,rotate= 90.00,anchor=base,inner sep=0pt, outer sep=0pt, scale=  0.90] at (  8.20, 67.97) {time (ms)};
\end{scope}
\begin{scope}
\path[clip] (  0.00,  0.00) rectangle (209.58,115.63);

\path[] ( 32.50, 97.23) rectangle (101.14,113.60);
\end{scope}
\begin{scope}
\path[clip] (  0.00,  0.00) rectangle (209.58,115.63);
\definecolor{drawColor}{RGB}{247,192,26}

\path[draw=drawColor,line width= 0.5pt,line join=round] ( 33.59,106.50) -- ( 42.26,106.50);
\end{scope}
\begin{scope}
\path[clip] (  0.00,  0.00) rectangle (209.58,115.63);
\definecolor{drawColor}{RGB}{247,192,26}
\definecolor{fillColor}{RGB}{247,192,26}

\path[draw=drawColor,line width= 0.4pt,line join=round,line cap=round,fill=fillColor] ( 37.92,106.50) circle (  1.53);
\end{scope}
\begin{scope}
\path[clip] (  0.00,  0.00) rectangle (209.58,115.63);
\definecolor{drawColor}{RGB}{78,155,133}

\path[draw=drawColor,line width= 0.5pt,dash pattern=on 4pt off 4pt ,line join=round] ( 33.59,100.32) -- ( 42.26,100.32);
\end{scope}
\begin{scope}
\path[clip] (  0.00,  0.00) rectangle (209.58,115.63);
\definecolor{fillColor}{RGB}{78,155,133}

\path[fill=fillColor] ( 36.39, 98.79) --
	( 39.46, 98.79) --
	( 39.46,101.86) --
	( 36.39,101.86) --
	cycle;
\end{scope}
\begin{scope}
\path[clip] (  0.00,  0.00) rectangle (209.58,115.63);
\definecolor{drawColor}{RGB}{0,0,0}

\node[text=drawColor,anchor=base west,inner sep=0pt, outer sep=0pt, scale=  0.70] at ( 47.84,104.09) {LVE (ACP)};
\end{scope}
\begin{scope}
\path[clip] (  0.00,  0.00) rectangle (209.58,115.63);
\definecolor{drawColor}{RGB}{0,0,0}

\node[text=drawColor,anchor=base west,inner sep=0pt, outer sep=0pt, scale=  0.70] at ( 47.84, 97.91) {LVE (ACP $\pm \varepsilon$)};
\end{scope}
\end{tikzpicture}}
		\caption{\acs{lve} run time (ms).}
		\label{fig:commutative_approx_plot_times_k=3_eps=0.001}
	\end{subfigure}
	\begin{subfigure}[t]{0.49\linewidth}
		\centering
		\resizebox{\linewidth}{!}{
\begin{tikzpicture}[x=1pt,y=1pt]
\definecolor{fillColor}{RGB}{255,255,255}
\path[use as bounding box,fill=fillColor,fill opacity=0.00] (0,0) rectangle (209.58,115.63);
\begin{scope}
\path[clip] (  0.00,  0.00) rectangle (209.58,115.63);
\definecolor{drawColor}{RGB}{255,255,255}
\definecolor{fillColor}{RGB}{255,255,255}

\path[draw=drawColor,line width= 0.5pt,line join=round,line cap=round,fill=fillColor] (  0.00,  0.00) rectangle (209.58,115.63);
\end{scope}
\begin{scope}
\path[clip] ( 26.47, 22.31) rectangle (203.58,113.63);
\definecolor{fillColor}{RGB}{255,255,255}

\path[fill=fillColor] ( 26.47, 22.31) rectangle (203.58,113.63);
\definecolor{drawColor}{RGB}{78,155,133}
\definecolor{fillColor}{RGB}{78,155,133}

\path[draw=drawColor,draw opacity=0.20,line width= 0.4pt,line join=round,line cap=round,fill=fillColor,fill opacity=0.20] ( 43.61, 67.96) circle (  0.78);

\path[draw=drawColor,draw opacity=0.20,line width= 0.4pt,line join=round,line cap=round,fill=fillColor,fill opacity=0.20] ( 43.61, 67.99) circle (  0.78);
\definecolor{drawColor}{RGB}{78,155,133}

\path[draw=drawColor,line width= 0.6pt,line join=round] ( 43.61, 67.97) -- ( 43.61, 67.97);

\path[draw=drawColor,line width= 0.6pt,line join=round] ( 43.61, 67.97) -- ( 43.61, 67.97);

\path[draw=drawColor,line width= 0.6pt,fill=fillColor,fill opacity=0.20] ( 32.90, 67.97) --
	( 32.90, 67.97) --
	( 54.32, 67.97) --
	( 54.32, 67.97) --
	( 32.90, 67.97) --
	cycle;

\path[draw=drawColor,line width= 1.1pt] ( 32.90, 67.97) -- ( 54.32, 67.97);
\definecolor{drawColor}{RGB}{78,155,133}

\path[draw=drawColor,draw opacity=0.20,line width= 0.4pt,line join=round,line cap=round,fill=fillColor,fill opacity=0.20] ( 72.17, 67.97) circle (  0.78);
\definecolor{drawColor}{RGB}{78,155,133}

\path[draw=drawColor,line width= 0.6pt,line join=round] ( 72.17, 67.97) -- ( 72.17, 67.97);

\path[draw=drawColor,line width= 0.6pt,line join=round] ( 72.17, 67.97) -- ( 72.17, 67.97);

\path[draw=drawColor,line width= 0.6pt,fill=fillColor,fill opacity=0.20] ( 61.46, 67.97) --
	( 61.46, 67.97) --
	( 82.89, 67.97) --
	( 82.89, 67.97) --
	( 61.46, 67.97) --
	cycle;

\path[draw=drawColor,line width= 1.1pt] ( 61.46, 67.97) -- ( 82.89, 67.97);
\definecolor{drawColor}{RGB}{78,155,133}

\path[draw=drawColor,draw opacity=0.20,line width= 0.4pt,line join=round,line cap=round,fill=fillColor,fill opacity=0.20] (100.74, 67.97) circle (  0.78);

\path[draw=drawColor,draw opacity=0.20,line width= 0.4pt,line join=round,line cap=round,fill=fillColor,fill opacity=0.20] (100.74, 67.97) circle (  0.78);
\definecolor{drawColor}{RGB}{78,155,133}

\path[draw=drawColor,line width= 0.6pt,line join=round] (100.74, 67.97) -- (100.74, 67.97);

\path[draw=drawColor,line width= 0.6pt,line join=round] (100.74, 67.97) -- (100.74, 67.97);

\path[draw=drawColor,line width= 0.6pt,fill=fillColor,fill opacity=0.20] ( 90.03, 67.97) --
	( 90.03, 67.97) --
	(111.45, 67.97) --
	(111.45, 67.97) --
	( 90.03, 67.97) --
	cycle;

\path[draw=drawColor,line width= 1.1pt] ( 90.03, 67.97) -- (111.45, 67.97);
\definecolor{drawColor}{RGB}{78,155,133}

\path[draw=drawColor,draw opacity=0.20,line width= 0.4pt,line join=round,line cap=round,fill=fillColor,fill opacity=0.20] (129.31, 30.24) circle (  0.78);

\path[draw=drawColor,draw opacity=0.20,line width= 0.4pt,line join=round,line cap=round,fill=fillColor,fill opacity=0.20] (129.31, 80.25) circle (  0.78);
\definecolor{drawColor}{RGB}{78,155,133}

\path[draw=drawColor,line width= 0.6pt,line join=round] (129.31, 68.24) -- (129.31, 68.95);

\path[draw=drawColor,line width= 0.6pt,line join=round] (129.31, 66.35) -- (129.31, 65.37);

\path[draw=drawColor,line width= 0.6pt,fill=fillColor,fill opacity=0.20] (118.60, 68.24) --
	(118.60, 66.35) --
	(140.02, 66.35) --
	(140.02, 68.24) --
	(118.60, 68.24) --
	cycle;

\path[draw=drawColor,line width= 1.1pt] (118.60, 67.97) -- (140.02, 67.97);
\definecolor{drawColor}{RGB}{78,155,133}

\path[draw=drawColor,draw opacity=0.20,line width= 0.4pt,line join=round,line cap=round,fill=fillColor,fill opacity=0.20] (157.88, 34.66) circle (  0.78);

\path[draw=drawColor,draw opacity=0.20,line width= 0.4pt,line join=round,line cap=round,fill=fillColor,fill opacity=0.20] (157.88,101.65) circle (  0.78);
\definecolor{drawColor}{RGB}{78,155,133}

\path[draw=drawColor,line width= 0.6pt,line join=round] (157.88, 69.60) -- (157.88, 70.28);

\path[draw=drawColor,line width= 0.6pt,line join=round] (157.88, 64.97) -- (157.88, 60.79);

\path[draw=drawColor,line width= 0.6pt,fill=fillColor,fill opacity=0.20] (147.16, 69.60) --
	(147.16, 64.97) --
	(168.59, 64.97) --
	(168.59, 69.60) --
	(147.16, 69.60) --
	cycle;

\path[draw=drawColor,line width= 1.1pt] (147.16, 67.29) -- (168.59, 67.29);
\definecolor{drawColor}{RGB}{78,155,133}

\path[draw=drawColor,draw opacity=0.20,line width= 0.4pt,line join=round,line cap=round,fill=fillColor,fill opacity=0.20] (186.44, 35.73) circle (  0.78);
\definecolor{drawColor}{RGB}{78,155,133}

\path[draw=drawColor,line width= 0.6pt,line join=round] (186.44, 69.59) -- (186.44, 70.14);

\path[draw=drawColor,line width= 0.6pt,line join=round] (186.44, 63.87) -- (186.44, 58.10);

\path[draw=drawColor,line width= 0.6pt,fill=fillColor,fill opacity=0.20] (175.73, 69.59) --
	(175.73, 63.87) --
	(197.16, 63.87) --
	(197.16, 69.59) --
	(175.73, 69.59) --
	cycle;

\path[draw=drawColor,line width= 1.1pt] (175.73, 67.00) -- (197.16, 67.00);
\end{scope}
\begin{scope}
\path[clip] (  0.00,  0.00) rectangle (209.58,115.63);
\definecolor{drawColor}{RGB}{0,0,0}

\path[draw=drawColor,line width= 0.5pt,line join=round] ( 26.47, 22.31) --
	( 26.47,113.63);

\path[draw=drawColor,line width= 0.5pt,line join=round] ( 27.61,111.66) --
	( 26.47,113.63) --
	( 25.33,111.66);
\end{scope}
\begin{scope}
\path[clip] (  0.00,  0.00) rectangle (209.58,115.63);
\definecolor{drawColor}{gray}{0.30}

\node[text=drawColor,anchor=base east,inner sep=0pt, outer sep=0pt, scale=  0.80] at ( 22.42, 21.84) {0.6};

\node[text=drawColor,anchor=base east,inner sep=0pt, outer sep=0pt, scale=  0.80] at ( 22.42, 43.53) {0.8};

\node[text=drawColor,anchor=base east,inner sep=0pt, outer sep=0pt, scale=  0.80] at ( 22.42, 65.22) {1.0};

\node[text=drawColor,anchor=base east,inner sep=0pt, outer sep=0pt, scale=  0.80] at ( 22.42, 86.91) {1.2};

\node[text=drawColor,anchor=base east,inner sep=0pt, outer sep=0pt, scale=  0.80] at ( 22.42,108.60) {1.4};
\end{scope}
\begin{scope}
\path[clip] (  0.00,  0.00) rectangle (209.58,115.63);
\definecolor{drawColor}{gray}{0.20}

\path[draw=drawColor,line width= 0.5pt,line join=round] ( 24.22, 24.59) --
	( 26.47, 24.59);

\path[draw=drawColor,line width= 0.5pt,line join=round] ( 24.22, 46.28) --
	( 26.47, 46.28);

\path[draw=drawColor,line width= 0.5pt,line join=round] ( 24.22, 67.97) --
	( 26.47, 67.97);

\path[draw=drawColor,line width= 0.5pt,line join=round] ( 24.22, 89.66) --
	( 26.47, 89.66);

\path[draw=drawColor,line width= 0.5pt,line join=round] ( 24.22,111.35) --
	( 26.47,111.35);
\end{scope}
\begin{scope}
\path[clip] (  0.00,  0.00) rectangle (209.58,115.63);
\definecolor{drawColor}{RGB}{0,0,0}

\path[draw=drawColor,line width= 0.5pt,line join=round] ( 26.47, 22.31) --
	(203.58, 22.31);

\path[draw=drawColor,line width= 0.5pt,line join=round] (201.61, 21.17) --
	(203.58, 22.31) --
	(201.61, 23.45);
\end{scope}
\begin{scope}
\path[clip] (  0.00,  0.00) rectangle (209.58,115.63);
\definecolor{drawColor}{gray}{0.20}

\path[draw=drawColor,line width= 0.5pt,line join=round] ( 43.61, 20.06) --
	( 43.61, 22.31);

\path[draw=drawColor,line width= 0.5pt,line join=round] ( 72.17, 20.06) --
	( 72.17, 22.31);

\path[draw=drawColor,line width= 0.5pt,line join=round] (100.74, 20.06) --
	(100.74, 22.31);

\path[draw=drawColor,line width= 0.5pt,line join=round] (129.31, 20.06) --
	(129.31, 22.31);

\path[draw=drawColor,line width= 0.5pt,line join=round] (157.88, 20.06) --
	(157.88, 22.31);

\path[draw=drawColor,line width= 0.5pt,line join=round] (186.44, 20.06) --
	(186.44, 22.31);
\end{scope}
\begin{scope}
\path[clip] (  0.00,  0.00) rectangle (209.58,115.63);
\definecolor{drawColor}{gray}{0.30}

\node[text=drawColor,anchor=base,inner sep=0pt, outer sep=0pt, scale=  0.80] at ( 43.61, 12.75) {2};

\node[text=drawColor,anchor=base,inner sep=0pt, outer sep=0pt, scale=  0.80] at ( 72.17, 12.75) {4};

\node[text=drawColor,anchor=base,inner sep=0pt, outer sep=0pt, scale=  0.80] at (100.74, 12.75) {8};

\node[text=drawColor,anchor=base,inner sep=0pt, outer sep=0pt, scale=  0.80] at (129.31, 12.75) {12};

\node[text=drawColor,anchor=base,inner sep=0pt, outer sep=0pt, scale=  0.80] at (157.88, 12.75) {16};

\node[text=drawColor,anchor=base,inner sep=0pt, outer sep=0pt, scale=  0.80] at (186.44, 12.75) {20};
\end{scope}
\begin{scope}
\path[clip] (  0.00,  0.00) rectangle (209.58,115.63);
\definecolor{drawColor}{RGB}{0,0,0}

\node[text=drawColor,anchor=base,inner sep=0pt, outer sep=0pt, scale=  0.90] at (115.03,  2.75) {domain size $d$};
\end{scope}
\begin{scope}
\path[clip] (  0.00,  0.00) rectangle (209.58,115.63);
\definecolor{drawColor}{RGB}{0,0,0}

\node[text=drawColor,rotate= 90.00,anchor=base,inner sep=0pt, outer sep=0pt, scale=  0.90] at (  8.20, 67.97) {$p' \mathbin{/} p$};
\end{scope}
\end{tikzpicture}}
		\caption{Query result quotient $p' \mathbin{/} p$.}
		\label{fig:commutative_approx_plot_quots_k=3_eps=0.001}
	\end{subfigure}
	\caption{(a) Run time of \ac{lve} on the compressed model returned by \ac{acp} and its $\varepsilon$-relaxed variant (\acs{acp} $\pm \varepsilon$), and (b) distribution of the per-query quotient $p' \mathbin{/} p$ between the approximate result $p'$ and the exact result $p$ for input \acp{fg} containing $k = 3$ $\varepsilon$-commutative factors with $\varepsilon = 0.001$.}
	\label{fig:commutative_approx_plot_te_k=3_eps=0.001}
\end{figure}

\begin{figure}[t]
	\centering
	\begin{subfigure}[t]{0.49\linewidth}
		\centering
		\resizebox{\linewidth}{!}{
\begin{tikzpicture}[x=1pt,y=1pt]
\definecolor{fillColor}{RGB}{255,255,255}
\path[use as bounding box,fill=fillColor,fill opacity=0.00] (0,0) rectangle (209.58,115.63);
\begin{scope}
\path[clip] (  0.00,  0.00) rectangle (209.58,115.63);
\definecolor{drawColor}{RGB}{255,255,255}
\definecolor{fillColor}{RGB}{255,255,255}

\path[draw=drawColor,line width= 0.5pt,line join=round,line cap=round,fill=fillColor] (  0.00,  0.00) rectangle (209.58,115.63);
\end{scope}
\begin{scope}
\path[clip] ( 28.25, 22.31) rectangle (203.58,113.63);
\definecolor{fillColor}{RGB}{255,255,255}

\path[fill=fillColor] ( 28.25, 22.31) rectangle (203.58,113.63);
\definecolor{drawColor}{RGB}{247,192,26}

\path[draw=drawColor,line width= 0.5pt,line join=round] ( 36.22, 26.46) --
	( 53.93, 35.72) --
	( 89.35, 46.79) --
	(124.77, 61.90) --
	(160.19, 80.50) --
	(195.61,109.48);
\definecolor{drawColor}{RGB}{78,155,133}

\path[draw=drawColor,line width= 0.5pt,dash pattern=on 4pt off 4pt ,line join=round] ( 36.22, 29.87) --
	( 53.93, 33.72) --
	( 89.35, 33.63) --
	(124.77, 35.63) --
	(160.19, 36.01) --
	(195.61, 36.72);
\definecolor{drawColor}{RGB}{247,192,26}
\definecolor{fillColor}{RGB}{247,192,26}

\path[draw=drawColor,line width= 0.4pt,line join=round,line cap=round,fill=fillColor] ( 36.22, 26.46) circle (  1.53);
\definecolor{fillColor}{RGB}{78,155,133}

\path[fill=fillColor] ( 34.68, 28.33) --
	( 37.75, 28.33) --
	( 37.75, 31.40) --
	( 34.68, 31.40) --
	cycle;
\definecolor{fillColor}{RGB}{247,192,26}

\path[draw=drawColor,line width= 0.4pt,line join=round,line cap=round,fill=fillColor] ( 53.93, 35.72) circle (  1.53);
\definecolor{fillColor}{RGB}{78,155,133}

\path[fill=fillColor] ( 52.39, 32.19) --
	( 55.46, 32.19) --
	( 55.46, 35.26) --
	( 52.39, 35.26) --
	cycle;
\definecolor{fillColor}{RGB}{247,192,26}

\path[draw=drawColor,line width= 0.4pt,line join=round,line cap=round,fill=fillColor] ( 89.35, 46.79) circle (  1.53);
\definecolor{fillColor}{RGB}{78,155,133}

\path[fill=fillColor] ( 87.81, 32.09) --
	( 90.88, 32.09) --
	( 90.88, 35.16) --
	( 87.81, 35.16) --
	cycle;
\definecolor{fillColor}{RGB}{247,192,26}

\path[draw=drawColor,line width= 0.4pt,line join=round,line cap=round,fill=fillColor] (124.77, 61.90) circle (  1.53);
\definecolor{fillColor}{RGB}{78,155,133}

\path[fill=fillColor] (123.24, 34.10) --
	(126.30, 34.10) --
	(126.30, 37.16) --
	(123.24, 37.16) --
	cycle;
\definecolor{fillColor}{RGB}{247,192,26}

\path[draw=drawColor,line width= 0.4pt,line join=round,line cap=round,fill=fillColor] (160.19, 80.50) circle (  1.53);
\definecolor{fillColor}{RGB}{78,155,133}

\path[fill=fillColor] (158.66, 34.48) --
	(161.73, 34.48) --
	(161.73, 37.54) --
	(158.66, 37.54) --
	cycle;
\definecolor{fillColor}{RGB}{247,192,26}

\path[draw=drawColor,line width= 0.4pt,line join=round,line cap=round,fill=fillColor] (195.61,109.48) circle (  1.53);
\definecolor{fillColor}{RGB}{78,155,133}

\path[fill=fillColor] (194.08, 35.19) --
	(197.15, 35.19) --
	(197.15, 38.26) --
	(194.08, 38.26) --
	cycle;
\end{scope}
\begin{scope}
\path[clip] (  0.00,  0.00) rectangle (209.58,115.63);
\definecolor{drawColor}{RGB}{0,0,0}

\path[draw=drawColor,line width= 0.5pt,line join=round] ( 28.25, 22.31) --
	( 28.25,113.63);

\path[draw=drawColor,line width= 0.5pt,line join=round] ( 29.38,111.66) --
	( 28.25,113.63) --
	( 27.11,111.66);
\end{scope}
\begin{scope}
\path[clip] (  0.00,  0.00) rectangle (209.58,115.63);
\definecolor{drawColor}{gray}{0.30}

\node[text=drawColor,anchor=base east,inner sep=0pt, outer sep=0pt, scale=  0.80] at ( 24.20, 27.90) {10};

\node[text=drawColor,anchor=base east,inner sep=0pt, outer sep=0pt, scale=  0.80] at ( 24.20, 50.51) {30};

\node[text=drawColor,anchor=base east,inner sep=0pt, outer sep=0pt, scale=  0.80] at ( 24.20, 75.30) {100};

\node[text=drawColor,anchor=base east,inner sep=0pt, outer sep=0pt, scale=  0.80] at ( 24.20, 97.91) {300};
\end{scope}
\begin{scope}
\path[clip] (  0.00,  0.00) rectangle (209.58,115.63);
\definecolor{drawColor}{gray}{0.20}

\path[draw=drawColor,line width= 0.5pt,line join=round] ( 26.00, 30.66) --
	( 28.25, 30.66);

\path[draw=drawColor,line width= 0.5pt,line join=round] ( 26.00, 53.27) --
	( 28.25, 53.27);

\path[draw=drawColor,line width= 0.5pt,line join=round] ( 26.00, 78.05) --
	( 28.25, 78.05);

\path[draw=drawColor,line width= 0.5pt,line join=round] ( 26.00,100.66) --
	( 28.25,100.66);
\end{scope}
\begin{scope}
\path[clip] (  0.00,  0.00) rectangle (209.58,115.63);
\definecolor{drawColor}{RGB}{0,0,0}

\path[draw=drawColor,line width= 0.5pt,line join=round] ( 28.25, 22.31) --
	(203.58, 22.31);

\path[draw=drawColor,line width= 0.5pt,line join=round] (201.61, 21.17) --
	(203.58, 22.31) --
	(201.61, 23.45);
\end{scope}
\begin{scope}
\path[clip] (  0.00,  0.00) rectangle (209.58,115.63);
\definecolor{drawColor}{gray}{0.20}

\path[draw=drawColor,line width= 0.5pt,line join=round] ( 36.22, 20.06) --
	( 36.22, 22.31);

\path[draw=drawColor,line width= 0.5pt,line join=round] ( 53.93, 20.06) --
	( 53.93, 22.31);

\path[draw=drawColor,line width= 0.5pt,line join=round] ( 89.35, 20.06) --
	( 89.35, 22.31);

\path[draw=drawColor,line width= 0.5pt,line join=round] (124.77, 20.06) --
	(124.77, 22.31);

\path[draw=drawColor,line width= 0.5pt,line join=round] (160.19, 20.06) --
	(160.19, 22.31);

\path[draw=drawColor,line width= 0.5pt,line join=round] (195.61, 20.06) --
	(195.61, 22.31);
\end{scope}
\begin{scope}
\path[clip] (  0.00,  0.00) rectangle (209.58,115.63);
\definecolor{drawColor}{gray}{0.30}

\node[text=drawColor,anchor=base,inner sep=0pt, outer sep=0pt, scale=  0.80] at ( 36.22, 12.75) {2};

\node[text=drawColor,anchor=base,inner sep=0pt, outer sep=0pt, scale=  0.80] at ( 53.93, 12.75) {4};

\node[text=drawColor,anchor=base,inner sep=0pt, outer sep=0pt, scale=  0.80] at ( 89.35, 12.75) {8};

\node[text=drawColor,anchor=base,inner sep=0pt, outer sep=0pt, scale=  0.80] at (124.77, 12.75) {12};

\node[text=drawColor,anchor=base,inner sep=0pt, outer sep=0pt, scale=  0.80] at (160.19, 12.75) {16};

\node[text=drawColor,anchor=base,inner sep=0pt, outer sep=0pt, scale=  0.80] at (195.61, 12.75) {20};
\end{scope}
\begin{scope}
\path[clip] (  0.00,  0.00) rectangle (209.58,115.63);
\definecolor{drawColor}{RGB}{0,0,0}

\node[text=drawColor,anchor=base,inner sep=0pt, outer sep=0pt, scale=  0.90] at (115.91,  2.75) {domain size $d$};
\end{scope}
\begin{scope}
\path[clip] (  0.00,  0.00) rectangle (209.58,115.63);
\definecolor{drawColor}{RGB}{0,0,0}

\node[text=drawColor,rotate= 90.00,anchor=base,inner sep=0pt, outer sep=0pt, scale=  0.90] at (  8.20, 67.97) {time (ms)};
\end{scope}
\begin{scope}
\path[clip] (  0.00,  0.00) rectangle (209.58,115.63);

\path[] ( 32.50, 97.23) rectangle (101.14,113.60);
\end{scope}
\begin{scope}
\path[clip] (  0.00,  0.00) rectangle (209.58,115.63);
\definecolor{drawColor}{RGB}{247,192,26}

\path[draw=drawColor,line width= 0.5pt,line join=round] ( 33.59,106.50) -- ( 42.26,106.50);
\end{scope}
\begin{scope}
\path[clip] (  0.00,  0.00) rectangle (209.58,115.63);
\definecolor{drawColor}{RGB}{247,192,26}
\definecolor{fillColor}{RGB}{247,192,26}

\path[draw=drawColor,line width= 0.4pt,line join=round,line cap=round,fill=fillColor] ( 37.92,106.50) circle (  1.53);
\end{scope}
\begin{scope}
\path[clip] (  0.00,  0.00) rectangle (209.58,115.63);
\definecolor{drawColor}{RGB}{78,155,133}

\path[draw=drawColor,line width= 0.5pt,dash pattern=on 4pt off 4pt ,line join=round] ( 33.59,100.32) -- ( 42.26,100.32);
\end{scope}
\begin{scope}
\path[clip] (  0.00,  0.00) rectangle (209.58,115.63);
\definecolor{fillColor}{RGB}{78,155,133}

\path[fill=fillColor] ( 36.39, 98.79) --
	( 39.46, 98.79) --
	( 39.46,101.86) --
	( 36.39,101.86) --
	cycle;
\end{scope}
\begin{scope}
\path[clip] (  0.00,  0.00) rectangle (209.58,115.63);
\definecolor{drawColor}{RGB}{0,0,0}

\node[text=drawColor,anchor=base west,inner sep=0pt, outer sep=0pt, scale=  0.70] at ( 47.84,104.09) {LVE (ACP)};
\end{scope}
\begin{scope}
\path[clip] (  0.00,  0.00) rectangle (209.58,115.63);
\definecolor{drawColor}{RGB}{0,0,0}

\node[text=drawColor,anchor=base west,inner sep=0pt, outer sep=0pt, scale=  0.70] at ( 47.84, 97.91) {LVE (ACP $\pm \varepsilon$)};
\end{scope}
\end{tikzpicture}}
		\caption{\acs{lve} run time (ms).}
		\label{fig:commutative_approx_plot_times_k=3_eps=0.01}
	\end{subfigure}
	\begin{subfigure}[t]{0.49\linewidth}
		\centering
		\resizebox{\linewidth}{!}{
\begin{tikzpicture}[x=1pt,y=1pt]
\definecolor{fillColor}{RGB}{255,255,255}
\path[use as bounding box,fill=fillColor,fill opacity=0.00] (0,0) rectangle (209.58,115.63);
\begin{scope}
\path[clip] (  0.00,  0.00) rectangle (209.58,115.63);
\definecolor{drawColor}{RGB}{255,255,255}
\definecolor{fillColor}{RGB}{255,255,255}

\path[draw=drawColor,line width= 0.5pt,line join=round,line cap=round,fill=fillColor] (  0.00,  0.00) rectangle (209.58,115.63);
\end{scope}
\begin{scope}
\path[clip] ( 26.47, 22.31) rectangle (203.58,113.63);
\definecolor{fillColor}{RGB}{255,255,255}

\path[fill=fillColor] ( 26.47, 22.31) rectangle (203.58,113.63);
\definecolor{drawColor}{RGB}{78,155,133}
\definecolor{fillColor}{RGB}{78,155,133}

\path[draw=drawColor,draw opacity=0.20,line width= 0.4pt,line join=round,line cap=round,fill=fillColor,fill opacity=0.20] ( 43.61, 67.88) circle (  0.78);

\path[draw=drawColor,draw opacity=0.20,line width= 0.4pt,line join=round,line cap=round,fill=fillColor,fill opacity=0.20] ( 43.61, 68.12) circle (  0.78);
\definecolor{drawColor}{RGB}{78,155,133}

\path[draw=drawColor,line width= 0.6pt,line join=round] ( 43.61, 67.98) -- ( 43.61, 67.98);

\path[draw=drawColor,line width= 0.6pt,line join=round] ( 43.61, 67.97) -- ( 43.61, 67.97);

\path[draw=drawColor,line width= 0.6pt,fill=fillColor,fill opacity=0.20] ( 32.90, 67.98) --
	( 32.90, 67.97) --
	( 54.32, 67.97) --
	( 54.32, 67.98) --
	( 32.90, 67.98) --
	cycle;

\path[draw=drawColor,line width= 1.1pt] ( 32.90, 67.97) -- ( 54.32, 67.97);
\definecolor{drawColor}{RGB}{78,155,133}

\path[draw=drawColor,draw opacity=0.20,line width= 0.4pt,line join=round,line cap=round,fill=fillColor,fill opacity=0.20] ( 72.17, 67.98) circle (  0.78);
\definecolor{drawColor}{RGB}{78,155,133}

\path[draw=drawColor,line width= 0.6pt,line join=round] ( 72.17, 67.97) -- ( 72.17, 67.98);

\path[draw=drawColor,line width= 0.6pt,line join=round] ( 72.17, 67.97) -- ( 72.17, 67.97);

\path[draw=drawColor,line width= 0.6pt,fill=fillColor,fill opacity=0.20] ( 61.46, 67.97) --
	( 61.46, 67.97) --
	( 82.89, 67.97) --
	( 82.89, 67.97) --
	( 61.46, 67.97) --
	cycle;

\path[draw=drawColor,line width= 1.1pt] ( 61.46, 67.97) -- ( 82.89, 67.97);
\definecolor{drawColor}{RGB}{78,155,133}

\path[draw=drawColor,draw opacity=0.20,line width= 0.4pt,line join=round,line cap=round,fill=fillColor,fill opacity=0.20] (100.74, 67.98) circle (  0.78);

\path[draw=drawColor,draw opacity=0.20,line width= 0.4pt,line join=round,line cap=round,fill=fillColor,fill opacity=0.20] (100.74, 67.95) circle (  0.78);
\definecolor{drawColor}{RGB}{78,155,133}

\path[draw=drawColor,line width= 0.6pt,line join=round] (100.74, 67.97) -- (100.74, 67.97);

\path[draw=drawColor,line width= 0.6pt,line join=round] (100.74, 67.97) -- (100.74, 67.97);

\path[draw=drawColor,line width= 0.6pt,fill=fillColor,fill opacity=0.20] ( 90.03, 67.97) --
	( 90.03, 67.97) --
	(111.45, 67.97) --
	(111.45, 67.97) --
	( 90.03, 67.97) --
	cycle;

\path[draw=drawColor,line width= 1.1pt] ( 90.03, 67.97) -- (111.45, 67.97);
\definecolor{drawColor}{RGB}{78,155,133}

\path[draw=drawColor,draw opacity=0.20,line width= 0.4pt,line join=round,line cap=round,fill=fillColor,fill opacity=0.20] (129.31, 30.24) circle (  0.78);

\path[draw=drawColor,draw opacity=0.20,line width= 0.4pt,line join=round,line cap=round,fill=fillColor,fill opacity=0.20] (129.31, 80.23) circle (  0.78);
\definecolor{drawColor}{RGB}{78,155,133}

\path[draw=drawColor,line width= 0.6pt,line join=round] (129.31, 68.24) -- (129.31, 68.95);

\path[draw=drawColor,line width= 0.6pt,line join=round] (129.31, 66.35) -- (129.31, 65.37);

\path[draw=drawColor,line width= 0.6pt,fill=fillColor,fill opacity=0.20] (118.60, 68.24) --
	(118.60, 66.35) --
	(140.02, 66.35) --
	(140.02, 68.24) --
	(118.60, 68.24) --
	cycle;

\path[draw=drawColor,line width= 1.1pt] (118.60, 67.97) -- (140.02, 67.97);
\definecolor{drawColor}{RGB}{78,155,133}

\path[draw=drawColor,draw opacity=0.20,line width= 0.4pt,line join=round,line cap=round,fill=fillColor,fill opacity=0.20] (157.88, 34.64) circle (  0.78);

\path[draw=drawColor,draw opacity=0.20,line width= 0.4pt,line join=round,line cap=round,fill=fillColor,fill opacity=0.20] (157.88,101.67) circle (  0.78);
\definecolor{drawColor}{RGB}{78,155,133}

\path[draw=drawColor,line width= 0.6pt,line join=round] (157.88, 69.59) -- (157.88, 70.28);

\path[draw=drawColor,line width= 0.6pt,line join=round] (157.88, 64.97) -- (157.88, 60.79);

\path[draw=drawColor,line width= 0.6pt,fill=fillColor,fill opacity=0.20] (147.16, 69.59) --
	(147.16, 64.97) --
	(168.59, 64.97) --
	(168.59, 69.59) --
	(147.16, 69.59) --
	cycle;

\path[draw=drawColor,line width= 1.1pt] (147.16, 67.29) -- (168.59, 67.29);
\definecolor{drawColor}{RGB}{78,155,133}

\path[draw=drawColor,draw opacity=0.20,line width= 0.4pt,line join=round,line cap=round,fill=fillColor,fill opacity=0.20] (186.44, 35.78) circle (  0.78);
\definecolor{drawColor}{RGB}{78,155,133}

\path[draw=drawColor,line width= 0.6pt,line join=round] (186.44, 69.60) -- (186.44, 70.13);

\path[draw=drawColor,line width= 0.6pt,line join=round] (186.44, 63.87) -- (186.44, 58.12);

\path[draw=drawColor,line width= 0.6pt,fill=fillColor,fill opacity=0.20] (175.73, 69.60) --
	(175.73, 63.87) --
	(197.16, 63.87) --
	(197.16, 69.60) --
	(175.73, 69.60) --
	cycle;

\path[draw=drawColor,line width= 1.1pt] (175.73, 67.00) -- (197.16, 67.00);
\end{scope}
\begin{scope}
\path[clip] (  0.00,  0.00) rectangle (209.58,115.63);
\definecolor{drawColor}{RGB}{0,0,0}

\path[draw=drawColor,line width= 0.5pt,line join=round] ( 26.47, 22.31) --
	( 26.47,113.63);

\path[draw=drawColor,line width= 0.5pt,line join=round] ( 27.61,111.66) --
	( 26.47,113.63) --
	( 25.33,111.66);
\end{scope}
\begin{scope}
\path[clip] (  0.00,  0.00) rectangle (209.58,115.63);
\definecolor{drawColor}{gray}{0.30}

\node[text=drawColor,anchor=base east,inner sep=0pt, outer sep=0pt, scale=  0.80] at ( 22.42, 21.87) {0.6};

\node[text=drawColor,anchor=base east,inner sep=0pt, outer sep=0pt, scale=  0.80] at ( 22.42, 43.54) {0.8};

\node[text=drawColor,anchor=base east,inner sep=0pt, outer sep=0pt, scale=  0.80] at ( 22.42, 65.22) {1.0};

\node[text=drawColor,anchor=base east,inner sep=0pt, outer sep=0pt, scale=  0.80] at ( 22.42, 86.89) {1.2};

\node[text=drawColor,anchor=base east,inner sep=0pt, outer sep=0pt, scale=  0.80] at ( 22.42,108.57) {1.4};
\end{scope}
\begin{scope}
\path[clip] (  0.00,  0.00) rectangle (209.58,115.63);
\definecolor{drawColor}{gray}{0.20}

\path[draw=drawColor,line width= 0.5pt,line join=round] ( 24.22, 24.62) --
	( 26.47, 24.62);

\path[draw=drawColor,line width= 0.5pt,line join=round] ( 24.22, 46.30) --
	( 26.47, 46.30);

\path[draw=drawColor,line width= 0.5pt,line join=round] ( 24.22, 67.97) --
	( 26.47, 67.97);

\path[draw=drawColor,line width= 0.5pt,line join=round] ( 24.22, 89.65) --
	( 26.47, 89.65);

\path[draw=drawColor,line width= 0.5pt,line join=round] ( 24.22,111.32) --
	( 26.47,111.32);
\end{scope}
\begin{scope}
\path[clip] (  0.00,  0.00) rectangle (209.58,115.63);
\definecolor{drawColor}{RGB}{0,0,0}

\path[draw=drawColor,line width= 0.5pt,line join=round] ( 26.47, 22.31) --
	(203.58, 22.31);

\path[draw=drawColor,line width= 0.5pt,line join=round] (201.61, 21.17) --
	(203.58, 22.31) --
	(201.61, 23.45);
\end{scope}
\begin{scope}
\path[clip] (  0.00,  0.00) rectangle (209.58,115.63);
\definecolor{drawColor}{gray}{0.20}

\path[draw=drawColor,line width= 0.5pt,line join=round] ( 43.61, 20.06) --
	( 43.61, 22.31);

\path[draw=drawColor,line width= 0.5pt,line join=round] ( 72.17, 20.06) --
	( 72.17, 22.31);

\path[draw=drawColor,line width= 0.5pt,line join=round] (100.74, 20.06) --
	(100.74, 22.31);

\path[draw=drawColor,line width= 0.5pt,line join=round] (129.31, 20.06) --
	(129.31, 22.31);

\path[draw=drawColor,line width= 0.5pt,line join=round] (157.88, 20.06) --
	(157.88, 22.31);

\path[draw=drawColor,line width= 0.5pt,line join=round] (186.44, 20.06) --
	(186.44, 22.31);
\end{scope}
\begin{scope}
\path[clip] (  0.00,  0.00) rectangle (209.58,115.63);
\definecolor{drawColor}{gray}{0.30}

\node[text=drawColor,anchor=base,inner sep=0pt, outer sep=0pt, scale=  0.80] at ( 43.61, 12.75) {2};

\node[text=drawColor,anchor=base,inner sep=0pt, outer sep=0pt, scale=  0.80] at ( 72.17, 12.75) {4};

\node[text=drawColor,anchor=base,inner sep=0pt, outer sep=0pt, scale=  0.80] at (100.74, 12.75) {8};

\node[text=drawColor,anchor=base,inner sep=0pt, outer sep=0pt, scale=  0.80] at (129.31, 12.75) {12};

\node[text=drawColor,anchor=base,inner sep=0pt, outer sep=0pt, scale=  0.80] at (157.88, 12.75) {16};

\node[text=drawColor,anchor=base,inner sep=0pt, outer sep=0pt, scale=  0.80] at (186.44, 12.75) {20};
\end{scope}
\begin{scope}
\path[clip] (  0.00,  0.00) rectangle (209.58,115.63);
\definecolor{drawColor}{RGB}{0,0,0}

\node[text=drawColor,anchor=base,inner sep=0pt, outer sep=0pt, scale=  0.90] at (115.03,  2.75) {domain size $d$};
\end{scope}
\begin{scope}
\path[clip] (  0.00,  0.00) rectangle (209.58,115.63);
\definecolor{drawColor}{RGB}{0,0,0}

\node[text=drawColor,rotate= 90.00,anchor=base,inner sep=0pt, outer sep=0pt, scale=  0.90] at (  8.20, 67.97) {$p' \mathbin{/} p$};
\end{scope}
\end{tikzpicture}}
		\caption{Query result quotient $p' \mathbin{/} p$.}
		\label{fig:commutative_approx_plot_quots_k=3_eps=0.01}
	\end{subfigure}
	\caption{(a) Run time of \ac{lve} on the compressed model returned by \ac{acp} and its $\varepsilon$-relaxed variant (\acs{acp} $\pm \varepsilon$), and (b) distribution of the per-query quotient $p' \mathbin{/} p$ between the approximate result $p'$ and the exact result $p$ for input \acp{fg} containing $k = 3$ $\varepsilon$-commutative factors with $\varepsilon = 0.01$.}
	\label{fig:commutative_approx_plot_te_k=3_eps=0.01}
\end{figure}

\begin{figure}[t]
	\centering
	\begin{subfigure}[t]{0.49\linewidth}
		\centering
		\resizebox{\linewidth}{!}{
\begin{tikzpicture}[x=1pt,y=1pt]
\definecolor{fillColor}{RGB}{255,255,255}
\path[use as bounding box,fill=fillColor,fill opacity=0.00] (0,0) rectangle (209.58,115.63);
\begin{scope}
\path[clip] (  0.00,  0.00) rectangle (209.58,115.63);
\definecolor{drawColor}{RGB}{255,255,255}
\definecolor{fillColor}{RGB}{255,255,255}

\path[draw=drawColor,line width= 0.5pt,line join=round,line cap=round,fill=fillColor] (  0.00,  0.00) rectangle (209.58,115.63);
\end{scope}
\begin{scope}
\path[clip] ( 28.25, 22.31) rectangle (203.58,113.63);
\definecolor{fillColor}{RGB}{255,255,255}

\path[fill=fillColor] ( 28.25, 22.31) rectangle (203.58,113.63);
\definecolor{drawColor}{RGB}{247,192,26}

\path[draw=drawColor,line width= 0.5pt,line join=round] ( 36.22, 26.46) --
	( 53.93, 35.82) --
	( 89.35, 48.11) --
	(124.77, 59.36) --
	(160.19, 80.04) --
	(195.61,109.48);
\definecolor{drawColor}{RGB}{78,155,133}

\path[draw=drawColor,line width= 0.5pt,dash pattern=on 4pt off 4pt ,line join=round] ( 36.22, 29.98) --
	( 53.93, 34.29) --
	( 89.35, 33.79) --
	(124.77, 35.39) --
	(160.19, 36.27) --
	(195.61, 37.69);
\definecolor{drawColor}{RGB}{247,192,26}
\definecolor{fillColor}{RGB}{247,192,26}

\path[draw=drawColor,line width= 0.4pt,line join=round,line cap=round,fill=fillColor] ( 36.22, 26.46) circle (  1.53);
\definecolor{fillColor}{RGB}{78,155,133}

\path[fill=fillColor] ( 34.68, 28.45) --
	( 37.75, 28.45) --
	( 37.75, 31.51) --
	( 34.68, 31.51) --
	cycle;
\definecolor{fillColor}{RGB}{247,192,26}

\path[draw=drawColor,line width= 0.4pt,line join=round,line cap=round,fill=fillColor] ( 53.93, 35.82) circle (  1.53);
\definecolor{fillColor}{RGB}{78,155,133}

\path[fill=fillColor] ( 52.39, 32.76) --
	( 55.46, 32.76) --
	( 55.46, 35.82) --
	( 52.39, 35.82) --
	cycle;
\definecolor{fillColor}{RGB}{247,192,26}

\path[draw=drawColor,line width= 0.4pt,line join=round,line cap=round,fill=fillColor] ( 89.35, 48.11) circle (  1.53);
\definecolor{fillColor}{RGB}{78,155,133}

\path[fill=fillColor] ( 87.81, 32.25) --
	( 90.88, 32.25) --
	( 90.88, 35.32) --
	( 87.81, 35.32) --
	cycle;
\definecolor{fillColor}{RGB}{247,192,26}

\path[draw=drawColor,line width= 0.4pt,line join=round,line cap=round,fill=fillColor] (124.77, 59.36) circle (  1.53);
\definecolor{fillColor}{RGB}{78,155,133}

\path[fill=fillColor] (123.24, 33.86) --
	(126.30, 33.86) --
	(126.30, 36.93) --
	(123.24, 36.93) --
	cycle;
\definecolor{fillColor}{RGB}{247,192,26}

\path[draw=drawColor,line width= 0.4pt,line join=round,line cap=round,fill=fillColor] (160.19, 80.04) circle (  1.53);
\definecolor{fillColor}{RGB}{78,155,133}

\path[fill=fillColor] (158.66, 34.74) --
	(161.73, 34.74) --
	(161.73, 37.81) --
	(158.66, 37.81) --
	cycle;
\definecolor{fillColor}{RGB}{247,192,26}

\path[draw=drawColor,line width= 0.4pt,line join=round,line cap=round,fill=fillColor] (195.61,109.48) circle (  1.53);
\definecolor{fillColor}{RGB}{78,155,133}

\path[fill=fillColor] (194.08, 36.16) --
	(197.15, 36.16) --
	(197.15, 39.23) --
	(194.08, 39.23) --
	cycle;
\end{scope}
\begin{scope}
\path[clip] (  0.00,  0.00) rectangle (209.58,115.63);
\definecolor{drawColor}{RGB}{0,0,0}

\path[draw=drawColor,line width= 0.5pt,line join=round] ( 28.25, 22.31) --
	( 28.25,113.63);

\path[draw=drawColor,line width= 0.5pt,line join=round] ( 29.38,111.66) --
	( 28.25,113.63) --
	( 27.11,111.66);
\end{scope}
\begin{scope}
\path[clip] (  0.00,  0.00) rectangle (209.58,115.63);
\definecolor{drawColor}{gray}{0.30}

\node[text=drawColor,anchor=base east,inner sep=0pt, outer sep=0pt, scale=  0.80] at ( 24.20, 28.04) {10};

\node[text=drawColor,anchor=base east,inner sep=0pt, outer sep=0pt, scale=  0.80] at ( 24.20, 50.06) {30};

\node[text=drawColor,anchor=base east,inner sep=0pt, outer sep=0pt, scale=  0.80] at ( 24.20, 74.19) {100};

\node[text=drawColor,anchor=base east,inner sep=0pt, outer sep=0pt, scale=  0.80] at ( 24.20, 96.21) {300};
\end{scope}
\begin{scope}
\path[clip] (  0.00,  0.00) rectangle (209.58,115.63);
\definecolor{drawColor}{gray}{0.20}

\path[draw=drawColor,line width= 0.5pt,line join=round] ( 26.00, 30.80) --
	( 28.25, 30.80);

\path[draw=drawColor,line width= 0.5pt,line join=round] ( 26.00, 52.82) --
	( 28.25, 52.82);

\path[draw=drawColor,line width= 0.5pt,line join=round] ( 26.00, 76.95) --
	( 28.25, 76.95);

\path[draw=drawColor,line width= 0.5pt,line join=round] ( 26.00, 98.97) --
	( 28.25, 98.97);
\end{scope}
\begin{scope}
\path[clip] (  0.00,  0.00) rectangle (209.58,115.63);
\definecolor{drawColor}{RGB}{0,0,0}

\path[draw=drawColor,line width= 0.5pt,line join=round] ( 28.25, 22.31) --
	(203.58, 22.31);

\path[draw=drawColor,line width= 0.5pt,line join=round] (201.61, 21.17) --
	(203.58, 22.31) --
	(201.61, 23.45);
\end{scope}
\begin{scope}
\path[clip] (  0.00,  0.00) rectangle (209.58,115.63);
\definecolor{drawColor}{gray}{0.20}

\path[draw=drawColor,line width= 0.5pt,line join=round] ( 36.22, 20.06) --
	( 36.22, 22.31);

\path[draw=drawColor,line width= 0.5pt,line join=round] ( 53.93, 20.06) --
	( 53.93, 22.31);

\path[draw=drawColor,line width= 0.5pt,line join=round] ( 89.35, 20.06) --
	( 89.35, 22.31);

\path[draw=drawColor,line width= 0.5pt,line join=round] (124.77, 20.06) --
	(124.77, 22.31);

\path[draw=drawColor,line width= 0.5pt,line join=round] (160.19, 20.06) --
	(160.19, 22.31);

\path[draw=drawColor,line width= 0.5pt,line join=round] (195.61, 20.06) --
	(195.61, 22.31);
\end{scope}
\begin{scope}
\path[clip] (  0.00,  0.00) rectangle (209.58,115.63);
\definecolor{drawColor}{gray}{0.30}

\node[text=drawColor,anchor=base,inner sep=0pt, outer sep=0pt, scale=  0.80] at ( 36.22, 12.75) {2};

\node[text=drawColor,anchor=base,inner sep=0pt, outer sep=0pt, scale=  0.80] at ( 53.93, 12.75) {4};

\node[text=drawColor,anchor=base,inner sep=0pt, outer sep=0pt, scale=  0.80] at ( 89.35, 12.75) {8};

\node[text=drawColor,anchor=base,inner sep=0pt, outer sep=0pt, scale=  0.80] at (124.77, 12.75) {12};

\node[text=drawColor,anchor=base,inner sep=0pt, outer sep=0pt, scale=  0.80] at (160.19, 12.75) {16};

\node[text=drawColor,anchor=base,inner sep=0pt, outer sep=0pt, scale=  0.80] at (195.61, 12.75) {20};
\end{scope}
\begin{scope}
\path[clip] (  0.00,  0.00) rectangle (209.58,115.63);
\definecolor{drawColor}{RGB}{0,0,0}

\node[text=drawColor,anchor=base,inner sep=0pt, outer sep=0pt, scale=  0.90] at (115.91,  2.75) {domain size $d$};
\end{scope}
\begin{scope}
\path[clip] (  0.00,  0.00) rectangle (209.58,115.63);
\definecolor{drawColor}{RGB}{0,0,0}

\node[text=drawColor,rotate= 90.00,anchor=base,inner sep=0pt, outer sep=0pt, scale=  0.90] at (  8.20, 67.97) {time (ms)};
\end{scope}
\begin{scope}
\path[clip] (  0.00,  0.00) rectangle (209.58,115.63);

\path[] ( 32.50, 97.23) rectangle (101.14,113.60);
\end{scope}
\begin{scope}
\path[clip] (  0.00,  0.00) rectangle (209.58,115.63);
\definecolor{drawColor}{RGB}{247,192,26}

\path[draw=drawColor,line width= 0.5pt,line join=round] ( 33.59,106.50) -- ( 42.26,106.50);
\end{scope}
\begin{scope}
\path[clip] (  0.00,  0.00) rectangle (209.58,115.63);
\definecolor{drawColor}{RGB}{247,192,26}
\definecolor{fillColor}{RGB}{247,192,26}

\path[draw=drawColor,line width= 0.4pt,line join=round,line cap=round,fill=fillColor] ( 37.92,106.50) circle (  1.53);
\end{scope}
\begin{scope}
\path[clip] (  0.00,  0.00) rectangle (209.58,115.63);
\definecolor{drawColor}{RGB}{78,155,133}

\path[draw=drawColor,line width= 0.5pt,dash pattern=on 4pt off 4pt ,line join=round] ( 33.59,100.32) -- ( 42.26,100.32);
\end{scope}
\begin{scope}
\path[clip] (  0.00,  0.00) rectangle (209.58,115.63);
\definecolor{fillColor}{RGB}{78,155,133}

\path[fill=fillColor] ( 36.39, 98.79) --
	( 39.46, 98.79) --
	( 39.46,101.86) --
	( 36.39,101.86) --
	cycle;
\end{scope}
\begin{scope}
\path[clip] (  0.00,  0.00) rectangle (209.58,115.63);
\definecolor{drawColor}{RGB}{0,0,0}

\node[text=drawColor,anchor=base west,inner sep=0pt, outer sep=0pt, scale=  0.70] at ( 47.84,104.09) {LVE (ACP)};
\end{scope}
\begin{scope}
\path[clip] (  0.00,  0.00) rectangle (209.58,115.63);
\definecolor{drawColor}{RGB}{0,0,0}

\node[text=drawColor,anchor=base west,inner sep=0pt, outer sep=0pt, scale=  0.70] at ( 47.84, 97.91) {LVE (ACP $\pm \varepsilon$)};
\end{scope}
\end{tikzpicture}}
		\caption{\acs{lve} run time (ms).}
		\label{fig:commutative_approx_plot_times_k=3_eps=0.1}
	\end{subfigure}
	\begin{subfigure}[t]{0.49\linewidth}
		\centering
		\resizebox{\linewidth}{!}{
\begin{tikzpicture}[x=1pt,y=1pt]
\definecolor{fillColor}{RGB}{255,255,255}
\path[use as bounding box,fill=fillColor,fill opacity=0.00] (0,0) rectangle (209.58,115.63);
\begin{scope}
\path[clip] (  0.00,  0.00) rectangle (209.58,115.63);
\definecolor{drawColor}{RGB}{255,255,255}
\definecolor{fillColor}{RGB}{255,255,255}

\path[draw=drawColor,line width= 0.5pt,line join=round,line cap=round,fill=fillColor] (  0.00,  0.00) rectangle (209.58,115.63);
\end{scope}
\begin{scope}
\path[clip] ( 26.47, 22.31) rectangle (203.58,113.63);
\definecolor{fillColor}{RGB}{255,255,255}

\path[fill=fillColor] ( 26.47, 22.31) rectangle (203.58,113.63);
\definecolor{drawColor}{RGB}{78,155,133}
\definecolor{fillColor}{RGB}{78,155,133}

\path[draw=drawColor,draw opacity=0.20,line width= 0.4pt,line join=round,line cap=round,fill=fillColor,fill opacity=0.20] ( 43.61, 67.09) circle (  0.78);

\path[draw=drawColor,draw opacity=0.20,line width= 0.4pt,line join=round,line cap=round,fill=fillColor,fill opacity=0.20] ( 43.61, 69.34) circle (  0.78);
\definecolor{drawColor}{RGB}{78,155,133}

\path[draw=drawColor,line width= 0.6pt,line join=round] ( 43.61, 68.03) -- ( 43.61, 68.03);

\path[draw=drawColor,line width= 0.6pt,line join=round] ( 43.61, 67.93) -- ( 43.61, 67.92);

\path[draw=drawColor,line width= 0.6pt,fill=fillColor,fill opacity=0.20] ( 32.90, 68.03) --
	( 32.90, 67.93) --
	( 54.32, 67.93) --
	( 54.32, 68.03) --
	( 32.90, 68.03) --
	cycle;

\path[draw=drawColor,line width= 1.1pt] ( 32.90, 67.97) -- ( 54.32, 67.97);
\definecolor{drawColor}{RGB}{78,155,133}

\path[draw=drawColor,draw opacity=0.20,line width= 0.4pt,line join=round,line cap=round,fill=fillColor,fill opacity=0.20] ( 72.17, 68.05) circle (  0.78);
\definecolor{drawColor}{RGB}{78,155,133}

\path[draw=drawColor,line width= 0.6pt,line join=round] ( 72.17, 67.99) -- ( 72.17, 68.02);

\path[draw=drawColor,line width= 0.6pt,line join=round] ( 72.17, 67.96) -- ( 72.17, 67.94);

\path[draw=drawColor,line width= 0.6pt,fill=fillColor,fill opacity=0.20] ( 61.46, 67.99) --
	( 61.46, 67.96) --
	( 82.89, 67.96) --
	( 82.89, 67.99) --
	( 61.46, 67.99) --
	cycle;

\path[draw=drawColor,line width= 1.1pt] ( 61.46, 67.97) -- ( 82.89, 67.97);
\definecolor{drawColor}{RGB}{78,155,133}

\path[draw=drawColor,draw opacity=0.20,line width= 0.4pt,line join=round,line cap=round,fill=fillColor,fill opacity=0.20] (100.74, 68.02) circle (  0.78);

\path[draw=drawColor,draw opacity=0.20,line width= 0.4pt,line join=round,line cap=round,fill=fillColor,fill opacity=0.20] (100.74, 67.77) circle (  0.78);
\definecolor{drawColor}{RGB}{78,155,133}

\path[draw=drawColor,line width= 0.6pt,line join=round] (100.74, 67.97) -- (100.74, 67.97);

\path[draw=drawColor,line width= 0.6pt,line join=round] (100.74, 67.97) -- (100.74, 67.97);

\path[draw=drawColor,line width= 0.6pt,fill=fillColor,fill opacity=0.20] ( 90.03, 67.97) --
	( 90.03, 67.97) --
	(111.45, 67.97) --
	(111.45, 67.97) --
	( 90.03, 67.97) --
	cycle;

\path[draw=drawColor,line width= 1.1pt] ( 90.03, 67.97) -- (111.45, 67.97);
\definecolor{drawColor}{RGB}{78,155,133}

\path[draw=drawColor,draw opacity=0.20,line width= 0.4pt,line join=round,line cap=round,fill=fillColor,fill opacity=0.20] (129.31, 30.24) circle (  0.78);

\path[draw=drawColor,draw opacity=0.20,line width= 0.4pt,line join=round,line cap=round,fill=fillColor,fill opacity=0.20] (129.31, 80.06) circle (  0.78);
\definecolor{drawColor}{RGB}{78,155,133}

\path[draw=drawColor,line width= 0.6pt,line join=round] (129.31, 68.24) -- (129.31, 68.94);

\path[draw=drawColor,line width= 0.6pt,line join=round] (129.31, 66.37) -- (129.31, 65.41);

\path[draw=drawColor,line width= 0.6pt,fill=fillColor,fill opacity=0.20] (118.60, 68.24) --
	(118.60, 66.37) --
	(140.02, 66.37) --
	(140.02, 68.24) --
	(118.60, 68.24) --
	cycle;

\path[draw=drawColor,line width= 1.1pt] (118.60, 67.97) -- (140.02, 67.97);
\definecolor{drawColor}{RGB}{78,155,133}

\path[draw=drawColor,draw opacity=0.20,line width= 0.4pt,line join=round,line cap=round,fill=fillColor,fill opacity=0.20] (157.88, 34.38) circle (  0.78);

\path[draw=drawColor,draw opacity=0.20,line width= 0.4pt,line join=round,line cap=round,fill=fillColor,fill opacity=0.20] (157.88,101.85) circle (  0.78);
\definecolor{drawColor}{RGB}{78,155,133}

\path[draw=drawColor,line width= 0.6pt,line join=round] (157.88, 69.55) -- (157.88, 70.24);

\path[draw=drawColor,line width= 0.6pt,line join=round] (157.88, 64.98) -- (157.88, 60.77);

\path[draw=drawColor,line width= 0.6pt,fill=fillColor,fill opacity=0.20] (147.16, 69.55) --
	(147.16, 64.98) --
	(168.59, 64.98) --
	(168.59, 69.55) --
	(147.16, 69.55) --
	cycle;

\path[draw=drawColor,line width= 1.1pt] (147.16, 67.31) -- (168.59, 67.31);
\definecolor{drawColor}{RGB}{78,155,133}

\path[draw=drawColor,draw opacity=0.20,line width= 0.4pt,line join=round,line cap=round,fill=fillColor,fill opacity=0.20] (186.44, 36.20) circle (  0.78);
\definecolor{drawColor}{RGB}{78,155,133}

\path[draw=drawColor,line width= 0.6pt,line join=round] (186.44, 69.62) -- (186.44, 70.08);

\path[draw=drawColor,line width= 0.6pt,line join=round] (186.44, 63.90) -- (186.44, 58.32);

\path[draw=drawColor,line width= 0.6pt,fill=fillColor,fill opacity=0.20] (175.73, 69.62) --
	(175.73, 63.90) --
	(197.16, 63.90) --
	(197.16, 69.62) --
	(175.73, 69.62) --
	cycle;

\path[draw=drawColor,line width= 1.1pt] (175.73, 66.99) -- (197.16, 66.99);
\end{scope}
\begin{scope}
\path[clip] (  0.00,  0.00) rectangle (209.58,115.63);
\definecolor{drawColor}{RGB}{0,0,0}

\path[draw=drawColor,line width= 0.5pt,line join=round] ( 26.47, 22.31) --
	( 26.47,113.63);

\path[draw=drawColor,line width= 0.5pt,line join=round] ( 27.61,111.66) --
	( 26.47,113.63) --
	( 25.33,111.66);
\end{scope}
\begin{scope}
\path[clip] (  0.00,  0.00) rectangle (209.58,115.63);
\definecolor{drawColor}{gray}{0.30}

\node[text=drawColor,anchor=base east,inner sep=0pt, outer sep=0pt, scale=  0.80] at ( 22.42, 22.11) {0.6};

\node[text=drawColor,anchor=base east,inner sep=0pt, outer sep=0pt, scale=  0.80] at ( 22.42, 43.67) {0.8};

\node[text=drawColor,anchor=base east,inner sep=0pt, outer sep=0pt, scale=  0.80] at ( 22.42, 65.22) {1.0};

\node[text=drawColor,anchor=base east,inner sep=0pt, outer sep=0pt, scale=  0.80] at ( 22.42, 86.77) {1.2};

\node[text=drawColor,anchor=base east,inner sep=0pt, outer sep=0pt, scale=  0.80] at ( 22.42,108.32) {1.4};
\end{scope}
\begin{scope}
\path[clip] (  0.00,  0.00) rectangle (209.58,115.63);
\definecolor{drawColor}{gray}{0.20}

\path[draw=drawColor,line width= 0.5pt,line join=round] ( 24.22, 24.87) --
	( 26.47, 24.87);

\path[draw=drawColor,line width= 0.5pt,line join=round] ( 24.22, 46.42) --
	( 26.47, 46.42);

\path[draw=drawColor,line width= 0.5pt,line join=round] ( 24.22, 67.97) --
	( 26.47, 67.97);

\path[draw=drawColor,line width= 0.5pt,line join=round] ( 24.22, 89.52) --
	( 26.47, 89.52);

\path[draw=drawColor,line width= 0.5pt,line join=round] ( 24.22,111.08) --
	( 26.47,111.08);
\end{scope}
\begin{scope}
\path[clip] (  0.00,  0.00) rectangle (209.58,115.63);
\definecolor{drawColor}{RGB}{0,0,0}

\path[draw=drawColor,line width= 0.5pt,line join=round] ( 26.47, 22.31) --
	(203.58, 22.31);

\path[draw=drawColor,line width= 0.5pt,line join=round] (201.61, 21.17) --
	(203.58, 22.31) --
	(201.61, 23.45);
\end{scope}
\begin{scope}
\path[clip] (  0.00,  0.00) rectangle (209.58,115.63);
\definecolor{drawColor}{gray}{0.20}

\path[draw=drawColor,line width= 0.5pt,line join=round] ( 43.61, 20.06) --
	( 43.61, 22.31);

\path[draw=drawColor,line width= 0.5pt,line join=round] ( 72.17, 20.06) --
	( 72.17, 22.31);

\path[draw=drawColor,line width= 0.5pt,line join=round] (100.74, 20.06) --
	(100.74, 22.31);

\path[draw=drawColor,line width= 0.5pt,line join=round] (129.31, 20.06) --
	(129.31, 22.31);

\path[draw=drawColor,line width= 0.5pt,line join=round] (157.88, 20.06) --
	(157.88, 22.31);

\path[draw=drawColor,line width= 0.5pt,line join=round] (186.44, 20.06) --
	(186.44, 22.31);
\end{scope}
\begin{scope}
\path[clip] (  0.00,  0.00) rectangle (209.58,115.63);
\definecolor{drawColor}{gray}{0.30}

\node[text=drawColor,anchor=base,inner sep=0pt, outer sep=0pt, scale=  0.80] at ( 43.61, 12.75) {2};

\node[text=drawColor,anchor=base,inner sep=0pt, outer sep=0pt, scale=  0.80] at ( 72.17, 12.75) {4};

\node[text=drawColor,anchor=base,inner sep=0pt, outer sep=0pt, scale=  0.80] at (100.74, 12.75) {8};

\node[text=drawColor,anchor=base,inner sep=0pt, outer sep=0pt, scale=  0.80] at (129.31, 12.75) {12};

\node[text=drawColor,anchor=base,inner sep=0pt, outer sep=0pt, scale=  0.80] at (157.88, 12.75) {16};

\node[text=drawColor,anchor=base,inner sep=0pt, outer sep=0pt, scale=  0.80] at (186.44, 12.75) {20};
\end{scope}
\begin{scope}
\path[clip] (  0.00,  0.00) rectangle (209.58,115.63);
\definecolor{drawColor}{RGB}{0,0,0}

\node[text=drawColor,anchor=base,inner sep=0pt, outer sep=0pt, scale=  0.90] at (115.03,  2.75) {domain size $d$};
\end{scope}
\begin{scope}
\path[clip] (  0.00,  0.00) rectangle (209.58,115.63);
\definecolor{drawColor}{RGB}{0,0,0}

\node[text=drawColor,rotate= 90.00,anchor=base,inner sep=0pt, outer sep=0pt, scale=  0.90] at (  8.20, 67.97) {$p' \mathbin{/} p$};
\end{scope}
\end{tikzpicture}}
		\caption{Query result quotient $p' \mathbin{/} p$.}
		\label{fig:commutative_approx_plot_quots_k=3_eps=0.1}
	\end{subfigure}
	\caption{(a) Run time of \ac{lve} on the compressed model returned by \ac{acp} and its $\varepsilon$-relaxed variant (\acs{acp} $\pm \varepsilon$), and (b) distribution of the per-query quotient $p' \mathbin{/} p$ between the approximate result $p'$ and the exact result $p$ for input \acp{fg} containing $k = 3$ $\varepsilon$-commutative factors with $\varepsilon = 0.1$.}
	\label{fig:commutative_approx_plot_te_k=3_eps=0.1}
\end{figure}

\begin{figure}[t]
	\centering
	\begin{subfigure}[t]{0.49\linewidth}
		\centering
		\resizebox{\linewidth}{!}{
\begin{tikzpicture}[x=1pt,y=1pt]
\definecolor{fillColor}{RGB}{255,255,255}
\path[use as bounding box,fill=fillColor,fill opacity=0.00] (0,0) rectangle (209.58,115.63);
\begin{scope}
\path[clip] (  0.00,  0.00) rectangle (209.58,115.63);
\definecolor{drawColor}{RGB}{255,255,255}
\definecolor{fillColor}{RGB}{255,255,255}

\path[draw=drawColor,line width= 0.5pt,line join=round,line cap=round,fill=fillColor] (  0.00,  0.00) rectangle (209.58,115.63);
\end{scope}
\begin{scope}
\path[clip] ( 32.24, 22.31) rectangle (203.58,113.63);
\definecolor{fillColor}{RGB}{255,255,255}

\path[fill=fillColor] ( 32.24, 22.31) rectangle (203.58,113.63);
\definecolor{drawColor}{RGB}{247,192,26}

\path[draw=drawColor,line width= 0.5pt,line join=round] ( 40.03, 26.46) --
	( 57.34, 35.49) --
	( 91.95, 50.64) --
	(126.57, 67.05) --
	(161.18, 82.68) --
	(195.79,109.48);
\definecolor{drawColor}{RGB}{78,155,133}

\path[draw=drawColor,line width= 0.5pt,dash pattern=on 4pt off 4pt ,line join=round] ( 40.03, 29.52) --
	( 57.34, 32.64) --
	( 91.95, 32.80) --
	(126.57, 34.71) --
	(161.18, 33.74) --
	(195.79, 35.22);
\definecolor{drawColor}{RGB}{247,192,26}
\definecolor{fillColor}{RGB}{247,192,26}

\path[draw=drawColor,line width= 0.4pt,line join=round,line cap=round,fill=fillColor] ( 40.03, 26.46) circle (  1.53);
\definecolor{fillColor}{RGB}{78,155,133}

\path[fill=fillColor] ( 38.50, 27.99) --
	( 41.57, 27.99) --
	( 41.57, 31.06) --
	( 38.50, 31.06) --
	cycle;
\definecolor{fillColor}{RGB}{247,192,26}

\path[draw=drawColor,line width= 0.4pt,line join=round,line cap=round,fill=fillColor] ( 57.34, 35.49) circle (  1.53);
\definecolor{fillColor}{RGB}{78,155,133}

\path[fill=fillColor] ( 55.81, 31.11) --
	( 58.87, 31.11) --
	( 58.87, 34.18) --
	( 55.81, 34.18) --
	cycle;
\definecolor{fillColor}{RGB}{247,192,26}

\path[draw=drawColor,line width= 0.4pt,line join=round,line cap=round,fill=fillColor] ( 91.95, 50.64) circle (  1.53);
\definecolor{fillColor}{RGB}{78,155,133}

\path[fill=fillColor] ( 90.42, 31.26) --
	( 93.49, 31.26) --
	( 93.49, 34.33) --
	( 90.42, 34.33) --
	cycle;
\definecolor{fillColor}{RGB}{247,192,26}

\path[draw=drawColor,line width= 0.4pt,line join=round,line cap=round,fill=fillColor] (126.57, 67.05) circle (  1.53);
\definecolor{fillColor}{RGB}{78,155,133}

\path[fill=fillColor] (125.03, 33.18) --
	(128.10, 33.18) --
	(128.10, 36.25) --
	(125.03, 36.25) --
	cycle;
\definecolor{fillColor}{RGB}{247,192,26}

\path[draw=drawColor,line width= 0.4pt,line join=round,line cap=round,fill=fillColor] (161.18, 82.68) circle (  1.53);
\definecolor{fillColor}{RGB}{78,155,133}

\path[fill=fillColor] (159.65, 32.20) --
	(162.71, 32.20) --
	(162.71, 35.27) --
	(159.65, 35.27) --
	cycle;
\definecolor{fillColor}{RGB}{247,192,26}

\path[draw=drawColor,line width= 0.4pt,line join=round,line cap=round,fill=fillColor] (195.79,109.48) circle (  1.53);
\definecolor{fillColor}{RGB}{78,155,133}

\path[fill=fillColor] (194.26, 33.69) --
	(197.33, 33.69) --
	(197.33, 36.76) --
	(194.26, 36.76) --
	cycle;
\end{scope}
\begin{scope}
\path[clip] (  0.00,  0.00) rectangle (209.58,115.63);
\definecolor{drawColor}{RGB}{0,0,0}

\path[draw=drawColor,line width= 0.5pt,line join=round] ( 32.24, 22.31) --
	( 32.24,113.63);

\path[draw=drawColor,line width= 0.5pt,line join=round] ( 33.38,111.66) --
	( 32.24,113.63) --
	( 31.11,111.66);
\end{scope}
\begin{scope}
\path[clip] (  0.00,  0.00) rectangle (209.58,115.63);
\definecolor{drawColor}{gray}{0.30}

\node[text=drawColor,anchor=base east,inner sep=0pt, outer sep=0pt, scale=  0.80] at ( 28.19, 20.43) {10};

\node[text=drawColor,anchor=base east,inner sep=0pt, outer sep=0pt, scale=  0.80] at ( 28.19, 63.55) {100};

\node[text=drawColor,anchor=base east,inner sep=0pt, outer sep=0pt, scale=  0.80] at ( 28.19,106.67) {1000};
\end{scope}
\begin{scope}
\path[clip] (  0.00,  0.00) rectangle (209.58,115.63);
\definecolor{drawColor}{gray}{0.20}

\path[draw=drawColor,line width= 0.5pt,line join=round] ( 29.99, 23.18) --
	( 32.24, 23.18);

\path[draw=drawColor,line width= 0.5pt,line join=round] ( 29.99, 66.30) --
	( 32.24, 66.30);

\path[draw=drawColor,line width= 0.5pt,line join=round] ( 29.99,109.42) --
	( 32.24,109.42);
\end{scope}
\begin{scope}
\path[clip] (  0.00,  0.00) rectangle (209.58,115.63);
\definecolor{drawColor}{RGB}{0,0,0}

\path[draw=drawColor,line width= 0.5pt,line join=round] ( 32.24, 22.31) --
	(203.58, 22.31);

\path[draw=drawColor,line width= 0.5pt,line join=round] (201.61, 21.17) --
	(203.58, 22.31) --
	(201.61, 23.45);
\end{scope}
\begin{scope}
\path[clip] (  0.00,  0.00) rectangle (209.58,115.63);
\definecolor{drawColor}{gray}{0.20}

\path[draw=drawColor,line width= 0.5pt,line join=round] ( 40.03, 20.06) --
	( 40.03, 22.31);

\path[draw=drawColor,line width= 0.5pt,line join=round] ( 57.34, 20.06) --
	( 57.34, 22.31);

\path[draw=drawColor,line width= 0.5pt,line join=round] ( 91.95, 20.06) --
	( 91.95, 22.31);

\path[draw=drawColor,line width= 0.5pt,line join=round] (126.57, 20.06) --
	(126.57, 22.31);

\path[draw=drawColor,line width= 0.5pt,line join=round] (161.18, 20.06) --
	(161.18, 22.31);

\path[draw=drawColor,line width= 0.5pt,line join=round] (195.79, 20.06) --
	(195.79, 22.31);
\end{scope}
\begin{scope}
\path[clip] (  0.00,  0.00) rectangle (209.58,115.63);
\definecolor{drawColor}{gray}{0.30}

\node[text=drawColor,anchor=base,inner sep=0pt, outer sep=0pt, scale=  0.80] at ( 40.03, 12.75) {2};

\node[text=drawColor,anchor=base,inner sep=0pt, outer sep=0pt, scale=  0.80] at ( 57.34, 12.75) {4};

\node[text=drawColor,anchor=base,inner sep=0pt, outer sep=0pt, scale=  0.80] at ( 91.95, 12.75) {8};

\node[text=drawColor,anchor=base,inner sep=0pt, outer sep=0pt, scale=  0.80] at (126.57, 12.75) {12};

\node[text=drawColor,anchor=base,inner sep=0pt, outer sep=0pt, scale=  0.80] at (161.18, 12.75) {16};

\node[text=drawColor,anchor=base,inner sep=0pt, outer sep=0pt, scale=  0.80] at (195.79, 12.75) {20};
\end{scope}
\begin{scope}
\path[clip] (  0.00,  0.00) rectangle (209.58,115.63);
\definecolor{drawColor}{RGB}{0,0,0}

\node[text=drawColor,anchor=base,inner sep=0pt, outer sep=0pt, scale=  0.90] at (117.91,  2.75) {domain size $d$};
\end{scope}
\begin{scope}
\path[clip] (  0.00,  0.00) rectangle (209.58,115.63);
\definecolor{drawColor}{RGB}{0,0,0}

\node[text=drawColor,rotate= 90.00,anchor=base,inner sep=0pt, outer sep=0pt, scale=  0.90] at (  8.20, 67.97) {time (ms)};
\end{scope}
\begin{scope}
\path[clip] (  0.00,  0.00) rectangle (209.58,115.63);

\path[] ( 35.62, 97.23) rectangle (104.26,113.60);
\end{scope}
\begin{scope}
\path[clip] (  0.00,  0.00) rectangle (209.58,115.63);
\definecolor{drawColor}{RGB}{247,192,26}

\path[draw=drawColor,line width= 0.5pt,line join=round] ( 36.70,106.50) -- ( 45.38,106.50);
\end{scope}
\begin{scope}
\path[clip] (  0.00,  0.00) rectangle (209.58,115.63);
\definecolor{drawColor}{RGB}{247,192,26}
\definecolor{fillColor}{RGB}{247,192,26}

\path[draw=drawColor,line width= 0.4pt,line join=round,line cap=round,fill=fillColor] ( 41.04,106.50) circle (  1.53);
\end{scope}
\begin{scope}
\path[clip] (  0.00,  0.00) rectangle (209.58,115.63);
\definecolor{drawColor}{RGB}{78,155,133}

\path[draw=drawColor,line width= 0.5pt,dash pattern=on 4pt off 4pt ,line join=round] ( 36.70,100.32) -- ( 45.38,100.32);
\end{scope}
\begin{scope}
\path[clip] (  0.00,  0.00) rectangle (209.58,115.63);
\definecolor{fillColor}{RGB}{78,155,133}

\path[fill=fillColor] ( 39.51, 98.79) --
	( 42.57, 98.79) --
	( 42.57,101.86) --
	( 39.51,101.86) --
	cycle;
\end{scope}
\begin{scope}
\path[clip] (  0.00,  0.00) rectangle (209.58,115.63);
\definecolor{drawColor}{RGB}{0,0,0}

\node[text=drawColor,anchor=base west,inner sep=0pt, outer sep=0pt, scale=  0.70] at ( 50.96,104.09) {LVE (ACP)};
\end{scope}
\begin{scope}
\path[clip] (  0.00,  0.00) rectangle (209.58,115.63);
\definecolor{drawColor}{RGB}{0,0,0}

\node[text=drawColor,anchor=base west,inner sep=0pt, outer sep=0pt, scale=  0.70] at ( 50.96, 97.91) {LVE (ACP $\pm \varepsilon$)};
\end{scope}
\end{tikzpicture}}
		\caption{\acs{lve} run time (ms).}
		\label{fig:commutative_approx_plot_times_k=7_eps=0.001}
	\end{subfigure}
	\begin{subfigure}[t]{0.49\linewidth}
		\centering
		\resizebox{\linewidth}{!}{
\begin{tikzpicture}[x=1pt,y=1pt]
\definecolor{fillColor}{RGB}{255,255,255}
\path[use as bounding box,fill=fillColor,fill opacity=0.00] (0,0) rectangle (209.58,115.63);
\begin{scope}
\path[clip] (  0.00,  0.00) rectangle (209.58,115.63);
\definecolor{drawColor}{RGB}{255,255,255}
\definecolor{fillColor}{RGB}{255,255,255}

\path[draw=drawColor,line width= 0.5pt,line join=round,line cap=round,fill=fillColor] (  0.00,  0.00) rectangle (209.58,115.63);
\end{scope}
\begin{scope}
\path[clip] ( 26.47, 22.31) rectangle (203.58,113.63);
\definecolor{fillColor}{RGB}{255,255,255}

\path[fill=fillColor] ( 26.47, 22.31) rectangle (203.58,113.63);
\definecolor{drawColor}{RGB}{78,155,133}
\definecolor{fillColor}{RGB}{78,155,133}

\path[draw=drawColor,draw opacity=0.20,line width= 0.4pt,line join=round,line cap=round,fill=fillColor,fill opacity=0.20] ( 43.61, 67.96) circle (  0.78);

\path[draw=drawColor,draw opacity=0.20,line width= 0.4pt,line join=round,line cap=round,fill=fillColor,fill opacity=0.20] ( 43.61, 67.99) circle (  0.78);
\definecolor{drawColor}{RGB}{78,155,133}

\path[draw=drawColor,line width= 0.6pt,line join=round] ( 43.61, 67.97) -- ( 43.61, 67.97);

\path[draw=drawColor,line width= 0.6pt,line join=round] ( 43.61, 67.97) -- ( 43.61, 67.97);

\path[draw=drawColor,line width= 0.6pt,fill=fillColor,fill opacity=0.20] ( 32.90, 67.97) --
	( 32.90, 67.97) --
	( 54.32, 67.97) --
	( 54.32, 67.97) --
	( 32.90, 67.97) --
	cycle;

\path[draw=drawColor,line width= 1.1pt] ( 32.90, 67.97) -- ( 54.32, 67.97);
\definecolor{drawColor}{RGB}{78,155,133}

\path[draw=drawColor,draw opacity=0.20,line width= 0.4pt,line join=round,line cap=round,fill=fillColor,fill opacity=0.20] ( 72.17, 67.97) circle (  0.78);
\definecolor{drawColor}{RGB}{78,155,133}

\path[draw=drawColor,line width= 0.6pt,line join=round] ( 72.17, 67.97) -- ( 72.17, 67.97);

\path[draw=drawColor,line width= 0.6pt,line join=round] ( 72.17, 67.97) -- ( 72.17, 67.97);

\path[draw=drawColor,line width= 0.6pt,fill=fillColor,fill opacity=0.20] ( 61.46, 67.97) --
	( 61.46, 67.97) --
	( 82.89, 67.97) --
	( 82.89, 67.97) --
	( 61.46, 67.97) --
	cycle;

\path[draw=drawColor,line width= 1.1pt] ( 61.46, 67.97) -- ( 82.89, 67.97);
\definecolor{drawColor}{RGB}{78,155,133}

\path[draw=drawColor,draw opacity=0.20,line width= 0.4pt,line join=round,line cap=round,fill=fillColor,fill opacity=0.20] (100.74, 67.97) circle (  0.78);

\path[draw=drawColor,draw opacity=0.20,line width= 0.4pt,line join=round,line cap=round,fill=fillColor,fill opacity=0.20] (100.74, 67.97) circle (  0.78);
\definecolor{drawColor}{RGB}{78,155,133}

\path[draw=drawColor,line width= 0.6pt,line join=round] (100.74, 67.97) -- (100.74, 67.97);

\path[draw=drawColor,line width= 0.6pt,line join=round] (100.74, 67.97) -- (100.74, 67.97);

\path[draw=drawColor,line width= 0.6pt,fill=fillColor,fill opacity=0.20] ( 90.03, 67.97) --
	( 90.03, 67.97) --
	(111.45, 67.97) --
	(111.45, 67.97) --
	( 90.03, 67.97) --
	cycle;

\path[draw=drawColor,line width= 1.1pt] ( 90.03, 67.97) -- (111.45, 67.97);
\definecolor{drawColor}{RGB}{78,155,133}

\path[draw=drawColor,draw opacity=0.20,line width= 0.4pt,line join=round,line cap=round,fill=fillColor,fill opacity=0.20] (129.31, 30.24) circle (  0.78);

\path[draw=drawColor,draw opacity=0.20,line width= 0.4pt,line join=round,line cap=round,fill=fillColor,fill opacity=0.20] (129.31, 80.25) circle (  0.78);
\definecolor{drawColor}{RGB}{78,155,133}

\path[draw=drawColor,line width= 0.6pt,line join=round] (129.31, 68.24) -- (129.31, 68.95);

\path[draw=drawColor,line width= 0.6pt,line join=round] (129.31, 66.35) -- (129.31, 65.37);

\path[draw=drawColor,line width= 0.6pt,fill=fillColor,fill opacity=0.20] (118.60, 68.24) --
	(118.60, 66.35) --
	(140.02, 66.35) --
	(140.02, 68.24) --
	(118.60, 68.24) --
	cycle;

\path[draw=drawColor,line width= 1.1pt] (118.60, 67.97) -- (140.02, 67.97);
\definecolor{drawColor}{RGB}{78,155,133}

\path[draw=drawColor,draw opacity=0.20,line width= 0.4pt,line join=round,line cap=round,fill=fillColor,fill opacity=0.20] (157.88, 34.66) circle (  0.78);

\path[draw=drawColor,draw opacity=0.20,line width= 0.4pt,line join=round,line cap=round,fill=fillColor,fill opacity=0.20] (157.88,101.65) circle (  0.78);
\definecolor{drawColor}{RGB}{78,155,133}

\path[draw=drawColor,line width= 0.6pt,line join=round] (157.88, 69.60) -- (157.88, 70.28);

\path[draw=drawColor,line width= 0.6pt,line join=round] (157.88, 64.97) -- (157.88, 60.79);

\path[draw=drawColor,line width= 0.6pt,fill=fillColor,fill opacity=0.20] (147.16, 69.60) --
	(147.16, 64.97) --
	(168.59, 64.97) --
	(168.59, 69.60) --
	(147.16, 69.60) --
	cycle;

\path[draw=drawColor,line width= 1.1pt] (147.16, 67.29) -- (168.59, 67.29);
\definecolor{drawColor}{RGB}{78,155,133}

\path[draw=drawColor,draw opacity=0.20,line width= 0.4pt,line join=round,line cap=round,fill=fillColor,fill opacity=0.20] (186.44, 35.73) circle (  0.78);
\definecolor{drawColor}{RGB}{78,155,133}

\path[draw=drawColor,line width= 0.6pt,line join=round] (186.44, 69.59) -- (186.44, 70.14);

\path[draw=drawColor,line width= 0.6pt,line join=round] (186.44, 63.87) -- (186.44, 58.10);

\path[draw=drawColor,line width= 0.6pt,fill=fillColor,fill opacity=0.20] (175.73, 69.59) --
	(175.73, 63.87) --
	(197.16, 63.87) --
	(197.16, 69.59) --
	(175.73, 69.59) --
	cycle;

\path[draw=drawColor,line width= 1.1pt] (175.73, 67.00) -- (197.16, 67.00);
\end{scope}
\begin{scope}
\path[clip] (  0.00,  0.00) rectangle (209.58,115.63);
\definecolor{drawColor}{RGB}{0,0,0}

\path[draw=drawColor,line width= 0.5pt,line join=round] ( 26.47, 22.31) --
	( 26.47,113.63);

\path[draw=drawColor,line width= 0.5pt,line join=round] ( 27.61,111.66) --
	( 26.47,113.63) --
	( 25.33,111.66);
\end{scope}
\begin{scope}
\path[clip] (  0.00,  0.00) rectangle (209.58,115.63);
\definecolor{drawColor}{gray}{0.30}

\node[text=drawColor,anchor=base east,inner sep=0pt, outer sep=0pt, scale=  0.80] at ( 22.42, 21.84) {0.6};

\node[text=drawColor,anchor=base east,inner sep=0pt, outer sep=0pt, scale=  0.80] at ( 22.42, 43.53) {0.8};

\node[text=drawColor,anchor=base east,inner sep=0pt, outer sep=0pt, scale=  0.80] at ( 22.42, 65.22) {1.0};

\node[text=drawColor,anchor=base east,inner sep=0pt, outer sep=0pt, scale=  0.80] at ( 22.42, 86.91) {1.2};

\node[text=drawColor,anchor=base east,inner sep=0pt, outer sep=0pt, scale=  0.80] at ( 22.42,108.60) {1.4};
\end{scope}
\begin{scope}
\path[clip] (  0.00,  0.00) rectangle (209.58,115.63);
\definecolor{drawColor}{gray}{0.20}

\path[draw=drawColor,line width= 0.5pt,line join=round] ( 24.22, 24.59) --
	( 26.47, 24.59);

\path[draw=drawColor,line width= 0.5pt,line join=round] ( 24.22, 46.28) --
	( 26.47, 46.28);

\path[draw=drawColor,line width= 0.5pt,line join=round] ( 24.22, 67.97) --
	( 26.47, 67.97);

\path[draw=drawColor,line width= 0.5pt,line join=round] ( 24.22, 89.66) --
	( 26.47, 89.66);

\path[draw=drawColor,line width= 0.5pt,line join=round] ( 24.22,111.35) --
	( 26.47,111.35);
\end{scope}
\begin{scope}
\path[clip] (  0.00,  0.00) rectangle (209.58,115.63);
\definecolor{drawColor}{RGB}{0,0,0}

\path[draw=drawColor,line width= 0.5pt,line join=round] ( 26.47, 22.31) --
	(203.58, 22.31);

\path[draw=drawColor,line width= 0.5pt,line join=round] (201.61, 21.17) --
	(203.58, 22.31) --
	(201.61, 23.45);
\end{scope}
\begin{scope}
\path[clip] (  0.00,  0.00) rectangle (209.58,115.63);
\definecolor{drawColor}{gray}{0.20}

\path[draw=drawColor,line width= 0.5pt,line join=round] ( 43.61, 20.06) --
	( 43.61, 22.31);

\path[draw=drawColor,line width= 0.5pt,line join=round] ( 72.17, 20.06) --
	( 72.17, 22.31);

\path[draw=drawColor,line width= 0.5pt,line join=round] (100.74, 20.06) --
	(100.74, 22.31);

\path[draw=drawColor,line width= 0.5pt,line join=round] (129.31, 20.06) --
	(129.31, 22.31);

\path[draw=drawColor,line width= 0.5pt,line join=round] (157.88, 20.06) --
	(157.88, 22.31);

\path[draw=drawColor,line width= 0.5pt,line join=round] (186.44, 20.06) --
	(186.44, 22.31);
\end{scope}
\begin{scope}
\path[clip] (  0.00,  0.00) rectangle (209.58,115.63);
\definecolor{drawColor}{gray}{0.30}

\node[text=drawColor,anchor=base,inner sep=0pt, outer sep=0pt, scale=  0.80] at ( 43.61, 12.75) {2};

\node[text=drawColor,anchor=base,inner sep=0pt, outer sep=0pt, scale=  0.80] at ( 72.17, 12.75) {4};

\node[text=drawColor,anchor=base,inner sep=0pt, outer sep=0pt, scale=  0.80] at (100.74, 12.75) {8};

\node[text=drawColor,anchor=base,inner sep=0pt, outer sep=0pt, scale=  0.80] at (129.31, 12.75) {12};

\node[text=drawColor,anchor=base,inner sep=0pt, outer sep=0pt, scale=  0.80] at (157.88, 12.75) {16};

\node[text=drawColor,anchor=base,inner sep=0pt, outer sep=0pt, scale=  0.80] at (186.44, 12.75) {20};
\end{scope}
\begin{scope}
\path[clip] (  0.00,  0.00) rectangle (209.58,115.63);
\definecolor{drawColor}{RGB}{0,0,0}

\node[text=drawColor,anchor=base,inner sep=0pt, outer sep=0pt, scale=  0.90] at (115.03,  2.75) {domain size $d$};
\end{scope}
\begin{scope}
\path[clip] (  0.00,  0.00) rectangle (209.58,115.63);
\definecolor{drawColor}{RGB}{0,0,0}

\node[text=drawColor,rotate= 90.00,anchor=base,inner sep=0pt, outer sep=0pt, scale=  0.90] at (  8.20, 67.97) {$p' \mathbin{/} p$};
\end{scope}
\end{tikzpicture}}
		\caption{Query result quotient $p' \mathbin{/} p$.}
		\label{fig:commutative_approx_plot_quots_k=7_eps=0.001}
	\end{subfigure}
	\caption{(a) Run time of \ac{lve} on the compressed model returned by \ac{acp} and its $\varepsilon$-relaxed variant (\acs{acp} $\pm \varepsilon$), and (b) distribution of the per-query quotient $p' \mathbin{/} p$ between the approximate result $p'$ and the exact result $p$ for input \acp{fg} containing $k = 7$ $\varepsilon$-commutative factors with $\varepsilon = 0.001$.}
	\label{fig:commutative_approx_plot_te_k=7_eps=0.001}
\end{figure}

\begin{figure}[t]
	\centering
	\begin{subfigure}[t]{0.49\linewidth}
		\centering
		\resizebox{\linewidth}{!}{
\begin{tikzpicture}[x=1pt,y=1pt]
\definecolor{fillColor}{RGB}{255,255,255}
\path[use as bounding box,fill=fillColor,fill opacity=0.00] (0,0) rectangle (209.58,115.63);
\begin{scope}
\path[clip] (  0.00,  0.00) rectangle (209.58,115.63);
\definecolor{drawColor}{RGB}{255,255,255}
\definecolor{fillColor}{RGB}{255,255,255}

\path[draw=drawColor,line width= 0.5pt,line join=round,line cap=round,fill=fillColor] (  0.00,  0.00) rectangle (209.58,115.63);
\end{scope}
\begin{scope}
\path[clip] ( 32.24, 22.31) rectangle (203.58,113.63);
\definecolor{fillColor}{RGB}{255,255,255}

\path[fill=fillColor] ( 32.24, 22.31) rectangle (203.58,113.63);
\definecolor{drawColor}{RGB}{247,192,26}

\path[draw=drawColor,line width= 0.5pt,line join=round] ( 40.03, 26.46) --
	( 57.34, 35.17) --
	( 91.95, 52.92) --
	(126.57, 69.83) --
	(161.18, 85.88) --
	(195.79,109.48);
\definecolor{drawColor}{RGB}{78,155,133}

\path[draw=drawColor,line width= 0.5pt,dash pattern=on 4pt off 4pt ,line join=round] ( 40.03, 28.77) --
	( 57.34, 31.30) --
	( 91.95, 33.66) --
	(126.57, 37.06) --
	(161.18, 32.84) --
	(195.79, 33.80);
\definecolor{drawColor}{RGB}{247,192,26}
\definecolor{fillColor}{RGB}{247,192,26}

\path[draw=drawColor,line width= 0.4pt,line join=round,line cap=round,fill=fillColor] ( 40.03, 26.46) circle (  1.53);
\definecolor{fillColor}{RGB}{78,155,133}

\path[fill=fillColor] ( 38.50, 27.23) --
	( 41.57, 27.23) --
	( 41.57, 30.30) --
	( 38.50, 30.30) --
	cycle;
\definecolor{fillColor}{RGB}{247,192,26}

\path[draw=drawColor,line width= 0.4pt,line join=round,line cap=round,fill=fillColor] ( 57.34, 35.17) circle (  1.53);
\definecolor{fillColor}{RGB}{78,155,133}

\path[fill=fillColor] ( 55.81, 29.77) --
	( 58.87, 29.77) --
	( 58.87, 32.84) --
	( 55.81, 32.84) --
	cycle;
\definecolor{fillColor}{RGB}{247,192,26}

\path[draw=drawColor,line width= 0.4pt,line join=round,line cap=round,fill=fillColor] ( 91.95, 52.92) circle (  1.53);
\definecolor{fillColor}{RGB}{78,155,133}

\path[fill=fillColor] ( 90.42, 32.13) --
	( 93.49, 32.13) --
	( 93.49, 35.20) --
	( 90.42, 35.20) --
	cycle;
\definecolor{fillColor}{RGB}{247,192,26}

\path[draw=drawColor,line width= 0.4pt,line join=round,line cap=round,fill=fillColor] (126.57, 69.83) circle (  1.53);
\definecolor{fillColor}{RGB}{78,155,133}

\path[fill=fillColor] (125.03, 35.52) --
	(128.10, 35.52) --
	(128.10, 38.59) --
	(125.03, 38.59) --
	cycle;
\definecolor{fillColor}{RGB}{247,192,26}

\path[draw=drawColor,line width= 0.4pt,line join=round,line cap=round,fill=fillColor] (161.18, 85.88) circle (  1.53);
\definecolor{fillColor}{RGB}{78,155,133}

\path[fill=fillColor] (159.65, 31.31) --
	(162.71, 31.31) --
	(162.71, 34.38) --
	(159.65, 34.38) --
	cycle;
\definecolor{fillColor}{RGB}{247,192,26}

\path[draw=drawColor,line width= 0.4pt,line join=round,line cap=round,fill=fillColor] (195.79,109.48) circle (  1.53);
\definecolor{fillColor}{RGB}{78,155,133}

\path[fill=fillColor] (194.26, 32.27) --
	(197.33, 32.27) --
	(197.33, 35.34) --
	(194.26, 35.34) --
	cycle;
\end{scope}
\begin{scope}
\path[clip] (  0.00,  0.00) rectangle (209.58,115.63);
\definecolor{drawColor}{RGB}{0,0,0}

\path[draw=drawColor,line width= 0.5pt,line join=round] ( 32.24, 22.31) --
	( 32.24,113.63);

\path[draw=drawColor,line width= 0.5pt,line join=round] ( 33.38,111.66) --
	( 32.24,113.63) --
	( 31.11,111.66);
\end{scope}
\begin{scope}
\path[clip] (  0.00,  0.00) rectangle (209.58,115.63);
\definecolor{drawColor}{gray}{0.30}

\node[text=drawColor,anchor=base east,inner sep=0pt, outer sep=0pt, scale=  0.80] at ( 28.19, 40.47) {30};

\node[text=drawColor,anchor=base east,inner sep=0pt, outer sep=0pt, scale=  0.80] at ( 28.19, 63.66) {100};

\node[text=drawColor,anchor=base east,inner sep=0pt, outer sep=0pt, scale=  0.80] at ( 28.19, 84.82) {300};

\node[text=drawColor,anchor=base east,inner sep=0pt, outer sep=0pt, scale=  0.80] at ( 28.19,108.01) {1000};
\end{scope}
\begin{scope}
\path[clip] (  0.00,  0.00) rectangle (209.58,115.63);
\definecolor{drawColor}{gray}{0.20}

\path[draw=drawColor,line width= 0.5pt,line join=round] ( 29.99, 43.23) --
	( 32.24, 43.23);

\path[draw=drawColor,line width= 0.5pt,line join=round] ( 29.99, 66.42) --
	( 32.24, 66.42);

\path[draw=drawColor,line width= 0.5pt,line join=round] ( 29.99, 87.58) --
	( 32.24, 87.58);

\path[draw=drawColor,line width= 0.5pt,line join=round] ( 29.99,110.76) --
	( 32.24,110.76);
\end{scope}
\begin{scope}
\path[clip] (  0.00,  0.00) rectangle (209.58,115.63);
\definecolor{drawColor}{RGB}{0,0,0}

\path[draw=drawColor,line width= 0.5pt,line join=round] ( 32.24, 22.31) --
	(203.58, 22.31);

\path[draw=drawColor,line width= 0.5pt,line join=round] (201.61, 21.17) --
	(203.58, 22.31) --
	(201.61, 23.45);
\end{scope}
\begin{scope}
\path[clip] (  0.00,  0.00) rectangle (209.58,115.63);
\definecolor{drawColor}{gray}{0.20}

\path[draw=drawColor,line width= 0.5pt,line join=round] ( 40.03, 20.06) --
	( 40.03, 22.31);

\path[draw=drawColor,line width= 0.5pt,line join=round] ( 57.34, 20.06) --
	( 57.34, 22.31);

\path[draw=drawColor,line width= 0.5pt,line join=round] ( 91.95, 20.06) --
	( 91.95, 22.31);

\path[draw=drawColor,line width= 0.5pt,line join=round] (126.57, 20.06) --
	(126.57, 22.31);

\path[draw=drawColor,line width= 0.5pt,line join=round] (161.18, 20.06) --
	(161.18, 22.31);

\path[draw=drawColor,line width= 0.5pt,line join=round] (195.79, 20.06) --
	(195.79, 22.31);
\end{scope}
\begin{scope}
\path[clip] (  0.00,  0.00) rectangle (209.58,115.63);
\definecolor{drawColor}{gray}{0.30}

\node[text=drawColor,anchor=base,inner sep=0pt, outer sep=0pt, scale=  0.80] at ( 40.03, 12.75) {2};

\node[text=drawColor,anchor=base,inner sep=0pt, outer sep=0pt, scale=  0.80] at ( 57.34, 12.75) {4};

\node[text=drawColor,anchor=base,inner sep=0pt, outer sep=0pt, scale=  0.80] at ( 91.95, 12.75) {8};

\node[text=drawColor,anchor=base,inner sep=0pt, outer sep=0pt, scale=  0.80] at (126.57, 12.75) {12};

\node[text=drawColor,anchor=base,inner sep=0pt, outer sep=0pt, scale=  0.80] at (161.18, 12.75) {16};

\node[text=drawColor,anchor=base,inner sep=0pt, outer sep=0pt, scale=  0.80] at (195.79, 12.75) {20};
\end{scope}
\begin{scope}
\path[clip] (  0.00,  0.00) rectangle (209.58,115.63);
\definecolor{drawColor}{RGB}{0,0,0}

\node[text=drawColor,anchor=base,inner sep=0pt, outer sep=0pt, scale=  0.90] at (117.91,  2.75) {domain size $d$};
\end{scope}
\begin{scope}
\path[clip] (  0.00,  0.00) rectangle (209.58,115.63);
\definecolor{drawColor}{RGB}{0,0,0}

\node[text=drawColor,rotate= 90.00,anchor=base,inner sep=0pt, outer sep=0pt, scale=  0.90] at (  8.20, 67.97) {time (ms)};
\end{scope}
\begin{scope}
\path[clip] (  0.00,  0.00) rectangle (209.58,115.63);

\path[] ( 35.62, 97.23) rectangle (104.26,113.60);
\end{scope}
\begin{scope}
\path[clip] (  0.00,  0.00) rectangle (209.58,115.63);
\definecolor{drawColor}{RGB}{247,192,26}

\path[draw=drawColor,line width= 0.5pt,line join=round] ( 36.70,106.50) -- ( 45.38,106.50);
\end{scope}
\begin{scope}
\path[clip] (  0.00,  0.00) rectangle (209.58,115.63);
\definecolor{drawColor}{RGB}{247,192,26}
\definecolor{fillColor}{RGB}{247,192,26}

\path[draw=drawColor,line width= 0.4pt,line join=round,line cap=round,fill=fillColor] ( 41.04,106.50) circle (  1.53);
\end{scope}
\begin{scope}
\path[clip] (  0.00,  0.00) rectangle (209.58,115.63);
\definecolor{drawColor}{RGB}{78,155,133}

\path[draw=drawColor,line width= 0.5pt,dash pattern=on 4pt off 4pt ,line join=round] ( 36.70,100.32) -- ( 45.38,100.32);
\end{scope}
\begin{scope}
\path[clip] (  0.00,  0.00) rectangle (209.58,115.63);
\definecolor{fillColor}{RGB}{78,155,133}

\path[fill=fillColor] ( 39.51, 98.79) --
	( 42.57, 98.79) --
	( 42.57,101.86) --
	( 39.51,101.86) --
	cycle;
\end{scope}
\begin{scope}
\path[clip] (  0.00,  0.00) rectangle (209.58,115.63);
\definecolor{drawColor}{RGB}{0,0,0}

\node[text=drawColor,anchor=base west,inner sep=0pt, outer sep=0pt, scale=  0.70] at ( 50.96,104.09) {LVE (ACP)};
\end{scope}
\begin{scope}
\path[clip] (  0.00,  0.00) rectangle (209.58,115.63);
\definecolor{drawColor}{RGB}{0,0,0}

\node[text=drawColor,anchor=base west,inner sep=0pt, outer sep=0pt, scale=  0.70] at ( 50.96, 97.91) {LVE (ACP $\pm \varepsilon$)};
\end{scope}
\end{tikzpicture}}
		\caption{\acs{lve} run time (ms).}
		\label{fig:commutative_approx_plot_times_k=7_eps=0.01}
	\end{subfigure}
	\begin{subfigure}[t]{0.49\linewidth}
		\centering
		\resizebox{\linewidth}{!}{
\begin{tikzpicture}[x=1pt,y=1pt]
\definecolor{fillColor}{RGB}{255,255,255}
\path[use as bounding box,fill=fillColor,fill opacity=0.00] (0,0) rectangle (209.58,115.63);
\begin{scope}
\path[clip] (  0.00,  0.00) rectangle (209.58,115.63);
\definecolor{drawColor}{RGB}{255,255,255}
\definecolor{fillColor}{RGB}{255,255,255}

\path[draw=drawColor,line width= 0.5pt,line join=round,line cap=round,fill=fillColor] (  0.00,  0.00) rectangle (209.58,115.63);
\end{scope}
\begin{scope}
\path[clip] ( 26.47, 22.31) rectangle (203.58,113.63);
\definecolor{fillColor}{RGB}{255,255,255}

\path[fill=fillColor] ( 26.47, 22.31) rectangle (203.58,113.63);
\definecolor{drawColor}{RGB}{78,155,133}
\definecolor{fillColor}{RGB}{78,155,133}

\path[draw=drawColor,draw opacity=0.20,line width= 0.4pt,line join=round,line cap=round,fill=fillColor,fill opacity=0.20] ( 43.61, 67.88) circle (  0.78);

\path[draw=drawColor,draw opacity=0.20,line width= 0.4pt,line join=round,line cap=round,fill=fillColor,fill opacity=0.20] ( 43.61, 68.12) circle (  0.78);
\definecolor{drawColor}{RGB}{78,155,133}

\path[draw=drawColor,line width= 0.6pt,line join=round] ( 43.61, 67.98) -- ( 43.61, 67.98);

\path[draw=drawColor,line width= 0.6pt,line join=round] ( 43.61, 67.97) -- ( 43.61, 67.97);

\path[draw=drawColor,line width= 0.6pt,fill=fillColor,fill opacity=0.20] ( 32.90, 67.98) --
	( 32.90, 67.97) --
	( 54.32, 67.97) --
	( 54.32, 67.98) --
	( 32.90, 67.98) --
	cycle;

\path[draw=drawColor,line width= 1.1pt] ( 32.90, 67.97) -- ( 54.32, 67.97);
\definecolor{drawColor}{RGB}{78,155,133}

\path[draw=drawColor,draw opacity=0.20,line width= 0.4pt,line join=round,line cap=round,fill=fillColor,fill opacity=0.20] ( 72.17, 67.98) circle (  0.78);
\definecolor{drawColor}{RGB}{78,155,133}

\path[draw=drawColor,line width= 0.6pt,line join=round] ( 72.17, 67.97) -- ( 72.17, 67.98);

\path[draw=drawColor,line width= 0.6pt,line join=round] ( 72.17, 67.97) -- ( 72.17, 67.97);

\path[draw=drawColor,line width= 0.6pt,fill=fillColor,fill opacity=0.20] ( 61.46, 67.97) --
	( 61.46, 67.97) --
	( 82.89, 67.97) --
	( 82.89, 67.97) --
	( 61.46, 67.97) --
	cycle;

\path[draw=drawColor,line width= 1.1pt] ( 61.46, 67.97) -- ( 82.89, 67.97);
\definecolor{drawColor}{RGB}{78,155,133}

\path[draw=drawColor,draw opacity=0.20,line width= 0.4pt,line join=round,line cap=round,fill=fillColor,fill opacity=0.20] (100.74, 67.98) circle (  0.78);

\path[draw=drawColor,draw opacity=0.20,line width= 0.4pt,line join=round,line cap=round,fill=fillColor,fill opacity=0.20] (100.74, 67.95) circle (  0.78);
\definecolor{drawColor}{RGB}{78,155,133}

\path[draw=drawColor,line width= 0.6pt,line join=round] (100.74, 67.97) -- (100.74, 67.97);

\path[draw=drawColor,line width= 0.6pt,line join=round] (100.74, 67.97) -- (100.74, 67.97);

\path[draw=drawColor,line width= 0.6pt,fill=fillColor,fill opacity=0.20] ( 90.03, 67.97) --
	( 90.03, 67.97) --
	(111.45, 67.97) --
	(111.45, 67.97) --
	( 90.03, 67.97) --
	cycle;

\path[draw=drawColor,line width= 1.1pt] ( 90.03, 67.97) -- (111.45, 67.97);
\definecolor{drawColor}{RGB}{78,155,133}

\path[draw=drawColor,draw opacity=0.20,line width= 0.4pt,line join=round,line cap=round,fill=fillColor,fill opacity=0.20] (129.31, 30.24) circle (  0.78);

\path[draw=drawColor,draw opacity=0.20,line width= 0.4pt,line join=round,line cap=round,fill=fillColor,fill opacity=0.20] (129.31, 80.23) circle (  0.78);
\definecolor{drawColor}{RGB}{78,155,133}

\path[draw=drawColor,line width= 0.6pt,line join=round] (129.31, 68.24) -- (129.31, 68.95);

\path[draw=drawColor,line width= 0.6pt,line join=round] (129.31, 66.35) -- (129.31, 65.37);

\path[draw=drawColor,line width= 0.6pt,fill=fillColor,fill opacity=0.20] (118.60, 68.24) --
	(118.60, 66.35) --
	(140.02, 66.35) --
	(140.02, 68.24) --
	(118.60, 68.24) --
	cycle;

\path[draw=drawColor,line width= 1.1pt] (118.60, 67.97) -- (140.02, 67.97);
\definecolor{drawColor}{RGB}{78,155,133}

\path[draw=drawColor,draw opacity=0.20,line width= 0.4pt,line join=round,line cap=round,fill=fillColor,fill opacity=0.20] (157.88, 34.64) circle (  0.78);

\path[draw=drawColor,draw opacity=0.20,line width= 0.4pt,line join=round,line cap=round,fill=fillColor,fill opacity=0.20] (157.88,101.67) circle (  0.78);
\definecolor{drawColor}{RGB}{78,155,133}

\path[draw=drawColor,line width= 0.6pt,line join=round] (157.88, 69.59) -- (157.88, 70.28);

\path[draw=drawColor,line width= 0.6pt,line join=round] (157.88, 64.97) -- (157.88, 60.79);

\path[draw=drawColor,line width= 0.6pt,fill=fillColor,fill opacity=0.20] (147.16, 69.59) --
	(147.16, 64.97) --
	(168.59, 64.97) --
	(168.59, 69.59) --
	(147.16, 69.59) --
	cycle;

\path[draw=drawColor,line width= 1.1pt] (147.16, 67.29) -- (168.59, 67.29);
\definecolor{drawColor}{RGB}{78,155,133}

\path[draw=drawColor,draw opacity=0.20,line width= 0.4pt,line join=round,line cap=round,fill=fillColor,fill opacity=0.20] (186.44, 35.78) circle (  0.78);
\definecolor{drawColor}{RGB}{78,155,133}

\path[draw=drawColor,line width= 0.6pt,line join=round] (186.44, 69.60) -- (186.44, 70.13);

\path[draw=drawColor,line width= 0.6pt,line join=round] (186.44, 63.87) -- (186.44, 58.12);

\path[draw=drawColor,line width= 0.6pt,fill=fillColor,fill opacity=0.20] (175.73, 69.60) --
	(175.73, 63.87) --
	(197.16, 63.87) --
	(197.16, 69.60) --
	(175.73, 69.60) --
	cycle;

\path[draw=drawColor,line width= 1.1pt] (175.73, 67.00) -- (197.16, 67.00);
\end{scope}
\begin{scope}
\path[clip] (  0.00,  0.00) rectangle (209.58,115.63);
\definecolor{drawColor}{RGB}{0,0,0}

\path[draw=drawColor,line width= 0.5pt,line join=round] ( 26.47, 22.31) --
	( 26.47,113.63);

\path[draw=drawColor,line width= 0.5pt,line join=round] ( 27.61,111.66) --
	( 26.47,113.63) --
	( 25.33,111.66);
\end{scope}
\begin{scope}
\path[clip] (  0.00,  0.00) rectangle (209.58,115.63);
\definecolor{drawColor}{gray}{0.30}

\node[text=drawColor,anchor=base east,inner sep=0pt, outer sep=0pt, scale=  0.80] at ( 22.42, 21.87) {0.6};

\node[text=drawColor,anchor=base east,inner sep=0pt, outer sep=0pt, scale=  0.80] at ( 22.42, 43.54) {0.8};

\node[text=drawColor,anchor=base east,inner sep=0pt, outer sep=0pt, scale=  0.80] at ( 22.42, 65.22) {1.0};

\node[text=drawColor,anchor=base east,inner sep=0pt, outer sep=0pt, scale=  0.80] at ( 22.42, 86.89) {1.2};

\node[text=drawColor,anchor=base east,inner sep=0pt, outer sep=0pt, scale=  0.80] at ( 22.42,108.57) {1.4};
\end{scope}
\begin{scope}
\path[clip] (  0.00,  0.00) rectangle (209.58,115.63);
\definecolor{drawColor}{gray}{0.20}

\path[draw=drawColor,line width= 0.5pt,line join=round] ( 24.22, 24.62) --
	( 26.47, 24.62);

\path[draw=drawColor,line width= 0.5pt,line join=round] ( 24.22, 46.30) --
	( 26.47, 46.30);

\path[draw=drawColor,line width= 0.5pt,line join=round] ( 24.22, 67.97) --
	( 26.47, 67.97);

\path[draw=drawColor,line width= 0.5pt,line join=round] ( 24.22, 89.65) --
	( 26.47, 89.65);

\path[draw=drawColor,line width= 0.5pt,line join=round] ( 24.22,111.32) --
	( 26.47,111.32);
\end{scope}
\begin{scope}
\path[clip] (  0.00,  0.00) rectangle (209.58,115.63);
\definecolor{drawColor}{RGB}{0,0,0}

\path[draw=drawColor,line width= 0.5pt,line join=round] ( 26.47, 22.31) --
	(203.58, 22.31);

\path[draw=drawColor,line width= 0.5pt,line join=round] (201.61, 21.17) --
	(203.58, 22.31) --
	(201.61, 23.45);
\end{scope}
\begin{scope}
\path[clip] (  0.00,  0.00) rectangle (209.58,115.63);
\definecolor{drawColor}{gray}{0.20}

\path[draw=drawColor,line width= 0.5pt,line join=round] ( 43.61, 20.06) --
	( 43.61, 22.31);

\path[draw=drawColor,line width= 0.5pt,line join=round] ( 72.17, 20.06) --
	( 72.17, 22.31);

\path[draw=drawColor,line width= 0.5pt,line join=round] (100.74, 20.06) --
	(100.74, 22.31);

\path[draw=drawColor,line width= 0.5pt,line join=round] (129.31, 20.06) --
	(129.31, 22.31);

\path[draw=drawColor,line width= 0.5pt,line join=round] (157.88, 20.06) --
	(157.88, 22.31);

\path[draw=drawColor,line width= 0.5pt,line join=round] (186.44, 20.06) --
	(186.44, 22.31);
\end{scope}
\begin{scope}
\path[clip] (  0.00,  0.00) rectangle (209.58,115.63);
\definecolor{drawColor}{gray}{0.30}

\node[text=drawColor,anchor=base,inner sep=0pt, outer sep=0pt, scale=  0.80] at ( 43.61, 12.75) {2};

\node[text=drawColor,anchor=base,inner sep=0pt, outer sep=0pt, scale=  0.80] at ( 72.17, 12.75) {4};

\node[text=drawColor,anchor=base,inner sep=0pt, outer sep=0pt, scale=  0.80] at (100.74, 12.75) {8};

\node[text=drawColor,anchor=base,inner sep=0pt, outer sep=0pt, scale=  0.80] at (129.31, 12.75) {12};

\node[text=drawColor,anchor=base,inner sep=0pt, outer sep=0pt, scale=  0.80] at (157.88, 12.75) {16};

\node[text=drawColor,anchor=base,inner sep=0pt, outer sep=0pt, scale=  0.80] at (186.44, 12.75) {20};
\end{scope}
\begin{scope}
\path[clip] (  0.00,  0.00) rectangle (209.58,115.63);
\definecolor{drawColor}{RGB}{0,0,0}

\node[text=drawColor,anchor=base,inner sep=0pt, outer sep=0pt, scale=  0.90] at (115.03,  2.75) {domain size $d$};
\end{scope}
\begin{scope}
\path[clip] (  0.00,  0.00) rectangle (209.58,115.63);
\definecolor{drawColor}{RGB}{0,0,0}

\node[text=drawColor,rotate= 90.00,anchor=base,inner sep=0pt, outer sep=0pt, scale=  0.90] at (  8.20, 67.97) {$p' \mathbin{/} p$};
\end{scope}
\end{tikzpicture}}
		\caption{Query result quotient $p' \mathbin{/} p$.}
		\label{fig:commutative_approx_plot_quots_k=7_eps=0.01}
	\end{subfigure}
	\caption{(a) Run time of \ac{lve} on the compressed model returned by \ac{acp} and its $\varepsilon$-relaxed variant (\acs{acp} $\pm \varepsilon$), and (b) distribution of the per-query quotient $p' \mathbin{/} p$ between the approximate result $p'$ and the exact result $p$ for input \acp{fg} containing $k = 7$ $\varepsilon$-commutative factors with $\varepsilon = 0.01$.}
	\label{fig:commutative_approx_plot_te_k=7_eps=0.01}
\end{figure}

\begin{figure}[t]
	\centering
	\begin{subfigure}[t]{0.49\linewidth}
		\centering
		\resizebox{\linewidth}{!}{
\begin{tikzpicture}[x=1pt,y=1pt]
\definecolor{fillColor}{RGB}{255,255,255}
\path[use as bounding box,fill=fillColor,fill opacity=0.00] (0,0) rectangle (209.58,115.63);
\begin{scope}
\path[clip] (  0.00,  0.00) rectangle (209.58,115.63);
\definecolor{drawColor}{RGB}{255,255,255}
\definecolor{fillColor}{RGB}{255,255,255}

\path[draw=drawColor,line width= 0.5pt,line join=round,line cap=round,fill=fillColor] (  0.00,  0.00) rectangle (209.58,115.63);
\end{scope}
\begin{scope}
\path[clip] ( 32.24, 22.31) rectangle (203.58,113.63);
\definecolor{fillColor}{RGB}{255,255,255}

\path[fill=fillColor] ( 32.24, 22.31) rectangle (203.58,113.63);
\definecolor{drawColor}{RGB}{247,192,26}

\path[draw=drawColor,line width= 0.5pt,line join=round] ( 40.03, 26.46) --
	( 57.34, 35.02) --
	( 91.95, 49.94) --
	(126.57, 68.95) --
	(161.18, 81.29) --
	(195.79,109.48);
\definecolor{drawColor}{RGB}{78,155,133}

\path[draw=drawColor,line width= 0.5pt,dash pattern=on 4pt off 4pt ,line join=round] ( 40.03, 29.81) --
	( 57.34, 31.78) --
	( 91.95, 33.09) --
	(126.57, 33.79) --
	(161.18, 33.25) --
	(195.79, 35.68);
\definecolor{drawColor}{RGB}{247,192,26}
\definecolor{fillColor}{RGB}{247,192,26}

\path[draw=drawColor,line width= 0.4pt,line join=round,line cap=round,fill=fillColor] ( 40.03, 26.46) circle (  1.53);
\definecolor{fillColor}{RGB}{78,155,133}

\path[fill=fillColor] ( 38.50, 28.28) --
	( 41.57, 28.28) --
	( 41.57, 31.35) --
	( 38.50, 31.35) --
	cycle;
\definecolor{fillColor}{RGB}{247,192,26}

\path[draw=drawColor,line width= 0.4pt,line join=round,line cap=round,fill=fillColor] ( 57.34, 35.02) circle (  1.53);
\definecolor{fillColor}{RGB}{78,155,133}

\path[fill=fillColor] ( 55.81, 30.25) --
	( 58.87, 30.25) --
	( 58.87, 33.32) --
	( 55.81, 33.32) --
	cycle;
\definecolor{fillColor}{RGB}{247,192,26}

\path[draw=drawColor,line width= 0.4pt,line join=round,line cap=round,fill=fillColor] ( 91.95, 49.94) circle (  1.53);
\definecolor{fillColor}{RGB}{78,155,133}

\path[fill=fillColor] ( 90.42, 31.56) --
	( 93.49, 31.56) --
	( 93.49, 34.63) --
	( 90.42, 34.63) --
	cycle;
\definecolor{fillColor}{RGB}{247,192,26}

\path[draw=drawColor,line width= 0.4pt,line join=round,line cap=round,fill=fillColor] (126.57, 68.95) circle (  1.53);
\definecolor{fillColor}{RGB}{78,155,133}

\path[fill=fillColor] (125.03, 32.26) --
	(128.10, 32.26) --
	(128.10, 35.33) --
	(125.03, 35.33) --
	cycle;
\definecolor{fillColor}{RGB}{247,192,26}

\path[draw=drawColor,line width= 0.4pt,line join=round,line cap=round,fill=fillColor] (161.18, 81.29) circle (  1.53);
\definecolor{fillColor}{RGB}{78,155,133}

\path[fill=fillColor] (159.65, 31.71) --
	(162.71, 31.71) --
	(162.71, 34.78) --
	(159.65, 34.78) --
	cycle;
\definecolor{fillColor}{RGB}{247,192,26}

\path[draw=drawColor,line width= 0.4pt,line join=round,line cap=round,fill=fillColor] (195.79,109.48) circle (  1.53);
\definecolor{fillColor}{RGB}{78,155,133}

\path[fill=fillColor] (194.26, 34.15) --
	(197.33, 34.15) --
	(197.33, 37.22) --
	(194.26, 37.22) --
	cycle;
\end{scope}
\begin{scope}
\path[clip] (  0.00,  0.00) rectangle (209.58,115.63);
\definecolor{drawColor}{RGB}{0,0,0}

\path[draw=drawColor,line width= 0.5pt,line join=round] ( 32.24, 22.31) --
	( 32.24,113.63);

\path[draw=drawColor,line width= 0.5pt,line join=round] ( 33.38,111.66) --
	( 32.24,113.63) --
	( 31.11,111.66);
\end{scope}
\begin{scope}
\path[clip] (  0.00,  0.00) rectangle (209.58,115.63);
\definecolor{drawColor}{gray}{0.30}

\node[text=drawColor,anchor=base east,inner sep=0pt, outer sep=0pt, scale=  0.80] at ( 28.19, 19.77) {10};

\node[text=drawColor,anchor=base east,inner sep=0pt, outer sep=0pt, scale=  0.80] at ( 28.19, 64.40) {100};

\node[text=drawColor,anchor=base east,inner sep=0pt, outer sep=0pt, scale=  0.80] at ( 28.19,109.02) {1000};
\end{scope}
\begin{scope}
\path[clip] (  0.00,  0.00) rectangle (209.58,115.63);
\definecolor{drawColor}{gray}{0.20}

\path[draw=drawColor,line width= 0.5pt,line join=round] ( 29.99, 22.53) --
	( 32.24, 22.53);

\path[draw=drawColor,line width= 0.5pt,line join=round] ( 29.99, 67.15) --
	( 32.24, 67.15);

\path[draw=drawColor,line width= 0.5pt,line join=round] ( 29.99,111.77) --
	( 32.24,111.77);
\end{scope}
\begin{scope}
\path[clip] (  0.00,  0.00) rectangle (209.58,115.63);
\definecolor{drawColor}{RGB}{0,0,0}

\path[draw=drawColor,line width= 0.5pt,line join=round] ( 32.24, 22.31) --
	(203.58, 22.31);

\path[draw=drawColor,line width= 0.5pt,line join=round] (201.61, 21.17) --
	(203.58, 22.31) --
	(201.61, 23.45);
\end{scope}
\begin{scope}
\path[clip] (  0.00,  0.00) rectangle (209.58,115.63);
\definecolor{drawColor}{gray}{0.20}

\path[draw=drawColor,line width= 0.5pt,line join=round] ( 40.03, 20.06) --
	( 40.03, 22.31);

\path[draw=drawColor,line width= 0.5pt,line join=round] ( 57.34, 20.06) --
	( 57.34, 22.31);

\path[draw=drawColor,line width= 0.5pt,line join=round] ( 91.95, 20.06) --
	( 91.95, 22.31);

\path[draw=drawColor,line width= 0.5pt,line join=round] (126.57, 20.06) --
	(126.57, 22.31);

\path[draw=drawColor,line width= 0.5pt,line join=round] (161.18, 20.06) --
	(161.18, 22.31);

\path[draw=drawColor,line width= 0.5pt,line join=round] (195.79, 20.06) --
	(195.79, 22.31);
\end{scope}
\begin{scope}
\path[clip] (  0.00,  0.00) rectangle (209.58,115.63);
\definecolor{drawColor}{gray}{0.30}

\node[text=drawColor,anchor=base,inner sep=0pt, outer sep=0pt, scale=  0.80] at ( 40.03, 12.75) {2};

\node[text=drawColor,anchor=base,inner sep=0pt, outer sep=0pt, scale=  0.80] at ( 57.34, 12.75) {4};

\node[text=drawColor,anchor=base,inner sep=0pt, outer sep=0pt, scale=  0.80] at ( 91.95, 12.75) {8};

\node[text=drawColor,anchor=base,inner sep=0pt, outer sep=0pt, scale=  0.80] at (126.57, 12.75) {12};

\node[text=drawColor,anchor=base,inner sep=0pt, outer sep=0pt, scale=  0.80] at (161.18, 12.75) {16};

\node[text=drawColor,anchor=base,inner sep=0pt, outer sep=0pt, scale=  0.80] at (195.79, 12.75) {20};
\end{scope}
\begin{scope}
\path[clip] (  0.00,  0.00) rectangle (209.58,115.63);
\definecolor{drawColor}{RGB}{0,0,0}

\node[text=drawColor,anchor=base,inner sep=0pt, outer sep=0pt, scale=  0.90] at (117.91,  2.75) {domain size $d$};
\end{scope}
\begin{scope}
\path[clip] (  0.00,  0.00) rectangle (209.58,115.63);
\definecolor{drawColor}{RGB}{0,0,0}

\node[text=drawColor,rotate= 90.00,anchor=base,inner sep=0pt, outer sep=0pt, scale=  0.90] at (  8.20, 67.97) {time (ms)};
\end{scope}
\begin{scope}
\path[clip] (  0.00,  0.00) rectangle (209.58,115.63);

\path[] ( 35.62, 97.23) rectangle (104.26,113.60);
\end{scope}
\begin{scope}
\path[clip] (  0.00,  0.00) rectangle (209.58,115.63);
\definecolor{drawColor}{RGB}{247,192,26}

\path[draw=drawColor,line width= 0.5pt,line join=round] ( 36.70,106.50) -- ( 45.38,106.50);
\end{scope}
\begin{scope}
\path[clip] (  0.00,  0.00) rectangle (209.58,115.63);
\definecolor{drawColor}{RGB}{247,192,26}
\definecolor{fillColor}{RGB}{247,192,26}

\path[draw=drawColor,line width= 0.4pt,line join=round,line cap=round,fill=fillColor] ( 41.04,106.50) circle (  1.53);
\end{scope}
\begin{scope}
\path[clip] (  0.00,  0.00) rectangle (209.58,115.63);
\definecolor{drawColor}{RGB}{78,155,133}

\path[draw=drawColor,line width= 0.5pt,dash pattern=on 4pt off 4pt ,line join=round] ( 36.70,100.32) -- ( 45.38,100.32);
\end{scope}
\begin{scope}
\path[clip] (  0.00,  0.00) rectangle (209.58,115.63);
\definecolor{fillColor}{RGB}{78,155,133}

\path[fill=fillColor] ( 39.51, 98.79) --
	( 42.57, 98.79) --
	( 42.57,101.86) --
	( 39.51,101.86) --
	cycle;
\end{scope}
\begin{scope}
\path[clip] (  0.00,  0.00) rectangle (209.58,115.63);
\definecolor{drawColor}{RGB}{0,0,0}

\node[text=drawColor,anchor=base west,inner sep=0pt, outer sep=0pt, scale=  0.70] at ( 50.96,104.09) {LVE (ACP)};
\end{scope}
\begin{scope}
\path[clip] (  0.00,  0.00) rectangle (209.58,115.63);
\definecolor{drawColor}{RGB}{0,0,0}

\node[text=drawColor,anchor=base west,inner sep=0pt, outer sep=0pt, scale=  0.70] at ( 50.96, 97.91) {LVE (ACP $\pm \varepsilon$)};
\end{scope}
\end{tikzpicture}}
		\caption{\acs{lve} run time (ms).}
		\label{fig:commutative_approx_plot_times_k=7_eps=0.1}
	\end{subfigure}
	\begin{subfigure}[t]{0.49\linewidth}
		\centering
		\resizebox{\linewidth}{!}{
\begin{tikzpicture}[x=1pt,y=1pt]
\definecolor{fillColor}{RGB}{255,255,255}
\path[use as bounding box,fill=fillColor,fill opacity=0.00] (0,0) rectangle (209.58,115.63);
\begin{scope}
\path[clip] (  0.00,  0.00) rectangle (209.58,115.63);
\definecolor{drawColor}{RGB}{255,255,255}
\definecolor{fillColor}{RGB}{255,255,255}

\path[draw=drawColor,line width= 0.5pt,line join=round,line cap=round,fill=fillColor] (  0.00,  0.00) rectangle (209.58,115.63);
\end{scope}
\begin{scope}
\path[clip] ( 26.47, 22.31) rectangle (203.58,113.63);
\definecolor{fillColor}{RGB}{255,255,255}

\path[fill=fillColor] ( 26.47, 22.31) rectangle (203.58,113.63);
\definecolor{drawColor}{RGB}{78,155,133}
\definecolor{fillColor}{RGB}{78,155,133}

\path[draw=drawColor,draw opacity=0.20,line width= 0.4pt,line join=round,line cap=round,fill=fillColor,fill opacity=0.20] ( 43.61, 67.09) circle (  0.78);

\path[draw=drawColor,draw opacity=0.20,line width= 0.4pt,line join=round,line cap=round,fill=fillColor,fill opacity=0.20] ( 43.61, 69.34) circle (  0.78);
\definecolor{drawColor}{RGB}{78,155,133}

\path[draw=drawColor,line width= 0.6pt,line join=round] ( 43.61, 68.03) -- ( 43.61, 68.03);

\path[draw=drawColor,line width= 0.6pt,line join=round] ( 43.61, 67.93) -- ( 43.61, 67.92);

\path[draw=drawColor,line width= 0.6pt,fill=fillColor,fill opacity=0.20] ( 32.90, 68.03) --
	( 32.90, 67.93) --
	( 54.32, 67.93) --
	( 54.32, 68.03) --
	( 32.90, 68.03) --
	cycle;

\path[draw=drawColor,line width= 1.1pt] ( 32.90, 67.97) -- ( 54.32, 67.97);
\definecolor{drawColor}{RGB}{78,155,133}

\path[draw=drawColor,draw opacity=0.20,line width= 0.4pt,line join=round,line cap=round,fill=fillColor,fill opacity=0.20] ( 72.17, 68.05) circle (  0.78);
\definecolor{drawColor}{RGB}{78,155,133}

\path[draw=drawColor,line width= 0.6pt,line join=round] ( 72.17, 67.99) -- ( 72.17, 68.02);

\path[draw=drawColor,line width= 0.6pt,line join=round] ( 72.17, 67.96) -- ( 72.17, 67.94);

\path[draw=drawColor,line width= 0.6pt,fill=fillColor,fill opacity=0.20] ( 61.46, 67.99) --
	( 61.46, 67.96) --
	( 82.89, 67.96) --
	( 82.89, 67.99) --
	( 61.46, 67.99) --
	cycle;

\path[draw=drawColor,line width= 1.1pt] ( 61.46, 67.97) -- ( 82.89, 67.97);
\definecolor{drawColor}{RGB}{78,155,133}

\path[draw=drawColor,draw opacity=0.20,line width= 0.4pt,line join=round,line cap=round,fill=fillColor,fill opacity=0.20] (100.74, 68.02) circle (  0.78);

\path[draw=drawColor,draw opacity=0.20,line width= 0.4pt,line join=round,line cap=round,fill=fillColor,fill opacity=0.20] (100.74, 67.77) circle (  0.78);
\definecolor{drawColor}{RGB}{78,155,133}

\path[draw=drawColor,line width= 0.6pt,line join=round] (100.74, 67.97) -- (100.74, 67.97);

\path[draw=drawColor,line width= 0.6pt,line join=round] (100.74, 67.97) -- (100.74, 67.97);

\path[draw=drawColor,line width= 0.6pt,fill=fillColor,fill opacity=0.20] ( 90.03, 67.97) --
	( 90.03, 67.97) --
	(111.45, 67.97) --
	(111.45, 67.97) --
	( 90.03, 67.97) --
	cycle;

\path[draw=drawColor,line width= 1.1pt] ( 90.03, 67.97) -- (111.45, 67.97);
\definecolor{drawColor}{RGB}{78,155,133}

\path[draw=drawColor,draw opacity=0.20,line width= 0.4pt,line join=round,line cap=round,fill=fillColor,fill opacity=0.20] (129.31, 30.24) circle (  0.78);

\path[draw=drawColor,draw opacity=0.20,line width= 0.4pt,line join=round,line cap=round,fill=fillColor,fill opacity=0.20] (129.31, 80.06) circle (  0.78);
\definecolor{drawColor}{RGB}{78,155,133}

\path[draw=drawColor,line width= 0.6pt,line join=round] (129.31, 68.24) -- (129.31, 68.94);

\path[draw=drawColor,line width= 0.6pt,line join=round] (129.31, 66.37) -- (129.31, 65.41);

\path[draw=drawColor,line width= 0.6pt,fill=fillColor,fill opacity=0.20] (118.60, 68.24) --
	(118.60, 66.37) --
	(140.02, 66.37) --
	(140.02, 68.24) --
	(118.60, 68.24) --
	cycle;

\path[draw=drawColor,line width= 1.1pt] (118.60, 67.97) -- (140.02, 67.97);
\definecolor{drawColor}{RGB}{78,155,133}

\path[draw=drawColor,draw opacity=0.20,line width= 0.4pt,line join=round,line cap=round,fill=fillColor,fill opacity=0.20] (157.88, 34.38) circle (  0.78);

\path[draw=drawColor,draw opacity=0.20,line width= 0.4pt,line join=round,line cap=round,fill=fillColor,fill opacity=0.20] (157.88,101.85) circle (  0.78);
\definecolor{drawColor}{RGB}{78,155,133}

\path[draw=drawColor,line width= 0.6pt,line join=round] (157.88, 69.55) -- (157.88, 70.24);

\path[draw=drawColor,line width= 0.6pt,line join=round] (157.88, 64.98) -- (157.88, 60.77);

\path[draw=drawColor,line width= 0.6pt,fill=fillColor,fill opacity=0.20] (147.16, 69.55) --
	(147.16, 64.98) --
	(168.59, 64.98) --
	(168.59, 69.55) --
	(147.16, 69.55) --
	cycle;

\path[draw=drawColor,line width= 1.1pt] (147.16, 67.31) -- (168.59, 67.31);
\definecolor{drawColor}{RGB}{78,155,133}

\path[draw=drawColor,draw opacity=0.20,line width= 0.4pt,line join=round,line cap=round,fill=fillColor,fill opacity=0.20] (186.44, 36.20) circle (  0.78);
\definecolor{drawColor}{RGB}{78,155,133}

\path[draw=drawColor,line width= 0.6pt,line join=round] (186.44, 69.62) -- (186.44, 70.08);

\path[draw=drawColor,line width= 0.6pt,line join=round] (186.44, 63.90) -- (186.44, 58.32);

\path[draw=drawColor,line width= 0.6pt,fill=fillColor,fill opacity=0.20] (175.73, 69.62) --
	(175.73, 63.90) --
	(197.16, 63.90) --
	(197.16, 69.62) --
	(175.73, 69.62) --
	cycle;

\path[draw=drawColor,line width= 1.1pt] (175.73, 66.99) -- (197.16, 66.99);
\end{scope}
\begin{scope}
\path[clip] (  0.00,  0.00) rectangle (209.58,115.63);
\definecolor{drawColor}{RGB}{0,0,0}

\path[draw=drawColor,line width= 0.5pt,line join=round] ( 26.47, 22.31) --
	( 26.47,113.63);

\path[draw=drawColor,line width= 0.5pt,line join=round] ( 27.61,111.66) --
	( 26.47,113.63) --
	( 25.33,111.66);
\end{scope}
\begin{scope}
\path[clip] (  0.00,  0.00) rectangle (209.58,115.63);
\definecolor{drawColor}{gray}{0.30}

\node[text=drawColor,anchor=base east,inner sep=0pt, outer sep=0pt, scale=  0.80] at ( 22.42, 22.11) {0.6};

\node[text=drawColor,anchor=base east,inner sep=0pt, outer sep=0pt, scale=  0.80] at ( 22.42, 43.67) {0.8};

\node[text=drawColor,anchor=base east,inner sep=0pt, outer sep=0pt, scale=  0.80] at ( 22.42, 65.22) {1.0};

\node[text=drawColor,anchor=base east,inner sep=0pt, outer sep=0pt, scale=  0.80] at ( 22.42, 86.77) {1.2};

\node[text=drawColor,anchor=base east,inner sep=0pt, outer sep=0pt, scale=  0.80] at ( 22.42,108.32) {1.4};
\end{scope}
\begin{scope}
\path[clip] (  0.00,  0.00) rectangle (209.58,115.63);
\definecolor{drawColor}{gray}{0.20}

\path[draw=drawColor,line width= 0.5pt,line join=round] ( 24.22, 24.87) --
	( 26.47, 24.87);

\path[draw=drawColor,line width= 0.5pt,line join=round] ( 24.22, 46.42) --
	( 26.47, 46.42);

\path[draw=drawColor,line width= 0.5pt,line join=round] ( 24.22, 67.97) --
	( 26.47, 67.97);

\path[draw=drawColor,line width= 0.5pt,line join=round] ( 24.22, 89.52) --
	( 26.47, 89.52);

\path[draw=drawColor,line width= 0.5pt,line join=round] ( 24.22,111.08) --
	( 26.47,111.08);
\end{scope}
\begin{scope}
\path[clip] (  0.00,  0.00) rectangle (209.58,115.63);
\definecolor{drawColor}{RGB}{0,0,0}

\path[draw=drawColor,line width= 0.5pt,line join=round] ( 26.47, 22.31) --
	(203.58, 22.31);

\path[draw=drawColor,line width= 0.5pt,line join=round] (201.61, 21.17) --
	(203.58, 22.31) --
	(201.61, 23.45);
\end{scope}
\begin{scope}
\path[clip] (  0.00,  0.00) rectangle (209.58,115.63);
\definecolor{drawColor}{gray}{0.20}

\path[draw=drawColor,line width= 0.5pt,line join=round] ( 43.61, 20.06) --
	( 43.61, 22.31);

\path[draw=drawColor,line width= 0.5pt,line join=round] ( 72.17, 20.06) --
	( 72.17, 22.31);

\path[draw=drawColor,line width= 0.5pt,line join=round] (100.74, 20.06) --
	(100.74, 22.31);

\path[draw=drawColor,line width= 0.5pt,line join=round] (129.31, 20.06) --
	(129.31, 22.31);

\path[draw=drawColor,line width= 0.5pt,line join=round] (157.88, 20.06) --
	(157.88, 22.31);

\path[draw=drawColor,line width= 0.5pt,line join=round] (186.44, 20.06) --
	(186.44, 22.31);
\end{scope}
\begin{scope}
\path[clip] (  0.00,  0.00) rectangle (209.58,115.63);
\definecolor{drawColor}{gray}{0.30}

\node[text=drawColor,anchor=base,inner sep=0pt, outer sep=0pt, scale=  0.80] at ( 43.61, 12.75) {2};

\node[text=drawColor,anchor=base,inner sep=0pt, outer sep=0pt, scale=  0.80] at ( 72.17, 12.75) {4};

\node[text=drawColor,anchor=base,inner sep=0pt, outer sep=0pt, scale=  0.80] at (100.74, 12.75) {8};

\node[text=drawColor,anchor=base,inner sep=0pt, outer sep=0pt, scale=  0.80] at (129.31, 12.75) {12};

\node[text=drawColor,anchor=base,inner sep=0pt, outer sep=0pt, scale=  0.80] at (157.88, 12.75) {16};

\node[text=drawColor,anchor=base,inner sep=0pt, outer sep=0pt, scale=  0.80] at (186.44, 12.75) {20};
\end{scope}
\begin{scope}
\path[clip] (  0.00,  0.00) rectangle (209.58,115.63);
\definecolor{drawColor}{RGB}{0,0,0}

\node[text=drawColor,anchor=base,inner sep=0pt, outer sep=0pt, scale=  0.90] at (115.03,  2.75) {domain size $d$};
\end{scope}
\begin{scope}
\path[clip] (  0.00,  0.00) rectangle (209.58,115.63);
\definecolor{drawColor}{RGB}{0,0,0}

\node[text=drawColor,rotate= 90.00,anchor=base,inner sep=0pt, outer sep=0pt, scale=  0.90] at (  8.20, 67.97) {$p' \mathbin{/} p$};
\end{scope}
\end{tikzpicture}}
		\caption{Query result quotient $p' \mathbin{/} p$.}
		\label{fig:commutative_approx_plot_quots_k=7_eps=0.1}
	\end{subfigure}
	\caption{(a) Run time of \ac{lve} on the compressed model returned by \ac{acp} and its $\varepsilon$-relaxed variant (\acs{acp} $\pm \varepsilon$), and (b) distribution of the per-query quotient $p' \mathbin{/} p$ between the approximate result $p'$ and the exact result $p$ for input \acp{fg} containing $k = 7$ $\varepsilon$-commutative factors with $\varepsilon = 0.1$.}
	\label{fig:commutative_approx_plot_te_k=7_eps=0.1}
\end{figure}

The averaged results in \cref{fig:commutative_approx_plot_avg} aggregate over the number of $\varepsilon$-commutative factors~$k$ and the tolerance~$\varepsilon$.
To disentangle the influence of these two parameters, \crefrange{fig:commutative_approx_plot_te_k=1_eps=0.001}{fig:commutative_approx_plot_te_k=7_eps=0.1} report the \ac{lve} run time and the per-query quotient $p' \mathbin{/} p$ separately for each combination of $k \in \{1, 3, 7\}$ and $\varepsilon \in \{0.001, 0.01, 0.1\}$.
The run time behaviour is qualitatively consistent across all configurations: The run time of \ac{lve} on the $\varepsilon$-relaxed model stays almost constant in the domain size, whereas exact \ac{acp}---which must leave the $\varepsilon$-commutative factors uncompressed---slows down steeply, so the speedup grows exponentially with the domain size~$d$.
The speedup is essentially independent of $\varepsilon$ (given that the $\varepsilon$-deviation in the model is within the chosen $\varepsilon$-value used when running \ac{acp}, which is the case in our experiments), since the structure of the compressed model depends only on the commutativity pattern and not on the magnitude of $\varepsilon$, while it increases with the number of $\varepsilon$-commutative factors~$k$, as each additional compressed factor widens the gap between the two models.
At the same time, the per-query quotient remains tightly concentrated around the optimal value of one in every scenario.
In line with the theoretical bound, increasing $\varepsilon$ from $0.001$ to $0.1$ slightly widens the spread of the quotient, and increasing the number of $\varepsilon$-commutative factors~$k$ admits a few more outliers, since each compressed factor potentially contributes to the overall deviation.
Even for the largest tested $k = 7$ and for the largest tested $\varepsilon = 0.1$, however, the quotient stays well within the theoretical guarantee, confirming that the bound is loose in practice.

\begin{figure}[t]
	\centering
	\begin{subfigure}[t]{0.49\linewidth}
		\centering
		\resizebox{\linewidth}{!}{
\begin{tikzpicture}[x=1pt,y=1pt]
\definecolor{fillColor}{RGB}{255,255,255}
\path[use as bounding box,fill=fillColor,fill opacity=0.00] (0,0) rectangle (209.58,115.63);
\begin{scope}
\path[clip] (  0.00,  0.00) rectangle (209.58,115.63);
\definecolor{drawColor}{RGB}{255,255,255}
\definecolor{fillColor}{RGB}{255,255,255}

\path[draw=drawColor,line width= 0.5pt,line join=round,line cap=round,fill=fillColor] (  0.00,  0.00) rectangle (209.58,115.63);
\end{scope}
\begin{scope}
\path[clip] ( 24.25, 22.31) rectangle (203.58,113.63);
\definecolor{fillColor}{RGB}{255,255,255}

\path[fill=fillColor] ( 24.25, 22.31) rectangle (203.58,113.63);
\definecolor{drawColor}{RGB}{78,155,133}

\path[draw=drawColor,line width= 0.5pt,line join=round] ( 32.40, 26.46) --
	( 50.51, 32.94) --
	( 86.74, 46.87) --
	(122.97, 62.11) --
	(159.20, 80.47) --
	(195.43,109.48);
\definecolor{fillColor}{RGB}{78,155,133}

\path[fill=fillColor] ( 30.86, 24.93) --
	( 33.93, 24.93) --
	( 33.93, 28.00) --
	( 30.86, 28.00) --
	cycle;

\path[fill=fillColor] ( 48.98, 31.41) --
	( 52.05, 31.41) --
	( 52.05, 34.48) --
	( 48.98, 34.48) --
	cycle;

\path[fill=fillColor] ( 85.21, 45.34) --
	( 88.28, 45.34) --
	( 88.28, 48.40) --
	( 85.21, 48.40) --
	cycle;

\path[fill=fillColor] (121.44, 60.57) --
	(124.51, 60.57) --
	(124.51, 63.64) --
	(121.44, 63.64) --
	cycle;

\path[fill=fillColor] (157.67, 78.94) --
	(160.74, 78.94) --
	(160.74, 82.00) --
	(157.67, 82.00) --
	cycle;

\path[fill=fillColor] (193.90,107.95) --
	(196.97,107.95) --
	(196.97,111.01) --
	(193.90,111.01) --
	cycle;
\end{scope}
\begin{scope}
\path[clip] (  0.00,  0.00) rectangle (209.58,115.63);
\definecolor{drawColor}{RGB}{0,0,0}

\path[draw=drawColor,line width= 0.5pt,line join=round] ( 24.25, 22.31) --
	( 24.25,113.63);

\path[draw=drawColor,line width= 0.5pt,line join=round] ( 25.38,111.66) --
	( 24.25,113.63) --
	( 23.11,111.66);
\end{scope}
\begin{scope}
\path[clip] (  0.00,  0.00) rectangle (209.58,115.63);
\definecolor{drawColor}{gray}{0.30}

\node[text=drawColor,anchor=base east,inner sep=0pt, outer sep=0pt, scale=  0.80] at ( 20.20, 27.55) {1};

\node[text=drawColor,anchor=base east,inner sep=0pt, outer sep=0pt, scale=  0.80] at ( 20.20, 50.98) {3};

\node[text=drawColor,anchor=base east,inner sep=0pt, outer sep=0pt, scale=  0.80] at ( 20.20, 76.65) {10};

\node[text=drawColor,anchor=base east,inner sep=0pt, outer sep=0pt, scale=  0.80] at ( 20.20,100.08) {30};
\end{scope}
\begin{scope}
\path[clip] (  0.00,  0.00) rectangle (209.58,115.63);
\definecolor{drawColor}{gray}{0.20}

\path[draw=drawColor,line width= 0.5pt,line join=round] ( 22.00, 30.31) --
	( 24.25, 30.31);

\path[draw=drawColor,line width= 0.5pt,line join=round] ( 22.00, 53.74) --
	( 24.25, 53.74);

\path[draw=drawColor,line width= 0.5pt,line join=round] ( 22.00, 79.41) --
	( 24.25, 79.41);

\path[draw=drawColor,line width= 0.5pt,line join=round] ( 22.00,102.83) --
	( 24.25,102.83);
\end{scope}
\begin{scope}
\path[clip] (  0.00,  0.00) rectangle (209.58,115.63);
\definecolor{drawColor}{RGB}{0,0,0}

\path[draw=drawColor,line width= 0.5pt,line join=round] ( 24.25, 22.31) --
	(203.58, 22.31);

\path[draw=drawColor,line width= 0.5pt,line join=round] (201.61, 21.17) --
	(203.58, 22.31) --
	(201.61, 23.45);
\end{scope}
\begin{scope}
\path[clip] (  0.00,  0.00) rectangle (209.58,115.63);
\definecolor{drawColor}{gray}{0.20}

\path[draw=drawColor,line width= 0.5pt,line join=round] ( 32.40, 20.06) --
	( 32.40, 22.31);

\path[draw=drawColor,line width= 0.5pt,line join=round] ( 50.51, 20.06) --
	( 50.51, 22.31);

\path[draw=drawColor,line width= 0.5pt,line join=round] ( 86.74, 20.06) --
	( 86.74, 22.31);

\path[draw=drawColor,line width= 0.5pt,line join=round] (122.97, 20.06) --
	(122.97, 22.31);

\path[draw=drawColor,line width= 0.5pt,line join=round] (159.20, 20.06) --
	(159.20, 22.31);

\path[draw=drawColor,line width= 0.5pt,line join=round] (195.43, 20.06) --
	(195.43, 22.31);
\end{scope}
\begin{scope}
\path[clip] (  0.00,  0.00) rectangle (209.58,115.63);
\definecolor{drawColor}{gray}{0.30}

\node[text=drawColor,anchor=base,inner sep=0pt, outer sep=0pt, scale=  0.80] at ( 32.40, 12.75) {2};

\node[text=drawColor,anchor=base,inner sep=0pt, outer sep=0pt, scale=  0.80] at ( 50.51, 12.75) {4};

\node[text=drawColor,anchor=base,inner sep=0pt, outer sep=0pt, scale=  0.80] at ( 86.74, 12.75) {8};

\node[text=drawColor,anchor=base,inner sep=0pt, outer sep=0pt, scale=  0.80] at (122.97, 12.75) {12};

\node[text=drawColor,anchor=base,inner sep=0pt, outer sep=0pt, scale=  0.80] at (159.20, 12.75) {16};

\node[text=drawColor,anchor=base,inner sep=0pt, outer sep=0pt, scale=  0.80] at (195.43, 12.75) {20};
\end{scope}
\begin{scope}
\path[clip] (  0.00,  0.00) rectangle (209.58,115.63);
\definecolor{drawColor}{RGB}{0,0,0}

\node[text=drawColor,anchor=base,inner sep=0pt, outer sep=0pt, scale=  0.90] at (113.91,  2.75) {domain size $d$};
\end{scope}
\begin{scope}
\path[clip] (  0.00,  0.00) rectangle (209.58,115.63);
\definecolor{drawColor}{RGB}{0,0,0}

\node[text=drawColor,rotate= 90.00,anchor=base,inner sep=0pt, outer sep=0pt, scale=  0.90] at (  8.20, 67.97) {speedup};
\end{scope}
\end{tikzpicture}}
		\caption{Speedup factor for \ac{lve}.}
		\label{fig:commutative_approx_plot_speedup_avg}
	\end{subfigure}
	\begin{subfigure}[t]{0.49\linewidth}
		\centering
		\resizebox{\linewidth}{!}{
\begin{tikzpicture}[x=1pt,y=1pt]
\definecolor{fillColor}{RGB}{255,255,255}
\path[use as bounding box,fill=fillColor,fill opacity=0.00] (0,0) rectangle (209.58,115.63);
\begin{scope}
\path[clip] (  0.00,  0.00) rectangle (209.58,115.63);
\definecolor{drawColor}{RGB}{255,255,255}
\definecolor{fillColor}{RGB}{255,255,255}

\path[draw=drawColor,line width= 0.5pt,line join=round,line cap=round,fill=fillColor] (  0.00,  0.00) rectangle (209.58,115.63);
\end{scope}
\begin{scope}
\path[clip] ( 38.02, 22.31) rectangle (203.58,113.63);
\definecolor{fillColor}{RGB}{255,255,255}

\path[fill=fillColor] ( 38.02, 22.31) rectangle (203.58,113.63);
\definecolor{drawColor}{RGB}{247,192,26}

\path[draw=drawColor,line width= 0.5pt,line join=round] ( 45.55, 26.94) --
	( 62.27, 31.81) --
	( 95.72, 47.17) --
	(129.16, 66.67) --
	(162.61, 88.65) --
	(196.06,109.48);
\definecolor{drawColor}{RGB}{78,155,133}

\path[draw=drawColor,line width= 0.5pt,dash pattern=on 4pt off 4pt ,line join=round] ( 45.55, 26.46) --
	( 62.27, 30.78) --
	( 95.72, 44.23) --
	(129.16, 61.13) --
	(162.61, 79.33) --
	(196.06,105.07);
\definecolor{drawColor}{RGB}{247,192,26}
\definecolor{fillColor}{RGB}{247,192,26}

\path[draw=drawColor,line width= 0.4pt,line join=round,line cap=round,fill=fillColor] ( 45.55, 26.94) circle (  1.53);
\definecolor{fillColor}{RGB}{78,155,133}

\path[fill=fillColor] ( 44.01, 24.93) --
	( 47.08, 24.93) --
	( 47.08, 28.00) --
	( 44.01, 28.00) --
	cycle;
\definecolor{fillColor}{RGB}{247,192,26}

\path[draw=drawColor,line width= 0.4pt,line join=round,line cap=round,fill=fillColor] ( 62.27, 31.81) circle (  1.53);
\definecolor{fillColor}{RGB}{78,155,133}

\path[fill=fillColor] ( 60.74, 29.25) --
	( 63.80, 29.25) --
	( 63.80, 32.32) --
	( 60.74, 32.32) --
	cycle;
\definecolor{fillColor}{RGB}{247,192,26}

\path[draw=drawColor,line width= 0.4pt,line join=round,line cap=round,fill=fillColor] ( 95.72, 47.17) circle (  1.53);
\definecolor{fillColor}{RGB}{78,155,133}

\path[fill=fillColor] ( 94.18, 42.70) --
	( 97.25, 42.70) --
	( 97.25, 45.77) --
	( 94.18, 45.77) --
	cycle;
\definecolor{fillColor}{RGB}{247,192,26}

\path[draw=drawColor,line width= 0.4pt,line join=round,line cap=round,fill=fillColor] (129.16, 66.67) circle (  1.53);
\definecolor{fillColor}{RGB}{78,155,133}

\path[fill=fillColor] (127.63, 59.59) --
	(130.70, 59.59) --
	(130.70, 62.66) --
	(127.63, 62.66) --
	cycle;
\definecolor{fillColor}{RGB}{247,192,26}

\path[draw=drawColor,line width= 0.4pt,line join=round,line cap=round,fill=fillColor] (162.61, 88.65) circle (  1.53);
\definecolor{fillColor}{RGB}{78,155,133}

\path[fill=fillColor] (161.08, 77.80) --
	(164.14, 77.80) --
	(164.14, 80.87) --
	(161.08, 80.87) --
	cycle;
\definecolor{fillColor}{RGB}{247,192,26}

\path[draw=drawColor,line width= 0.4pt,line join=round,line cap=round,fill=fillColor] (196.06,109.48) circle (  1.53);
\definecolor{fillColor}{RGB}{78,155,133}

\path[fill=fillColor] (194.52,103.54) --
	(197.59,103.54) --
	(197.59,106.60) --
	(194.52,106.60) --
	cycle;
\end{scope}
\begin{scope}
\path[clip] (  0.00,  0.00) rectangle (209.58,115.63);
\definecolor{drawColor}{RGB}{0,0,0}

\path[draw=drawColor,line width= 0.5pt,line join=round] ( 38.02, 22.31) --
	( 38.02,113.63);

\path[draw=drawColor,line width= 0.5pt,line join=round] ( 39.16,111.66) --
	( 38.02,113.63) --
	( 36.88,111.66);
\end{scope}
\begin{scope}
\path[clip] (  0.00,  0.00) rectangle (209.58,115.63);
\definecolor{drawColor}{gray}{0.30}

\node[text=drawColor,anchor=base east,inner sep=0pt, outer sep=0pt, scale=  0.80] at ( 33.97, 41.43) {1e+01};

\node[text=drawColor,anchor=base east,inner sep=0pt, outer sep=0pt, scale=  0.80] at ( 33.97, 68.67) {1e+03};

\node[text=drawColor,anchor=base east,inner sep=0pt, outer sep=0pt, scale=  0.80] at ( 33.97, 95.90) {1e+05};
\end{scope}
\begin{scope}
\path[clip] (  0.00,  0.00) rectangle (209.58,115.63);
\definecolor{drawColor}{gray}{0.20}

\path[draw=drawColor,line width= 0.5pt,line join=round] ( 35.77, 44.18) --
	( 38.02, 44.18);

\path[draw=drawColor,line width= 0.5pt,line join=round] ( 35.77, 71.42) --
	( 38.02, 71.42);

\path[draw=drawColor,line width= 0.5pt,line join=round] ( 35.77, 98.66) --
	( 38.02, 98.66);
\end{scope}
\begin{scope}
\path[clip] (  0.00,  0.00) rectangle (209.58,115.63);
\definecolor{drawColor}{RGB}{0,0,0}

\path[draw=drawColor,line width= 0.5pt,line join=round] ( 38.02, 22.31) --
	(203.58, 22.31);

\path[draw=drawColor,line width= 0.5pt,line join=round] (201.61, 21.17) --
	(203.58, 22.31) --
	(201.61, 23.45);
\end{scope}
\begin{scope}
\path[clip] (  0.00,  0.00) rectangle (209.58,115.63);
\definecolor{drawColor}{gray}{0.20}

\path[draw=drawColor,line width= 0.5pt,line join=round] ( 45.55, 20.06) --
	( 45.55, 22.31);

\path[draw=drawColor,line width= 0.5pt,line join=round] ( 62.27, 20.06) --
	( 62.27, 22.31);

\path[draw=drawColor,line width= 0.5pt,line join=round] ( 95.72, 20.06) --
	( 95.72, 22.31);

\path[draw=drawColor,line width= 0.5pt,line join=round] (129.16, 20.06) --
	(129.16, 22.31);

\path[draw=drawColor,line width= 0.5pt,line join=round] (162.61, 20.06) --
	(162.61, 22.31);

\path[draw=drawColor,line width= 0.5pt,line join=round] (196.06, 20.06) --
	(196.06, 22.31);
\end{scope}
\begin{scope}
\path[clip] (  0.00,  0.00) rectangle (209.58,115.63);
\definecolor{drawColor}{gray}{0.30}

\node[text=drawColor,anchor=base,inner sep=0pt, outer sep=0pt, scale=  0.80] at ( 45.55, 12.75) {2};

\node[text=drawColor,anchor=base,inner sep=0pt, outer sep=0pt, scale=  0.80] at ( 62.27, 12.75) {4};

\node[text=drawColor,anchor=base,inner sep=0pt, outer sep=0pt, scale=  0.80] at ( 95.72, 12.75) {8};

\node[text=drawColor,anchor=base,inner sep=0pt, outer sep=0pt, scale=  0.80] at (129.16, 12.75) {12};

\node[text=drawColor,anchor=base,inner sep=0pt, outer sep=0pt, scale=  0.80] at (162.61, 12.75) {16};

\node[text=drawColor,anchor=base,inner sep=0pt, outer sep=0pt, scale=  0.80] at (196.06, 12.75) {20};
\end{scope}
\begin{scope}
\path[clip] (  0.00,  0.00) rectangle (209.58,115.63);
\definecolor{drawColor}{RGB}{0,0,0}

\node[text=drawColor,anchor=base,inner sep=0pt, outer sep=0pt, scale=  0.90] at (120.80,  2.75) {domain size $d$};
\end{scope}
\begin{scope}
\path[clip] (  0.00,  0.00) rectangle (209.58,115.63);
\definecolor{drawColor}{RGB}{0,0,0}

\node[text=drawColor,rotate= 90.00,anchor=base,inner sep=0pt, outer sep=0pt, scale=  0.90] at (  8.20, 67.97) {time (s)};
\end{scope}
\begin{scope}
\path[clip] (  0.00,  0.00) rectangle (209.58,115.63);

\path[] ( 44.20, 97.89) rectangle ( 91.45,114.76);
\end{scope}
\begin{scope}
\path[clip] (  0.00,  0.00) rectangle (209.58,115.63);
\definecolor{drawColor}{RGB}{247,192,26}

\path[draw=drawColor,line width= 0.5pt,line join=round] ( 45.28,107.17) -- ( 53.95,107.17);
\end{scope}
\begin{scope}
\path[clip] (  0.00,  0.00) rectangle (209.58,115.63);
\definecolor{drawColor}{RGB}{247,192,26}
\definecolor{fillColor}{RGB}{247,192,26}

\path[draw=drawColor,line width= 0.4pt,line join=round,line cap=round,fill=fillColor] ( 49.62,107.17) circle (  1.53);
\end{scope}
\begin{scope}
\path[clip] (  0.00,  0.00) rectangle (209.58,115.63);
\definecolor{drawColor}{RGB}{78,155,133}

\path[draw=drawColor,line width= 0.5pt,dash pattern=on 4pt off 4pt ,line join=round] ( 45.28,100.99) -- ( 53.95,100.99);
\end{scope}
\begin{scope}
\path[clip] (  0.00,  0.00) rectangle (209.58,115.63);
\definecolor{fillColor}{RGB}{78,155,133}

\path[fill=fillColor] ( 48.08, 99.45) --
	( 51.15, 99.45) --
	( 51.15,102.52) --
	( 48.08,102.52) --
	cycle;
\end{scope}
\begin{scope}
\path[clip] (  0.00,  0.00) rectangle (209.58,115.63);
\definecolor{drawColor}{RGB}{0,0,0}

\node[text=drawColor,anchor=base west,inner sep=0pt, outer sep=0pt, scale=  0.70] at ( 59.54,104.76) {ACP};
\end{scope}
\begin{scope}
\path[clip] (  0.00,  0.00) rectangle (209.58,115.63);
\definecolor{drawColor}{RGB}{0,0,0}

\node[text=drawColor,anchor=base west,inner sep=0pt, outer sep=0pt, scale=  0.70] at ( 59.54, 98.58) {ACP $\pm \varepsilon$};
\end{scope}
\end{tikzpicture}}
		\caption{Color passing run times.}
		\label{fig:commutative_approx_plot_cpruntime_avg}
	\end{subfigure}
	\caption{(a) Speedup for online inference run times (i.e., ratio of the run time of \ac{lve} on the compressed model returned by \ac{acp} and the run time of \ac{lve} on the compressed model returned by the $\varepsilon$-relaxed variant of \ac{acp}). (b) Offline run times of \acs{acp} and its $\varepsilon$-relaxed variant (\acs{acp} $\pm \varepsilon$).}
	\label{fig:commutative_approx_plot_aux_avg}
\end{figure}

A plot showcasing the resulting speedup of online inference (averaged over all choices of $k$ and $\varepsilon$) is given in \cref{fig:commutative_approx_plot_speedup_avg}.
It becomes evident that the speedup grows exponentially with the domain size~$d$ and eventually reaches a factor of more than $30$ for the choice of $d = 20$.
\Cref{fig:commutative_approx_plot_cpruntime_avg} further displays the offline run times of \ac{acp} and its $\varepsilon$-relaxed variant (averaged over all choices of $k$ and $\varepsilon$).
As expected, the offline run time of the $\varepsilon$-relaxed variant does not surpass the offline run time of exact \ac{acp}, demonstrating that the relaxation does not introduce any overhead at all.
In fact, $\varepsilon$-relaxed \ac{acp} even slightly reduces the run time compared to exact \ac{acp}, since $\varepsilon$-relaxed \ac{acp} is able to find $\varepsilon$-commutative factors and hence requires less iterations when searching for subsets of $\varepsilon$-commutative arguments (\ac{acp} needs more iterations as it checks all subsets of arguments until it is able to conclude that no subset of strictly commutative arguments exists).

\begin{figure}[t]
	\centering
	\begin{subfigure}[t]{0.49\linewidth}
		\centering
		\resizebox{\linewidth}{!}{
\begin{tikzpicture}[x=1pt,y=1pt]
\definecolor{fillColor}{RGB}{255,255,255}
\path[use as bounding box,fill=fillColor,fill opacity=0.00] (0,0) rectangle (209.58,126.47);
\begin{scope}
\path[clip] (  0.00,  0.00) rectangle (209.58,126.47);
\definecolor{drawColor}{RGB}{255,255,255}
\definecolor{fillColor}{RGB}{255,255,255}

\path[draw=drawColor,line width= 0.5pt,line join=round,line cap=round,fill=fillColor] (  0.00,  0.00) rectangle (209.58,126.47);
\end{scope}
\begin{scope}
\path[clip] ( 30.47, 30.49) rectangle (203.58,124.47);
\definecolor{fillColor}{RGB}{255,255,255}

\path[fill=fillColor] ( 30.47, 30.49) rectangle (203.58,124.47);
\definecolor{drawColor}{RGB}{247,192,26}

\path[draw=drawColor,line width= 0.5pt,line join=round] ( 38.34,116.13) --
	( 55.82,105.96) --
	( 90.80, 99.00) --
	(125.77, 94.59) --
	(160.74, 94.43) --
	(195.71, 94.34);
\definecolor{drawColor}{RGB}{78,155,133}

\path[draw=drawColor,line width= 0.5pt,dash pattern=on 4pt off 4pt ,line join=round] ( 38.34, 85.33) --
	( 55.82, 59.77) --
	( 90.80, 46.76) --
	(125.77, 42.34) --
	(160.74, 40.50) --
	(195.71, 39.38);
\definecolor{drawColor}{RGB}{247,192,26}
\definecolor{fillColor}{RGB}{247,192,26}

\path[draw=drawColor,line width= 0.4pt,line join=round,line cap=round,fill=fillColor] ( 38.34,116.13) circle (  1.53);
\definecolor{fillColor}{RGB}{78,155,133}

\path[fill=fillColor] ( 36.80, 83.80) --
	( 39.87, 83.80) --
	( 39.87, 86.86) --
	( 36.80, 86.86) --
	cycle;
\definecolor{fillColor}{RGB}{247,192,26}

\path[draw=drawColor,line width= 0.4pt,line join=round,line cap=round,fill=fillColor] ( 55.82,105.96) circle (  1.53);
\definecolor{fillColor}{RGB}{78,155,133}

\path[fill=fillColor] ( 54.29, 58.24) --
	( 57.36, 58.24) --
	( 57.36, 61.30) --
	( 54.29, 61.30) --
	cycle;
\definecolor{fillColor}{RGB}{247,192,26}

\path[draw=drawColor,line width= 0.4pt,line join=round,line cap=round,fill=fillColor] ( 90.80, 99.00) circle (  1.53);
\definecolor{fillColor}{RGB}{78,155,133}

\path[fill=fillColor] ( 89.26, 45.23) --
	( 92.33, 45.23) --
	( 92.33, 48.30) --
	( 89.26, 48.30) --
	cycle;
\definecolor{fillColor}{RGB}{247,192,26}

\path[draw=drawColor,line width= 0.4pt,line join=round,line cap=round,fill=fillColor] (125.77, 94.59) circle (  1.53);
\definecolor{fillColor}{RGB}{78,155,133}

\path[fill=fillColor] (124.23, 40.81) --
	(127.30, 40.81) --
	(127.30, 43.87) --
	(124.23, 43.87) --
	cycle;
\definecolor{fillColor}{RGB}{247,192,26}

\path[draw=drawColor,line width= 0.4pt,line join=round,line cap=round,fill=fillColor] (160.74, 94.43) circle (  1.53);
\definecolor{fillColor}{RGB}{78,155,133}

\path[fill=fillColor] (159.21, 38.97) --
	(162.27, 38.97) --
	(162.27, 42.03) --
	(159.21, 42.03) --
	cycle;
\definecolor{fillColor}{RGB}{247,192,26}

\path[draw=drawColor,line width= 0.4pt,line join=round,line cap=round,fill=fillColor] (195.71, 94.34) circle (  1.53);
\definecolor{fillColor}{RGB}{78,155,133}

\path[fill=fillColor] (194.18, 37.84) --
	(197.25, 37.84) --
	(197.25, 40.91) --
	(194.18, 40.91) --
	cycle;
\end{scope}
\begin{scope}
\path[clip] (  0.00,  0.00) rectangle (209.58,126.47);
\definecolor{drawColor}{RGB}{0,0,0}

\path[draw=drawColor,line width= 0.5pt,line join=round] ( 30.47, 30.49) --
	( 30.47,124.47);

\path[draw=drawColor,line width= 0.5pt,line join=round] ( 31.60,122.50) --
	( 30.47,124.47) --
	( 29.33,122.50);
\end{scope}
\begin{scope}
\path[clip] (  0.00,  0.00) rectangle (209.58,126.47);
\definecolor{drawColor}{gray}{0.30}

\node[text=drawColor,anchor=base east,inner sep=0pt, outer sep=0pt, scale=  0.80] at ( 26.42, 32.01) {0.00};

\node[text=drawColor,anchor=base east,inner sep=0pt, outer sep=0pt, scale=  0.80] at ( 26.42, 52.35) {0.25};

\node[text=drawColor,anchor=base east,inner sep=0pt, outer sep=0pt, scale=  0.80] at ( 26.42, 72.69) {0.50};

\node[text=drawColor,anchor=base east,inner sep=0pt, outer sep=0pt, scale=  0.80] at ( 26.42, 93.04) {0.75};

\node[text=drawColor,anchor=base east,inner sep=0pt, outer sep=0pt, scale=  0.80] at ( 26.42,113.38) {1.00};
\end{scope}
\begin{scope}
\path[clip] (  0.00,  0.00) rectangle (209.58,126.47);
\definecolor{drawColor}{gray}{0.20}

\path[draw=drawColor,line width= 0.5pt,line join=round] ( 28.22, 34.77) --
	( 30.47, 34.77);

\path[draw=drawColor,line width= 0.5pt,line join=round] ( 28.22, 55.11) --
	( 30.47, 55.11);

\path[draw=drawColor,line width= 0.5pt,line join=round] ( 28.22, 75.45) --
	( 30.47, 75.45);

\path[draw=drawColor,line width= 0.5pt,line join=round] ( 28.22, 95.79) --
	( 30.47, 95.79);

\path[draw=drawColor,line width= 0.5pt,line join=round] ( 28.22,116.13) --
	( 30.47,116.13);
\end{scope}
\begin{scope}
\path[clip] (  0.00,  0.00) rectangle (209.58,126.47);
\definecolor{drawColor}{RGB}{0,0,0}

\path[draw=drawColor,line width= 0.5pt,line join=round] ( 30.47, 30.49) --
	(203.58, 30.49);

\path[draw=drawColor,line width= 0.5pt,line join=round] (201.61, 29.36) --
	(203.58, 30.49) --
	(201.61, 31.63);
\end{scope}
\begin{scope}
\path[clip] (  0.00,  0.00) rectangle (209.58,126.47);
\definecolor{drawColor}{gray}{0.20}

\path[draw=drawColor,line width= 0.5pt,line join=round] ( 38.34, 28.24) --
	( 38.34, 30.49);

\path[draw=drawColor,line width= 0.5pt,line join=round] ( 55.82, 28.24) --
	( 55.82, 30.49);

\path[draw=drawColor,line width= 0.5pt,line join=round] ( 90.80, 28.24) --
	( 90.80, 30.49);

\path[draw=drawColor,line width= 0.5pt,line join=round] (125.77, 28.24) --
	(125.77, 30.49);

\path[draw=drawColor,line width= 0.5pt,line join=round] (160.74, 28.24) --
	(160.74, 30.49);

\path[draw=drawColor,line width= 0.5pt,line join=round] (195.71, 28.24) --
	(195.71, 30.49);
\end{scope}
\begin{scope}
\path[clip] (  0.00,  0.00) rectangle (209.58,126.47);
\definecolor{drawColor}{gray}{0.30}

\node[text=drawColor,anchor=base,inner sep=0pt, outer sep=0pt, scale=  0.80] at ( 38.34, 20.93) {2};

\node[text=drawColor,anchor=base,inner sep=0pt, outer sep=0pt, scale=  0.80] at ( 55.82, 20.93) {4};

\node[text=drawColor,anchor=base,inner sep=0pt, outer sep=0pt, scale=  0.80] at ( 90.80, 20.93) {8};

\node[text=drawColor,anchor=base,inner sep=0pt, outer sep=0pt, scale=  0.80] at (125.77, 20.93) {12};

\node[text=drawColor,anchor=base,inner sep=0pt, outer sep=0pt, scale=  0.80] at (160.74, 20.93) {16};

\node[text=drawColor,anchor=base,inner sep=0pt, outer sep=0pt, scale=  0.80] at (195.71, 20.93) {20};
\end{scope}
\begin{scope}
\path[clip] (  0.00,  0.00) rectangle (209.58,126.47);
\definecolor{drawColor}{RGB}{0,0,0}

\node[text=drawColor,anchor=base,inner sep=0pt, outer sep=0pt, scale=  0.90] at (117.02, 10.93) {domain size $d$};
\end{scope}
\begin{scope}
\path[clip] (  0.00,  0.00) rectangle (209.58,126.47);
\definecolor{drawColor}{RGB}{0,0,0}

\node[text=drawColor,rotate= 90.00,anchor=base,inner sep=0pt, outer sep=0pt, scale=  0.90] at (  8.20, 77.48) {$|M'| \mathbin{/} |M|$};
\end{scope}
\begin{scope}
\path[clip] (  0.00,  0.00) rectangle (209.58,126.47);

\path[] ( 49.41,  1.00) rectangle (184.64,  7.18);
\end{scope}
\begin{scope}
\path[clip] (  0.00,  0.00) rectangle (209.58,126.47);
\definecolor{drawColor}{RGB}{247,192,26}

\path[draw=drawColor,line width= 0.5pt,line join=round] ( 54.99,  4.09) -- ( 63.67,  4.09);
\end{scope}
\begin{scope}
\path[clip] (  0.00,  0.00) rectangle (209.58,126.47);
\definecolor{drawColor}{RGB}{247,192,26}
\definecolor{fillColor}{RGB}{247,192,26}

\path[draw=drawColor,line width= 0.4pt,line join=round,line cap=round,fill=fillColor] ( 59.33,  4.09) circle (  1.53);
\end{scope}
\begin{scope}
\path[clip] (  0.00,  0.00) rectangle (209.58,126.47);
\definecolor{drawColor}{RGB}{78,155,133}

\path[draw=drawColor,line width= 0.5pt,dash pattern=on 4pt off 4pt ,line join=round] (117.09,  4.09) -- (125.76,  4.09);
\end{scope}
\begin{scope}
\path[clip] (  0.00,  0.00) rectangle (209.58,126.47);
\definecolor{fillColor}{RGB}{78,155,133}

\path[fill=fillColor] (119.89,  2.56) --
	(122.96,  2.56) --
	(122.96,  5.62) --
	(119.89,  5.62) --
	cycle;
\end{scope}
\begin{scope}
\path[clip] (  0.00,  0.00) rectangle (209.58,126.47);
\definecolor{drawColor}{RGB}{0,0,0}

\node[text=drawColor,anchor=base west,inner sep=0pt, outer sep=0pt, scale=  0.70] at ( 69.25,  1.68) {LVE (ACP)};
\end{scope}
\begin{scope}
\path[clip] (  0.00,  0.00) rectangle (209.58,126.47);
\definecolor{drawColor}{RGB}{0,0,0}

\node[text=drawColor,anchor=base west,inner sep=0pt, outer sep=0pt, scale=  0.70] at (131.34,  1.68) {LVE (ACP $\pm \varepsilon$)};
\end{scope}
\end{tikzpicture}}
		\caption{Compression ratio for $\varepsilon = 0.001$.}
		\label{fig:commutative_approx_plot_comp_k=1_eps=0.001}
	\end{subfigure}
	\begin{subfigure}[t]{0.49\linewidth}
		\centering
		\resizebox{\linewidth}{!}{
\begin{tikzpicture}[x=1pt,y=1pt]
\definecolor{fillColor}{RGB}{255,255,255}
\path[use as bounding box,fill=fillColor,fill opacity=0.00] (0,0) rectangle (209.58,126.47);
\begin{scope}
\path[clip] (  0.00,  0.00) rectangle (209.58,126.47);
\definecolor{drawColor}{RGB}{255,255,255}
\definecolor{fillColor}{RGB}{255,255,255}

\path[draw=drawColor,line width= 0.5pt,line join=round,line cap=round,fill=fillColor] (  0.00,  0.00) rectangle (209.58,126.47);
\end{scope}
\begin{scope}
\path[clip] ( 30.47, 30.49) rectangle (203.58,124.47);
\definecolor{fillColor}{RGB}{255,255,255}

\path[fill=fillColor] ( 30.47, 30.49) rectangle (203.58,124.47);
\definecolor{drawColor}{RGB}{247,192,26}

\path[draw=drawColor,line width= 0.5pt,line join=round] ( 38.34,116.13) --
	( 55.82,105.96) --
	( 90.80, 99.00) --
	(125.77, 94.59) --
	(160.74, 94.43) --
	(195.71, 94.34);
\definecolor{drawColor}{RGB}{78,155,133}

\path[draw=drawColor,line width= 0.5pt,dash pattern=on 4pt off 4pt ,line join=round] ( 38.34, 85.33) --
	( 55.82, 59.77) --
	( 90.80, 46.76) --
	(125.77, 42.34) --
	(160.74, 40.50) --
	(195.71, 39.38);
\definecolor{drawColor}{RGB}{247,192,26}
\definecolor{fillColor}{RGB}{247,192,26}

\path[draw=drawColor,line width= 0.4pt,line join=round,line cap=round,fill=fillColor] ( 38.34,116.13) circle (  1.53);
\definecolor{fillColor}{RGB}{78,155,133}

\path[fill=fillColor] ( 36.80, 83.80) --
	( 39.87, 83.80) --
	( 39.87, 86.86) --
	( 36.80, 86.86) --
	cycle;
\definecolor{fillColor}{RGB}{247,192,26}

\path[draw=drawColor,line width= 0.4pt,line join=round,line cap=round,fill=fillColor] ( 55.82,105.96) circle (  1.53);
\definecolor{fillColor}{RGB}{78,155,133}

\path[fill=fillColor] ( 54.29, 58.24) --
	( 57.36, 58.24) --
	( 57.36, 61.30) --
	( 54.29, 61.30) --
	cycle;
\definecolor{fillColor}{RGB}{247,192,26}

\path[draw=drawColor,line width= 0.4pt,line join=round,line cap=round,fill=fillColor] ( 90.80, 99.00) circle (  1.53);
\definecolor{fillColor}{RGB}{78,155,133}

\path[fill=fillColor] ( 89.26, 45.23) --
	( 92.33, 45.23) --
	( 92.33, 48.30) --
	( 89.26, 48.30) --
	cycle;
\definecolor{fillColor}{RGB}{247,192,26}

\path[draw=drawColor,line width= 0.4pt,line join=round,line cap=round,fill=fillColor] (125.77, 94.59) circle (  1.53);
\definecolor{fillColor}{RGB}{78,155,133}

\path[fill=fillColor] (124.23, 40.81) --
	(127.30, 40.81) --
	(127.30, 43.87) --
	(124.23, 43.87) --
	cycle;
\definecolor{fillColor}{RGB}{247,192,26}

\path[draw=drawColor,line width= 0.4pt,line join=round,line cap=round,fill=fillColor] (160.74, 94.43) circle (  1.53);
\definecolor{fillColor}{RGB}{78,155,133}

\path[fill=fillColor] (159.21, 38.97) --
	(162.27, 38.97) --
	(162.27, 42.03) --
	(159.21, 42.03) --
	cycle;
\definecolor{fillColor}{RGB}{247,192,26}

\path[draw=drawColor,line width= 0.4pt,line join=round,line cap=round,fill=fillColor] (195.71, 94.34) circle (  1.53);
\definecolor{fillColor}{RGB}{78,155,133}

\path[fill=fillColor] (194.18, 37.84) --
	(197.25, 37.84) --
	(197.25, 40.91) --
	(194.18, 40.91) --
	cycle;
\end{scope}
\begin{scope}
\path[clip] (  0.00,  0.00) rectangle (209.58,126.47);
\definecolor{drawColor}{RGB}{0,0,0}

\path[draw=drawColor,line width= 0.5pt,line join=round] ( 30.47, 30.49) --
	( 30.47,124.47);

\path[draw=drawColor,line width= 0.5pt,line join=round] ( 31.60,122.50) --
	( 30.47,124.47) --
	( 29.33,122.50);
\end{scope}
\begin{scope}
\path[clip] (  0.00,  0.00) rectangle (209.58,126.47);
\definecolor{drawColor}{gray}{0.30}

\node[text=drawColor,anchor=base east,inner sep=0pt, outer sep=0pt, scale=  0.80] at ( 26.42, 32.01) {0.00};

\node[text=drawColor,anchor=base east,inner sep=0pt, outer sep=0pt, scale=  0.80] at ( 26.42, 52.35) {0.25};

\node[text=drawColor,anchor=base east,inner sep=0pt, outer sep=0pt, scale=  0.80] at ( 26.42, 72.69) {0.50};

\node[text=drawColor,anchor=base east,inner sep=0pt, outer sep=0pt, scale=  0.80] at ( 26.42, 93.04) {0.75};

\node[text=drawColor,anchor=base east,inner sep=0pt, outer sep=0pt, scale=  0.80] at ( 26.42,113.38) {1.00};
\end{scope}
\begin{scope}
\path[clip] (  0.00,  0.00) rectangle (209.58,126.47);
\definecolor{drawColor}{gray}{0.20}

\path[draw=drawColor,line width= 0.5pt,line join=round] ( 28.22, 34.77) --
	( 30.47, 34.77);

\path[draw=drawColor,line width= 0.5pt,line join=round] ( 28.22, 55.11) --
	( 30.47, 55.11);

\path[draw=drawColor,line width= 0.5pt,line join=round] ( 28.22, 75.45) --
	( 30.47, 75.45);

\path[draw=drawColor,line width= 0.5pt,line join=round] ( 28.22, 95.79) --
	( 30.47, 95.79);

\path[draw=drawColor,line width= 0.5pt,line join=round] ( 28.22,116.13) --
	( 30.47,116.13);
\end{scope}
\begin{scope}
\path[clip] (  0.00,  0.00) rectangle (209.58,126.47);
\definecolor{drawColor}{RGB}{0,0,0}

\path[draw=drawColor,line width= 0.5pt,line join=round] ( 30.47, 30.49) --
	(203.58, 30.49);

\path[draw=drawColor,line width= 0.5pt,line join=round] (201.61, 29.36) --
	(203.58, 30.49) --
	(201.61, 31.63);
\end{scope}
\begin{scope}
\path[clip] (  0.00,  0.00) rectangle (209.58,126.47);
\definecolor{drawColor}{gray}{0.20}

\path[draw=drawColor,line width= 0.5pt,line join=round] ( 38.34, 28.24) --
	( 38.34, 30.49);

\path[draw=drawColor,line width= 0.5pt,line join=round] ( 55.82, 28.24) --
	( 55.82, 30.49);

\path[draw=drawColor,line width= 0.5pt,line join=round] ( 90.80, 28.24) --
	( 90.80, 30.49);

\path[draw=drawColor,line width= 0.5pt,line join=round] (125.77, 28.24) --
	(125.77, 30.49);

\path[draw=drawColor,line width= 0.5pt,line join=round] (160.74, 28.24) --
	(160.74, 30.49);

\path[draw=drawColor,line width= 0.5pt,line join=round] (195.71, 28.24) --
	(195.71, 30.49);
\end{scope}
\begin{scope}
\path[clip] (  0.00,  0.00) rectangle (209.58,126.47);
\definecolor{drawColor}{gray}{0.30}

\node[text=drawColor,anchor=base,inner sep=0pt, outer sep=0pt, scale=  0.80] at ( 38.34, 20.93) {2};

\node[text=drawColor,anchor=base,inner sep=0pt, outer sep=0pt, scale=  0.80] at ( 55.82, 20.93) {4};

\node[text=drawColor,anchor=base,inner sep=0pt, outer sep=0pt, scale=  0.80] at ( 90.80, 20.93) {8};

\node[text=drawColor,anchor=base,inner sep=0pt, outer sep=0pt, scale=  0.80] at (125.77, 20.93) {12};

\node[text=drawColor,anchor=base,inner sep=0pt, outer sep=0pt, scale=  0.80] at (160.74, 20.93) {16};

\node[text=drawColor,anchor=base,inner sep=0pt, outer sep=0pt, scale=  0.80] at (195.71, 20.93) {20};
\end{scope}
\begin{scope}
\path[clip] (  0.00,  0.00) rectangle (209.58,126.47);
\definecolor{drawColor}{RGB}{0,0,0}

\node[text=drawColor,anchor=base,inner sep=0pt, outer sep=0pt, scale=  0.90] at (117.02, 10.93) {domain size $d$};
\end{scope}
\begin{scope}
\path[clip] (  0.00,  0.00) rectangle (209.58,126.47);
\definecolor{drawColor}{RGB}{0,0,0}

\node[text=drawColor,rotate= 90.00,anchor=base,inner sep=0pt, outer sep=0pt, scale=  0.90] at (  8.20, 77.48) {$|M'| \mathbin{/} |M|$};
\end{scope}
\begin{scope}
\path[clip] (  0.00,  0.00) rectangle (209.58,126.47);

\path[] ( 49.41,  1.00) rectangle (184.64,  7.18);
\end{scope}
\begin{scope}
\path[clip] (  0.00,  0.00) rectangle (209.58,126.47);
\definecolor{drawColor}{RGB}{247,192,26}

\path[draw=drawColor,line width= 0.5pt,line join=round] ( 54.99,  4.09) -- ( 63.67,  4.09);
\end{scope}
\begin{scope}
\path[clip] (  0.00,  0.00) rectangle (209.58,126.47);
\definecolor{drawColor}{RGB}{247,192,26}
\definecolor{fillColor}{RGB}{247,192,26}

\path[draw=drawColor,line width= 0.4pt,line join=round,line cap=round,fill=fillColor] ( 59.33,  4.09) circle (  1.53);
\end{scope}
\begin{scope}
\path[clip] (  0.00,  0.00) rectangle (209.58,126.47);
\definecolor{drawColor}{RGB}{78,155,133}

\path[draw=drawColor,line width= 0.5pt,dash pattern=on 4pt off 4pt ,line join=round] (117.09,  4.09) -- (125.76,  4.09);
\end{scope}
\begin{scope}
\path[clip] (  0.00,  0.00) rectangle (209.58,126.47);
\definecolor{fillColor}{RGB}{78,155,133}

\path[fill=fillColor] (119.89,  2.56) --
	(122.96,  2.56) --
	(122.96,  5.62) --
	(119.89,  5.62) --
	cycle;
\end{scope}
\begin{scope}
\path[clip] (  0.00,  0.00) rectangle (209.58,126.47);
\definecolor{drawColor}{RGB}{0,0,0}

\node[text=drawColor,anchor=base west,inner sep=0pt, outer sep=0pt, scale=  0.70] at ( 69.25,  1.68) {LVE (ACP)};
\end{scope}
\begin{scope}
\path[clip] (  0.00,  0.00) rectangle (209.58,126.47);
\definecolor{drawColor}{RGB}{0,0,0}

\node[text=drawColor,anchor=base west,inner sep=0pt, outer sep=0pt, scale=  0.70] at (131.34,  1.68) {LVE (ACP $\pm \varepsilon$)};
\end{scope}
\end{tikzpicture}}
		\caption{Compression ratio for $\varepsilon = 0.1$.}
		\label{fig:commutative_approx_plot_comp_k=1_eps=0.1}
	\end{subfigure}
	\caption{Compression ratio $\abs{M'} \mathbin{/} \abs{M}$ between the compressed model $M'$ and the original model $M$ for input \acp{fg} containing $k = 1$ $\varepsilon$-commutative factors with (a) $\varepsilon = 0.001$ and (b) $\varepsilon = 0.1$.}
	\label{fig:commutative_approx_plot_comp_k=1}
\end{figure}

\begin{figure}[t]
	\centering
	\begin{subfigure}[t]{0.49\linewidth}
		\centering
		\resizebox{\linewidth}{!}{
\begin{tikzpicture}[x=1pt,y=1pt]
\definecolor{fillColor}{RGB}{255,255,255}
\path[use as bounding box,fill=fillColor,fill opacity=0.00] (0,0) rectangle (209.58,126.47);
\begin{scope}
\path[clip] (  0.00,  0.00) rectangle (209.58,126.47);
\definecolor{drawColor}{RGB}{255,255,255}
\definecolor{fillColor}{RGB}{255,255,255}

\path[draw=drawColor,line width= 0.5pt,line join=round,line cap=round,fill=fillColor] (  0.00,  0.00) rectangle (209.58,126.47);
\end{scope}
\begin{scope}
\path[clip] ( 30.47, 30.49) rectangle (203.58,124.47);
\definecolor{fillColor}{RGB}{255,255,255}

\path[fill=fillColor] ( 30.47, 30.49) rectangle (203.58,124.47);
\definecolor{drawColor}{RGB}{247,192,26}

\path[draw=drawColor,line width= 0.5pt,line join=round] ( 38.34,116.13) --
	( 55.82,107.79) --
	( 90.80,101.17) --
	(125.77, 96.73) --
	(160.74, 96.51) --
	(195.71, 96.37);
\definecolor{drawColor}{RGB}{78,155,133}

\path[draw=drawColor,line width= 0.5pt,dash pattern=on 4pt off 4pt ,line join=round] ( 38.34, 89.72) --
	( 55.82, 64.60) --
	( 90.80, 49.89) --
	(125.77, 44.53) --
	(160.74, 42.21) --
	(195.71, 40.78);
\definecolor{drawColor}{RGB}{247,192,26}
\definecolor{fillColor}{RGB}{247,192,26}

\path[draw=drawColor,line width= 0.4pt,line join=round,line cap=round,fill=fillColor] ( 38.34,116.13) circle (  1.53);
\definecolor{fillColor}{RGB}{78,155,133}

\path[fill=fillColor] ( 36.80, 88.19) --
	( 39.87, 88.19) --
	( 39.87, 91.26) --
	( 36.80, 91.26) --
	cycle;
\definecolor{fillColor}{RGB}{247,192,26}

\path[draw=drawColor,line width= 0.4pt,line join=round,line cap=round,fill=fillColor] ( 55.82,107.79) circle (  1.53);
\definecolor{fillColor}{RGB}{78,155,133}

\path[fill=fillColor] ( 54.29, 63.07) --
	( 57.36, 63.07) --
	( 57.36, 66.13) --
	( 54.29, 66.13) --
	cycle;
\definecolor{fillColor}{RGB}{247,192,26}

\path[draw=drawColor,line width= 0.4pt,line join=round,line cap=round,fill=fillColor] ( 90.80,101.17) circle (  1.53);
\definecolor{fillColor}{RGB}{78,155,133}

\path[fill=fillColor] ( 89.26, 48.35) --
	( 92.33, 48.35) --
	( 92.33, 51.42) --
	( 89.26, 51.42) --
	cycle;
\definecolor{fillColor}{RGB}{247,192,26}

\path[draw=drawColor,line width= 0.4pt,line join=round,line cap=round,fill=fillColor] (125.77, 96.73) circle (  1.53);
\definecolor{fillColor}{RGB}{78,155,133}

\path[fill=fillColor] (124.23, 42.99) --
	(127.30, 42.99) --
	(127.30, 46.06) --
	(124.23, 46.06) --
	cycle;
\definecolor{fillColor}{RGB}{247,192,26}

\path[draw=drawColor,line width= 0.4pt,line join=round,line cap=round,fill=fillColor] (160.74, 96.51) circle (  1.53);
\definecolor{fillColor}{RGB}{78,155,133}

\path[fill=fillColor] (159.21, 40.68) --
	(162.27, 40.68) --
	(162.27, 43.74) --
	(159.21, 43.74) --
	cycle;
\definecolor{fillColor}{RGB}{247,192,26}

\path[draw=drawColor,line width= 0.4pt,line join=round,line cap=round,fill=fillColor] (195.71, 96.37) circle (  1.53);
\definecolor{fillColor}{RGB}{78,155,133}

\path[fill=fillColor] (194.18, 39.25) --
	(197.25, 39.25) --
	(197.25, 42.32) --
	(194.18, 42.32) --
	cycle;
\end{scope}
\begin{scope}
\path[clip] (  0.00,  0.00) rectangle (209.58,126.47);
\definecolor{drawColor}{RGB}{0,0,0}

\path[draw=drawColor,line width= 0.5pt,line join=round] ( 30.47, 30.49) --
	( 30.47,124.47);

\path[draw=drawColor,line width= 0.5pt,line join=round] ( 31.60,122.50) --
	( 30.47,124.47) --
	( 29.33,122.50);
\end{scope}
\begin{scope}
\path[clip] (  0.00,  0.00) rectangle (209.58,126.47);
\definecolor{drawColor}{gray}{0.30}

\node[text=drawColor,anchor=base east,inner sep=0pt, outer sep=0pt, scale=  0.80] at ( 26.42, 32.01) {0.00};

\node[text=drawColor,anchor=base east,inner sep=0pt, outer sep=0pt, scale=  0.80] at ( 26.42, 52.35) {0.25};

\node[text=drawColor,anchor=base east,inner sep=0pt, outer sep=0pt, scale=  0.80] at ( 26.42, 72.69) {0.50};

\node[text=drawColor,anchor=base east,inner sep=0pt, outer sep=0pt, scale=  0.80] at ( 26.42, 93.04) {0.75};

\node[text=drawColor,anchor=base east,inner sep=0pt, outer sep=0pt, scale=  0.80] at ( 26.42,113.38) {1.00};
\end{scope}
\begin{scope}
\path[clip] (  0.00,  0.00) rectangle (209.58,126.47);
\definecolor{drawColor}{gray}{0.20}

\path[draw=drawColor,line width= 0.5pt,line join=round] ( 28.22, 34.77) --
	( 30.47, 34.77);

\path[draw=drawColor,line width= 0.5pt,line join=round] ( 28.22, 55.11) --
	( 30.47, 55.11);

\path[draw=drawColor,line width= 0.5pt,line join=round] ( 28.22, 75.45) --
	( 30.47, 75.45);

\path[draw=drawColor,line width= 0.5pt,line join=round] ( 28.22, 95.79) --
	( 30.47, 95.79);

\path[draw=drawColor,line width= 0.5pt,line join=round] ( 28.22,116.13) --
	( 30.47,116.13);
\end{scope}
\begin{scope}
\path[clip] (  0.00,  0.00) rectangle (209.58,126.47);
\definecolor{drawColor}{RGB}{0,0,0}

\path[draw=drawColor,line width= 0.5pt,line join=round] ( 30.47, 30.49) --
	(203.58, 30.49);

\path[draw=drawColor,line width= 0.5pt,line join=round] (201.61, 29.36) --
	(203.58, 30.49) --
	(201.61, 31.63);
\end{scope}
\begin{scope}
\path[clip] (  0.00,  0.00) rectangle (209.58,126.47);
\definecolor{drawColor}{gray}{0.20}

\path[draw=drawColor,line width= 0.5pt,line join=round] ( 38.34, 28.24) --
	( 38.34, 30.49);

\path[draw=drawColor,line width= 0.5pt,line join=round] ( 55.82, 28.24) --
	( 55.82, 30.49);

\path[draw=drawColor,line width= 0.5pt,line join=round] ( 90.80, 28.24) --
	( 90.80, 30.49);

\path[draw=drawColor,line width= 0.5pt,line join=round] (125.77, 28.24) --
	(125.77, 30.49);

\path[draw=drawColor,line width= 0.5pt,line join=round] (160.74, 28.24) --
	(160.74, 30.49);

\path[draw=drawColor,line width= 0.5pt,line join=round] (195.71, 28.24) --
	(195.71, 30.49);
\end{scope}
\begin{scope}
\path[clip] (  0.00,  0.00) rectangle (209.58,126.47);
\definecolor{drawColor}{gray}{0.30}

\node[text=drawColor,anchor=base,inner sep=0pt, outer sep=0pt, scale=  0.80] at ( 38.34, 20.93) {2};

\node[text=drawColor,anchor=base,inner sep=0pt, outer sep=0pt, scale=  0.80] at ( 55.82, 20.93) {4};

\node[text=drawColor,anchor=base,inner sep=0pt, outer sep=0pt, scale=  0.80] at ( 90.80, 20.93) {8};

\node[text=drawColor,anchor=base,inner sep=0pt, outer sep=0pt, scale=  0.80] at (125.77, 20.93) {12};

\node[text=drawColor,anchor=base,inner sep=0pt, outer sep=0pt, scale=  0.80] at (160.74, 20.93) {16};

\node[text=drawColor,anchor=base,inner sep=0pt, outer sep=0pt, scale=  0.80] at (195.71, 20.93) {20};
\end{scope}
\begin{scope}
\path[clip] (  0.00,  0.00) rectangle (209.58,126.47);
\definecolor{drawColor}{RGB}{0,0,0}

\node[text=drawColor,anchor=base,inner sep=0pt, outer sep=0pt, scale=  0.90] at (117.02, 10.93) {domain size $d$};
\end{scope}
\begin{scope}
\path[clip] (  0.00,  0.00) rectangle (209.58,126.47);
\definecolor{drawColor}{RGB}{0,0,0}

\node[text=drawColor,rotate= 90.00,anchor=base,inner sep=0pt, outer sep=0pt, scale=  0.90] at (  8.20, 77.48) {$|M'| \mathbin{/} |M|$};
\end{scope}
\begin{scope}
\path[clip] (  0.00,  0.00) rectangle (209.58,126.47);

\path[] ( 49.41,  1.00) rectangle (184.64,  7.18);
\end{scope}
\begin{scope}
\path[clip] (  0.00,  0.00) rectangle (209.58,126.47);
\definecolor{drawColor}{RGB}{247,192,26}

\path[draw=drawColor,line width= 0.5pt,line join=round] ( 54.99,  4.09) -- ( 63.67,  4.09);
\end{scope}
\begin{scope}
\path[clip] (  0.00,  0.00) rectangle (209.58,126.47);
\definecolor{drawColor}{RGB}{247,192,26}
\definecolor{fillColor}{RGB}{247,192,26}

\path[draw=drawColor,line width= 0.4pt,line join=round,line cap=round,fill=fillColor] ( 59.33,  4.09) circle (  1.53);
\end{scope}
\begin{scope}
\path[clip] (  0.00,  0.00) rectangle (209.58,126.47);
\definecolor{drawColor}{RGB}{78,155,133}

\path[draw=drawColor,line width= 0.5pt,dash pattern=on 4pt off 4pt ,line join=round] (117.09,  4.09) -- (125.76,  4.09);
\end{scope}
\begin{scope}
\path[clip] (  0.00,  0.00) rectangle (209.58,126.47);
\definecolor{fillColor}{RGB}{78,155,133}

\path[fill=fillColor] (119.89,  2.56) --
	(122.96,  2.56) --
	(122.96,  5.62) --
	(119.89,  5.62) --
	cycle;
\end{scope}
\begin{scope}
\path[clip] (  0.00,  0.00) rectangle (209.58,126.47);
\definecolor{drawColor}{RGB}{0,0,0}

\node[text=drawColor,anchor=base west,inner sep=0pt, outer sep=0pt, scale=  0.70] at ( 69.25,  1.68) {LVE (ACP)};
\end{scope}
\begin{scope}
\path[clip] (  0.00,  0.00) rectangle (209.58,126.47);
\definecolor{drawColor}{RGB}{0,0,0}

\node[text=drawColor,anchor=base west,inner sep=0pt, outer sep=0pt, scale=  0.70] at (131.34,  1.68) {LVE (ACP $\pm \varepsilon$)};
\end{scope}
\end{tikzpicture}}
		\caption{Compression ratio for $\varepsilon = 0.001$.}
		\label{fig:commutative_approx_plot_comp_k=3_eps=0.001}
	\end{subfigure}
	\begin{subfigure}[t]{0.49\linewidth}
		\centering
		\resizebox{\linewidth}{!}{
\begin{tikzpicture}[x=1pt,y=1pt]
\definecolor{fillColor}{RGB}{255,255,255}
\path[use as bounding box,fill=fillColor,fill opacity=0.00] (0,0) rectangle (209.58,126.47);
\begin{scope}
\path[clip] (  0.00,  0.00) rectangle (209.58,126.47);
\definecolor{drawColor}{RGB}{255,255,255}
\definecolor{fillColor}{RGB}{255,255,255}

\path[draw=drawColor,line width= 0.5pt,line join=round,line cap=round,fill=fillColor] (  0.00,  0.00) rectangle (209.58,126.47);
\end{scope}
\begin{scope}
\path[clip] ( 30.47, 30.49) rectangle (203.58,124.47);
\definecolor{fillColor}{RGB}{255,255,255}

\path[fill=fillColor] ( 30.47, 30.49) rectangle (203.58,124.47);
\definecolor{drawColor}{RGB}{247,192,26}

\path[draw=drawColor,line width= 0.5pt,line join=round] ( 38.34,116.13) --
	( 55.82,107.79) --
	( 90.80,101.17) --
	(125.77, 96.73) --
	(160.74, 96.51) --
	(195.71, 96.37);
\definecolor{drawColor}{RGB}{78,155,133}

\path[draw=drawColor,line width= 0.5pt,dash pattern=on 4pt off 4pt ,line join=round] ( 38.34, 89.72) --
	( 55.82, 64.60) --
	( 90.80, 49.89) --
	(125.77, 44.53) --
	(160.74, 42.21) --
	(195.71, 40.78);
\definecolor{drawColor}{RGB}{247,192,26}
\definecolor{fillColor}{RGB}{247,192,26}

\path[draw=drawColor,line width= 0.4pt,line join=round,line cap=round,fill=fillColor] ( 38.34,116.13) circle (  1.53);
\definecolor{fillColor}{RGB}{78,155,133}

\path[fill=fillColor] ( 36.80, 88.19) --
	( 39.87, 88.19) --
	( 39.87, 91.26) --
	( 36.80, 91.26) --
	cycle;
\definecolor{fillColor}{RGB}{247,192,26}

\path[draw=drawColor,line width= 0.4pt,line join=round,line cap=round,fill=fillColor] ( 55.82,107.79) circle (  1.53);
\definecolor{fillColor}{RGB}{78,155,133}

\path[fill=fillColor] ( 54.29, 63.07) --
	( 57.36, 63.07) --
	( 57.36, 66.13) --
	( 54.29, 66.13) --
	cycle;
\definecolor{fillColor}{RGB}{247,192,26}

\path[draw=drawColor,line width= 0.4pt,line join=round,line cap=round,fill=fillColor] ( 90.80,101.17) circle (  1.53);
\definecolor{fillColor}{RGB}{78,155,133}

\path[fill=fillColor] ( 89.26, 48.35) --
	( 92.33, 48.35) --
	( 92.33, 51.42) --
	( 89.26, 51.42) --
	cycle;
\definecolor{fillColor}{RGB}{247,192,26}

\path[draw=drawColor,line width= 0.4pt,line join=round,line cap=round,fill=fillColor] (125.77, 96.73) circle (  1.53);
\definecolor{fillColor}{RGB}{78,155,133}

\path[fill=fillColor] (124.23, 42.99) --
	(127.30, 42.99) --
	(127.30, 46.06) --
	(124.23, 46.06) --
	cycle;
\definecolor{fillColor}{RGB}{247,192,26}

\path[draw=drawColor,line width= 0.4pt,line join=round,line cap=round,fill=fillColor] (160.74, 96.51) circle (  1.53);
\definecolor{fillColor}{RGB}{78,155,133}

\path[fill=fillColor] (159.21, 40.68) --
	(162.27, 40.68) --
	(162.27, 43.74) --
	(159.21, 43.74) --
	cycle;
\definecolor{fillColor}{RGB}{247,192,26}

\path[draw=drawColor,line width= 0.4pt,line join=round,line cap=round,fill=fillColor] (195.71, 96.37) circle (  1.53);
\definecolor{fillColor}{RGB}{78,155,133}

\path[fill=fillColor] (194.18, 39.25) --
	(197.25, 39.25) --
	(197.25, 42.32) --
	(194.18, 42.32) --
	cycle;
\end{scope}
\begin{scope}
\path[clip] (  0.00,  0.00) rectangle (209.58,126.47);
\definecolor{drawColor}{RGB}{0,0,0}

\path[draw=drawColor,line width= 0.5pt,line join=round] ( 30.47, 30.49) --
	( 30.47,124.47);

\path[draw=drawColor,line width= 0.5pt,line join=round] ( 31.60,122.50) --
	( 30.47,124.47) --
	( 29.33,122.50);
\end{scope}
\begin{scope}
\path[clip] (  0.00,  0.00) rectangle (209.58,126.47);
\definecolor{drawColor}{gray}{0.30}

\node[text=drawColor,anchor=base east,inner sep=0pt, outer sep=0pt, scale=  0.80] at ( 26.42, 32.01) {0.00};

\node[text=drawColor,anchor=base east,inner sep=0pt, outer sep=0pt, scale=  0.80] at ( 26.42, 52.35) {0.25};

\node[text=drawColor,anchor=base east,inner sep=0pt, outer sep=0pt, scale=  0.80] at ( 26.42, 72.69) {0.50};

\node[text=drawColor,anchor=base east,inner sep=0pt, outer sep=0pt, scale=  0.80] at ( 26.42, 93.04) {0.75};

\node[text=drawColor,anchor=base east,inner sep=0pt, outer sep=0pt, scale=  0.80] at ( 26.42,113.38) {1.00};
\end{scope}
\begin{scope}
\path[clip] (  0.00,  0.00) rectangle (209.58,126.47);
\definecolor{drawColor}{gray}{0.20}

\path[draw=drawColor,line width= 0.5pt,line join=round] ( 28.22, 34.77) --
	( 30.47, 34.77);

\path[draw=drawColor,line width= 0.5pt,line join=round] ( 28.22, 55.11) --
	( 30.47, 55.11);

\path[draw=drawColor,line width= 0.5pt,line join=round] ( 28.22, 75.45) --
	( 30.47, 75.45);

\path[draw=drawColor,line width= 0.5pt,line join=round] ( 28.22, 95.79) --
	( 30.47, 95.79);

\path[draw=drawColor,line width= 0.5pt,line join=round] ( 28.22,116.13) --
	( 30.47,116.13);
\end{scope}
\begin{scope}
\path[clip] (  0.00,  0.00) rectangle (209.58,126.47);
\definecolor{drawColor}{RGB}{0,0,0}

\path[draw=drawColor,line width= 0.5pt,line join=round] ( 30.47, 30.49) --
	(203.58, 30.49);

\path[draw=drawColor,line width= 0.5pt,line join=round] (201.61, 29.36) --
	(203.58, 30.49) --
	(201.61, 31.63);
\end{scope}
\begin{scope}
\path[clip] (  0.00,  0.00) rectangle (209.58,126.47);
\definecolor{drawColor}{gray}{0.20}

\path[draw=drawColor,line width= 0.5pt,line join=round] ( 38.34, 28.24) --
	( 38.34, 30.49);

\path[draw=drawColor,line width= 0.5pt,line join=round] ( 55.82, 28.24) --
	( 55.82, 30.49);

\path[draw=drawColor,line width= 0.5pt,line join=round] ( 90.80, 28.24) --
	( 90.80, 30.49);

\path[draw=drawColor,line width= 0.5pt,line join=round] (125.77, 28.24) --
	(125.77, 30.49);

\path[draw=drawColor,line width= 0.5pt,line join=round] (160.74, 28.24) --
	(160.74, 30.49);

\path[draw=drawColor,line width= 0.5pt,line join=round] (195.71, 28.24) --
	(195.71, 30.49);
\end{scope}
\begin{scope}
\path[clip] (  0.00,  0.00) rectangle (209.58,126.47);
\definecolor{drawColor}{gray}{0.30}

\node[text=drawColor,anchor=base,inner sep=0pt, outer sep=0pt, scale=  0.80] at ( 38.34, 20.93) {2};

\node[text=drawColor,anchor=base,inner sep=0pt, outer sep=0pt, scale=  0.80] at ( 55.82, 20.93) {4};

\node[text=drawColor,anchor=base,inner sep=0pt, outer sep=0pt, scale=  0.80] at ( 90.80, 20.93) {8};

\node[text=drawColor,anchor=base,inner sep=0pt, outer sep=0pt, scale=  0.80] at (125.77, 20.93) {12};

\node[text=drawColor,anchor=base,inner sep=0pt, outer sep=0pt, scale=  0.80] at (160.74, 20.93) {16};

\node[text=drawColor,anchor=base,inner sep=0pt, outer sep=0pt, scale=  0.80] at (195.71, 20.93) {20};
\end{scope}
\begin{scope}
\path[clip] (  0.00,  0.00) rectangle (209.58,126.47);
\definecolor{drawColor}{RGB}{0,0,0}

\node[text=drawColor,anchor=base,inner sep=0pt, outer sep=0pt, scale=  0.90] at (117.02, 10.93) {domain size $d$};
\end{scope}
\begin{scope}
\path[clip] (  0.00,  0.00) rectangle (209.58,126.47);
\definecolor{drawColor}{RGB}{0,0,0}

\node[text=drawColor,rotate= 90.00,anchor=base,inner sep=0pt, outer sep=0pt, scale=  0.90] at (  8.20, 77.48) {$|M'| \mathbin{/} |M|$};
\end{scope}
\begin{scope}
\path[clip] (  0.00,  0.00) rectangle (209.58,126.47);

\path[] ( 49.41,  1.00) rectangle (184.64,  7.18);
\end{scope}
\begin{scope}
\path[clip] (  0.00,  0.00) rectangle (209.58,126.47);
\definecolor{drawColor}{RGB}{247,192,26}

\path[draw=drawColor,line width= 0.5pt,line join=round] ( 54.99,  4.09) -- ( 63.67,  4.09);
\end{scope}
\begin{scope}
\path[clip] (  0.00,  0.00) rectangle (209.58,126.47);
\definecolor{drawColor}{RGB}{247,192,26}
\definecolor{fillColor}{RGB}{247,192,26}

\path[draw=drawColor,line width= 0.4pt,line join=round,line cap=round,fill=fillColor] ( 59.33,  4.09) circle (  1.53);
\end{scope}
\begin{scope}
\path[clip] (  0.00,  0.00) rectangle (209.58,126.47);
\definecolor{drawColor}{RGB}{78,155,133}

\path[draw=drawColor,line width= 0.5pt,dash pattern=on 4pt off 4pt ,line join=round] (117.09,  4.09) -- (125.76,  4.09);
\end{scope}
\begin{scope}
\path[clip] (  0.00,  0.00) rectangle (209.58,126.47);
\definecolor{fillColor}{RGB}{78,155,133}

\path[fill=fillColor] (119.89,  2.56) --
	(122.96,  2.56) --
	(122.96,  5.62) --
	(119.89,  5.62) --
	cycle;
\end{scope}
\begin{scope}
\path[clip] (  0.00,  0.00) rectangle (209.58,126.47);
\definecolor{drawColor}{RGB}{0,0,0}

\node[text=drawColor,anchor=base west,inner sep=0pt, outer sep=0pt, scale=  0.70] at ( 69.25,  1.68) {LVE (ACP)};
\end{scope}
\begin{scope}
\path[clip] (  0.00,  0.00) rectangle (209.58,126.47);
\definecolor{drawColor}{RGB}{0,0,0}

\node[text=drawColor,anchor=base west,inner sep=0pt, outer sep=0pt, scale=  0.70] at (131.34,  1.68) {LVE (ACP $\pm \varepsilon$)};
\end{scope}
\end{tikzpicture}}
		\caption{Compression ratio for $\varepsilon = 0.1$.}
		\label{fig:commutative_approx_plot_comp_k=3_eps=0.1}
	\end{subfigure}
	\caption{Compression ratio $\abs{M'} \mathbin{/} \abs{M}$ between the compressed model $M'$ and the original model $M$ for input \acp{fg} containing $k = 3$ $\varepsilon$-commutative factors with (a) $\varepsilon = 0.001$ and (b) $\varepsilon = 0.1$.}
	\label{fig:commutative_approx_plot_comp_k=3}
\end{figure}

\begin{figure}[t]
	\centering
	\begin{subfigure}[t]{0.49\linewidth}
		\centering
		\resizebox{\linewidth}{!}{
\begin{tikzpicture}[x=1pt,y=1pt]
\definecolor{fillColor}{RGB}{255,255,255}
\path[use as bounding box,fill=fillColor,fill opacity=0.00] (0,0) rectangle (209.58,126.47);
\begin{scope}
\path[clip] (  0.00,  0.00) rectangle (209.58,126.47);
\definecolor{drawColor}{RGB}{255,255,255}
\definecolor{fillColor}{RGB}{255,255,255}

\path[draw=drawColor,line width= 0.5pt,line join=round,line cap=round,fill=fillColor] (  0.00,  0.00) rectangle (209.58,126.47);
\end{scope}
\begin{scope}
\path[clip] ( 30.47, 30.49) rectangle (203.58,124.47);
\definecolor{fillColor}{RGB}{255,255,255}

\path[fill=fillColor] ( 30.47, 30.49) rectangle (203.58,124.47);
\definecolor{drawColor}{RGB}{247,192,26}

\path[draw=drawColor,line width= 0.5pt,line join=round] ( 38.34,116.13) --
	( 55.82,109.99) --
	( 90.80,104.19) --
	(125.77, 99.95) --
	(160.74, 99.65) --
	(195.71, 99.47);
\definecolor{drawColor}{RGB}{78,155,133}

\path[draw=drawColor,line width= 0.5pt,dash pattern=on 4pt off 4pt ,line join=round] ( 38.34, 91.35) --
	( 55.82, 67.16) --
	( 90.80, 51.65) --
	(125.77, 45.75) --
	(160.74, 43.16) --
	(195.71, 41.56);
\definecolor{drawColor}{RGB}{247,192,26}
\definecolor{fillColor}{RGB}{247,192,26}

\path[draw=drawColor,line width= 0.4pt,line join=round,line cap=round,fill=fillColor] ( 38.34,116.13) circle (  1.53);
\definecolor{fillColor}{RGB}{78,155,133}

\path[fill=fillColor] ( 36.80, 89.81) --
	( 39.87, 89.81) --
	( 39.87, 92.88) --
	( 36.80, 92.88) --
	cycle;
\definecolor{fillColor}{RGB}{247,192,26}

\path[draw=drawColor,line width= 0.4pt,line join=round,line cap=round,fill=fillColor] ( 55.82,109.99) circle (  1.53);
\definecolor{fillColor}{RGB}{78,155,133}

\path[fill=fillColor] ( 54.29, 65.63) --
	( 57.36, 65.63) --
	( 57.36, 68.69) --
	( 54.29, 68.69) --
	cycle;
\definecolor{fillColor}{RGB}{247,192,26}

\path[draw=drawColor,line width= 0.4pt,line join=round,line cap=round,fill=fillColor] ( 90.80,104.19) circle (  1.53);
\definecolor{fillColor}{RGB}{78,155,133}

\path[fill=fillColor] ( 89.26, 50.11) --
	( 92.33, 50.11) --
	( 92.33, 53.18) --
	( 89.26, 53.18) --
	cycle;
\definecolor{fillColor}{RGB}{247,192,26}

\path[draw=drawColor,line width= 0.4pt,line join=round,line cap=round,fill=fillColor] (125.77, 99.95) circle (  1.53);
\definecolor{fillColor}{RGB}{78,155,133}

\path[fill=fillColor] (124.23, 44.21) --
	(127.30, 44.21) --
	(127.30, 47.28) --
	(124.23, 47.28) --
	cycle;
\definecolor{fillColor}{RGB}{247,192,26}

\path[draw=drawColor,line width= 0.4pt,line join=round,line cap=round,fill=fillColor] (160.74, 99.65) circle (  1.53);
\definecolor{fillColor}{RGB}{78,155,133}

\path[fill=fillColor] (159.21, 41.62) --
	(162.27, 41.62) --
	(162.27, 44.69) --
	(159.21, 44.69) --
	cycle;
\definecolor{fillColor}{RGB}{247,192,26}

\path[draw=drawColor,line width= 0.4pt,line join=round,line cap=round,fill=fillColor] (195.71, 99.47) circle (  1.53);
\definecolor{fillColor}{RGB}{78,155,133}

\path[fill=fillColor] (194.18, 40.02) --
	(197.25, 40.02) --
	(197.25, 43.09) --
	(194.18, 43.09) --
	cycle;
\end{scope}
\begin{scope}
\path[clip] (  0.00,  0.00) rectangle (209.58,126.47);
\definecolor{drawColor}{RGB}{0,0,0}

\path[draw=drawColor,line width= 0.5pt,line join=round] ( 30.47, 30.49) --
	( 30.47,124.47);

\path[draw=drawColor,line width= 0.5pt,line join=round] ( 31.60,122.50) --
	( 30.47,124.47) --
	( 29.33,122.50);
\end{scope}
\begin{scope}
\path[clip] (  0.00,  0.00) rectangle (209.58,126.47);
\definecolor{drawColor}{gray}{0.30}

\node[text=drawColor,anchor=base east,inner sep=0pt, outer sep=0pt, scale=  0.80] at ( 26.42, 32.01) {0.00};

\node[text=drawColor,anchor=base east,inner sep=0pt, outer sep=0pt, scale=  0.80] at ( 26.42, 52.35) {0.25};

\node[text=drawColor,anchor=base east,inner sep=0pt, outer sep=0pt, scale=  0.80] at ( 26.42, 72.69) {0.50};

\node[text=drawColor,anchor=base east,inner sep=0pt, outer sep=0pt, scale=  0.80] at ( 26.42, 93.04) {0.75};

\node[text=drawColor,anchor=base east,inner sep=0pt, outer sep=0pt, scale=  0.80] at ( 26.42,113.38) {1.00};
\end{scope}
\begin{scope}
\path[clip] (  0.00,  0.00) rectangle (209.58,126.47);
\definecolor{drawColor}{gray}{0.20}

\path[draw=drawColor,line width= 0.5pt,line join=round] ( 28.22, 34.77) --
	( 30.47, 34.77);

\path[draw=drawColor,line width= 0.5pt,line join=round] ( 28.22, 55.11) --
	( 30.47, 55.11);

\path[draw=drawColor,line width= 0.5pt,line join=round] ( 28.22, 75.45) --
	( 30.47, 75.45);

\path[draw=drawColor,line width= 0.5pt,line join=round] ( 28.22, 95.79) --
	( 30.47, 95.79);

\path[draw=drawColor,line width= 0.5pt,line join=round] ( 28.22,116.13) --
	( 30.47,116.13);
\end{scope}
\begin{scope}
\path[clip] (  0.00,  0.00) rectangle (209.58,126.47);
\definecolor{drawColor}{RGB}{0,0,0}

\path[draw=drawColor,line width= 0.5pt,line join=round] ( 30.47, 30.49) --
	(203.58, 30.49);

\path[draw=drawColor,line width= 0.5pt,line join=round] (201.61, 29.36) --
	(203.58, 30.49) --
	(201.61, 31.63);
\end{scope}
\begin{scope}
\path[clip] (  0.00,  0.00) rectangle (209.58,126.47);
\definecolor{drawColor}{gray}{0.20}

\path[draw=drawColor,line width= 0.5pt,line join=round] ( 38.34, 28.24) --
	( 38.34, 30.49);

\path[draw=drawColor,line width= 0.5pt,line join=round] ( 55.82, 28.24) --
	( 55.82, 30.49);

\path[draw=drawColor,line width= 0.5pt,line join=round] ( 90.80, 28.24) --
	( 90.80, 30.49);

\path[draw=drawColor,line width= 0.5pt,line join=round] (125.77, 28.24) --
	(125.77, 30.49);

\path[draw=drawColor,line width= 0.5pt,line join=round] (160.74, 28.24) --
	(160.74, 30.49);

\path[draw=drawColor,line width= 0.5pt,line join=round] (195.71, 28.24) --
	(195.71, 30.49);
\end{scope}
\begin{scope}
\path[clip] (  0.00,  0.00) rectangle (209.58,126.47);
\definecolor{drawColor}{gray}{0.30}

\node[text=drawColor,anchor=base,inner sep=0pt, outer sep=0pt, scale=  0.80] at ( 38.34, 20.93) {2};

\node[text=drawColor,anchor=base,inner sep=0pt, outer sep=0pt, scale=  0.80] at ( 55.82, 20.93) {4};

\node[text=drawColor,anchor=base,inner sep=0pt, outer sep=0pt, scale=  0.80] at ( 90.80, 20.93) {8};

\node[text=drawColor,anchor=base,inner sep=0pt, outer sep=0pt, scale=  0.80] at (125.77, 20.93) {12};

\node[text=drawColor,anchor=base,inner sep=0pt, outer sep=0pt, scale=  0.80] at (160.74, 20.93) {16};

\node[text=drawColor,anchor=base,inner sep=0pt, outer sep=0pt, scale=  0.80] at (195.71, 20.93) {20};
\end{scope}
\begin{scope}
\path[clip] (  0.00,  0.00) rectangle (209.58,126.47);
\definecolor{drawColor}{RGB}{0,0,0}

\node[text=drawColor,anchor=base,inner sep=0pt, outer sep=0pt, scale=  0.90] at (117.02, 10.93) {domain size $d$};
\end{scope}
\begin{scope}
\path[clip] (  0.00,  0.00) rectangle (209.58,126.47);
\definecolor{drawColor}{RGB}{0,0,0}

\node[text=drawColor,rotate= 90.00,anchor=base,inner sep=0pt, outer sep=0pt, scale=  0.90] at (  8.20, 77.48) {$|M'| \mathbin{/} |M|$};
\end{scope}
\begin{scope}
\path[clip] (  0.00,  0.00) rectangle (209.58,126.47);

\path[] ( 49.41,  1.00) rectangle (184.64,  7.18);
\end{scope}
\begin{scope}
\path[clip] (  0.00,  0.00) rectangle (209.58,126.47);
\definecolor{drawColor}{RGB}{247,192,26}

\path[draw=drawColor,line width= 0.5pt,line join=round] ( 54.99,  4.09) -- ( 63.67,  4.09);
\end{scope}
\begin{scope}
\path[clip] (  0.00,  0.00) rectangle (209.58,126.47);
\definecolor{drawColor}{RGB}{247,192,26}
\definecolor{fillColor}{RGB}{247,192,26}

\path[draw=drawColor,line width= 0.4pt,line join=round,line cap=round,fill=fillColor] ( 59.33,  4.09) circle (  1.53);
\end{scope}
\begin{scope}
\path[clip] (  0.00,  0.00) rectangle (209.58,126.47);
\definecolor{drawColor}{RGB}{78,155,133}

\path[draw=drawColor,line width= 0.5pt,dash pattern=on 4pt off 4pt ,line join=round] (117.09,  4.09) -- (125.76,  4.09);
\end{scope}
\begin{scope}
\path[clip] (  0.00,  0.00) rectangle (209.58,126.47);
\definecolor{fillColor}{RGB}{78,155,133}

\path[fill=fillColor] (119.89,  2.56) --
	(122.96,  2.56) --
	(122.96,  5.62) --
	(119.89,  5.62) --
	cycle;
\end{scope}
\begin{scope}
\path[clip] (  0.00,  0.00) rectangle (209.58,126.47);
\definecolor{drawColor}{RGB}{0,0,0}

\node[text=drawColor,anchor=base west,inner sep=0pt, outer sep=0pt, scale=  0.70] at ( 69.25,  1.68) {LVE (ACP)};
\end{scope}
\begin{scope}
\path[clip] (  0.00,  0.00) rectangle (209.58,126.47);
\definecolor{drawColor}{RGB}{0,0,0}

\node[text=drawColor,anchor=base west,inner sep=0pt, outer sep=0pt, scale=  0.70] at (131.34,  1.68) {LVE (ACP $\pm \varepsilon$)};
\end{scope}
\end{tikzpicture}}
		\caption{Compression ratio for $\varepsilon = 0.001$.}
		\label{fig:commutative_approx_plot_comp_k=7_eps=0.001}
	\end{subfigure}
	\begin{subfigure}[t]{0.49\linewidth}
		\centering
		\resizebox{\linewidth}{!}{
\begin{tikzpicture}[x=1pt,y=1pt]
\definecolor{fillColor}{RGB}{255,255,255}
\path[use as bounding box,fill=fillColor,fill opacity=0.00] (0,0) rectangle (209.58,126.47);
\begin{scope}
\path[clip] (  0.00,  0.00) rectangle (209.58,126.47);
\definecolor{drawColor}{RGB}{255,255,255}
\definecolor{fillColor}{RGB}{255,255,255}

\path[draw=drawColor,line width= 0.5pt,line join=round,line cap=round,fill=fillColor] (  0.00,  0.00) rectangle (209.58,126.47);
\end{scope}
\begin{scope}
\path[clip] ( 30.47, 30.49) rectangle (203.58,124.47);
\definecolor{fillColor}{RGB}{255,255,255}

\path[fill=fillColor] ( 30.47, 30.49) rectangle (203.58,124.47);
\definecolor{drawColor}{RGB}{247,192,26}

\path[draw=drawColor,line width= 0.5pt,line join=round] ( 38.34,116.13) --
	( 55.82,109.99) --
	( 90.80,104.19) --
	(125.77, 99.95) --
	(160.74, 99.65) --
	(195.71, 99.47);
\definecolor{drawColor}{RGB}{78,155,133}

\path[draw=drawColor,line width= 0.5pt,dash pattern=on 4pt off 4pt ,line join=round] ( 38.34, 91.35) --
	( 55.82, 67.16) --
	( 90.80, 51.65) --
	(125.77, 45.75) --
	(160.74, 43.16) --
	(195.71, 41.56);
\definecolor{drawColor}{RGB}{247,192,26}
\definecolor{fillColor}{RGB}{247,192,26}

\path[draw=drawColor,line width= 0.4pt,line join=round,line cap=round,fill=fillColor] ( 38.34,116.13) circle (  1.53);
\definecolor{fillColor}{RGB}{78,155,133}

\path[fill=fillColor] ( 36.80, 89.81) --
	( 39.87, 89.81) --
	( 39.87, 92.88) --
	( 36.80, 92.88) --
	cycle;
\definecolor{fillColor}{RGB}{247,192,26}

\path[draw=drawColor,line width= 0.4pt,line join=round,line cap=round,fill=fillColor] ( 55.82,109.99) circle (  1.53);
\definecolor{fillColor}{RGB}{78,155,133}

\path[fill=fillColor] ( 54.29, 65.63) --
	( 57.36, 65.63) --
	( 57.36, 68.69) --
	( 54.29, 68.69) --
	cycle;
\definecolor{fillColor}{RGB}{247,192,26}

\path[draw=drawColor,line width= 0.4pt,line join=round,line cap=round,fill=fillColor] ( 90.80,104.19) circle (  1.53);
\definecolor{fillColor}{RGB}{78,155,133}

\path[fill=fillColor] ( 89.26, 50.11) --
	( 92.33, 50.11) --
	( 92.33, 53.18) --
	( 89.26, 53.18) --
	cycle;
\definecolor{fillColor}{RGB}{247,192,26}

\path[draw=drawColor,line width= 0.4pt,line join=round,line cap=round,fill=fillColor] (125.77, 99.95) circle (  1.53);
\definecolor{fillColor}{RGB}{78,155,133}

\path[fill=fillColor] (124.23, 44.21) --
	(127.30, 44.21) --
	(127.30, 47.28) --
	(124.23, 47.28) --
	cycle;
\definecolor{fillColor}{RGB}{247,192,26}

\path[draw=drawColor,line width= 0.4pt,line join=round,line cap=round,fill=fillColor] (160.74, 99.65) circle (  1.53);
\definecolor{fillColor}{RGB}{78,155,133}

\path[fill=fillColor] (159.21, 41.62) --
	(162.27, 41.62) --
	(162.27, 44.69) --
	(159.21, 44.69) --
	cycle;
\definecolor{fillColor}{RGB}{247,192,26}

\path[draw=drawColor,line width= 0.4pt,line join=round,line cap=round,fill=fillColor] (195.71, 99.47) circle (  1.53);
\definecolor{fillColor}{RGB}{78,155,133}

\path[fill=fillColor] (194.18, 40.02) --
	(197.25, 40.02) --
	(197.25, 43.09) --
	(194.18, 43.09) --
	cycle;
\end{scope}
\begin{scope}
\path[clip] (  0.00,  0.00) rectangle (209.58,126.47);
\definecolor{drawColor}{RGB}{0,0,0}

\path[draw=drawColor,line width= 0.5pt,line join=round] ( 30.47, 30.49) --
	( 30.47,124.47);

\path[draw=drawColor,line width= 0.5pt,line join=round] ( 31.60,122.50) --
	( 30.47,124.47) --
	( 29.33,122.50);
\end{scope}
\begin{scope}
\path[clip] (  0.00,  0.00) rectangle (209.58,126.47);
\definecolor{drawColor}{gray}{0.30}

\node[text=drawColor,anchor=base east,inner sep=0pt, outer sep=0pt, scale=  0.80] at ( 26.42, 32.01) {0.00};

\node[text=drawColor,anchor=base east,inner sep=0pt, outer sep=0pt, scale=  0.80] at ( 26.42, 52.35) {0.25};

\node[text=drawColor,anchor=base east,inner sep=0pt, outer sep=0pt, scale=  0.80] at ( 26.42, 72.69) {0.50};

\node[text=drawColor,anchor=base east,inner sep=0pt, outer sep=0pt, scale=  0.80] at ( 26.42, 93.04) {0.75};

\node[text=drawColor,anchor=base east,inner sep=0pt, outer sep=0pt, scale=  0.80] at ( 26.42,113.38) {1.00};
\end{scope}
\begin{scope}
\path[clip] (  0.00,  0.00) rectangle (209.58,126.47);
\definecolor{drawColor}{gray}{0.20}

\path[draw=drawColor,line width= 0.5pt,line join=round] ( 28.22, 34.77) --
	( 30.47, 34.77);

\path[draw=drawColor,line width= 0.5pt,line join=round] ( 28.22, 55.11) --
	( 30.47, 55.11);

\path[draw=drawColor,line width= 0.5pt,line join=round] ( 28.22, 75.45) --
	( 30.47, 75.45);

\path[draw=drawColor,line width= 0.5pt,line join=round] ( 28.22, 95.79) --
	( 30.47, 95.79);

\path[draw=drawColor,line width= 0.5pt,line join=round] ( 28.22,116.13) --
	( 30.47,116.13);
\end{scope}
\begin{scope}
\path[clip] (  0.00,  0.00) rectangle (209.58,126.47);
\definecolor{drawColor}{RGB}{0,0,0}

\path[draw=drawColor,line width= 0.5pt,line join=round] ( 30.47, 30.49) --
	(203.58, 30.49);

\path[draw=drawColor,line width= 0.5pt,line join=round] (201.61, 29.36) --
	(203.58, 30.49) --
	(201.61, 31.63);
\end{scope}
\begin{scope}
\path[clip] (  0.00,  0.00) rectangle (209.58,126.47);
\definecolor{drawColor}{gray}{0.20}

\path[draw=drawColor,line width= 0.5pt,line join=round] ( 38.34, 28.24) --
	( 38.34, 30.49);

\path[draw=drawColor,line width= 0.5pt,line join=round] ( 55.82, 28.24) --
	( 55.82, 30.49);

\path[draw=drawColor,line width= 0.5pt,line join=round] ( 90.80, 28.24) --
	( 90.80, 30.49);

\path[draw=drawColor,line width= 0.5pt,line join=round] (125.77, 28.24) --
	(125.77, 30.49);

\path[draw=drawColor,line width= 0.5pt,line join=round] (160.74, 28.24) --
	(160.74, 30.49);

\path[draw=drawColor,line width= 0.5pt,line join=round] (195.71, 28.24) --
	(195.71, 30.49);
\end{scope}
\begin{scope}
\path[clip] (  0.00,  0.00) rectangle (209.58,126.47);
\definecolor{drawColor}{gray}{0.30}

\node[text=drawColor,anchor=base,inner sep=0pt, outer sep=0pt, scale=  0.80] at ( 38.34, 20.93) {2};

\node[text=drawColor,anchor=base,inner sep=0pt, outer sep=0pt, scale=  0.80] at ( 55.82, 20.93) {4};

\node[text=drawColor,anchor=base,inner sep=0pt, outer sep=0pt, scale=  0.80] at ( 90.80, 20.93) {8};

\node[text=drawColor,anchor=base,inner sep=0pt, outer sep=0pt, scale=  0.80] at (125.77, 20.93) {12};

\node[text=drawColor,anchor=base,inner sep=0pt, outer sep=0pt, scale=  0.80] at (160.74, 20.93) {16};

\node[text=drawColor,anchor=base,inner sep=0pt, outer sep=0pt, scale=  0.80] at (195.71, 20.93) {20};
\end{scope}
\begin{scope}
\path[clip] (  0.00,  0.00) rectangle (209.58,126.47);
\definecolor{drawColor}{RGB}{0,0,0}

\node[text=drawColor,anchor=base,inner sep=0pt, outer sep=0pt, scale=  0.90] at (117.02, 10.93) {domain size $d$};
\end{scope}
\begin{scope}
\path[clip] (  0.00,  0.00) rectangle (209.58,126.47);
\definecolor{drawColor}{RGB}{0,0,0}

\node[text=drawColor,rotate= 90.00,anchor=base,inner sep=0pt, outer sep=0pt, scale=  0.90] at (  8.20, 77.48) {$|M'| \mathbin{/} |M|$};
\end{scope}
\begin{scope}
\path[clip] (  0.00,  0.00) rectangle (209.58,126.47);

\path[] ( 49.41,  1.00) rectangle (184.64,  7.18);
\end{scope}
\begin{scope}
\path[clip] (  0.00,  0.00) rectangle (209.58,126.47);
\definecolor{drawColor}{RGB}{247,192,26}

\path[draw=drawColor,line width= 0.5pt,line join=round] ( 54.99,  4.09) -- ( 63.67,  4.09);
\end{scope}
\begin{scope}
\path[clip] (  0.00,  0.00) rectangle (209.58,126.47);
\definecolor{drawColor}{RGB}{247,192,26}
\definecolor{fillColor}{RGB}{247,192,26}

\path[draw=drawColor,line width= 0.4pt,line join=round,line cap=round,fill=fillColor] ( 59.33,  4.09) circle (  1.53);
\end{scope}
\begin{scope}
\path[clip] (  0.00,  0.00) rectangle (209.58,126.47);
\definecolor{drawColor}{RGB}{78,155,133}

\path[draw=drawColor,line width= 0.5pt,dash pattern=on 4pt off 4pt ,line join=round] (117.09,  4.09) -- (125.76,  4.09);
\end{scope}
\begin{scope}
\path[clip] (  0.00,  0.00) rectangle (209.58,126.47);
\definecolor{fillColor}{RGB}{78,155,133}

\path[fill=fillColor] (119.89,  2.56) --
	(122.96,  2.56) --
	(122.96,  5.62) --
	(119.89,  5.62) --
	cycle;
\end{scope}
\begin{scope}
\path[clip] (  0.00,  0.00) rectangle (209.58,126.47);
\definecolor{drawColor}{RGB}{0,0,0}

\node[text=drawColor,anchor=base west,inner sep=0pt, outer sep=0pt, scale=  0.70] at ( 69.25,  1.68) {LVE (ACP)};
\end{scope}
\begin{scope}
\path[clip] (  0.00,  0.00) rectangle (209.58,126.47);
\definecolor{drawColor}{RGB}{0,0,0}

\node[text=drawColor,anchor=base west,inner sep=0pt, outer sep=0pt, scale=  0.70] at (131.34,  1.68) {LVE (ACP $\pm \varepsilon$)};
\end{scope}
\end{tikzpicture}}
		\caption{Compression ratio for $\varepsilon = 0.1$.}
		\label{fig:commutative_approx_plot_comp_k=7_eps=0.1}
	\end{subfigure}
	\caption{Compression ratio $\abs{M'} \mathbin{/} \abs{M}$ between the compressed model $M'$ and the original model $M$ for input \acp{fg} containing $k = 7$ $\varepsilon$-commutative factors with (a) $\varepsilon = 0.001$ and (b) $\varepsilon = 0.1$.}
	\label{fig:commutative_approx_plot_comp_k=7}
\end{figure}

Finally, we examine the structural reduction achieved by the compression stemming from exact \ac{acp} and its $\varepsilon$-relaxed variant.
\Crefrange{fig:commutative_approx_plot_comp_k=1}{fig:commutative_approx_plot_comp_k=7} show the compression ratio $\abs{M'} \mathbin{/} \abs{M}$ between the compressed model $M'$ and the original model $M$ for $k \in \{1, 3, 7\}$ at the smallest and largest tolerance $\varepsilon \in \{0.001, 0.1\}$ (that is, a compression ratio of one means that no compression is achieved at all, whereas a smaller ratio indicates more compression).
The $\varepsilon$-relaxed variant attains a substantially smaller compression ratio than exact \ac{acp}, because it is able to compress the $\varepsilon$-commutative factors that exact \ac{acp} cannot compress.
For both variants, the compression ratio decreases as the domain size grows, illustrating that larger models offer more indistinguishability to be exploited.

\begin{figure}[t]
	\centering
	\begin{subfigure}[t]{0.49\linewidth}
		\centering
		\resizebox{\linewidth}{!}{
\begin{tikzpicture}[x=1pt,y=1pt]
\definecolor{fillColor}{RGB}{255,255,255}
\path[use as bounding box,fill=fillColor,fill opacity=0.00] (0,0) rectangle (209.58,126.47);
\begin{scope}
\path[clip] (  0.00,  0.00) rectangle (209.58,126.47);
\definecolor{drawColor}{RGB}{255,255,255}
\definecolor{fillColor}{RGB}{255,255,255}

\path[draw=drawColor,line width= 0.5pt,line join=round,line cap=round,fill=fillColor] (  0.00,  0.00) rectangle (209.58,126.47);
\end{scope}
\begin{scope}
\path[clip] ( 30.47, 30.49) rectangle (203.58,124.47);
\definecolor{fillColor}{RGB}{255,255,255}

\path[fill=fillColor] ( 30.47, 30.49) rectangle (203.58,124.47);
\definecolor{drawColor}{RGB}{247,192,26}

\path[draw=drawColor,line width= 0.5pt,line join=round] ( 38.34,116.13) --
	( 55.82,116.13) --
	( 90.80,116.13) --
	(125.77,116.13) --
	(160.74,116.13) --
	(195.71,116.13);
\definecolor{drawColor}{RGB}{78,155,133}

\path[draw=drawColor,line width= 0.5pt,dash pattern=on 4pt off 4pt ,line join=round] ( 38.34, 87.20) --
	( 55.82, 65.50) --
	( 90.80, 51.61) --
	(125.77, 46.37) --
	(160.74, 43.61) --
	(195.71, 41.92);
\definecolor{drawColor}{RGB}{247,192,26}
\definecolor{fillColor}{RGB}{247,192,26}

\path[draw=drawColor,line width= 0.4pt,line join=round,line cap=round,fill=fillColor] ( 38.34,116.13) circle (  1.53);
\definecolor{fillColor}{RGB}{78,155,133}

\path[fill=fillColor] ( 36.80, 85.67) --
	( 39.87, 85.67) --
	( 39.87, 88.74) --
	( 36.80, 88.74) --
	cycle;
\definecolor{fillColor}{RGB}{247,192,26}

\path[draw=drawColor,line width= 0.4pt,line join=round,line cap=round,fill=fillColor] ( 55.82,116.13) circle (  1.53);
\definecolor{fillColor}{RGB}{78,155,133}

\path[fill=fillColor] ( 54.29, 63.97) --
	( 57.36, 63.97) --
	( 57.36, 67.04) --
	( 54.29, 67.04) --
	cycle;
\definecolor{fillColor}{RGB}{247,192,26}

\path[draw=drawColor,line width= 0.4pt,line join=round,line cap=round,fill=fillColor] ( 90.80,116.13) circle (  1.53);
\definecolor{fillColor}{RGB}{78,155,133}

\path[fill=fillColor] ( 89.26, 50.07) --
	( 92.33, 50.07) --
	( 92.33, 53.14) --
	( 89.26, 53.14) --
	cycle;
\definecolor{fillColor}{RGB}{247,192,26}

\path[draw=drawColor,line width= 0.4pt,line join=round,line cap=round,fill=fillColor] (125.77,116.13) circle (  1.53);
\definecolor{fillColor}{RGB}{78,155,133}

\path[fill=fillColor] (124.23, 44.83) --
	(127.30, 44.83) --
	(127.30, 47.90) --
	(124.23, 47.90) --
	cycle;
\definecolor{fillColor}{RGB}{247,192,26}

\path[draw=drawColor,line width= 0.4pt,line join=round,line cap=round,fill=fillColor] (160.74,116.13) circle (  1.53);
\definecolor{fillColor}{RGB}{78,155,133}

\path[fill=fillColor] (159.21, 42.08) --
	(162.27, 42.08) --
	(162.27, 45.15) --
	(159.21, 45.15) --
	cycle;
\definecolor{fillColor}{RGB}{247,192,26}

\path[draw=drawColor,line width= 0.4pt,line join=round,line cap=round,fill=fillColor] (195.71,116.13) circle (  1.53);
\definecolor{fillColor}{RGB}{78,155,133}

\path[fill=fillColor] (194.18, 40.38) --
	(197.25, 40.38) --
	(197.25, 43.45) --
	(194.18, 43.45) --
	cycle;
\end{scope}
\begin{scope}
\path[clip] (  0.00,  0.00) rectangle (209.58,126.47);
\definecolor{drawColor}{RGB}{0,0,0}

\path[draw=drawColor,line width= 0.5pt,line join=round] ( 30.47, 30.49) --
	( 30.47,124.47);

\path[draw=drawColor,line width= 0.5pt,line join=round] ( 31.60,122.50) --
	( 30.47,124.47) --
	( 29.33,122.50);
\end{scope}
\begin{scope}
\path[clip] (  0.00,  0.00) rectangle (209.58,126.47);
\definecolor{drawColor}{gray}{0.30}

\node[text=drawColor,anchor=base east,inner sep=0pt, outer sep=0pt, scale=  0.80] at ( 26.42, 32.01) {0.00};

\node[text=drawColor,anchor=base east,inner sep=0pt, outer sep=0pt, scale=  0.80] at ( 26.42, 52.35) {0.25};

\node[text=drawColor,anchor=base east,inner sep=0pt, outer sep=0pt, scale=  0.80] at ( 26.42, 72.69) {0.50};

\node[text=drawColor,anchor=base east,inner sep=0pt, outer sep=0pt, scale=  0.80] at ( 26.42, 93.04) {0.75};

\node[text=drawColor,anchor=base east,inner sep=0pt, outer sep=0pt, scale=  0.80] at ( 26.42,113.38) {1.00};
\end{scope}
\begin{scope}
\path[clip] (  0.00,  0.00) rectangle (209.58,126.47);
\definecolor{drawColor}{gray}{0.20}

\path[draw=drawColor,line width= 0.5pt,line join=round] ( 28.22, 34.77) --
	( 30.47, 34.77);

\path[draw=drawColor,line width= 0.5pt,line join=round] ( 28.22, 55.11) --
	( 30.47, 55.11);

\path[draw=drawColor,line width= 0.5pt,line join=round] ( 28.22, 75.45) --
	( 30.47, 75.45);

\path[draw=drawColor,line width= 0.5pt,line join=round] ( 28.22, 95.79) --
	( 30.47, 95.79);

\path[draw=drawColor,line width= 0.5pt,line join=round] ( 28.22,116.13) --
	( 30.47,116.13);
\end{scope}
\begin{scope}
\path[clip] (  0.00,  0.00) rectangle (209.58,126.47);
\definecolor{drawColor}{RGB}{0,0,0}

\path[draw=drawColor,line width= 0.5pt,line join=round] ( 30.47, 30.49) --
	(203.58, 30.49);

\path[draw=drawColor,line width= 0.5pt,line join=round] (201.61, 29.36) --
	(203.58, 30.49) --
	(201.61, 31.63);
\end{scope}
\begin{scope}
\path[clip] (  0.00,  0.00) rectangle (209.58,126.47);
\definecolor{drawColor}{gray}{0.20}

\path[draw=drawColor,line width= 0.5pt,line join=round] ( 38.34, 28.24) --
	( 38.34, 30.49);

\path[draw=drawColor,line width= 0.5pt,line join=round] ( 55.82, 28.24) --
	( 55.82, 30.49);

\path[draw=drawColor,line width= 0.5pt,line join=round] ( 90.80, 28.24) --
	( 90.80, 30.49);

\path[draw=drawColor,line width= 0.5pt,line join=round] (125.77, 28.24) --
	(125.77, 30.49);

\path[draw=drawColor,line width= 0.5pt,line join=round] (160.74, 28.24) --
	(160.74, 30.49);

\path[draw=drawColor,line width= 0.5pt,line join=round] (195.71, 28.24) --
	(195.71, 30.49);
\end{scope}
\begin{scope}
\path[clip] (  0.00,  0.00) rectangle (209.58,126.47);
\definecolor{drawColor}{gray}{0.30}

\node[text=drawColor,anchor=base,inner sep=0pt, outer sep=0pt, scale=  0.80] at ( 38.34, 20.93) {2};

\node[text=drawColor,anchor=base,inner sep=0pt, outer sep=0pt, scale=  0.80] at ( 55.82, 20.93) {4};

\node[text=drawColor,anchor=base,inner sep=0pt, outer sep=0pt, scale=  0.80] at ( 90.80, 20.93) {8};

\node[text=drawColor,anchor=base,inner sep=0pt, outer sep=0pt, scale=  0.80] at (125.77, 20.93) {12};

\node[text=drawColor,anchor=base,inner sep=0pt, outer sep=0pt, scale=  0.80] at (160.74, 20.93) {16};

\node[text=drawColor,anchor=base,inner sep=0pt, outer sep=0pt, scale=  0.80] at (195.71, 20.93) {20};
\end{scope}
\begin{scope}
\path[clip] (  0.00,  0.00) rectangle (209.58,126.47);
\definecolor{drawColor}{RGB}{0,0,0}

\node[text=drawColor,anchor=base,inner sep=0pt, outer sep=0pt, scale=  0.90] at (117.02, 10.93) {domain size $d$};
\end{scope}
\begin{scope}
\path[clip] (  0.00,  0.00) rectangle (209.58,126.47);
\definecolor{drawColor}{RGB}{0,0,0}

\node[text=drawColor,rotate= 90.00,anchor=base,inner sep=0pt, outer sep=0pt, scale=  0.90] at (  8.20, 77.48) {$|M'| \mathbin{/} |M|$};
\end{scope}
\begin{scope}
\path[clip] (  0.00,  0.00) rectangle (209.58,126.47);

\path[] ( 49.41,  1.00) rectangle (184.64,  7.18);
\end{scope}
\begin{scope}
\path[clip] (  0.00,  0.00) rectangle (209.58,126.47);
\definecolor{drawColor}{RGB}{247,192,26}

\path[draw=drawColor,line width= 0.5pt,line join=round] ( 54.99,  4.09) -- ( 63.67,  4.09);
\end{scope}
\begin{scope}
\path[clip] (  0.00,  0.00) rectangle (209.58,126.47);
\definecolor{drawColor}{RGB}{247,192,26}
\definecolor{fillColor}{RGB}{247,192,26}

\path[draw=drawColor,line width= 0.4pt,line join=round,line cap=round,fill=fillColor] ( 59.33,  4.09) circle (  1.53);
\end{scope}
\begin{scope}
\path[clip] (  0.00,  0.00) rectangle (209.58,126.47);
\definecolor{drawColor}{RGB}{78,155,133}

\path[draw=drawColor,line width= 0.5pt,dash pattern=on 4pt off 4pt ,line join=round] (117.09,  4.09) -- (125.76,  4.09);
\end{scope}
\begin{scope}
\path[clip] (  0.00,  0.00) rectangle (209.58,126.47);
\definecolor{fillColor}{RGB}{78,155,133}

\path[fill=fillColor] (119.89,  2.56) --
	(122.96,  2.56) --
	(122.96,  5.62) --
	(119.89,  5.62) --
	cycle;
\end{scope}
\begin{scope}
\path[clip] (  0.00,  0.00) rectangle (209.58,126.47);
\definecolor{drawColor}{RGB}{0,0,0}

\node[text=drawColor,anchor=base west,inner sep=0pt, outer sep=0pt, scale=  0.70] at ( 69.25,  1.68) {LVE (ACP)};
\end{scope}
\begin{scope}
\path[clip] (  0.00,  0.00) rectangle (209.58,126.47);
\definecolor{drawColor}{RGB}{0,0,0}

\node[text=drawColor,anchor=base west,inner sep=0pt, outer sep=0pt, scale=  0.70] at (131.34,  1.68) {LVE (ACP $\pm \varepsilon$)};
\end{scope}
\end{tikzpicture}}
		\caption{\emph{employee} instance class.}
		\label{fig:commutative_approx_plot_comp_instance_employee}
	\end{subfigure}
	\begin{subfigure}[t]{0.49\linewidth}
		\centering
		\resizebox{\linewidth}{!}{
\begin{tikzpicture}[x=1pt,y=1pt]
\definecolor{fillColor}{RGB}{255,255,255}
\path[use as bounding box,fill=fillColor,fill opacity=0.00] (0,0) rectangle (209.58,126.47);
\begin{scope}
\path[clip] (  0.00,  0.00) rectangle (209.58,126.47);
\definecolor{drawColor}{RGB}{255,255,255}
\definecolor{fillColor}{RGB}{255,255,255}

\path[draw=drawColor,line width= 0.5pt,line join=round,line cap=round,fill=fillColor] (  0.00,  0.00) rectangle (209.58,126.47);
\end{scope}
\begin{scope}
\path[clip] ( 30.47, 30.49) rectangle (203.58,124.47);
\definecolor{fillColor}{RGB}{255,255,255}

\path[fill=fillColor] ( 30.47, 30.49) rectangle (203.58,124.47);
\definecolor{drawColor}{RGB}{247,192,26}

\path[draw=drawColor,line width= 0.5pt,line join=round] ( 38.34,116.13) --
	( 55.82, 99.69) --
	( 90.80, 86.77) --
	(125.77, 78.05) --
	(160.74, 77.60) --
	(195.71, 77.32);
\definecolor{drawColor}{RGB}{78,155,133}

\path[draw=drawColor,line width= 0.5pt,dash pattern=on 4pt off 4pt ,line join=round] ( 38.34, 90.40) --
	( 55.82, 62.18) --
	( 90.80, 47.26) --
	(125.77, 42.04) --
	(160.74, 40.30) --
	(195.71, 39.23);
\definecolor{drawColor}{RGB}{247,192,26}
\definecolor{fillColor}{RGB}{247,192,26}

\path[draw=drawColor,line width= 0.4pt,line join=round,line cap=round,fill=fillColor] ( 38.34,116.13) circle (  1.53);
\definecolor{fillColor}{RGB}{78,155,133}

\path[fill=fillColor] ( 36.80, 88.87) --
	( 39.87, 88.87) --
	( 39.87, 91.93) --
	( 36.80, 91.93) --
	cycle;
\definecolor{fillColor}{RGB}{247,192,26}

\path[draw=drawColor,line width= 0.4pt,line join=round,line cap=round,fill=fillColor] ( 55.82, 99.69) circle (  1.53);
\definecolor{fillColor}{RGB}{78,155,133}

\path[fill=fillColor] ( 54.29, 60.65) --
	( 57.36, 60.65) --
	( 57.36, 63.72) --
	( 54.29, 63.72) --
	cycle;
\definecolor{fillColor}{RGB}{247,192,26}

\path[draw=drawColor,line width= 0.4pt,line join=round,line cap=round,fill=fillColor] ( 90.80, 86.77) circle (  1.53);
\definecolor{fillColor}{RGB}{78,155,133}

\path[fill=fillColor] ( 89.26, 45.72) --
	( 92.33, 45.72) --
	( 92.33, 48.79) --
	( 89.26, 48.79) --
	cycle;
\definecolor{fillColor}{RGB}{247,192,26}

\path[draw=drawColor,line width= 0.4pt,line join=round,line cap=round,fill=fillColor] (125.77, 78.05) circle (  1.53);
\definecolor{fillColor}{RGB}{78,155,133}

\path[fill=fillColor] (124.23, 40.51) --
	(127.30, 40.51) --
	(127.30, 43.58) --
	(124.23, 43.58) --
	cycle;
\definecolor{fillColor}{RGB}{247,192,26}

\path[draw=drawColor,line width= 0.4pt,line join=round,line cap=round,fill=fillColor] (160.74, 77.60) circle (  1.53);
\definecolor{fillColor}{RGB}{78,155,133}

\path[fill=fillColor] (159.21, 38.76) --
	(162.27, 38.76) --
	(162.27, 41.83) --
	(159.21, 41.83) --
	cycle;
\definecolor{fillColor}{RGB}{247,192,26}

\path[draw=drawColor,line width= 0.4pt,line join=round,line cap=round,fill=fillColor] (195.71, 77.32) circle (  1.53);
\definecolor{fillColor}{RGB}{78,155,133}

\path[fill=fillColor] (194.18, 37.69) --
	(197.25, 37.69) --
	(197.25, 40.76) --
	(194.18, 40.76) --
	cycle;
\end{scope}
\begin{scope}
\path[clip] (  0.00,  0.00) rectangle (209.58,126.47);
\definecolor{drawColor}{RGB}{0,0,0}

\path[draw=drawColor,line width= 0.5pt,line join=round] ( 30.47, 30.49) --
	( 30.47,124.47);

\path[draw=drawColor,line width= 0.5pt,line join=round] ( 31.60,122.50) --
	( 30.47,124.47) --
	( 29.33,122.50);
\end{scope}
\begin{scope}
\path[clip] (  0.00,  0.00) rectangle (209.58,126.47);
\definecolor{drawColor}{gray}{0.30}

\node[text=drawColor,anchor=base east,inner sep=0pt, outer sep=0pt, scale=  0.80] at ( 26.42, 32.01) {0.00};

\node[text=drawColor,anchor=base east,inner sep=0pt, outer sep=0pt, scale=  0.80] at ( 26.42, 52.35) {0.25};

\node[text=drawColor,anchor=base east,inner sep=0pt, outer sep=0pt, scale=  0.80] at ( 26.42, 72.69) {0.50};

\node[text=drawColor,anchor=base east,inner sep=0pt, outer sep=0pt, scale=  0.80] at ( 26.42, 93.04) {0.75};

\node[text=drawColor,anchor=base east,inner sep=0pt, outer sep=0pt, scale=  0.80] at ( 26.42,113.38) {1.00};
\end{scope}
\begin{scope}
\path[clip] (  0.00,  0.00) rectangle (209.58,126.47);
\definecolor{drawColor}{gray}{0.20}

\path[draw=drawColor,line width= 0.5pt,line join=round] ( 28.22, 34.77) --
	( 30.47, 34.77);

\path[draw=drawColor,line width= 0.5pt,line join=round] ( 28.22, 55.11) --
	( 30.47, 55.11);

\path[draw=drawColor,line width= 0.5pt,line join=round] ( 28.22, 75.45) --
	( 30.47, 75.45);

\path[draw=drawColor,line width= 0.5pt,line join=round] ( 28.22, 95.79) --
	( 30.47, 95.79);

\path[draw=drawColor,line width= 0.5pt,line join=round] ( 28.22,116.13) --
	( 30.47,116.13);
\end{scope}
\begin{scope}
\path[clip] (  0.00,  0.00) rectangle (209.58,126.47);
\definecolor{drawColor}{RGB}{0,0,0}

\path[draw=drawColor,line width= 0.5pt,line join=round] ( 30.47, 30.49) --
	(203.58, 30.49);

\path[draw=drawColor,line width= 0.5pt,line join=round] (201.61, 29.36) --
	(203.58, 30.49) --
	(201.61, 31.63);
\end{scope}
\begin{scope}
\path[clip] (  0.00,  0.00) rectangle (209.58,126.47);
\definecolor{drawColor}{gray}{0.20}

\path[draw=drawColor,line width= 0.5pt,line join=round] ( 38.34, 28.24) --
	( 38.34, 30.49);

\path[draw=drawColor,line width= 0.5pt,line join=round] ( 55.82, 28.24) --
	( 55.82, 30.49);

\path[draw=drawColor,line width= 0.5pt,line join=round] ( 90.80, 28.24) --
	( 90.80, 30.49);

\path[draw=drawColor,line width= 0.5pt,line join=round] (125.77, 28.24) --
	(125.77, 30.49);

\path[draw=drawColor,line width= 0.5pt,line join=round] (160.74, 28.24) --
	(160.74, 30.49);

\path[draw=drawColor,line width= 0.5pt,line join=round] (195.71, 28.24) --
	(195.71, 30.49);
\end{scope}
\begin{scope}
\path[clip] (  0.00,  0.00) rectangle (209.58,126.47);
\definecolor{drawColor}{gray}{0.30}

\node[text=drawColor,anchor=base,inner sep=0pt, outer sep=0pt, scale=  0.80] at ( 38.34, 20.93) {2};

\node[text=drawColor,anchor=base,inner sep=0pt, outer sep=0pt, scale=  0.80] at ( 55.82, 20.93) {4};

\node[text=drawColor,anchor=base,inner sep=0pt, outer sep=0pt, scale=  0.80] at ( 90.80, 20.93) {8};

\node[text=drawColor,anchor=base,inner sep=0pt, outer sep=0pt, scale=  0.80] at (125.77, 20.93) {12};

\node[text=drawColor,anchor=base,inner sep=0pt, outer sep=0pt, scale=  0.80] at (160.74, 20.93) {16};

\node[text=drawColor,anchor=base,inner sep=0pt, outer sep=0pt, scale=  0.80] at (195.71, 20.93) {20};
\end{scope}
\begin{scope}
\path[clip] (  0.00,  0.00) rectangle (209.58,126.47);
\definecolor{drawColor}{RGB}{0,0,0}

\node[text=drawColor,anchor=base,inner sep=0pt, outer sep=0pt, scale=  0.90] at (117.02, 10.93) {domain size $d$};
\end{scope}
\begin{scope}
\path[clip] (  0.00,  0.00) rectangle (209.58,126.47);
\definecolor{drawColor}{RGB}{0,0,0}

\node[text=drawColor,rotate= 90.00,anchor=base,inner sep=0pt, outer sep=0pt, scale=  0.90] at (  8.20, 77.48) {$|M'| \mathbin{/} |M|$};
\end{scope}
\begin{scope}
\path[clip] (  0.00,  0.00) rectangle (209.58,126.47);

\path[] ( 49.41,  1.00) rectangle (184.64,  7.18);
\end{scope}
\begin{scope}
\path[clip] (  0.00,  0.00) rectangle (209.58,126.47);
\definecolor{drawColor}{RGB}{247,192,26}

\path[draw=drawColor,line width= 0.5pt,line join=round] ( 54.99,  4.09) -- ( 63.67,  4.09);
\end{scope}
\begin{scope}
\path[clip] (  0.00,  0.00) rectangle (209.58,126.47);
\definecolor{drawColor}{RGB}{247,192,26}
\definecolor{fillColor}{RGB}{247,192,26}

\path[draw=drawColor,line width= 0.4pt,line join=round,line cap=round,fill=fillColor] ( 59.33,  4.09) circle (  1.53);
\end{scope}
\begin{scope}
\path[clip] (  0.00,  0.00) rectangle (209.58,126.47);
\definecolor{drawColor}{RGB}{78,155,133}

\path[draw=drawColor,line width= 0.5pt,dash pattern=on 4pt off 4pt ,line join=round] (117.09,  4.09) -- (125.76,  4.09);
\end{scope}
\begin{scope}
\path[clip] (  0.00,  0.00) rectangle (209.58,126.47);
\definecolor{fillColor}{RGB}{78,155,133}

\path[fill=fillColor] (119.89,  2.56) --
	(122.96,  2.56) --
	(122.96,  5.62) --
	(119.89,  5.62) --
	cycle;
\end{scope}
\begin{scope}
\path[clip] (  0.00,  0.00) rectangle (209.58,126.47);
\definecolor{drawColor}{RGB}{0,0,0}

\node[text=drawColor,anchor=base west,inner sep=0pt, outer sep=0pt, scale=  0.70] at ( 69.25,  1.68) {LVE (ACP)};
\end{scope}
\begin{scope}
\path[clip] (  0.00,  0.00) rectangle (209.58,126.47);
\definecolor{drawColor}{RGB}{0,0,0}

\node[text=drawColor,anchor=base west,inner sep=0pt, outer sep=0pt, scale=  0.70] at (131.34,  1.68) {LVE (ACP $\pm \varepsilon$)};
\end{scope}
\end{tikzpicture}}
		\caption{\emph{epidemic} instance class.}
		\label{fig:commutative_approx_plot_comp_instance_epidemic}
	\end{subfigure}
	\caption{Compression ratio $\abs{M'} \mathbin{/} \abs{M}$ between the compressed model $M'$ and the original model $M$ broken down per instance class, averaged over the number of commutative factors~$k$ and the tolerance~$\varepsilon$.}
	\label{fig:commutative_approx_plot_comp_instance}
\end{figure}

Interestingly, as shown in \cref{fig:commutative_approx_plot_comp_instance}, \ac{acp} is not able to obtain any compression at all for some of the input \acp{fg}: The input instances are generated from two different classes of \acp{fg}---\emph{employee} and \emph{epidemic}---which have different structural properties, and exact \ac{acp} is only able to achieve compression for the \emph{employee} instance class, while it fails to achieve any compression for the \emph{epidemic} instance class.
This particular behaviour highlights that even if there is just a single $\varepsilon$-commutative factor in a given \ac{fg}, exact \ac{acp} may fail to find any compression at all (although there are other parts in the input \ac{fg} that could be compressed and that do not involve commutativity), which is due to the propagation of information throughout \ac{acp}'s colour passing procedure.
\end{document}